\documentclass[a4paper,12pt,times,print,index,authoryear]{Classes/PhDThesisPSnPDF}

\SetDraftVersion{v1.1}

\input{Preamble/preamble}

\title{Scalable Bayesian Inference\\in the Era of Deep Learning}

\subtitle{From Gaussian Processes to Deep Neural Networks}

\author{Javier Antorán Cabiscol}

\dept{Department of Engineering}

\university{University of Cambridge}
\crest{\includegraphics[width=0.2\textwidth]{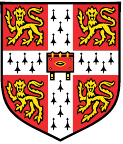}}

\degreetitle{Doctor of Philosophy}

\college{Darwin College}

\subject{PhD} \keywords{{Bayesian Inference} {Bayesian Neural Networks} {Gaussian Processes} {PhD Thesis} {Engineering} {University of
Cambridge}}

\ifdefineAbstract
 \pagestyle{empty}
 \includeonly{Declaration/declaration, Abstract/abstract}
\fi

\ifdefineChapter
 \includeonly{Chapters/Chapter7/DIP}
\fi

\begin{document}

\frontmatter

\maketitle


\begin{dedication} 

I would like to dedicate this thesis to the memory of my grandmother, Carmen Mir Gavara, who kept reminding me that I had to submit my thesis.

\end{dedication}

\begin{declaration}

I hereby declare that except where specific reference is made to the work of 
others, the contents of this dissertation are original and have not been 
submitted in whole or in part for consideration for any other degree or 
qualification in this, or any other university. This dissertation is my own 
work and contains nothing which is the outcome of work done in collaboration 
with others, except as specified in the text and Acknowledgements. This 
dissertation contains fewer than 65,000 words including appendices, 
bibliography, footnotes, tables and equations and has fewer than 150 figures.

\end{declaration}

\begin{acknowledgements}      

First and foremost, I would like to thank my supervisor Miguel Hernández-Lobato. Miguel gave me the opportunity to pursue a PhD at a time when I was unsure what next step to take professionally. 
Throughout the PhD, Miguel has given me complete freedom to pursue my interests and to collaborate freely with other researchers. I have learnt a lot from Miguel's capacity to boil down complex topics to their simplest form and from his optimistic outlook on research. I would also like to thank him for his patience throughout all of the times I ignored his advice and went on to try things that didn't work just to find out that his previous suggestion indeed was the best way forward.

I have been very lucky to have worked with excellent collaborators during my PhD. 
There is a relatively widespread bias in academia by which most of the credit for research publications is assigned to the first author. This creates perverse incentives by which secondary authors are discouraged from making significant contributions to collaborative projects. 
In my experience, the easiest way to perform great research is to have multiple talented collaborators fully committed to a project.
Indeed, most of my work, and definitely my best work, has been co-first authored with James Allingham, Riccardo Barbano, Shreyas Padhi, and Andy Lin. 
Apart from helping me escape the academic credit assignment trap,
James has been a great friend. He is likely the person with whom I have achieved best working synergy within my professional career. I will always regret not having worked more closely together during the later stages of our PhDs. 
Riccardo Barbano is another good friend with whom I was privileged to work closely. Riccardo introduced me to the world of computed tomography, resulting in the final content chapter of this thesis. 
I started working with Shreyas and Andy in the last two years of my PhD, which allowed me to play a more senior role in our collaboration. I learned a lot from this arrangement, and watching them both grow into excellent researchers has felt very rewarding. 
I would be remiss to not also give special mention to Dave Janz, who has accompanied me throughout my research into linearised Laplace and Gaussian processes. I have learned a lot of maths from Dave and I am very grateful to him for his patience when teaching me new concepts.

I would also like to thank a number of additional collaborators: Alex Terenin, who introduced me to matrix-free linear algebra and from whom I learned a lot about academic writing, Laurence Midgeley, whose enthusiasm and endless stream of clever ideas are inspiring, Erik Daxberger,  Umang Bhatt, Johannes Leuschner, Austin Tripp, Vincent Stimper, Emile Mathieu, Tomas Geffner, Adam Foster, Wenbo Gong, Chao Ma, Chelsea Murray, Zeljko Kereta, Tameem Adel, Adrian Weller, Bernhard Schölkopf, Bangti Jin, Csaba Szepesvári, and Eric Nalisnick.
Apart from being a great collaborator, I am grateful to Eric for, together with Max Welling, hosting me during my visit to AMLab at the the university of Amsterdam.
I am also grateful to Artem Artemev and Mark Van der Wilk for our very insightful conversations.

I would like to thank Marine Schimel for patiently putting up with me during the highs and lows of the PhD and also to Manuel Escolá, Cristina Uruén, Laura Aznar, Beatriz Alegre, Juan Galvez, Pedro Cabeza, David del Río, Marta Parra, 
Gergely Flamich, Stratis Markou, and Miguel García Ortegon, for being good friends. Additionally, I am grateful to Adriá Garriga Alonso,
Andrew Foong,
Kris Jensen,
Sebastian Ober,
Matt Ashman,
Tor Erlend Fjelde,
Adrían Goldwaser,
Juyeon Hao,
Bruno Mlodozeniec,
Kenza Tazi,
Jonny So,
Valerii Likhosherstov,
Aliaksandra Shysheya,
Vincent Dutordoir,
Runa Eschenhagen,
Emile Mathieu,
Will Tebutt, and
Isaac Reid
for making the CBL a nice environment in which to have spent the past four years.

I am grateful to Yann Dubois for being the only person I know who shares my obsessive passion for machine learning, and to Antonio Miguel for very generously dedicating endless hours to teaching me machine learning during my years as an undergraduate student. 

Finally, I would like to thank my parents who gave me every opportunity.

My PhD research has been supported by Microsoft Research, through its PhD Scholarship Programme, and by the EPSRC. My work was also supported by a number of Tier-2 capital grants that allowed me access to the University of Cambridge Research Computing Services. I apologise to the Cambridge HPC staff for taking down the queuing server by submitting too many jobs one time.

\end{acknowledgements}

\begin{abstract}

\vspace{-0.5cm}
Large neural networks trained on large datasets have become the dominant paradigm in machine learning. 
These systems rely on maximum likelihood point estimates of their parameters, precluding them from expressing model uncertainty. This may result in overconfident predictions and it prevents the use of deep learning models for sequential decision making.

This thesis develops scalable methods to equip neural networks with model uncertainty.
To achieve this, we do not try to fight progress in deep learning but instead borrow ideas from this field to make probabilistic methods more scalable.
In particular, we leverage the linearised Laplace approximation to equip pre-trained neural networks with the uncertainty estimates provided by their tangent linear models. This turns the problem of Bayesian inference in neural networks into one of Bayesian inference in conjugate Gaussian-linear models.
Alas, the cost of this remains cubic in either the number of network parameters or in the number of observations times output dimensions.  By assumption, neither are tractable.

We address this intractability  by using stochastic gradient descent (SGD)---the workhorse algorithm of deep learning---to perform posterior sampling in linear models and their convex duals: Gaussian processes. 
With this, we turn back to linearised neural networks, finding the linearised Laplace approximation to present a number of incompatibilities with modern deep learning practices---namely, stochastic optimisation, early stopping and normalisation layers---when used for hyperparameter learning. We resolve these and construct a sample-based EM algorithm for scalable hyperparameter learning with linearised neural networks.

We apply the above methods to perform linearised neural network inference with {ResNet-50} ($25$M parameters) trained on Imagenet ($1.2$M observations and $1000$ output dimensions). To the best of our knowledge, this is the first time Bayesian inference has been performed in this real-world-scaled setting without assuming some degree of independence across network weights. 
Additionally, we apply our methods to estimate uncertainty for 3d tomographic reconstructions obtained with the deep image prior network, also a first. 
We conclude by using the linearised deep image prior to adaptively choose sequences of scanning angles that produce higher quality tomographic reconstructions while applying less radiation dosage.

\end{abstract}

\tableofcontents

\printnomenclature

\mainmatter

\chapter{Introduction}  %


Programs learnt from data are rapidly displacing programs based on human-designed rules as the dominant paradigm for computer-based automation.
We have seen this in the fields of
computer vision \citep{dosovitskiy2021an},
inverse problems \citep{arridge2019solving},
natural language processing \citep{wang2024improving},
information retrieval \citep{zhu2024large},
text and image generation \citep{saharia2022photorealistic,jiang2024mixtral},
system control \citep{Hu2022rldriving},
scientific discovery \citep{Graczykowski2022alice, Akiyama_2021},
and even computer programming \citep{chen2021code},
among others.
Practically all of these advances were enabled by large-scale deep learning \citep{Henighan2020scaling}.
Indeed, it is plausible that given enough data, a flexible enough neural network, and sufficient compute to train the artificial intelligence (AI), data-driven decision making methods will dominate all traditional computer programs.
\nomenclature[z-ai]{$AI$}{Artificial Intelligence}

The rules for optimally learning from data were codified in the framework of Bayesian probability well before the deep learning revolution of the past decade \citep{Jeffreys1939Probability,cox1946probability,Stigler1986Laplace,Jaynes_Justice_1986}. 
Under this framework, we represent our knowledge, or lack thereof, as probability distributions. When we observe new data, the information gained is used to update these prior distributions into less entropic posterior distributions \citep{Gull1988maxent,Skilling1989}. In turn, these act as priors for future inferences. 
Although probabilistic methods were extensively leveraged to build primordial neural network systems \citep{hinton1993keeping,salakhutdinov2009Boltzmann}, modern neural network methods rely on expressing our beliefs as point estimates instead of probability distributions. 
The lack of explicitly modelled uncertainty makes modern deep learning systems vulnerable to 
acting spuriously when they encounter situations which were not provided  sufficient coverage in the training data \citep{Goddard2023hallucnations,Weiser_Schweber_2023}.
Additionally, probabilistic methods remain state of the art for decision-making tasks that require uncertainty-based exploration, like automated chemical design \citep{Gomez-Bombarelli2018}.

From a Bayesian perspective, neural networks can be seen as an uncompromising model choice that puts very little restrictions on the function class to be learnt. 
The effects of individual weights are non-interpretable, precluding the design of informative Bayesian priors for neural network parameters.
However, it is likely this is the very feature that allows us to use neural networks to solve tasks in ways that can not easily be summarised by a human-readable list of rules. For instance, how to eloquently sustain a conversation or drive a car.
With this idea in place, an intuitive way to explain the seeming incompatibility between Bayesian inference an neural networks is to think of the former as scoring a set of prior hypotheses by how well each one it agrees with the data. 
The problem with modern neural networks is that there are just too many hypotheses to score. The scoring becomes prohibitively expensive, especially, when combined with large datasets which are likely to be fit well by a relatively small region of the neural network parameter space.
In other words, while maximum likelihood learning scales well to the modern big-network and big-data setting, Bayesian inference does not.

This thesis aims to bridge the gap between Bayesian methods and contemporary deep learning.
This endeavour was pioneered by \cite{Mackay1992Thesis} who extended Bayesian inference and hyperparameter selection in linear models (which is also attributable to \cite{Gull1989line}) to the neural network setting via the Laplace approximation, naming his class of methods the \emph{evidence framework}.
 In the last 30 years, the methods of machine learning have changed quite a bit; the scale of the problems tackled and models deployed has grown by multiple orders of magnitude, precluding the out-of-the-box application MacKay's methods, and giving me something to write my thesis about. 
In fact, similarly to \cite{mackay1992practical}, this thesis begins by making contributions to the field of linear models and Gaussian processes, uses the Laplace approximation to adapt these methods for approximate inference in neural networks, and finally applies the developed Bayesian neural networks to efficient data acquisition. 
Thus, this thesis is perhaps best described as a modern take on the evidence framework which makes it scalable to modern problem sizes and amenable to modern deep learning architectures.

To achieve our goals, we are not going to fight progress in deep learning by trying to re-build it from the ground up to natively use Bayesian inference, for instance by imposing fancy handcrafted priors on weights whose effect we dont understand. I believe this is a lost cause.
Instead, we are going to build upon the tremendous progress that has been made in deep learning, and borrow ideas from this field to make Bayesian methods more scalable.
For instance, in \cref{chap:SGD_GPs}, we will use stochastic gradient descent---the de-facto method for training neural networks---to make Bayesian inference in linear models and Gaussian processes more scalable.  Additionally, when dealing with neural networks, we will focus on the \emph{post-hoc inference} setting, in which we leverage \corr{approximate} Bayesian methods, to obtain uncertainty estimates for pre-trained neural networks. This will ensure the thesis' contributions remain compatible with the quickly evolving field of deep learning.

\section{Thesis outline and contributions}

This thesis is written with my past self, before embarking on the PhD, as a target audience.
Although some measure theoretic and functional analytic concepts are (infrequently) mentioned throughout the thesis, knowledge of these fields is not required to understand the thesis' contributions.
Additionally, I have tried to combine mathematical derivations with a number of less-technical remarks to help the reader build intuition about the material.

The rest of this thesis is organised as follows.

\begin{itemize}
    \item \cref{chap:linear_models} introduces Bayesian inference in conjugate Gaussian-linear models and Gaussian processes. Particular focus is placed on the duality between the two model classes because we will make heavy use of it throughout the thesis. We also introduce the model evidence and discuss hyperparameter learning in linear models. Readers who posses an adept grasp of Bayesian linear models may skip this chapter. However, \cref{sec:pathwise_view} on pathwise conditioning may still be of interest, since it is slightly more niche.
    
    \item \cref{chap:approx_inference} introduces approximate inference for large scale or non-conjugate linear models, and for neural networks. We discuss variational inference, conjugate gradient-based approximate inference, and the Laplace approximation. We discuss the limitations of each of these approximations emphasising the trade-off between crudeness of approximation and scalability that they all present. Special focus is placed on the linearised variant of the Laplace approximation, as its adaption to modern deep learning will be a key theme of this thesis.
    
    \item \cref{chap:SGD_GPs} leverages stochastic gradient descent to scale Bayesian inference in conjugate linear models and Gaussian processes to large scale problems. In particular, we develop a number of quadratic objectives whose minimisers represent samples from the posterior distribution of a Gaussian process. We analyse their properties and propose a series of recommendations on how best to apply stochastic gradient-based solvers to this setting. 
    We dub our approach stochastic dual descent. 
    It presents a linear computational cost in the number of observations per gradient step, and we find that a constant number of steps suffice to obtain good performance across a diverse range of problem settings. This starkly contrasts with the cubic in the number of observations cost of exact inference. We also extend stochastic gradient descent to inducing point posteriors, where a sub-linear cost per step can be achieved.
    We analyse the spectral bias of solutions found via stochastic gradient descent, showing that full convergence is not necessary to achieve strong performance. 
    Experimentally, we show that stochastic dual descent outperforms conjugate gradient-based inference and variational inference on standard regression benchmarks and on a large-scale Bayesian optimisation benchmark.
    When combined with stochastic dual descent, Gaussian processes are able match the performance of graph neural networks on a large scale molecular binding affinity prediction task. This chapter is based on  \cite{antoran2023samplingbased}, \cite{lin2023sampling} and \cite{lin2024stochastic}.
    
    \item \cref{chap:adapting_laplace} identifies a number of incompatibilities between  the classical linearised Laplace model evidence objective for model selection and modern deep learning methodologies, in particular, stochastic optimisation, early stopping, and the use of normalisation layers. These result in severe deterioration of the model evidence estimate. We provide recommendations on how to adapt linearised Laplace in light of these issues. Namely, every neural network weight setting has an associated tangent linear model, and we recommend using the evidence of this linear model to select the hyperparameters to be used for linearised Laplace uncertainty estimation. Additionally, we must select priors over the linearised network weights which counteract a number of scale invariances introduced by normalisation layers.  We empirically validate our recommendations  on MLPs, classic CNNs, residual networks with and without normalisation layers, generative autoencoders~and~transformers. This chapter is based on \cite{antoran2022adapting}  and \cite{antoran2021fixing}.
    
    \item \cref{chap:sampled_Laplace} combines the contributions of the previous two chapters to put forth a scalable sample-based EM algorithm for hyperparameter learning in linearised neural networks. The E-step is based on stochastic-gradient descent posterior sampling and the M-step leverages a sample-based estimate of the effective dimension-based hyperparameter update introduced by \cite{Mackay1992Thesis}. We also discuss a number of implementation details that allow us to work with linearised neural networks without ever instantiating these models' Jacobians explicitly. This suite of techniques allows us to scale linearised neural network inference to modern architectures and datasets, such as ResNet-50 on Imagenet. Distinctly from more crude, e.g. factorised, approximations to Bayesian inference in Neural networks, our approach improves upon the performance of the pre-trained network we build upon. It provides state of the art results in terms of joint predictions across multiple inputs, a task of special interest for uncertainty-guided exploration. 
    This chapter is based on \cite{antoran2023samplingbased}.
    
    \item \cref{chap:DIP}  applies the methods developed in this thesis to uncertainty estimation and experimental design for computed tomography (CT) image and volume reconstruction. In particular, we use the deep image prior architecture for reconstruction and linearise the network for uncertainty estimation. 
    We develop a novel total-variation based prior for the linearised deep image prior. Our scalable sample-based EM iteration allows our method to scale to high-resolution 3d volumetric reconstructions from real-measured micro CT data. To the best of our knowledge, our work is the first to perform uncertainty estimation for 3d neural reconstructions. We then go on to leverage the linearised deep image prior as a data-dependent prior for adaptive CT scanning angle selection. This allows us to design strategies that reduce by up to 30\% the number of scans needed to match the performance of an equidistant angle baseline on a synthetic task.
    This chapter is based on \cite{barbano2022probabilistic}, \cite{barbano2022design}, \cite{Antoran2022Tomography} and \cite{antoran2023samplingbased}.
    
    \item \cref{chap:conclusion} concludes the thesis with an outlook of this thesis' contributions in the context of the broader field of machine learning and a discussion of avenues for future work. 
\end{itemize}

\section{Full list of publications}

I now provide a full list of papers I have written during my time as a PhD student.  Titles are bolded for papers whose content is included in this thesis. I also give a brief description of my contribution to each of these works. An asterisk superscript $^*$ denotes co-first authorship. 

\begin{enumerate}
\item J.A. Lin$^*$, S. Padhy$^*$, \textbf{J. Antorán}$^*$, A. Tripp, A. Terenin, C. Szepesvári, J. M. Hernández-Lobato, D. Janz. "\textbf{Stochastic Gradient Descent for Gaussian Processes Done Right.}" In \textit{International Conference on Learning Representations (ICLR)}. 2024 \nocite{lin2024stochastic}

My contribution to to this project consisted of developing the idea, helping my co-authors debug their code, writing the paper, and helping orchestrate other authors' contributions. 

\item L.I. Midgley$^*$, V. Stimper$^*$, \textbf{J. Antorán}$^*$, E. Mathieu$^*$, B. Schölkopf, Hernández-Lobato. "SE (3) equivariant augmented coupling flows." In \textit{Advances in Neural Information Processing Systems (NeurIPS)}. 2023. Awarded spotlight presentation.  \nocite{midgley2023se}

\item J.A. Lin, \textbf{J. Antorán}, J.M. Hernández-Lobato. "Online Laplace Model Selection Revisited." In \textit{Symposium on Advances in Approximate Bayesian Inference (AABI)}. 2023. Awarded oral presentation. \nocite{lin2023online}

\item J.A. Lin$^*$, \textbf{J. Antorán}$^*$, S. Padhy$^*$, D. Janz, J.M. Hernández-Lobato, A. Terenin. "\textbf{Sampling from Gaussian Process Posteriors using Stochastic Gradient Descent}." In \textit{Advances in Neural Information Processing Systems (NeurIPS)}. 2023. Awarded oral presentation. \nocite{lin2023sampling}

My contribution to to this project consisted of developing the idea, writing some of the code, writing the paper, and helping orchestrate other authors' contributions. 

\item R. Barbano, \textbf{J. Antorán}, J. Leuschner, J.M. Hernández-Lobato, B. Jin, Z. Kereta. "Image Reconstruction via Deep Image Prior Subspaces." In \textit{Transactions on Machine Learning Research (TMLR)}. 2023 \nocite{barbano2023image}

\item J.U. Allingham, \textbf{J. Antorán}, S. Padhy, E. Nalisnick, J.M. Hernández-Lobato. "Learning Generative Models with Invariance to Symmetries." In \textit{NeurIPS 2022 Workshop on Symmetry and Geometry in Neural Representations}. 2022 \cite{allingham2022learning}

\item \textbf{J. Antorán}$^*$, S. Padhy$^*$, R. Barbano, E. Nalisnick, D. Janz,  J. M. Hernández-Lobato. "\textbf{Sampling-based inference for large linear models, with application to linearised Laplace.}" In \textit{International Conference on Learning Representations (ICLR)}. 2023 \nocite{antoran2023samplingbased}

My contribution to to this project consisted of developing the idea, writing the codebase that was used for experiments, running some experiments, writing the paper, and helping orchestrate other authors' contributions. 

\item \textbf{J. Antorán}$^*$, R. Barbano$^*$, J. Leuschner, J.M. Hernández-Lobato, B. Jin. "\textbf{Uncertainty Estimation for Computed Tomography with a Linearised Deep Image Prior.}" In \textit{Transactions on Machine Learning Research (TMLR)}. 2023\nocite{Antoran2022Tomography}

My contribution to to this project consisted of developing the idea, helping my co-authors debug their code, and writing the paper. 

\item R. Barbano$^*$, J. Leuschner$^*$, \textbf{J. Antorán}$^*$, B. Jin, J.M. Hernández-Lobato. "\textbf{Bayesian experimental design for computed tomography with the linearised deep image prior.}" In \textit{Adaptive Experimental Design and Active Learning workshop at ICML}. 2022 \nocite{barbano2022design}

My contribution to to this project consisted of developing the idea, helping my co-authors debug their code, and writing the paper. 

\item \textbf{J. Antorán}, D. Janz, J.U. Allingham, E. Daxberger, R.R. Barbano, E. Nalisnick, J. M. Hernández-Lobato. "\textbf{Adapting the linearised Laplace model evidence for modern deep learning.}" In \textit{International Conference on Machine Learning}. 2022 \nocite{antoran2022adapting}

My contribution to to this project consisted of developing the idea, writing the codebase that was used for the experiments, writing the paper, and orchestrating other co-authors' contributions.

\item C. Murray, J.U. Allingham, \textbf{J. Antorán}, J.M. Hernández-Lobato. "Addressing bias in active learning with depth uncertainty networks... or not." In \textit{Proceedings of Machine Learning Research (PMLR)  163:59-63}. 2022 \nocite{murray2022Addressing}

\item T. Geffner$^*$, \textbf{J. Antorán}$^*$, A. Foster$^*$, W. Gong, C. Ma, E. Kiciman, A. Sharma, A. Lamb, M. Kukla, N. Pawlowski, M. Allamanis, C. Zhang. "Deep end-to-end causal inference." \textit{arXiv preprint arXiv:2202.02195}. 2022 \nocite{geffner2022deep}

\item \textbf{J. Antorán}, J.U. Allingham, D. Janz, E. Daxberger, E. Nalisnick, J. M. Hernández-Lobato. "\textbf{Linearised Laplace inference in networks with normalisation layers and the neural g-prior.}" In \textit{Symposium on Advances in Approximate Bayesian Inference (AABI)}. 2022. Awarded oral presentation. \nocite{antoran2021fixing}

My contribution to to this project consisted of developing the idea, writing the codebase that was used for the experiments, writing the paper, and orchestrating other co-authors' contributions. 

\item R. Barbano$^*$, \textbf{J. Antorán}$^*$, J.M. Hernández-Lobato, B. Jin. "\textbf{A probabilistic deep image prior over image space.}" In \textit{Symposium on Advances in Approximate Bayesian Inference (AABI)}. 2022 \nocite{barbano2022probabilistic}

My contribution to to this project consisted of developing the idea, helping my co-authors debug their code, and writing the paper. 

\item C. Murray, J.U. Allingham, \textbf{J. Antorán}, J.M. Hernández-Lobato. "Depth Uncertainty Networks for Active Learning." In \textit{Bayesian Deep Learning Workshop at the 35th Conference on Neural Information Processing System}. 2021 \nocite{murray2022depth}

\item U. Bhatt, \textbf{J. Antorán}, Y. Zhang, Q.V. Liao, P. Sattigeri, R. Fogliato, G. G. Melancon, R. Krishnan, J. Stanley, O. Tickoo, L. Nachman, R. Chunara, A. Weller, A. Xiang. "Uncertainty as a Form of Transparency: Measuring, Communicating, and Using Uncertainty." In \textit{AAAI/ACM Conference on AI, Ethics, and Society}. 2021 \nocite{Bhatt2021transparency}

\item E. Daxberger, E. Nalisnick, J.U. Allingham, \textbf{J. Antorán}, J.M. Hernández-Lobato. "Bayesian Deep Learning via Subnetwork Inference." In \textit{International Conference on Machine Learning (ICML)}, 2021. \nocite{Daxberger21subnetwork}

\item \textbf{J. Antorán}, U. Bhatt, T. Adel, A. Weller, J. M. Hernández-Lobato. "Getting a CLUE: A Method for Explaining Uncertainty Estimates." In \textit{International Conference on Learning Representations (ICLR).} 2021. Awarded oral presentation.  \nocite{antoran2021clue}

\item E. Daxberger, E. Nalisnick, J. Allingham, \textbf{J. Antorán}, J.M. Hernández-Lobato. "Expressive yet tractable Bayesian deep learning via subnetwork inference." \textit{Symposium on Advances in Approximate Bayesian Inference (AABI)} 2020. Awarded oral presentation. \nocite{daxberger2020expressive}

\item \textbf{J. Antorán}$^*$, J.U. Allingham$^*$, J.M. Hernández-Lobato. "Depth uncertainty in neural networks." In \textit{Advances in Neural Information Processing Systems (NeurIPS)}. 2020 \nocite{Antoran20depth}

\item \textbf{J. Antorán}$^*$, J. U. Allingham$^*$, J. M. Hernández-Lobato. "Variational depth search in ResNets." In \textit{Workshop on Neural Architecture Search at International Conference on Learning Representations}. 2020 \nocite{antorán2020variational}

\end{enumerate}

\nocite{antoran2019understanding}

\nocite{antoran2019disentangling}

\chapter[Linear models and Gaussian processes]{Bayesian reasoning with Gaussian linear models and Gaussian processes}\label{chap:linear_models}

\ifpdf
    \graphicspath{{chapters/Chapter2/Figs/Raster/}{chapters/Chapter2/Figs/PDF/}{chapters/Chapter2/Figs/}}
\else
    \graphicspath{{chapters/Chapter2/Figs/Vector/}{chapters/Chapter2/Figs/}}
\fi

We start with \textit{linear regression}, where outputs are given by linear functions of some basis function expansion of the input variables, as these models play a central role in this thesis.  When a Gaussian prior is placed over the  parameters and the targets are assumed to have been corrupted by additive Gaussian noise, we obtain the Gaussian linear model. This setting is of special interesting because conjugacy between likelihood and prior leads to the equations of Bayesian inference admitting closed form solutions.   
This simplicity does not come at the cost of flexibility; the use of basis function expansion allows linear regressors to learn arbitrarily complex functions. 
This thesis will leverage this fact to tackle the analytical intractability of Bayesian inference in neural network models; in \cref{chap:sampled_Laplace} we will approximate the predictions of the neural network with those of a Gaussian linear model with an appropriate choice of basis function expansion.
The key limitation of Gaussian linear regression is its computational cost, which scales cubically with the number of observations or number of model parameters. This thesis addresses this limitation in \cref{chap:SGD_GPs}.

I would be remiss to not mention some other excellent references for Gaussian linear models, such as the seminal texts of \cite{Gull1989line} and \cite{MacKay1992interpolation}, and the books of
\cite{bishop2006pattern} (Chapter 3) and \cite{williams2006gaussian} (Chapter 2).
However, this chapter provides a presentation of the material that emphasises the duality between parameter-space and function-space, and the pathwise formulation of inference, which will hopefully make the contributions of the rest of the thesis easily accessible to the reader.
In particular, we start by providing 3 complementary views of Bayesian inference in Gaussian-linear models: \cref{sec:weight_space_view} introduces the parametric weight-space view of linear regression, \cref{sec:function_space_view} introduces Gaussian processes (GP) \nomenclature[z-gp]{$GP$}{Gaussian Processes}, the dual, non-parametric view of linear regression, and \cref{sec:pathwise_view} presents the pathwise view of inference in Gaussian processes which deals directly with random functions. The latter will be key to designing computationally efficient inference algorithms in \cref{chap:SGD_GPs}.
We then go on to discuss the importance of the choice of hyperparameters for linear models and how to select them via marginal likelihood maximisation in \cref{sec:model_selection}. The chapter concludes with a discussion of the limitations of linear models in \cref{sec:linear_model_limitations}.

\section[The weight space view]{The weight space view: Gaussian linear regression}\label{sec:weight_space_view}

We begin by introducing a multi-output basis-function linear model. 
We observe a set of $n$ inputs $x_1, \dotsc, x_n \in \c{X}$ and corresponding outputs $y_1, \dotsc, y_n \in \c{Y} \subseteq \R^c$, where $c$ is the number of output dimensions. 
We introduce a basis function expansion $\phi : \c{X} \to \R^c \times \c{H} $, that maps inputs into some Reproducing Kernel Hilbert Space (RKHS) $\c{H}$. \nomenclature[z-rkhs]{$RKHS$}{Reproducing Kernel Hilbert Space}
We will not provide a review of RKHS, but instead we refer the reader to \cite{paulsen16} for this.
Since we will always work with separable RKHS, we can treat their elements as vectors, that is $\c{H} \subseteq \R^d$, without loss of generality---although sometimes these vectors will be infinite dimensional---and thus we henceforth treat $\phi(x_i)$ like a $c \times d$ dimensional matrix. \nomenclature[z-pd]{$PD$}{Positive Definite}  \nomenclature[z-psd]{$PSD$}{Positive Semidefinite}

We assume that the targets are generated as a linear combination of our featurised inputs, weighted by a parameter vector $w \in \c{H}  \subset \R^d$, and corrupted by additive Gaussian noise with 0 mean and observation-dependent positive definite (PD) and symmetric precision matrix $B_i \in \R^{c \times c}$ for each $i \leq n$.
That is, each target is given by
\begin{gather}\label{eq:initial_bayesian_linear_model}
    y_i = \phi(x_i) w + \eps_i \spaced{with} w \sim \N(0, A^{-1}) \spaced{and} \eps_i \sim \N(0, B_i^{-1}).
\end{gather}
The parameter vector $w$ is an unobserved variable which we assume to have been drawn from a Gaussian prior distribution with precision given by the positive definite matrix $A$.

Henceforth, we will use the following stacked notation: we write $Y \in \R^{n c}$ for the concatenation of $y_1, \dotsc, y_n$.
We stack the expanded observations into the design matrix $\Phi = [\phi(x_1)^T, \phi(x_2)^T, \dotsc, \phi(x_n)^T]^T \in \R^{nc \times d}$. We concatenate the additive noise vectors into $\c{E} = [\eps_1^T, \eps_2^T, \dotsc, \eps_n^T]^T$.  Its distribution is a zero centred Gaussian with $B \in \R^{nc \times nc}$, the block diagonal matrix with blocks $B_1, \dotsc, B_n$, as its precision.
With this, our model is
\begin{gather}\label{eq:weight_space_gen_model}
    Y = \Phi w + \c{E} \spaced{with} w \sim \N(0, A^{-1} I) \spaced{and} \c{E} \sim \N(0, B^{-1}).
\end{gather}
Without loss of generality we assume $A = a I$ with $a \in \R_+$; any additional structure in $A$ can be absorbed into the basis functions $\phi$. Additionally, unless specified otherwise, we assume isotropic observation noise $B = b I$  with $b \in \R_+$. With this, each output dimension can be seen as an additional independent observation, and we are free to assume $c=1$ without loss of generality. 
Finally, for a vector $v$ and a  Positive SemiDefinite (PSD) matrix $G$ of compatible dimensions, $\|v\|_G^2=v^T G v$.

\begin{remark} \textbf{On our construction of the multioutput linear model} \\ 
Our model generates multiple outputs from a weight vector by multiplying with the matrix-valued features $\phi(x_i)$.
This choice differs from the presentation of \cite{bishop2006pattern}, where a weight matrix multiplies vector-valued features. Our choice is deliberate, as it will simplify notation when dealing linear approximations to multioutput neural network functions in later chapters. 
\end{remark}

\paragraph{Notation for probability distributions} 
We use capital letters to refer to probability measures, e.g., $\Pi = \N(0, A^{-1})$ and lowercase letters to refer to their density functions, i.e. $\pi$. These are defined in the standard way via the Radon–Nikodym derivative $d \Pi = \pi(w) d \nu$ with $\nu$ denoting the Lebesgue measure.
We will not concern ourselves any further with measure theoretic issues. Throughout the thesis we assume any necessary conditions hold for simplicity.
We refer to the parameters of probability distributions which we do not treat probabilistically as hyperparameters.
To make these explicit in our notation for a density function, we separate the hyperparameters from the point in the sample space at which the density is evaluated by a semicolon. For instance, we may write the density of our Gaussian prior at $w$ as $\pi(w;\, 0, A)$ where the mean $0$ and precision $A$ are the hyperparameters. However, whenever there is no ambiguity, we omit the hyperparameters to keep our notation uncluttered.
We write conditional density functions, e.g. likelihood functions, by separating variables being conditioned on with a vertical bar $|$. For instance, the likelihood of the linear regression weights is written as $p(Y|w) = \frac{\partial P_{Y|w}}{\partial \nu}$, where $P_{Y|w} = \N(\Phi w ,\, B^{-1})$. When there is no ambiguity, we do not explicitly include our set of inputs $X$ in our notation to further reduce clutter. 
Finally, we will also refer to the density of a distribution by prepending the point at which it being evaluated to the distribution's arguments, but separated by a semicolon. That is, $\N(w;\, 0, A^{-1}) = \pi(w)$.

\subsection{Understanding our choice of model}\label{subsec:basis_functions}

\begin{figure}[thb]
\centering
\includegraphics[width=\linewidth]{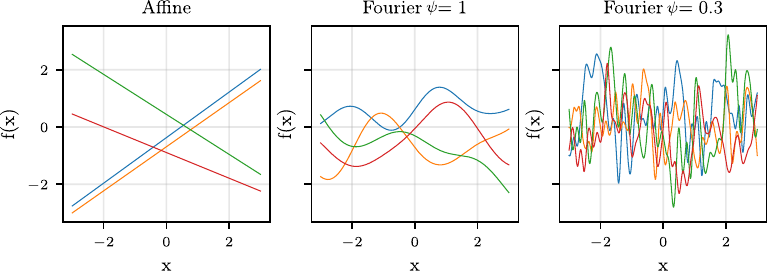}
\caption{Each plot displays four prior function samples, drawn using \cref{eq:prior_function_sampling_weight_space}. The left side plot uses an affine basis expansion \cref{eq:affine_expansion}, the middle one a 500 element random Fourier expansion with a Gaussian spectral measure and a lengthscale of $\psi=1$ \cref{eq:random_fourier_basis}, and the right side plot uses a similar Fourier expansion but with a lengthscale of $\psi=0.3$. 
}
\label{fig:basis_comparison}
\end{figure}

The choice of basis is perhaps the most important modelling decision when working with linear models; our flexibility in the choice of basis makes linear models very powerful. Indeed, every function can be expressed as a linear combination of a set of basis functions; to see this just choose an element of the basis to contain the target function. However, we seek more than just a representation from which our targets can be linearly decoded\footnote{Many trivial choices of basis, like ones with very short lengthscales, allow any target to be linearly decoded but are not practically useful.}. Our basis should reflect our prior knowledge (and uncertainty) over the target function.   

To illustrate the power of the basis function expansion, we provide some examples of common basis function choices: the affine basis and the random Fourier basis. We restrict ourselves to $\c{X} = \R$ and a single output dimension $c=1$ for the purpose of visualisation. 
The affine basis corresponds to regression with a single linear weight and a bias. That is
\begin{gather}\label{eq:affine_expansion}
    \phi(x) = [1, x].
\end{gather}
This model expresses the belief that our target function is a straight line, or plane. 
Furthermore, within the set of all possible lines, our 0-centred Gaussian prior over the parameters $w$ expresses a belief that lines corresponding to weight and bias choices of small magnitude are more likely a priori.
The random Fourier basis \citep{Rahimi2007random,Sutherland2015random} represents the input as a set of cosines with random frequency and phase
\begin{gather}\label{eq:random_fourier_basis}
    \phi_{s, r}(x) = \sqrt{\frac{2}{d}}[\cos(s_1^T x + r_1), \cos(s_2^T x + r_2), \dotsc, \cos(s_d^T x + r_d)] \\
    \text{with} \quad s_i \sim \N(0,  \psi^{-2}) \quad \spaced{and} \quad r_i \sim \U(0, 2\pi), \notag 
\end{gather}
where the subscript in $\phi_{s, r}$ makes explicit the features dependence on the source of randomness $s, r$. The lengthscale parameter $\psi$ controls the smoothness of the functions we can express through the choice of frequency variance. Small values lead to our prior placing most of weight on smooth functions, while large values generate a mix of functions of different smoothness.

We use $f : \c{X} \to \R$ to denote the random prior function implied by our model. We evaluate realisations of this random function by
multiplying weight vectors drawn from the prior over weights with the basis expanded inputs as
\begin{gather}\label{eq:prior_function_sampling_weight_space}
     f(\cdot) = \phi(\cdot) w \spaced{with} w \sim \N(0, A^{-1}),
\end{gather}
and display them in \cref{fig:basis_comparison}. We denote by $X$ the array of inputs $(\phi(x_i))^n_{i=1}$, and with $f(X) \in \R^n$ the vector given by our prior random function evaluated at these inputs. Pushing the prior distribution over weights through the product with the feature expansion, we obtain the prior distribution over function values evaluated at the inputs 
\begin{gather}\label{eq:prior_over_functions_evaled_weightspace}
    f(X) \sim \N(0, \Phi A^{-1} \Phi^T).
\end{gather}
We visualise the covariance matrices for our affine and random Fourier basis in \cref{fig:cov_mtx_comparison}.

\begin{figure}[thb]
\centering
\includegraphics[width=\linewidth]{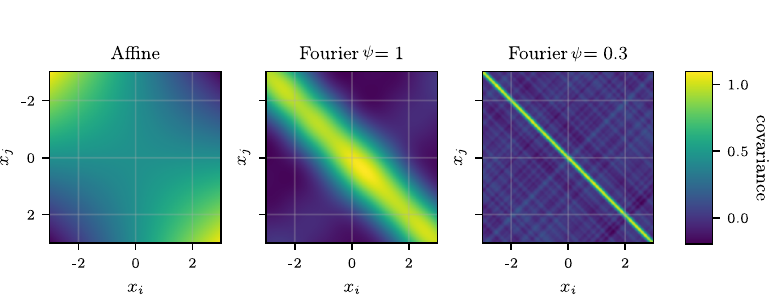}
\caption{Covariance matrices of the prior distribution over functions evaluated at 501 equally spaced points in the range $[-3, 3]$ The left side plot uses an affine basis expansion \cref{eq:affine_expansion}, the middle one a 500 element random Fourier expansion with a Gaussian spectral measure and a lengthscale of $\psi=1$ \cref{eq:random_fourier_basis}, and the right side plot uses a similar Fourier expansion but with a lengthscale of $\psi=0.3$. 
}
\label{fig:cov_mtx_comparison}
\end{figure}

The choice of basis affects our model's uncertainty a priori and thus how much data will be needed to pin down accurate values for the parameters.
If we choose a more flexible function class, then we will need more data to constrain the parameters and vice versa.
The Fourier model with a large value for $\psi$ is more flexible than the affine model since it can express non-linear functions in the inputs. This additional flexibility is reflected in the covariance matrix structures shown in \cref{fig:cov_mtx_comparison}. The linear model assumes strong correlations throughout the input space. Only a few observations will be enough to constrain its parameters everywhere.
On the other hand, the Fourier model's band diagonal covariance structure tells us the model assumes that targets are only correlated when their inputs are nearby. How close the inputs need to be is given by the width of the diagonal band. Since each observation will only constrain the random functions locally, many more observations are necessary to reduce the Fourier model's uncertainty. 
Since there are more ways for a function to change quickly than slowly, a smaller value of $\psi$ leads to an even more flexible random Fourier model with a thinner band-diagonal covariance structure. This model will require even more data to learn.

Suitable feature expansions exist for many types of data, such as images \citep{Wilk2017conv}, natural text \citep{Collins2001kernels}, and even graphs \citep{Tripp2023tanimoto}. Throughout this chapter we will use the Fourier basis as a recurring example. As we will see in \cref{subsec:kernel_trick}, the random Fourier linear model is intimately tied to stationary Gaussian processes.

\subsection{Posterior inference: from loss functions to distributions}\label{subsec:linear_posterior_inference}

Having discussed the choice of model, we now turn to learning from data. Intuitively, learning can be thought of as combining what we knew a priori with the information that the newly observed data tells us. We can achieve this by scoring candidate parameter vectors by their density under our prior $\Pi$ and how closely the corresponding functions pass to the observed targets (the mapping between weight vectors and functions is given in \cref{eq:prior_function_sampling_weight_space}). The latter requirement is quantified by the probability density of our observations given the weights, which is known as the likelihood when taken as a function of the weights.  
Our assumption on the Gaussianity of the observation noise implies the conditional density over the targets is $p(Y|w) = \N(Y; \Phi w ,\, B^{-1})$. We assume iid inputs, making this density factorise across observations as $\prod_{i=1}^n \N(y_i; \phi(x_i) w ,\, B_i^{-1})$.

Since we require our functions to simultaneously be constrained by our prior \emph{and} likelihood, we construct an objective function by multiplying the two, obtaining the joint density $p(Y | w) \pi(w) = \prod_{i=1}^n p(y_i | w) \pi(w)$. 
Taking a logarithm\footnote{The monotonicity of the logarithm ensures the optima of the function do not change} for numerical stability, we find that the likelihood corresponds to the least squares regression loss and the prior, to the sum of squares regulariser, both up to an additive constant. That is, we obtain the loss $\c{L}: \c{R}^d \to R_+$ given by
\begin{gather}\label{eq:linear_regression_loss}
    \log p(Y | w) + \log \pi(w) + C = \underbrace{\frac{1}{2} \sum_{i=1}^n \|y_i - \phi(x_i) w\|^2_{B_i}}_{\text{least squares loss}} + \underbrace{\frac{1}{2} \|w\|^2_{A}}_{\text{regulariser}} \coloneqq \c{L}(w),
\end{gather}
where $C$ is the additive constant independent of $w$ and $Y$.
Both terms in the expression are quadratic, with the curvature of the fit term being 
$M = \Phi^T B \Phi^T$ and the regulariser's curvature being given by $A$. The curvature of the full loss is thus $\nabla^2_w \c{L} = M + A \coloneqq H$. This allows for a closed form solution for the maximum a posteriori (MAP) \nomenclature[z-map]{$MAP$}{Maximum A Posteriori} estimate of the parameters $w_\star = H^{-1} \Phi^T B Y$. Thus the MAP function is $f_\star(\cdot) = \phi(\cdot) w_\star$. We refer to \cite{bishop2006pattern} for more detailed derivations.

Only finding the optima of the loss does not tell us how confident we should be in the corresponding parameter setting. For instance, if there are many parameter settings obtaining similar loss values but mapping to very different functions, i.e. the determinant of $H$ is small, we might become less confident in the MAP estimate. To fully capture the uncertainty in our parameter estimate we resort to Bayesian inference. 
We obtain the posterior density over parameters through Bayes rule
\begin{gather}\label{eq:bayes_rule_c2}
    \pi(w|Y) = \frac{p(Y | w) \pi(w)}{\int_w p(Y | w)\, d \pi(w)}.
\end{gather}
From \cref{eq:linear_regression_loss} it is clear that the posterior relates to the linear regression loss as $\pi(w|Y) \propto \exp(- \c{L}(w))$\footnote{It is worth noting that we can use this strategy to construct probability distributions from other positive-valued functions.}. Since the loss is quadratic, the posterior is also Gaussian with mean $w_\star$ and covariance $H^{-1} \coloneqq \Sigma$. We illustrate this for our affine model in \cref{fig:linear_posterior_functions}.
The ratio of the joint density to the posterior $p(Y) = \int_w p(Y | w)\, d \pi(w)$ 
 is known as the ``evidence'', a constant independent of $w$, which we will discuss in detail in \cref{sec:model_selection}. 
 
 \begin{figure}[thb]
\centering
\includegraphics[width=0.85\linewidth]{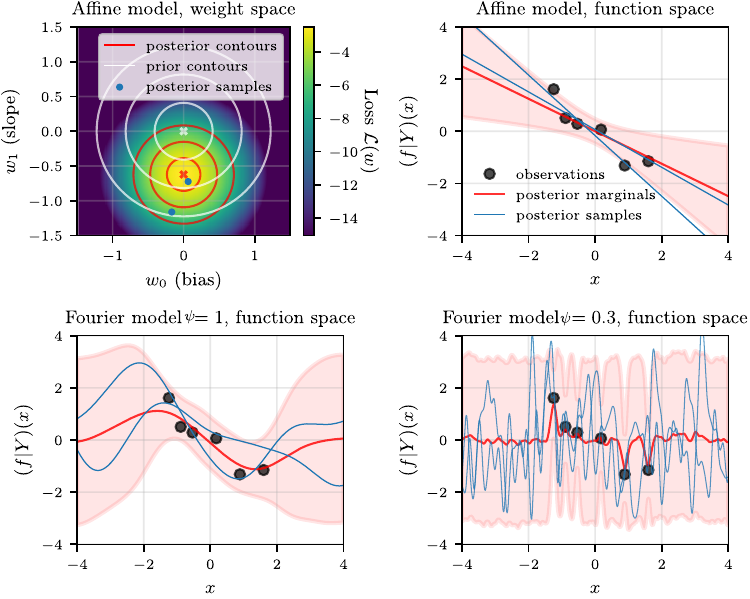}
\caption{ The top left plot shows the $d=2$ dimensional posterior landscape of our affine model fit on a $n=6$ observation dataset with $B=2I$ and $A=6I$. The 1, 2 and 3 standard deviation prior and posterior contours are overlayed on top. We draw 2 samples from the weight space posterior, which we plot as function samples in the top right plot. The top right plot also displays the mean and 2 standard deviation contours of the posterior random function $f | Y$. The bottom left and bottom right plots display the same objects as the top right, but for the 500 element random Fourier basis with a Gaussian spectral measure. We set $A=0.4I$ for the Fourier models. The lengthscale on the left is $\psi=1$ and the right side plot uses $\psi=0.3$.
}
\label{fig:linear_posterior_functions}
\end{figure}

We draw from the posterior distribution over functions by multiplying posterior weight samples with our basis expansion 
\begin{gather}
 (f|Y)(\cdot) = \phi(\cdot) w \spaced{with} w \sim \N(w_\star, \Sigma) \notag\\
  \Sigma = H^{-1} = (A + M)^{-1} \spaced{and} w_\star = \Sigma \Phi^T B Y,\label{eq:posterior_linear_function_draws}
\end{gather}
and illustrate this for the different priors introduced in \cref{subsec:basis_functions}, and a small dataset, in
\cref{fig:linear_posterior_functions}. Computationally, the cost of evaluating this posterior is dominated by computing the inverse of the Hessian $H$ which presents cubic cost in the number of observations times output dimensions $\c{O}\left((nc)^3\right)$.

At an array of test inputs $X' = (x'_{i})_{i=1}^{n'}$ with corresponding featurisation $\Phi' \in \R^{n'c \times d}$, we evaluate the posterior distribution over function values by marginalising out the parameters in \cref{eq:posterior_linear_function_draws}. Since we are dealing with a linear combination of Gaussian variables, the distribution over function evaluations will be jointly Gaussian
\begin{gather}\label{eq:posterior_predicivte_evaluated}
    (f|Y)(X') \sim \N(\Phi' w_\star,\, \Phi' \Sigma \Phi'^T).
\end{gather}
We illustrate the marginals of this distribution in \cref{fig:linear_posterior_functions}. The affine model presents the smallest posterior errorbars, as it is the least flexible. We can pin down the value of its parameters with the least amount of data. The $\psi=1$ Fourier model presents a smoother posterior mean and larger errorbars, a consequence of the model's increased flexibility. 
Additionally, the band diagonal structure of the Fourier model's covariance (recall \cref{fig:cov_mtx_comparison}) results in the posterior returning to the prior covariance far enough away from our observations. For the model with lengthscale $\psi=0.3$, this happens so fast that the posterior ends up matching the 0-mean prior almost everywhere, except very close to the data. 
Visual inspection reveals this model choice is too flexible for our toy dataset. We would not expect this solution to generalise to additional observations. This lack of generalisation is also reflected in the large errorbars of the posterior.

So far, we have looked at the posterior distribution over functions. However, if we want to make predictions about observations, we need to take into account that these are generated as noisy function realisations. The output space distribution that accounts for both the uncertainty in our parameters and  observations is the posterior predictive. For a new input $x_{n+1}$, the posterior predictive density over the corresponding target $y_{n+1}$ is given by
\begin{gather}
    p(y_{n+1} | Y) = \int_w p(y_{n+1} | w) \, d \pi(w|Y).
\end{gather}
In the linear-Gaussian setting, assuming a homoscedastic observation noise of precision $b$, this density corresponds to the distribution $\N(\phi(x') w_\star, \,\phi(x') \Sigma \phi(x')^T + b^{-1} I$).

\section[The function space view]{The function space view: Gaussian processes}\label{sec:function_space_view}

 A stochastic process is a potentially infinite set of random random variables. 
We say that a random function $f : \c{X} \to \R^c$ is a \emph{Gaussian process} if, for every finite set of points $X \in \c{X}^n$, $f(X)$ is jointly Gaussian. Both of the expressions we derived in the previous section for the prior \cref{eq:prior_over_functions_evaled_weightspace} and posterior \cref{eq:posterior_predicivte_evaluated}
 distributions over function evaluations are multivariate Gaussians, satisfying this definition.
Viewing the Gaussian linear model as a Gaussian process (GP) \nomenclature[z-gp]{$GP$}{Gaussian Processes} will allow us to perform Bayesian inference without ever having to work with the parameters $w$ directly. The use of stochastic processes as priors is known as Bayesian nonparametrics \citep{ghosal2017nonparametrics}.

\subsection{Duality}\label{subsec:duality}

Instead of the usual measure theoretic definition of stochastic processes \citep{matthews2017scalable}, we will derive the function-space view as the convex dual formulation of the Gaussian linear model \citep{khan2014decoupled}.
We will make heavy use of this duality throughout the rest of the thesis. 

\begin{derivation} \textbf{Convex duality of Gaussian linear regression loss}\\
    We begin  by formulating the linear regression loss $\c{L}$ introduced in \cref{eq:linear_regression_loss} as the constrained optimisation problem
    \begin{gather}
    \min_{w \in \R^d} \frac{1}{2} \sum_{i=1}^n \|y_i - u_i\|^2_{B_i} + \frac{1}{2} \|w\|^2_{A} \notag \\
        \text{s.t.} \quad u_i = \phi(x_i) w \quad \forall i.
    \end{gather}
    We introduce the Lagrangian $L : \R^d \times \R^{nc} \times \R^{nc} \to \R_+$ with Lagrange multiplier $\alpha \in \R^{nc}$ as
    \begin{gather}
        L(w, u, \alpha) = \frac{1}{2} \sum_{i=1}^n \|y_i - u_i\|_{B_i}^2 + \frac{1}{2} \|w\|^2_{A} + \sum_{i=1}^n \innerprod{\alpha_i}{u_i - \phi(x_i) w}.
    \end{gather}
    This problem is quadratic in both $w$ and $u$ and Slater's condition holds (see 5.2 in \cite{Boyd2014convex}). Thus there is strong duality
    \begin{gather}
        \min_{w \in \R^d}\c{L}(w) =  \max_{\alpha \in \R^{nc}}  \inf_{w \in \R^d} \inf_{u \in \R^{nc}} L(w, u, \alpha) = \max_{\alpha \in \R^{nc}} L(w', u', \alpha) 
    \end{gather}
    where the optimal primal variables can be shown, by solving the respective quadratic problems, to be given by $w' = A^{-1}\Phi\alpha, \spaced{and} u'_i = y_i - B_i^{-1}\alpha_i$. Plugging these in to the Lagrangian yields the dual loss 
    \begin{gather}
        L(w', u', \alpha) = - \frac{1}{2} \|\alpha\|^2_{(B^{-1} + \Phi A^{-1} \Phi^T)} + \alpha^T Y
    \end{gather}
    which is also quadratic but with curvature $B^{-1} + \Phi A^{-1} \Phi^T$. It is optimised by taking
    \begin{gather}\label{eq:c2_gp_mean_representer}
        \alpha_\star \coloneqq (B^{-1} + \Phi A^{-1} \Phi^T)^{-1} Y
    \end{gather}
\end{derivation}
Thus we can reparametrise the maximum a posteriori function estimate in terms of the optimal Lagrange multipliers $\alpha_\star$ as 
\begin{gather}\label{eq:dual_map}
    f_\star(\cdot) = \phi(\cdot) w_\star = \phi(\cdot) A^{-1}\Phi^T \alpha_\star.
\end{gather}
There are $nc$ Lagrange multipliers in the vector $\alpha_\star$, one per observation and output dimension. Obtaining them requires solving a $nc$ dimensional system at cost $\c{O}{((nc)^3)}$. This is in contrast to the  $\c{O}{(d^3)}$ cost of the primal solution $w_\star$. Thus, the dual formulation will be preferable when $nc < d$.

An analogous derivation to the one above, given in \citep{khan2014decoupled}, can be used to find the dual formulation of the full Gaussian posterior, including the covariance. However, a faster route is to use the Woodbury matrix to re-write the expression for the posterior covariance into a form that depends on $(B^{-1} + \Phi A^{-1} \Phi^T)^{-1}$, the curvature of the dual problem, as
\begin{gather}
    \Sigma = (A + \Phi^T B \Phi)^{-1} = A^{-1} - A^{-1} \Phi^T (B^{-1} + \Phi A^{-1} \Phi^T)^{-1} \Phi A^{-1}.
\end{gather}
With this, the posterior distribution over functions evaluated at a set of test points $X' = (x'_{i})_{i=1}^{n'}$ with featurisation $\Phi' \in \R^{n'c \times d}$ can be written as  
\begin{align}\label{eq:posterior_predicivte_evaluated2}
    (f|Y)(X') \sim \N(&\Phi' A^{-1} \Phi^T (B^{-1} + \Phi A^{-1} \Phi^T)^{-1} Y, \notag \\
    &\Phi' A^{-1} \Phi'^T - \Phi' A^{-1} \Phi^T (B^{-1} + \Phi A^{-1} \Phi^T)^{-1} \Phi A^{-1} \Phi'^T).
\end{align}
Again, evaluating this expression presents cost $\c{O}\left((nc)^3\right)$ as opposed to $\c{O}(d^3)$ for the primal form \cref{eq:posterior_predicivte_evaluated}.

\subsection{From features to kernels}\label{subsec:kernel_trick}

When working with the dual form of the Gaussian linear model, we no longer encounter the featurised design matrix $\Phi \in \R^{nc \times d}$ explicitly; it only shows up as part of the $nc \times nc$ matrix $\Phi A^{-1} \Phi^T \coloneqq K$, which we will refer to as the \emph{kernel matrix}. 

Taking $c=1$ for simplicity of notation but without loss of generality, 
any feature map of the form $\phi(\cdot) : \c{X} \to \c{H}$ defines a symmetric and positive definite kernel $k : \c{X} \times \c{X} \to \R$ as $k(x_i, x_j) = \innerprod{\phi(x_i)}{\phi(x_j)}$ for $x_i, x_j \in \c{X}$. The converse is also true; any symmetric and positive definite kernel $k$ can be written as an inner product in some RKHS $\c{H}$ \citep{Aronszajn1950reproducing}. 
We now note that the positive definite matrix $A$ can be absorbed into the featurised design matrices. It can simply be seen as a rotation and shear of the features. Thus, there exists a kernel that generates our kernel matrix such that $[K]_{i, j} = k(x_i,x_j) \,\forall\,\, {i,j=1,..,N}$. 
This fact will allow us to avoid working with features entirely in favour of their inner products. In turn, this will allow us to use potentially infinite dimensional feature expansions, where it may be impossible to explicitly compute the features. The substitution of input inner products $\innerprod{x_i}{x_j}$ with kernel function evaluations $k(x_i, x_j)$ to obtain a non-linear (in the inputs) version of existing algorithms is known as the \emph{kernel trick} \citep{Scholkopf2001learning}.  Additionally, we will henceforth denote the matrix built by evaluating our kernel at all pairs in two arrays of inputs $X$ and $X'$ as $K_{X X'}$. That is $[K_{X X'}]_{i,j} = k(x_i, x'_j) : i=1,2,\dotsc, n, \, j=1,2,\dotsc,n'$.  We refer to \cite{Hofmann2006rkhsreview} for a tutorial on RKHS.

To illustrate the kernel trick, we consider the random Fourier basis given in \cref{eq:random_fourier_basis} and let the number of features $d$ go to infinity. We recover the squared exponential or Radial Basis Function (RBF) \nomenclature[z-rbf]{$RBF$}{Radial Basis Function} kernel
\begin{align}\label{eq:random_feature_convergence}
    k(x_i, x_j) &= \innerprod{\phi_{s, r}(x_i)}{\phi_{s,r}(x_j)} \notag \\
    &= \frac{2}{d} \sum_{l=1}^d  \cos(s_l^T x_i + r_l) \cos(s_l^T x_j + r_l) \underset{d = \infty}\to \exp\left(\frac{-\|x_i - x_j\|^2}{\psi^2}\right)\notag \\ 
    \text{with} \quad&  s_l \sim \N(0, \psi^{-2}) \quad \spaced{and} \quad r_l \sim \U(0, 2\pi).
\end{align}
Thus, when we use the RBF kernel we are leveraging an infinite dimensional feature expansion without ever having to compute Fourier features explicitly. We will discuss random feature approximations to kernels in more detail in \cref{subsec:random_features}.

We refer to the partial evaluation of the kernel $k(x, \cdot) = \phi(x) A^{-1}\phi(\cdot)^T  : \c{H} \to \R$ as the \emph{evaluation functional}, which is an element of RKHS in its own right\footnote{To see this note that elements of the RKHS can be written as $\sum_{i=1}^d \alpha_i k(x_i, \cdot)$ and then choose all but 1 $\alpha_i$ to be 0.}. Its name comes from the fact that for a kernel $k$, there is a unique $k(x, \cdot) \in \c{H}$ which evaluates a function $\varphi \in \c{H}$ at the input $x\in \c{X}$ through the inner product
\begin{gather}
    \varphi(x) = \innerprod{\varphi}{k(x, \cdot)} = \sum_{i=1}^d \alpha_i k(x_i, x).
\end{gather}
This is the reproducing property, which gives name to the RKHS. 
A consequence of this property is that $\innerprod{k(x, \cdot)}{k(x', \cdot)} = k(x_i, x_j)$.

\begin{figure}[thb]
\centering
\includegraphics[width=\linewidth]{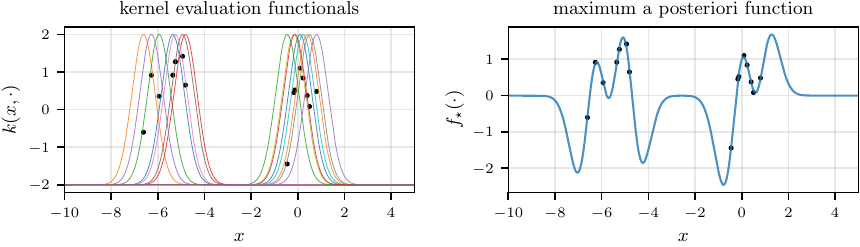}
\caption{Left: RBF kernel ($\psi=0.5$) evaluation functionals for each observation (black dots) in a toy 1d dataset. Right: the posterior mean function is a linear combination of evaluation functionals.  
}
\label{fig:evaluation_functionals_and_map}
\end{figure}

We can now identify the dual expression for the posterior mean function $f_\star(\cdot) = \phi(\cdot) A^{-1} \Phi^T\alpha_\star$, given in \cref{eq:dual_map}, as a linear combination of evaluation functionals
\begin{gather}\label{eq:representer_optimal_function}
    f_\star(\cdot) = \sum_{i=1}^n \alpha_{\star, i} k(\cdot, x_i) = K_{(\cdot)X} \alpha_{\star},
\end{gather}
where for the last equality we write $K_{(\cdot)X}$ for the stacked evaluation functionals at the observed datapoints $k(\cdot, x_1), \dotsc, k(\cdot, x_n)$, allowing us express functions in $\c{H}$ as matrix vector products.
We can think of the evaluation functionals as a basis function expansion of the inputs $x_i, \, i<n$.
The entries of the linear coefficient vector $\alpha_\star$ are known as the \emph{representer weights}\footnote{This name is due to the representer theorem of \cite{Scholkopf2001learning}.}. \Cref{fig:evaluation_functionals_and_map} depicts a set of evaluation functionals for the RBF kernel \cref{eq:random_feature_convergence} and how the posterior mean function is constructed as a linear combination of these functions. The local nature of the kernel leads to the evaluation functionals going to 0 far enough away from the observations and this behaviour translates to the MAP function $f_\star$.

\subsection{Bayesian reasoning about functions: Gaussian processes}\label{subsec:bayesian_reasoning_about_functions}

We now leverage duality and the kernel trick to re-state the Bayesian model from \cref{sec:weight_space_view} directly as a Gaussian process
\begin{gather}\label{eq:gp_bayesian_model}
    Y = f(X) + \c{E} \spaced{with} f  \sim \GP(\mu, k) \spaced{and} \c{E} \sim \N(0, B^{-1}).
\end{gather}
The \emph{mean function} $\mu(\cdot) = \E(f(\cdot))$ and a \emph{covariance kernel} $k(\cdot,\cdot') = \text{cov}(f(\cdot),f(\cdot'))$ uniquely identify the Gaussian process prior. Without loss of generality, we will assume $\mu(\cdot)=0$ throughout the rest of this chapter. 

The posterior distribution over functions is another Gaussian process $f|Y \sim \GP(f_{\star}, k_{\star})$ with
\begin{gather}
    f_{\star}(\cdot) = {K}_{(\cdot)X}(K + B^{-1})^{-1} Y \notag\\
    k_{\star}(\cdot,\cdot') = k(\cdot,\cdot') - {K}_{(\cdot)X}(K + B^{-1})^{-1}{K}_{X(\cdot')}
    .\label{eqn:posterior_GP_moments}
\end{gather}
Evaluating both of these expressions present a cost cubic in the number of observations and output dimensions $\c{O}\left((nc)^3\right)$.

\subsection{Sampling from Gaussian processes \& random features}\label{subsec:random_features}

We saw in \cref{eq:prior_function_sampling_weight_space} how to sample from the prior distribution over functions by first sampling the weights from the prior $w \sim \N(0, A^{-1})$ and taking an inner product with the feature expansion $\innerprod{\phi(\cdot)}{w}$. This operation presents a linear cost in the number of features $d$, resulting computationally intractable when dealing with an infinite dimensional feature space, such as the one associated with the RBF kernel \cref{eq:random_feature_convergence}.

\paragraph{Matrix square root sampling}
Instead, from \cref{eq:prior_over_functions_evaled_weightspace},
we know that the distribution over prior function samples evaluated at a pre-fixed set of points $X' \in \c{X}^{n'}$ is $\N(0, K_{X' X'})$. Thus, we can evaluate a prior sample at $X'$ by transforming an $n'$ dimensional vector of standard Gaussian noise with a matrix square root of the covariance. For instance, we may use the Cholesky decomposition $\mathrm{L} \mathrm{L}^T = K_{X' X'}$ to compute
\begin{gather}
    f(X') = \mathrm{L} u \spaced{with} u \sim \N(0, I_{cn'}).
\end{gather}
Be that as it may, this approach requires knowing the points at which we want to evaluate our prior functions a priori and presents a cost cubic in the number of points we want to evaluate at $\c{O}\left((n'c)^3\right)$. Furthermore, if $X'$ contains repeated points or pairs of points for which the kernel evaluates to very small values, $K_{X' X'}$ may be singular or close to singular, resulting in numerical instability when computing its square root.

\begin{figure}[t]
\centering
\includegraphics[width=0.75\linewidth]{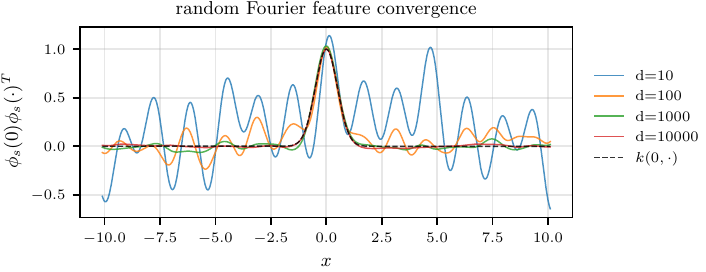}
\caption{Convergence of random Fourier feature basis (given in \cref{eq:random_fourier_basis}) to the RBF kernel's evaluation functional $k(0, \cdot)$ using the estimator in \cref{eq:unbiased_rff_kernel} as the number of random features $d$ increases.  
}
\label{fig:rff_convergence_to_kernel}
\end{figure}

\paragraph{Random feature prior sampling}
Fortunately, we may approximate prior function samples to high accuracy using random features \citep{Rahimi2007random,wilson20,terenin2022thesis}. 
In particular, we may use some feature expansion $\phi_s : \c{X} \to R^{c \times d}$ parametrised by a random variable $s$ with law $\Omega$ to construct an unbiased estimator of a kernel function as
\begin{gather}\label{eq:unbiased_rff_kernel}
    k(x,x') = \E_{s \sim \Omega} \, \phi_s(x)\phi_s(x')^T.
\end{gather} 
We can use these random features to construct a Monte Carlo estimator of a prior function sample $f \sim \text{GP}(\mu, k)$ as
\begin{gather}
    \label{eqn:rff_approx-prior}
    f(\cdot) \approx \tl{f}(\cdot) = \phi_{s}(\cdot) w
    \spaced{with}
    w \sim\N(0, I_d) \spaced{and} s \sim \Omega
\end{gather}
at $\c{O}(d)$ cost, where $d$ is the dimensionality of the feature expansion, often referred to as the number of random features. This parameter controls the error in the approximation, which goes to 0 as $d$ goes to infinity. 
We have approximately reversed the kernel trick, recovering a finite dimensional linear model. We may now evaluate our prior function sample at any $x \in \c{X}$ by simply evaluating the random features at $x$ and taking an inner product with the random weights.
Following \cite{wilson20}, both the next section and \cref{chap:SGD_GPs} will efficiently draw approximate posterior function samples by replacing instances of $f$ with $\tl{f}$.

Random Fourier features \cref{eq:random_fourier_basis} can be used to approximate any stationary kernel---that is, those that can be written as $k(x,x') = k'(x - x')$ for $k' : \c{X} \to \R\:$---by taking the distribution from which the cosine frequencies are sampled $\Omega$ to be the normalised spectral measure of the kernel $k$. As we saw in \cref{eq:random_feature_convergence}, the RBF kernel is recovered when $\Omega$ is chosen to be Gaussian. We illustrate the convergence of this estimator in \cref{fig:rff_convergence_to_kernel}.
More sophisticated Fourier feature sampling strategies have been developed to reduce the variance of the above estimators \citep{Felix2016Orthogonal,Reid2023Simplex}.
Some non-stationary kernels also admit random features. For instance, there exist random features that describe graphs \cite{Reid2023Graph}, ones that describe sets of binary attributes \citep{Tripp2023tanimoto}, and ones that approximate the attention mechanism \citep{Peng2021randomfeatureattention}.

\begin{figure}[thb]
\centering
\includegraphics[width=0.65\linewidth]{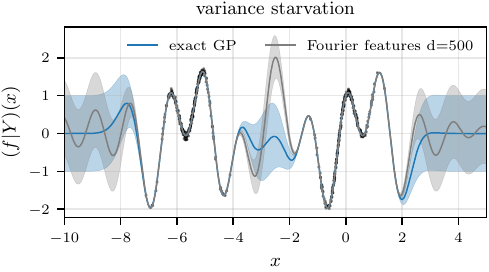}
\caption{Illustration of variance starvation when using random Fourier features to approximate a posterior GP. The shaded region represents a one standard deviation credible interval. Our $n=5000$ datapoints are placed close to each other and are largely redundant. This consumes the degrees of freedom of our $d=500$ random features, leading to arbitrary extrapolation and reduced uncertainty away from the training data. This doesn't happen with our exact GP, which effectively uses an infinite number of basis functions.
}
\label{fig:variance_starvation}
\end{figure}

\paragraph{Variance Starvation}
Random Fourier features can also be used for approximate posterior inference at cost $\c{O}(d^3)$. For this, we simply approximate the infinite feature expansion with $d$ random features and then proceed with linear model inference as in \cref{eq:posterior_linear_function_draws}. However, this is not advisable, as the degrees of freedom needed to represent posterior functions grow with the number of observations and $d\gg n$ features are often needed to obtain a good approximation. This issue is known as variance starvation. It is intimately related to Gibbs ringing and is discussed in detail in 2.4.2 of \cite{terenin2022thesis}. We illustrate variance starvation in \cref{fig:variance_starvation}.

\section[The Pathwise view]{Pathwise view: working with the posterior random function}\label{sec:pathwise_view}

We have so far characterized inference in the Gaussian linear model in terms of the posterior distribution over its weights and the posterior Gaussian process. These require dealing with the posterior covariance matrices over weights and observations, respectively. Even storing these in memory can be computationally intractable when the number of parameters or observations is large, which is the setting of interest of this thesis.
This section introduces \emph{pathwise conditioning} \corr{\citep{wilson20,Wilson2021Pathwise}}, a formulation of inference that deals only with posterior weight or function samples, as opposed to complete posterior distributions. In turn, this will allow us to deal with vectors of dimension $d$ or $nc$, as opposed to covariance matrices, which are quadratic in those quantities. We will build upon the pathwise view of inference to design scalable approximate inference algorithms in \cref{chap:SGD_GPs}.

For the Bayesian model in \cref{eq:gp_bayesian_model}, one can write the posterior random function directly as
\begin{gather}
    (f| Y)(\cdot) = f(\cdot) + {K}_{(\cdot)X} (K + B^{-1})^{-1}(Y - f(X) - \c{E}) \notag \quad \text{with} \\
    \c{E} \sim\N(0,B^{-1})
    \spaced{and}
    f\sim\GP(\mu, k)\label{eqn:pathwise_samples}
    .
\end{gather}
It is straight forward to check that the moments of $(f| Y)$ match those of the posterior GP given in \cref{eqn:posterior_GP_moments}.
Thus, evaluating this expression for a particular prior function sample $f$ and noise sample $\c{E}$ yields a posterior function sample. Although we retain the cubic cost of a linear solve against $(K + B^{-1})$, this only needs to be done once. Then we are free to evaluate the posterior sample at any set of test points $X'$ at only linear cost in $nc$. Additionally, we avoid the need to store the covariance matrix explicitly.

To gain a better understanding of the pathwise form of the posterior, we can rewrite it as a sum of three terms
\begin{gather}
  \label{eqn:pathwise-zero-mean}
  (f| Y)(\cdot) = \ubr{f_{\star}(\cdot) \vphantom{{K}_{(\cdot)X}} }_{\text{posterior mean}} + \ubr{f(\cdot) \vphantom{{K}_{(\cdot)X}} }_{\text{prior sample}}  - \ubr{{K}_{(\cdot)X} (K + B^{-1})^{-1}(f(X) + \c{E})
  }_{\text{uncertainty reduction term}} \\
  \text{with} \quad
  \c{E}\sim\N(0, B^{-1}) \spaced{and}
  f\sim\GP(0, k) \notag,
\end{gather}
which are illustrated in \Cref{fig:pathwise_illustration}.
The first component is the posterior mean function $f_{\star}(\cdot) = {K}_{(\cdot)X}\alpha_\star$, which we analysed in \cref{subsec:kernel_trick}. Its job is ensuring our posterior function samples pass near the datapoints. To it, we add a prior function sample, whose value will vary across input space in a data-independent way. The uncertainty reduction term cancels the effect of the prior function sample near the datapoints.\footnote{We say that two points $x_i$ and $x_j$ are ``near'' when $k(x_i, x_j)$ is small.} It ensures the posterior function sample takes values close to the posterior mean, and thus close to our observed targets, near the training data. 
Just like the posterior, the uncertainty reduction term takes the form of a linear combination of evaluation functionals ${K}_{(\cdot)X}\alpha_u$, with $\alpha_u = (K + B^{-1})^{-1}(f(X) + \c{E})$. 
Consequently, far away from the observed data, the posterior function samples revert to the prior function samples, inflating the uncertainty in the posterior to match the prior uncertainty. 

\begin{figure}[thb]
\centering
\includegraphics[width=\linewidth]{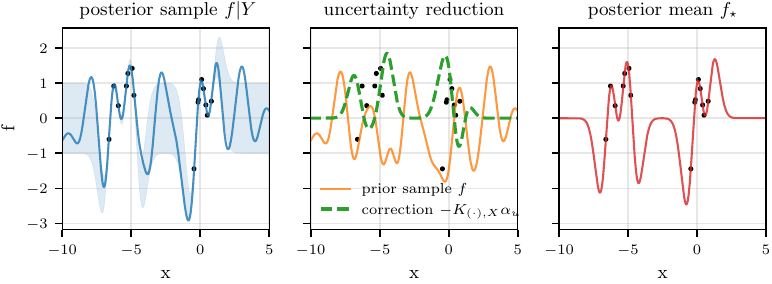}
\caption{Illustration of the pathwise construction of the posterior function sample, shown on the left together with a single standard deviation posterior credible region contour. The middle plot shows a prior function sample together with its corresponding uncertainty reduction term, which cancels the prior sample near the training data. The right side plot shows the GP posterior mean function, which is added to the prior sample and uncertainty reduction term to build the posterior sample.
}
\label{fig:pathwise_illustration}
\end{figure}

The pathwise formulation first appeared in the field of geostatistics, where it was referred to as ``Matheron's rule'' \citep{Journel1978mining}. It has been used to perform inferences in astrophysics models \citep{Hoffman1991Constrained,Hoffman2009Fields} and Gaussian Markov random fields \citep{Papandreou2010perturbations}. More recently, it was re-discovered and popularised among the Gaussian process community by \cite{wilson20}, to whom the form \cref{eqn:pathwise_samples} is due. 

\begin{remark} \textbf{Are GP function samples in the RKHS?} \\
    The pathwise formulation of GP posterior samples \cref{eqn:pathwise-zero-mean} allows us to answer this question; we must simply check the norm of each term in the RKHS. For the posterior mean, we have
    \begin{gather*}
        \|f_\star\|^2_{\c{H}} = \innerprod{K_{(\cdot) X} \alpha_\star}{K_{(\cdot) X} \alpha_\star} = \alpha_\star^T K \alpha_\star
    \end{gather*}
    which will be a finite number as long as the number of observations times output dimensions $nc$ is finite. We can use the same argument for the uncertainty reduction term $K_{(\cdot) X} \alpha_U$. However, this is not necessarily true for the prior sample. For infinite dimensional feature expansions, it can not be written as a linear combination of a finite number of basis functions. Its RKHS norm may be infinite $\|f\|_{\c{H}} = \infty$. Thus, neither the GP prior or posterior functions live in the RKHS associated with the GP's covariance kernel $k$. However, the difference between the GP prior and posterior functions always lives in the RKHS $f - (f|Y) \in \c{H}$.
\end{remark}

\subsection{Efficiently sampling from GP posteriors with random features}\label{subsec:efficient_prior_sampling}

The practical utility of the pathwise formulation  \cref{eqn:pathwise_samples} rests on our ability to efficiently evaluate a prior function sample.
In the infinite-dimensional feature case, this can present a number of challenges, discussed in \cref{subsec:random_features}. However, following \cite{wilson20}, we can efficiently approximate the pathwise form of posterior functions using a random feature approximation of the prior
\begin{gather}
    \label{eqn:pathwise_samples_fourier}
    (f| Y)(\cdot) \approx \tl{f}(\cdot) + {K}_{(\cdot)X} (K + B^{-1})^{-1}(Y - \tl{f}(X) - \c{E}) \notag \spaced{with} \c{E} \sim\N(0,B^{-1}) \\
    \spaced{and}
    \tl{f}(\cdot) =  \phi_{s}(\cdot) w \quad 
    w \sim\N(0, I_d) \quad s \sim \Omega
    .
\end{gather}
Importantly, random features are only used to approximate the prior function sample. Conditioning on the data is done via the exact linear solve, at cubic cost in the number of observations and output dimensions, avoiding variance starvation. 
Pathwise sampling combined with random features provides a very powerful toolkit for decision making under uncertainty which we will use throughout this thesis.

\subsection{Duality between pathwise conditioning and sample-then-optimise}

We now present the primal form of the pathwise formulation of posterior samples for finite dimensional feature spaces and show that it is equivalent to the ``sample-then-optimise'' posterior sampling strategy for Bayesian linear models \citep{Matthews2017SamplethenoptimizePS}\footnote{Although, I believe this observation to first have been made in \cite{antoran2023samplingbased}, which forms the basis of \cref{chap:sampled_Laplace}, it is presented here as it constitutes a useful building block for the rest of the thesis.}. For this, we start from the pathwise expression of the posterior over weights
\begin{gather}\label{eq:pathwise_weights}
    w | Y = w_0 +   A^{-1} \Phi^T (\mathrm{B}^{-1} +  K )^{-1}  (\c{E} - \Phi w_0) \\
    \text{with} \quad
  \c{E}\sim\N(0, B^{-1}) \spaced{and}
  w_0 \sim \N(0, A^{-1}) \notag,
\end{gather}
which matches \cref{eqn:pathwise_samples} but we have removed the product with the feature expansion that maps weight samples to function samples, that is $f(\cdot) = \phi(\cdot) w$. Despite returning a $d$ dimensional weight sample, \cref{eq:pathwise_weights} retains a linear solve against $ (\mathrm{B}^{-1} +  K )$ at cost $\c{O}\left( (nc)^3 \right)$.

\begin{derivation} \textbf{Duality of pathwise conditioning and sample-then-optimise} \\
    We recall that $H = A + \Phi^T \mathrm{B}\Phi$ and then apply the following series of matrix identities to \cref{eq:pathwise_weights} 
\begin{align}
    w | Y &= w_0 +   A^{-1} \Phi^T (\mathrm{B}^{-1} +  \Phi A^{-1} \Phi^T )^{-1}  (\c{E} + Y - \Phi w_0) \\
    &=  w_0 +   A^{-1} \Phi^T \mathrm{B} (I +  \Phi A^{-1} \Phi^T \mathrm{B})^{-1}  (\c{E} + Y - \Phi w_0) \\ 
    &= w_0 +   A^{-1} (I +  \Phi^T \mathrm{B} \Phi A^{-1})^{-1} \Phi^T \mathrm{B} (\c{E} + Y - \Phi w_0)\\
    &= w_0 + H^{-1}\Phi^T \mathrm{B} (\c{E} - \Phi w_0) \\
   &= H^{-1}((H - \Phi^T \mathrm{B}\Phi)w_0 + \Phi^T \mathrm{B} (\c{E} + Y)) \\
   &= H^{-1}(\Phi^T \mathrm{B} (\c{E} + Y)  + A w_0) \label{eq:sample_then_optimise_solution} \\
   &\text{with} \quad
  \c{E}\sim\N(0, B^{-1}) \spaced{and}
  w_0 \sim \N(0, A^{-1}) \notag.
\end{align}
\end{derivation}
Equation \Cref{eq:sample_then_optimise_solution} recovers an expression containing a linear solve against $H$, with cost $\c{O}\left( d^3\right)$. By visual inspection, we can identify that \cref{eq:sample_then_optimise_solution} matches the form of the maximum a posteriori weight setting for weight-space model \cref{eq:posterior_linear_function_draws}, but where our targets are perturbed by adding $\c{E}$ and our prior mean is $w_0$. Thus, \cref{eq:sample_then_optimise_solution} represents the solution to a quadratic problem analogous to the linear regression loss 
\begin{gather}\label{eq:sample_then_optimise}
    w | Y = \min_{w \in \R^d} \frac{1}{2} \|Y + \c{E} - \Phi w\|^2_{\mathrm{B}} + \frac{1}{2} \|w - w_0\|^2_{A} \\
    \text{with} \quad  \c{E} \sim\N(0, B^{-1}) \spaced{and}
    w_0 \sim \N(0, A^{-1}) \notag.
\end{gather}
\begin{sloppypar}
This expression is known in the literature as the ``sample-then-optimise'' objective.  \cite{Matthews2017SamplethenoptimizePS} use it in the noiseless case ($B^{-1}=0$) to study the connection between Bayesian inference and optimisation in overparametrised linear models. \cite{Osband2018randomised} and \cite{Pearce2020ensembling} apply variants of the objective to draw 
approximate posterior samples from the posterior distribution over the weights of a neural network, although this approximation may be very crude. 
\end{sloppypar}

Leveraging the pathwise or sample-then-optimise formulations for posterior sampling at scale (large $d$ and large $nc$) still requires solving large linear systems, which remains an open problem. Standard methods, such as conjugate gradients and matrix sketching are discussed in \cref{chap:approx_inference}. In \cref{chap:SGD_GPs} we instead propose to use stochastic gradient descent for pathwise inference.

\subsection[Decision making: Bayesian optimisation]{Demonstration: pathwise conditioning for Bayesian optimisation}\label{subsec:decision_making}
We conclude by reviewing how we may use pathwise inference for efficient decision making. In particular, consider the problem of finding the input which maximises some unknown function $g : \c{X} \to \R$ in the least number of function evaluations. To this end, we place a GP prior over the function and choose new points at which to evaluate $g$ as
\begin{gather}\label{eq:decision_theory}
  x_{\text{new}} = \argmax_{x' \in \c{X}} \int \c{U}(x', f) \, d P_{f|Y},
\end{gather}
where $P_{f|Y}$ is the measure of the posterior GP and $\c{U} : \c{X} \times \R^{\c{X}} \to R$ is a utility function \citep{Hansson2011decision}. The latter is chosen to trade-off exploration and exploitation. 

\begin{figure}[thb]
\centering
\includegraphics[width=0.9\linewidth]{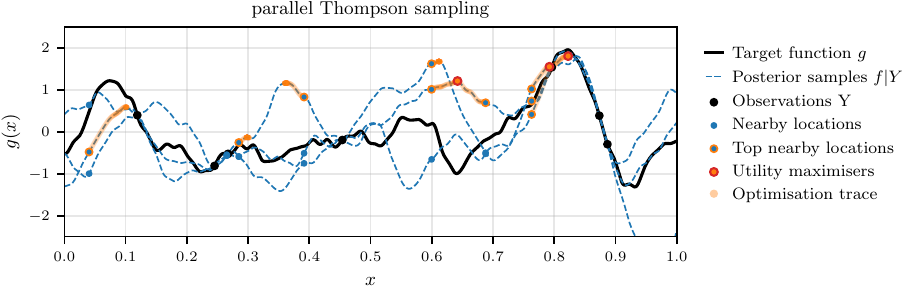}
\caption{ Illustration of parallel Thompson sampling procedure with multistart gradient-based optimisation of posterior function samples. Our GP is initialised with 7 observations from $g$ corresponding to inputs chosen uniformly at random from $[0, 1]$.
}
\label{fig:thompson_c2}
\end{figure}

For our example, we will use Thompson sampling \citep{thompson1933sampling}, where $\c{U}(x, f) = \mathbbm{1}(f(x) = \max_{x' \in \c{X}} f(x'))$ where $\mathbbm{1}$ is the indicator function. At each step, we approximate the integral in \cref{eq:decision_theory} with a single Monte Carlo (MC) sample. \nomenclature[z-mc]{$MC$}{Monte Carlo} That is, we draw a single posterior function sample and choose the input that maximises it $x_{\text{new}} = \argmax_{x' \in \c{X}} (f|Y)(\cdot)$. We then evaluate $g(x_{\text{new}})$ and add it to the dataset we use to perform posterior inference in our GP.   

\corr{
In \cref{fig:thompson_c2}, we demonstrate a single step of parallel Thompson sampling \citep{hernandezlobato2017Parallel} in a 1d toy problem with $\c{X} = [0,1]$ and where $g \sim \GP(0, k)$ with $k$ being the Matérn $\nicefrac{3}{2}$ kernel. The parallel variant differs from the above explained algorithm in that at each step we draw multiple posterior functions which we maximise to add multiple observations to our dataset at each step. We use 3 posterior functions, depicted as dashed blue lines. We add homoscedastic Gaussian noise of precision $\sqrt{1000}$ to target function evaluations. We maximise each posterior function by first evaluating it at 7 inputs chosen uniformly at random from $[0,1]$. These are labelled ``nearby locations'' in the legend. We keep the inputs corresponding to the top 3 posterior function evaluations, labelled ``top nearby locations'', and use the Adam optimiser to improve them until corresponding local optima of the posterior functions are found. These are labelled ``utility maximisers''. We evaluate the target function at the utility maximisers and add the corresponding input-observation pairs to our dataset.

The pathwise formulation of posterior functions is critical to make this algorithm computationally efficient.  It allows us to solve a single linear system to obtain each posterior function per Thompson step. 
After this, we may evaluate each posterior function $(f|Y)$ an arbitrary number of times for its maximisation at only linear cost in the number of observations. This contrasts with the cubic cost per evaluation that we would have had to incur had we used the more-traditional definition of a posterior GP in terms of its first and second moments \cref{eqn:posterior_GP_moments}. The remaining bottleneck is solving the linear system to update the posterior functions when the number of observations $n$ becomes large. This challenge will be dealt with in \cref{chap:SGD_GPs}.  

}

\section[Model selection]{Model selection: the marginal likelihood, or evidence, and empirical Bayes}\label{sec:model_selection}

So far we have seen how to perform Bayesian inference over model parameters, how to transform the posterior distribution over parameters into predictions and how these predictions can be used to make decisions under uncertainty.
All of these techniques rest on our prior modelling choices. In \cref{eq:weight_space_gen_model}, we assumed that our targets are generated as a noisy linear combination of basis functions $\phi$. Furthermore, we assume that the weights of this linear combination were sampled from a zero-centred Gaussian with precision $A$ and that the additive observation noise is also Gaussian with precision $B$. We refer to these quantities (i.e. $\phi, A, B$), over which we do not perform Bayesian inference, as hyperparameters and denote them by $\theta \in \Theta$. We henceforth refer to the choice of hyperparameters and the choice of model interchangeably\footnote{Any model can be written in terms of a broader model class $\Theta$ which is indexed by a set of hyperparameters $\theta \in \Theta$}. The quality of the inferences that we do make rests on the appropriateness of our hyperparameter choices 
\citep{Masegosa2020Misspecification}. Although the Bayesian framework forces us to make our modelling choices explicit, it does not tell us which choices to make.

Intuitively, we should choose our model such that it incorporates all our knowledge about the generative process of the data. As we saw in \cref{subsec:basis_functions}, the more restrictive the model class, 
the less degrees of freedom will be left to be pinned down by the data, and the more confident we can be in our inferences. 
However, if our strong prior assumptions are wrong, we risk our inferences being biased and our predictions not reflecting real world outcomes.

In this section, we will depart slightly from the Bayesian framework to introduce model selection tools that efficiently navigates the bias-variance trade off. 
To this end, consider the integral of the likelihood against the prior, which featured as the denominator in Bayes rule \cref{eq:bayes_rule_c2}. For the weight space linear model, this is 
\begin{gather}\label{eq:model_evidence_with_p_calculus}
    \log p(Y; \,\phi, B, A) = \log \E_{w \sim \Pi} [p(Y | w; \, \phi, B)] = \log \int_w p(Y | w; \, \phi, B) \pi(w; A) \, d w
\end{gather}
which is known as the log marginal likelihood, or the model evidence. We use the semicolon $;$ to separate model parameters from hyperparameters on which the likelihood and the prior depend but over which we do not place a prior or perform inference. We have written out these hyperparameters in \cref{eq:model_evidence_with_p_calculus} for clarity, but we henceforth group them into the tuple $\theta = (\phi, A, B)$ for brevity.
The evidence measures the degree of overlap between the prior and the likelihood, thus rewarding a choice of prior that concentrates its mass on parameter settings that fit the training data well. Too broad a prior will spread its probability mass across many models, only some of which will fit the data, decreasing the evidence.
In this way, the model evidence differs from the training loss; the latter can always be improved by using a more flexible model.
See chapter 28 of \cite{MacKay2011Information} for additional discussion and illustrative examples.

\begin{remark} \textbf{Automatic Occam's razor}\\
    In the literature, the model evidence is said to automatically incorporate ``Occam's razor'' since it implicitly favours ``simpler'' priors \citep{Jeffreys1939Probability,Gull1988maxent,Mackay1992Thesis,Rasmussen2000Occams}. In this context, the notion of complexity refers to the degree of diversity of the hypotheses supported by the prior. For instance, we would say that the class of affine models is simpler than the class of third order polynomials, since the latter contains the former and many more functions. Intuitively, there are many more complex functions than simple ones.

    It is important to note that ``simple'' does not mean more linear, continuous, or having a lower Lipschitz constant. 
    For instance, a prior over third degree polynomials, where all of the coefficients of order greater than 0 are set to a fixed quantity---only the bias is left to be inferred---would be considered simpler and, thus preferred by the evidence, to a prior over affine models, assuming both families of functions fit the data equally well.
\end{remark}

A complementary point of view of \cref{eq:model_evidence_with_p_calculus} is that it is the log-density of the training data when our model is set to the prior. 
If our prior is able to predict our training observations, then our posterior will not differ much from our prior, yielding credence to it also being able to predict yet-unseen datapoints. This intuition is formalised in the framework of PAC-Bayes bounds \citep{Germain2016PAC,Masegosa2020Misspecification}. Also intimately related to the model evidence are the framework of minimum description length \citep{grunwald2004MDL} and other model selection criteria such as Akaike information criterion \citep{Akaike1970Statistical} and Bayesian information criterion \citep{neath2012bic}.

\subsection{Comparing two models}

The marginal likelihood of of some model $M_1$ differs from the regular likelihood in that the model parameters have been marginalised out. In this sense, it can be seen as a quantity at the second level of inference. The first level is inference over parameters, the second is over model class. We could apply this idea again to construct a third level likelihood to score members of a family of meta-model classes and so on. Thus, if we want to decide which model is best among a pair of models  ${M_1, M_2}$ we can compute the ratio of their posterior probabilities at the second level of inference as
\begin{gather}\label{eq:likelihood_ratios}
    \frac{p(M_1 | Y)}{p(M_2 | Y)} = \frac{p(Y | M_1) p(M_1)}{p(Y | M_2) p(M_2)}.
\end{gather}
Often the priors are chosen to be uniform over models $p(M_1) =p(M_2)$ and the posterior probability ratio matches the likelihood ratio $ \frac{p(Y | M_1)}{p(Y | M_2)} $. Likelihood ratios provide a Bayesian alternative to hypothesis tests. See chapter 37 of \cite{MacKay2011Information} for a detailed discussion.

\begin{remark} \textbf{On the dangers of model comparison with the evidence} \\
\corr{
A criticism of marginal likelihood-based model comparison is its sensitivity to the choice of prior \citep{Kass1995BayesFactors}. This is especially concerning when placing (seemingly) uninformative priors over our models' parameters. Intuitively, as our prior becomes fully uninformative (e.g. an improper uniform distribution over the parameters), its marginal likelihood goes to 0. 
In the almost-fully-uninformative regime, small changes in the prior hyperparameters, which have very little effect on posterior inferences, can have large effects on the model evidence. 
In response, a number of ``sensitivity analysis'' methods have been introduced to characterise the sensitivity of the evidence to the prior hyperparameters \cite{Sinharay2002sensitivity}.}

One\corr{ could argue that if we are fully uncertain about the parameters of a model, and the model's predictions depend strongly on those parameters, we should be happy to throw the model in the trash. Yet, this is roughly the case with neural networks, and here we are. 
A different, perhaps more Bayesian view is that we should not use the evidence to perform model selection at all. Instead of discarding one model with less evidence than another, we should expand out model class and consider both models in our Bayesian model average. If we were willing to consider both models for comparison in the first place, then we must have assigned some credibility to both models a priori, and our inferences should reflect this. This is roughly the view expressed by Adrew Gelman in a blog post addressing \cite{MacKay2011Information} chapter on Bayesian model comparison.}\footnote{\href{https://statmodeling.stat.columbia.edu/2011/12/04/david-mackay-and-occams-razor/}{statmodeling.stat.columbia.edu/2011/12/04/david-mackay-and-occams-razor}}

\end{remark}

\subsection{Hyperparameter optimisation}

We now extend the notion of model comparison to a continuous model space. 
For our linear model, the fully Bayesian approach would introduce a prior over $\theta = (\phi, A, B)$ and then perform inference. Unfortunately, this is rarely done. Performing inference at the higher levels of a Bayesian hierarchical model is often too computationally expensive to be practical outside of toy settings.  
As an alternative, when the number of hyperparameters is small relative to the number over observations, the posterior distribution over hyperparameters may be well approximated by a point mass at its mode $p(\theta | Y) \approx \delta(\theta - \theta_{\star})$ with 
\begin{gather}
\theta_\star = \argmax_{\theta} p(Y; \theta).
\end{gather}
In this setting, the likelihood over hyperparameters dominates the prior, and thus the latter is ignored.
Here, the model evidence can provide us with a learning objective to select our hyperparameters. The possibility of performing gradient-based optimisation of $\log p(Y; \theta)$ makes this approach an attractive alternative to traditional cross validation. We must be cautious when using this technique when the dataset is small or our hyperparameter space is large however, as the point-mass-posterior assumption can break, leaving us susceptible to overfitting.

\subsection{The evidence of the linear model}

For the Gaussian linear model, the model evidence can be computed in closed form 
\begin{gather}
    \log p(Y; \,\phi, B, A) = \log \int_w \sqrt{\frac{\det{B}}{(2 \pi)^{n}}} \exp\left(-\frac{1}{2} \|Y - \Phi w\|^2_{B} \right) \, d \N(0, A^{-1}) \notag \\
    =  - \frac{n}{2}\log(2 \pi) -\frac{1}{2}\logdet{(\Phi A^{-1} \Phi^T + B^{-1})^{-1}} -\frac{1}{2} \|Y\|^2_{(\Phi A^{-1} \Phi^T + B^{-1})^{-1}}\label{eq:GP_evidence}
\end{gather}
which amounts to the log density of the targets under a multivariate Gaussian with mean 0 and covariance $\Phi A^{-1} \Phi^T + B^{-1}$. The equivalent kernelised form, which can be used to optimise kernel hyperparameters, like the lengthscale, is obtained by again substituting $K = \Phi A^{-1} \Phi^T$. The cost of evaluating \cref{eq:GP_evidence} is cubic in $nc$ because of both the linear solve against $K+B^{-1}$, and because of the appearance of the same matrix's log-determinant. The former appeared in the expression for the posterior distribution (e.g.\cref{eqn:posterior_GP_moments}) but the latter presents a new challenge, which we will also tackle in the later chapters of this thesis.  

\begin{figure}[tbh]
\centering
\includegraphics[width=\linewidth]{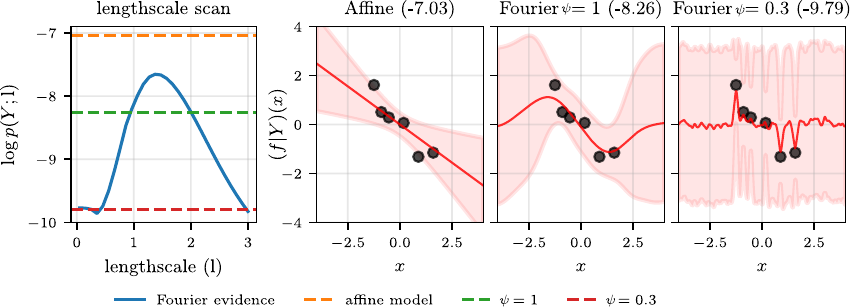}
\caption{The leftmost plot displays the evidence of a $d=500$ random Fourier basis function linear model as a function of the lengthscale $\psi$ parameter for the toy 1d dataset of \cref{fig:linear_posterior_functions}. The evidence of the Affine model, and the $\psi=1$ and $\psi=0.3$ Fourier models are indicated as dashed horizontal lines. The posterior mean function, along with 2 standard deviation errorbars are displayed for each of these models in the three plots on the right. In these, each model's evidence is provided in parenthesis in the plot title. Other hyperparameters match those of \cref{fig:linear_posterior_functions}.
}
\label{fig:evidence_comparison}
\end{figure}

Using the Woodbury matrix identity and the matrix determinant lemma we recover the primal form of \cref{eq:GP_evidence}, with cost cubic in the number of parameters $d$
\begin{gather}
    - \frac{n}{2}\log(2 \pi) + \frac{1}{2}\logdet{B} - \frac{1}{2} \underbrace{\|Y - \Phi w_\star \|^2_{B}}_{\text{data fit}} - \frac{1}{2} \underbrace{\|w_\star \|^2_{A}}_{\text{parameter norm}} - \frac{1}{2}\underbrace{\logdet{\frac{H + A}{A}}}_{\text{posterior contraction}}.
\end{gather}
This expression more intuitively captures the quality of fit vs simplicity trade-off discussed at the beginning of the section. There is a data fit term that rewards the posterior mean for passing near the targets. There is a prior fit term that ensures the norm of the posterior mean weights are small in the metric given by the prior precision. Finally, the determinant ratio term measures the contraction of the posterior covariance's volume relative to the prior covariance. Up to a constant factor, this quantity matches the information that was gained by our model by seeing the data, in nats. This captures the intuition that the marginal likelihood rewards models that are able to explain the targets well a priori, and thus do not learn much from conditioning on the targets.

\Cref{fig:evidence_comparison} compares the model evidence for the Affine and random Fourier models introduced in \cref{subsec:basis_functions} and the dataset from \cref{fig:linear_posterior_functions}. The targets are roughly arranged in a straight line, making the affine model a good fit. Although different lengthscale Fourier models can also fit this data, their additional flexibility penalises them; there is no lengthscale setting for which the Fourier model's evidence surpasses the Affine model's. The leftmost plot shows the evidence as a function of the lengthscale. Too small lengthscale values lead to too flexible models that overfit. This is the case for the model in the rightmost plot. Too large lengthscale values would under fit. The optima is somewhere in the middle.

\begin{remark}
\textbf{All linear models are wrong, but the evidence can tell us which are useful} \\
``All models are wrong, but some are useful'' -- George Box
\\
We almost never expect the data we are modelling to have been generated via a noisy linear combination of basis functions. On the other hand, we usually judge models on whether their predictions about quantities we care about match empirical outcomes to a desirable tolerance. It is well known that the Bayesian posterior does not provide optimal predictions under model misspecification \citep{draper2010calibration,Masegosa2020Misspecification}. It may seem surprising then, that the linear model's evidence can be shown, using the PAC-Bayes framework, to provide guarantees about generalisation performance \citep{Germain2016PAC}, informing us about whether our models are ``useful''.

It is worth noting that PAC-Bayes guarantees no longer hold if we use the model evidence for hyperparameter selection. If we select among a discrete set off hyperparameters, we could obtain relaxed guarantees via a union bound, but this would not work for a continuous hyperparameter space.
\end{remark}

\subsection{Effective dimension}\label{subsec:effective_dimension}

We conclude with a discussion of the effective dimension, a quantity \corr{intimately related to the evidence of the Gaussian linear model \citep{Mackay1992Thesis,Wipf2007Determination,Maddox2020Rethinking}. We will make heavy use this quantity to derive efficient algorithms for hyperparameter learning in \cref{chap:sampled_Laplace}.} Let $\lambda_1, \lambda_2, \dotsc, \lambda_d$ denote the sorted (in descending order) eigenvalues of the weight space loss curvature $M = \Phi^T B \Phi$. For a regulariser of the form $A = a I$, the effective dimension $\gamma$ is given by
\begin{gather}
    \gamma = \sum_{i=1}^d \frac{\lambda_i}{\lambda_i + a}.
\end{gather}
When $\lambda_i \gg a$ the term inside of the sum will roughly be of value 1. When $\lambda_i \ll a$ it will be roughly 0. Thus, the effective dimension $\gamma \in [0, \min(nc, d)]$ counts the number of directions in parameter space which are determined by the data. 

Following \cite{Mackay1992Thesis}, we can construct a more general definition for the effective dimension that doesn't require an isotropic regulariser, by taking it to be the trace of the matrix that maps the maximum likelihood parameter vector $H^{-1}Y$ into the maximum a posteriori weights $(\Phi^T B \Phi + A)^{-1}Y$. That is,
\begin{gather}\label{eq:forms_effective_dim}
   \gamma \coloneqq \tr{\left(\Phi^T B \Phi H^{-1}\right)} = d - \tr{\left(A H^{-1}\right)} =  \tr{\left(K (K + B^{-1})^{-1}\right)},
\end{gather}
where we have provided an additional two forms of the quantity, each providing for a complementary interpretation. Using the cyclical property of the trace we can see that the leftmost form is equivalent to $\tr{\left(B \Phi H^{-1} \Phi^T \right)}$. That is, the sum of the ratios of the marginal posterior predictive variance to noise variance at the observed inputs. Since each observation reduces the marginal uncertainty in the posterior over functions at that point to at least the corresponding diagonal entry of $B$, each diagonal entry of $B \Phi H^{-1} \Phi^T$ must be smaller or equal to 1. The degree to which the predictive variance is smaller than the observation noise depends on how well the datapoints explain each other. If they explain each other a lot, i.e. many inputs map to nearby points in the RKHS, the effective dimension decreases. The middle form of the effective dimension in \cref{eq:forms_effective_dim} provides us with the same intuition, but through the ratio of the prior and posterior covariance over the weights. The rightmost form is the trace of the matrix that maps the representer weights obtained by fitting the data without regularisation $K^{-1}Y$ onto the representer weights corresponding to the posterior mean function $(K+B^{-1})^{-1}Y$.

\begin{derivation} \textbf{Relating the forms of the effective dimension}\\
We first relate the first and second equalities in \cref{eq:forms_effective_dim}.
\begin{align*}
    \tr\left(H^{-1}\Phi^T B \Phi\right) &= \tr\left((I+A^{-1}\Phi^T B \Phi)^{-1}A^{-1}\Phi^T B \Phi\right)  \\ &= \tr\left( I - (I+A^{-1}\Phi^T B \Phi)^{-1} \right) = d - \tr\left(AH^{-1} \right).
\end{align*}
We now connect the first and third equalities
\begin{align*}
    \tr\left(H^{-1}\Phi^T B \Phi\right) &= \tr\left( B (K - K(K + B^{-1})^{-1}K) \right) \\ &= \tr\left( BK (I - (KB + I)^{-1}KB) \right) = \tr\left( BK (KB + I)^{-1} \right) \\ &= \tr\left( K (K + B^{-1})^{-1} \right).
\end{align*}
\end{derivation}

\section{Limitations of conjugate Gaussian-linear Bayesian reasoning}\label{sec:linear_model_limitations}

We have seen how the linear model with a Gaussian prior over its weights, or Gaussian process, acts as a conjugate prior for the likelihood induced by Gaussian observation noise, providing us with a closed form expression for the Bayesian posterior \cref{eq:posterior_linear_function_draws}, and model evidence \cref{eq:GP_evidence}, both Gaussian forms. 
Alas, conjugacy is quickly lost when constructing more sophisticated Bayesian models that more accurately describe real-world systems of interest. It is lost if we define a non-Gaussian prior over the weights, for instance heavy tailed priors used to model outlier events \citep{west2018Outlier} or priors designed to favour sparse posteriors, like the  horseshoe \citep{carvalho09Horseshoe}. It is lost if we use
non-Gaussian likelihoods, like the categorical used in classification \citep{Bishop2003classification}, or the Poisson used to count neural spikes \citep{Heeger2000Poisson} and X-ray quanta \citep{Elbakri2003poissonCT} in computed tomography. Conjugacy is also lost if our model presents a non-linear relationship between its parameters and outputs, for instance due to the use of a linking function that constrains the output range.

Of special interest for this thesis is the use of the neural network function class. These models can be thought of as basis function linear models in which the basis function parameters are treated as model parameters, instead of hyperparameters, and thus inferred from the data. Neural networks are used to model processes where we have little intuition of what the data-generating process might look like, and thus we can not manually choose a set of basis functions. To make up for this lack of prior knowledge, very large and flexible models are paired with vast datasets.

This leads to our second major setback. The closed form expressions of linear model inference involve cubic operations: linear system solves and log-determinant computations, both of which present cubic time complexity. We may choose to either pay this cost in terms of the number of observations times output dimensions $\c{O}\left( (nc)^3 \right)$ or model parameters  $\c{O}\left( d^3 \right)$ (when the feature space is finite-dimensional). This provides little consolation in the modern setting where it is common to work with large datasets. For instance, the Imagenet dataset \citep{imagenet}, which is a benchmark three orders of magnitude smaller than the datasets used to train the largest models in deployment \citep{dosovitskiy2021an}, has $nc\approx10^9$. The ResNet-50 neural network \citep{he2016deep}, another common benchmark model that is around 10 times smaller than the state of the art models, presents a parameter space with $d\approx 25 \cdot 10^6$. One may think that linear models could scaled up to problems of modern interest  via efficient numerical linear algebra routines implemented on GPU accelerators. However, at these scales, even storing covariance matrices, whose number of entries are quadratic, becomes intractable due to the  $\c{O}\left( (nc)^2 \right)$ or $\c{O}\left( d^2 \right)$ memory cost. For instance storing a covariance matrix for a parameter space the size of ResNet-50's would require around $2500$ Terabytes.

The following chapter reviews approximations to Bayesian inference which may be tractably computed when faced with non-conjugacy or large covariance matrices. Unfortunately, we will see how these approximations tend to break down when faced with the neural network model class and real-world sized datasets. The rest of the thesis aims to fill this gap by introducing methods for very large scale Bayesian reasoning with linear models and neural networks.

\chapter[Approximate inference]{Approximate inference methods\\for linear models and neural networks}\label{chap:approx_inference}

\ifpdf
    \graphicspath{{Chapter3/Figs/Raster/}{Chapter3/Figs/PDF/}{Chapter3/Figs/}}
\else
    \graphicspath{{Chapter3/Figs/Vector/}{Chapter3/Figs/}}
\fi

The need for approximate inference arises in the linear-Gaussian model when the problem setting becomes too large, making closed form expressions too computationally expensive to evaluate. It also arises when working with non-conjugate Bayesian models. This thesis deals with both settings, 1) Bayesian inference in Gaussian linear models with millions of parameters and observations, and 2) Bayesian inference in neural networks. On our way to tackling these problems, this chapter reviews approximate inference methods for linear models and Gaussian processes, and how these can be extended to neural networks. \Cref{sec:VI} covers Variational Inference (VI) \nomenclature[z-vi]{$VI$}{Variational Inference} in both its parameter-space and inducing point flavours. \Cref{sec:CG} covers the use of Conjugate Gradient (CG) \nomenclature[z-cg]{$CG$}{Conjugate Gradient} methods. Finally, \cref{sec:Laplace} introduces the Laplace approximation as well as its linearised variant for neural networks.
Through different paths, all of these methods provide both an approximation to the posterior as well as the model evidence. 
We do not delve into Markov Chain Monte Carlo (MCMC) \nomenclature[z-mcmc]{$MCMC$}{Markov Chain Monte Carlo} techniques, but instead refer
refer to \cite{Andrieu2003MCMC} for a general overview and to \cite{neal1992bayesian} for a discussion of their application to neural networks. 

\section[Variational Inference]{Approximating the posterior distribution:\\ Variational inference}\label{sec:VI}

We commence from Bayes rule \cref{eq:bayes_rule_c2}. Making the set of hyperparameters $\theta \in \Theta$ explicit in the notation, we take logs on both sides of the equality, and re-arrange it as
\begin{gather}
  \log p(Y; \, \theta) =   \log p(Y | w; \, \theta) + \log \pi(w; \, \theta) - \log \pi(w | Y; \, \theta),
\end{gather}
to evaluate the evidence. This expression holds for any value of $w$, allowing us to take expectations on both sides of the equality with respect to any distribution over $w$. We thus introduce the variational distribution $Q$, with density $q(w)$ such that $d Q = q(w) d\nu$, and use it to derive the lower bound
\begin{align}
  \log p(Y; \, \theta) &=   \E_{w \sim Q}\left[ \log p(Y | w; \, \theta) + \log \pi(w; \, \theta) - \log \pi(w | Y; \, \theta)\right]\ \\
  &\geq \E_{w \sim Q}\left[ \log p(Y | w; \, \theta) + \log \pi(w; \, \theta) \right] + \mathbb{H}\left(Q\right) \coloneqq \c{M}(Q, \theta). \label{eq:ELBO}
\end{align}
We refer to $\c{M}(Q, \theta)$ as the Evidence Lower BOund (ELBO) \nomenclature[z-elbo]{$ELBO$}{Evidence Lower Bound} and $\mathbb{H}$ is the differential entropy.
The inequality is true because the cross entropy can be decomposed into a sum of an entropy and the KL divergence between the distributions being compared, and the latter term is greater or equal to 0. That is, adopting the density-based notation for the KL divergence $\KL{q(w)}{\pi(w|Y)} = \int \log \frac{Q}{\Pi_{w|Y}} \, dQ $, we have  $\E_{w \sim Q}\left[- \log \pi(w | Y; \, \theta)\right] = \mathbb{H}\left(Q\right) + \KL{q(w)}{\pi(w|Y)} \geq \mathbb{H}\left(Q\right)$. Thus, when $\KL{q(w)}{\pi(w|Y)}=0$ and thus the variational posterior matches the Bayesian posterior $q(w) = \pi(w|Y)$, \cref{eq:ELBO} becomes an equality, and the ELBO matches the evidence $\log p(Y; \, \theta) = \c{M}(Q, \theta)$.

The ELBO allows us to transform the problem of Bayesian inference into one of variational optimisation. By maximising $\c{M}$ with respect to our variational distribution $Q \in \c{Q}$, we approximate the Bayesian posterior distribution in the sense of minimising $\KL{q(w)}{\pi(w|Y)}$ \citep{hinton1993keeping}. 
We may do this even if our search space, the variational family $\c{Q}$, does not contain the true posterior $\Pi_{w|Y} \notin \c{Q}$. This allows us to tractably approximate the Bayesian posterior even when this distribution is analytically or computationally intractable \citep{Attias1999variational}.
Evaluating the ELBO does not require conjugacy, only being able to evaluate the log-likelihood function and the prior log-density. We demonstrate this for a 1d toy classification example, where the likelihood is Bernoulli, in \cref{fig:classification}.  The expectation in \cref{eq:ELBO} is often unbiasedly estimated via Monte Carlo. Thus, the requirements on the variational distribution are that we can sample from it and that we can compute its entropy. Relaxing the latter constraint is an active area of research \citep{Titsias2019unbiased,Uppal2023implicit}.

\begin{figure}[t]
\centering
\includegraphics[width=\linewidth]{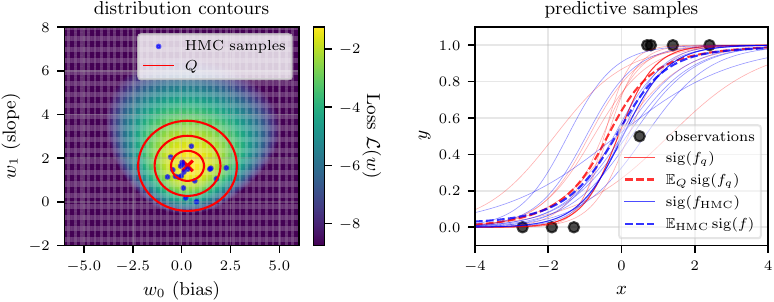}
\caption{Classification example with our affine linear model, where we place a Gaussian prior over the weights, we use a sigmoid linking function and a Bernoulli likelihood. The left plot shows how the loss landscape, which up to a constant matches the log posterior density, presents a non quadratic form; the top of the distribution is wider than the bottom. We approximate this posterior with a Gaussian variational distribution $Q$ and with Hamiltonian Monte Carlo (HMC). In this setting, only the latter method provides an unbiased approximation. \nomenclature[z-hmc]{$HMC$}{Hamiltonian Monte Carlo} Despite this, the plot on the right shows how both approximations lead to similar predictions. However, the variational approximation places more mass on low slope functions, resulting in slight underestimation of the steepness of the sigmoid. 
}
\label{fig:classification}
\end{figure}

\begin{remark} \textbf{Protection against overfitting} \\
It is often said that variational parameters are protected against overfitting. This is because optimising the ELBO with respect to these parameters always brings the variational distribution closer to the true posterior. Thus, choosing a more flexible variational family that leads to a tighter ELBO should always lead to a better posterior approximation. Unfortunately, the same is not true about the model hyperparameters, whose optimisation with the ELBO can lead to overfitting (see, for instance, \cite{ober2021promises}).
\end{remark}

\begin{figure}[t]
\centering
\includegraphics[width=\linewidth]{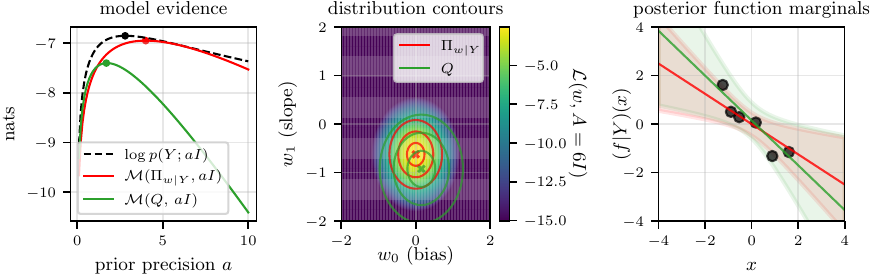}
\caption{Variational inference in the Gaussian affine linear model, fit to the toy dataset dataset in \cref{fig:linear_posterior_functions}. The leftmost plot shows the model evidence as a function of the isotropic prior covariance $A=aI$. We also display an ELBO where the variational posterior is set to the true posterior when $a=6$, denoted $\Pi_{w|Y}$ in the plot. The bound is tight at $a=6$, as predicted by \cref{eq:ELBO}. However, since the posterior over the weights does not change as we scan $a$, the optima of the ELBO, marked with a red dot, differs from the optima of the evidence. Hyperparameter selection with this objective would be biased. We also display, in green, the ELBO corresponding to a different variational posterior $Q$. Since $Q$, doesn't match the true posterior for any value of $a$, the bound is never tight. It is also a biased estimate of the evidence. The middle plot shows the loss function when $a$ is set to 6 as well as the 1, 2 and 3 standard deviation contours for the log-density of $\Pi_{w|Y}$ and $Q$. Finally, the rightmost plot shows the mean and 2 standard deviation errorbars of the posterior distribution over functions corresponding to each of the 2 variational posteriors under consideration.
}
\label{fig:linear_ELBO}
\end{figure}

The ELBO can also act as a hyperparameter selection objective, acting as a substitute for the model evidence when the later is not tractable. However, if the variational posterior differs from the true posterior, the hyperparameter learning objective will be biased (see, for instance, \cite{turnersahani2011bias}). We illustrate this bias in \cref{fig:linear_ELBO}.
The variational EM algorithm \citep{Dempster1977EM,Nealhinton1998EM,bishop2006pattern} implements this idea by iterating variational posterior optimisation and hyperparameter optimisation steps: 1) setting $Q = \argmax_{Q \in \c{Q}}\c{M}(Q, \theta)$ in the E step and 2)  $\theta = \argmax_{\theta \in \Theta}\c{M}(Q, \theta)$ in the M step. If  $\Pi_{w|Y} \in \c{Q}$, the E step will attain the exact posterior and the EM algorithm is guaranteed to not decrease the model evidence. Alternatively, one may optimise $\c{M}$ with respect to $\{Q, \theta\}$ jointly using gradient-based optimisation.

\begin{remark} \textbf{The dangers of model comparison with the ELBO}\\
    The ELBO is not a reliable tool for model comparison. If a model obtains a larger ELBO than another, it is not guaranteed to have a larger evidence. The model with the smaller ELBO could have a larger evidence and the difference in ELBO values could be due to there being more slack in the second model's bound. 
\end{remark}

Beyond approximate inference in predictive models, the ELBO also plays an important role in information theory and data compression; we refer to \cite{hinton1993keeping,greg2019compression} and chapter 33 of \cite{MacKay2011Information} for in-depth discussion.

\subsection{VI in the parameter space of the linear model}\label{sec:weight_space_VI}

We now provide the explicit form of the ELBO for the weight-space Gaussian linear model introduced in \cref{eq:weight_space_gen_model} paired with a multivariate Gaussian
variational family $Q = \N(w_q, \Sigma_q)$ with variational parameters $w_q \in \R^d$ and $\Sigma_q \in \R^{d \times d}$. In this case, the true posterior is contained within the variational family. The ELBO is 
\begin{align}
   \c{M}(w_q, \Sigma_q, A, B, \phi) = 
   \frac{1}{2}\E_{w \sim \N(w_q, \Sigma_q)}\bigl[  & - n\log(2 \pi) -\logdet{B^{-1}} - \|Y - \Phi w \|^2_{B}   \notag \\ 
  &  -\logdet{A^{-1}} - \|w \|^2_{A}  +  \logdet \Sigma_q + d\, \bigr],
\end{align}
where we have substituted $Q$ for its variational parameters, which uniquely define the distribution, in the ELBO's arguments.
Evaluating the expectation we obtain
\begin{align}\label{eq:expanded_linear_ELBO}
    \c{M}(w_q, \Sigma_q, A, B, \phi) =  \frac{1}{2}\bigr( &- n\log(2 \pi)  -\logdet{B^{-1}} - \logdet{A^{-1}} - \|w_q\|_A^2 - \tr(\Sigma_q A) \notag \\
    &   - \|Y - \Phi w_q \|^2_{B} - \tr(\Phi \Sigma_q \Phi^T B) + \logdet \Sigma_q + d \bigl).
\end{align}
This expression will \corr{be} of particular interest in \cref{chap:adapting_laplace} and \cref{chap:sampled_Laplace}, where we will use the Laplace approximation to the posterior, a multivariate Gaussian, as the variational distribution for large scale models.
The variational posterior distribution over functions is computed analogously to \cref{eq:posterior_linear_function_draws} by substituting the Bayesian weight posterior with its approximation 
\begin{gather}
   f_q(\cdot) = \phi(\cdot) w \spaced{with} w \sim \N(w_q, \Sigma_q).
\end{gather}

\subsection{VI in function space: inducing points}\label{sec:inducing_point_VI}

We now look at the dual form of variational inference for linear models where the approximate distribution is specified directly over function outputs. To this end, we introduce an array of $m$ inducing points $Z = (z_1, z_2, \dotsc, z_m)$ with $z_i \in X$. The variational inducing point framework of \cite{titsias09,titsias2009report} substitutes our observed targets $Y$ with the inducing targets $U \in \R^{cm}$, each of which is associated with an inducing point.
We start by constructing a Gaussian process conditioned on the set of inducing locations and targets 
\begin{gather}\label{eq:conditional_variational_GP}
     (f^{(Z)}|U) \sim \GP( \mu^{(Z)}_{f|U}, k^{(Z)}_{f | U}),
\end{gather}
where the superscript notation $^{(Z)}$ makes explicit that the input locations correspond to $Z$ and not $X$. The mean and covariance functions are given by
\begin{gather}
    \mu^{(Z)}_{f|U}(\cdot) = {K}_{(\cdot) Z} {K}_{Z Z}^{-1}U
    \quad
    k^{(Z)}_{f | U}(\cdot, \cdot') = {K}_{(\cdot, \cdot')} - {K}_{(\cdot) Z} {K}_{Z Z}^{-1} {K}_{Z (\cdot')},
\end{gather}
where $[K_{ZZ}]_{ij} = k(z_i, z_j), \, i,j \leq m$ and we again use ${K}_{Z (\cdot)}$ for the stacked evaluation functionals $k(z_i, \cdot), \, i \leq m$.
These expressions match \cref{eqn:posterior_GP_moments}, with the observed inputs $X$ and targets $Y$ replaced by the inducing inputs $Z$ and inducing targets $U$. 

We now place a multivariate Gaussian variational distribution over the inducing targets $Q = \N(u^{(Z)}_q, {K}^{(Z)}_q)$, with $u^{(Z)}_q \in \R^{cm}$ and ${K}^{(Z)}_q \in \R^{cm \times cm}$. 
Following \cite{titsias09}, we choose the mean and covariance of this distribution that minimises the KL divergence between the variational Gaussian process $\E_{U\sim Q}[f^{(Z)}|U]$ and the posterior Gaussian process $f|Y$ \citep{matthews2016sparse}\footnote{In practise, this KL divergence between stochastic processes can be minimised by minimising the KL divergences between the multivariate Gaussians given by evaluating the variational GP and posterior GP at the set of observed and inducing inputs $\{X, Z\}$ jointly. }.
These are
    \begin{align}
        u^{(Z)}_q & = {K}_{ZZ} ({K}_{ZZ} + {K}_{ZX} B {K}_{XZ})^{-1} {K}_{ZX} B  Y
        \\
        {K}^{(Z)}_q     & = {K}_{ZZ} ({K}_{ZZ} + {K}_{ZX} B {K}_{XZ})^{-1} {K}_{ZZ},
    \end{align}
where $[K_{XZ}]_{ij} = k(x_i, z_j),\, i<n\, j<m$. Using this, we marginalise out the inducing targets in \cref{eq:conditional_variational_GP}, arriving at the optimal variational Gaussian process
\begin{gather}\label{eq:variational_post_GP}
     (f^{(Z)}|Y) \sim \GP( \mu^{(Z)}_{f|Y}, k^{(Z)}_{f | Y}),
\end{gather}
with mean and covariance functions
    \begin{align}
        \label{eq:titsias_predictive_mean_fn}
        \mu_{f| Y}^{(Z)}(\cdot)      & = {K}_{(\cdot) Z} ({K}_{ZZ} + {K}_{ZX} B {K}_{XZ})^{-1} {K}_{ZX} B  Y
        \\
        k_{f| Y}^{(Z)}(\cdot,\cdot') & = {K}_{(\cdot, \cdot')} + {K}_{(\cdot) Z} (({K}_{ZZ} + {K}_{ZX} B {K}_{XZ})^{-1} - {K}_{Z Z}^{-1} ){K}_{Z (\cdot')}\label{eq:titsias_predictive_cov_kernel}
        .
    \end{align}
These expressions contain linear solves against ${K}_{ZZ}$ instead of $K$. The number of inducing points is typically chosen to be smaller than the number of observations $m<n$ and thus the cost is lowered from $\c{O}\left( (nc)^3 \right)$ to $\c{O}\left( (mc)^3 \right)$. 

\subsubsection{Connecting inducing points to the Nyström approximation}
The expressions \cref{eq:titsias_predictive_mean_fn} and \cref{eq:titsias_predictive_cov_kernel} match those that we obtain if we substitute our Gaussian process prior with $\GP(0, K_{(\cdot), Z} K_{ZZ}^{-1}K_{Z, (\cdot')})$, and proceed with exact GP inference, as in \cref{subsec:bayesian_reasoning_about_functions}.
With this, every instance of $K$ is replaced with $K_{XZ} K_{ZZ}^{-1} K_{ZX}$, revealing that the variational Gaussian process amounts to a Nyström approximation of the kernel matrix \citep{wild2021connections}.

\begin{derivation} \textbf{Nyström pathwise representation of the optimal variational GP} \\
To show the connection between the Nyström approximation and variational inducing point GPs, we leverage the pathwise formulation of the GP random function \cref{eqn:pathwise_samples} but replace every instance of with $K_{XZ} K_{ZZ}^{-1} K_{ZX}$, yielding
  \begin{gather}
    \begin{gathered}
      (f^{(Z)}| Y)(\cdot) = f(\cdot) +  {K}_{(\cdot)Z} {K}_{ZZ}^{-1} {K}_{ZX } ({K}_{X Z} {K}_{ZZ}^{-1} {K}_{ZX } + B^{-1})^{-1}(Y - f^{(Z)}(X ) - \eps)
      \\
      \eps\sim\N(0,B^{-1})
      \qquad
      f\sim\GP(0, k)
      \qquad
      f^{(Z)}(\cdot) = {K}_{(\cdot)Z}{K}_{ZZ}^{-1}f(Z).
    \end{gathered}
  \end{gather}
  We now check the correctness of this expression by calculating the moments of this Gaussian process' marginal distributions and show them to match those of the KL-optimal variational Gaussian process given in \cref{eq:titsias_predictive_mean_fn} and \cref{eq:titsias_predictive_cov_kernel}.
  Write
  \begin{align}
    \label{eq:app_mean_equivalency}
    \E[(f^{(Z)}| Y)(\cdot)] & = {K}_{(\cdot)Z} {K}_{ZZ}^{-1} {K}_{ZX } ({K}_{X Z} {K}_{ZZ}^{-1} {K}_{ZX } + B^{-1})^{-1}Y
    \\
                                    & = {K}_{(\cdot)Z} {K}_{ZZ}^{-1} {K}_{ZX } B ({K}_{X Z} {K}_{ZZ}^{-1} {K}_{ZX }B + {I})^{-1}Y
    \\
                                    & = {K}_{(\cdot)Z} ({K}_{X Z} B {K}_{X Z} + {K}_{ZZ}  )^{-1} {K}_{ZX } BY
    \\
                                    & = \mu^{(Z)}_{f|Y}(\cdot)
  \end{align}
  and
  \begin{align}
     & \!\!\!\!\cov((f^{(Z)}| Y)(\cdot) - \mu^{(Z)}_{f|Y}(\cdot))
    \\
     & = \E((f^{(Z)}| Y)(\cdot) - \mu^{(Z)}_{f|Y}(\cdot), (f^{(Z)}| Y)(\cdot') - \mu^{(Z)}_{f|Y}(\cdot'))
    \\
     & = {K}_{(\cdot, \cdot')} - {K}_{(\cdot)Z} {K}_{ZZ}^{-1} {K}_{ZX } ({K}_{X Z} {K}_{ZZ}^{-1} {K}_{ZX } + B^{-1})^{-1}{K}_{X  Z} {K}_{ZZ}^{-1} {K}_{Z(\cdot')}
    \\
     & = {K}_{(\cdot, \cdot')} + {K}_{(\cdot)Z} {K}_{ZZ}^{-1} \left( - {I} + {I} -  {K}_{ZX } B ({K}_{X Z} {K}_{ZZ}^{-1} {K}_{ZX } B + {I})^{-1}{K}_{X  Z} {K}_{ZZ}^{-1} \right){K}_{Z(\cdot')}
    \\
     & = {K}_{(\cdot, \cdot')} + {K}_{(\cdot)Z} {K}_{ZZ}^{-1} \left( - {I} + ({K}_{ZX } B{K}_{X  Z} {K}_{ZZ}^{-1} + {I})^{-1} \right) {K}_{Z(\cdot')}
    \\
     & = {K}_{(\cdot, \cdot')} + {K}_{(\cdot)Z} \left(- {K}_{ZZ}^{-1} + ({K}_{ZX } B{K}_{X  Z}  + {K}_{ZZ})^{-1}  \right) {K}_{Z(\cdot')}
    \\
     & = k_{f|Y}^{(Z)}(\cdot,\cdot')
  \end{align}
  which recovers \cref{eq:titsias_predictive_mean_fn} and \cref{eq:titsias_predictive_cov_kernel}, as claimed.
\end{derivation}

\begin{figure}[t]
\centering
\includegraphics[width=0.85\linewidth]{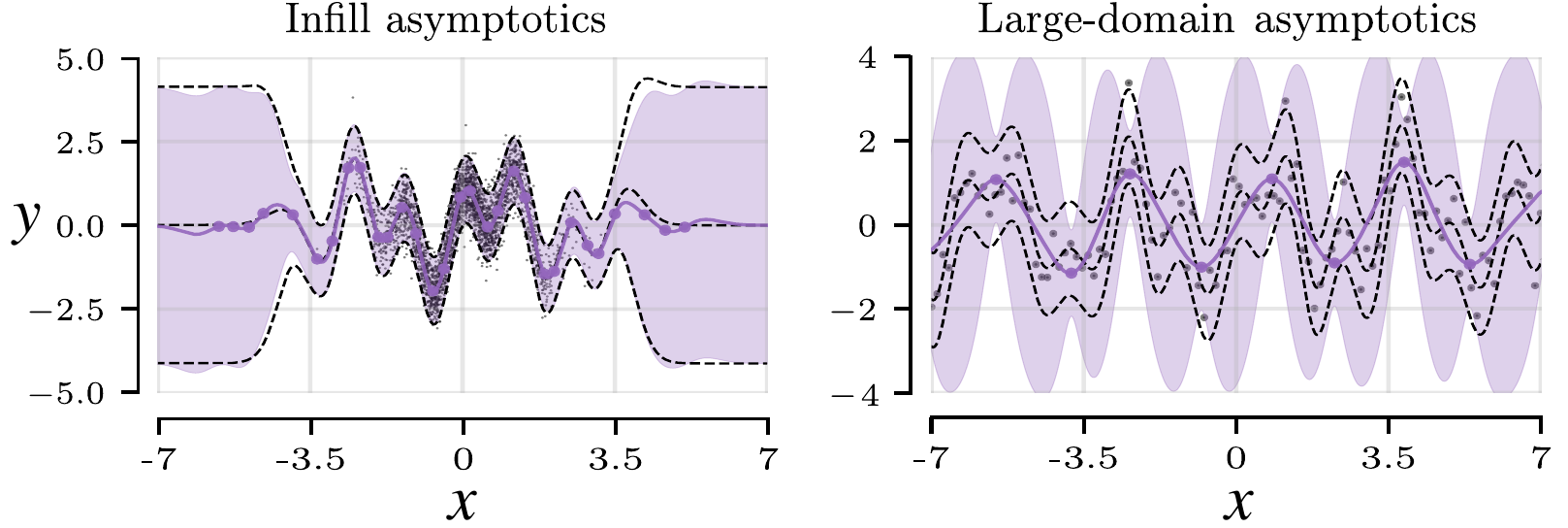}
\caption{Illustration of variational inducing point GP inference with a squared exponential kernel on $10\text{k}$ datapoints from $\sin(2x)+\cos(5x)$ with observation noise distribution $\N(0, 0.5)$.  The inducing point locations are marked with purple dots. All variational parameters are fit with SVGP \cref{eq:hensman_bound}. The true GP posterior is marked with a think black dashed line. Contours denote 2 standard deviation credible intervals for the predictive posterior. 
\emph{Infill asymptotics} considers  $x\sim\N(0,1)$. A large number of points near zero result in a very ill-conditioned kernel matrix. VI can summarise the data with only 20 inducing points. 
\emph{Large domain asymptotics} considers data on a regular grid with fixed spacing. Note that most of the data is not visible in the plot.
This problem is better conditioned. However, 1024 inducing points are not enough to summarise the data, leading to poor performance.
}
\label{fig:VI_GP_c3}
\end{figure}

This relationship allows us to gain intuition about the properties of inducing point approximations. These will work well when the conditioning number of $K$ is large. Intuitively, if multiple observed inputs are similar they can be modelled with a single inducing point and analogously if multiple rows of $K$ nearly linearly dependent, their action can be captured by a single row of $K_{ZZ}$.  On the other hand, a dataset where different inputs map to distant points in the RKHS will be poorly approximated by $m<n$ inducing points. We illustrate these properties in \cref{fig:VI_GP_c3}.

\subsubsection{Hyperparameter learning with inducing points}

\cite{titsias09} uses the optimal variational GP, given in \cref{eq:variational_post_GP}, to construct the ELBO
\begin{align}\label{eq:titsias_ELBO}
    \c{M}(Z, \theta) = &- \frac{n}{2}\log(2 \pi) -\frac{1}{2}\logdet{\left(B^{-1} + K_{XZ} K_{ZZ}^{-1} K_{ZX}\right)} \notag \\
    &- \frac{1}{2} \|Y  \|^2_{(B^{-1} + K_{XZ} K_{ZZ}^{-1} K_{ZX})^{-1}}  - \frac{1}{2} \tr{\left(B (K - K_{XZ} K_{ZZ}^{-1} K_{ZX})\right)},
\end{align}
where the inducing target values are marginalised, leaving the inducing point locations $Z$ as the only variational parameters to be optimised together with the hyperparameters $\theta$.

Two limitations of the bound in \cref{eq:titsias_ELBO} are that it's data-fit term can not be decomposed into a sum of each observation's contributions, precluding minibatch estimators, and that it is only valid for Gaussian likelihoods.
\cite{Hensman2013big} addresses both of these issues by introducing the ELBO
\begin{align}\label{eq:hensman_bound}
    \c{M}( u^{(Z)}_q, K^{(Z)}_q, Z, \theta) = &\E_{\substack{ (f^{(Z)}|U) \sim \GP( \mu^{(Z)}_{f|U}, k^{(Z)}_{f | U}) \\ U \sim \N(u^{(Z)}_q, K^{(Z)}_q) }} \log p\left(Y| (f^{(Z)}|U)\right)  \notag \\
     &+ \KL{\N(u^{(Z)}_q, K^{(Z)}_q)}{\N(0, K_{ZZ})},
\end{align}
where $p\left(Y| (f^{(Z)}|U) \right)$ is the conditional density of the targets given the variational GP. \Cref{eq:hensman_bound} can be shown to be a lower bound on \cref{eq:titsias_ELBO}. Here, the mean and covariance of the variational distribution over $U$ are left as variational parameters to be optimised. However, learning a quadratic number of parameters for the covariance can lead to numerical instability. GPs fit with \cref{eq:hensman_bound} are often referred to as Stochastic Variational Gaussian Processes (SVGP). \nomenclature[z-svgp]{$SVGP$}{Stochastic Variational Gaussian Process}

\subsection{Expectation propagation and non-KL divergences}

So far, we have discussed algorithms that choose the variational posterior such that its KL divergence to the Bayesian posterior is minimised. However, there is a rich literature that studies the minimisation of other divergences. We only review these works briefly, as they play no role in the later chapters of this thesis. 

The power expectation propagation (EP) \nomenclature[z-ep]{$EP$}{expectation propagation} algorithm  \citep{minka2004power,minka2007EPschemes} targets the alpha-divergence between a variational posterior, built as a series of site approximations, one for each observation, and the true posterior.
Power EP is a generalisation of regular EP \citep{Minka2001EP,Opper2005EP} with the latter targeting reverse KL divergences at each site. In turn, EP can be understood as a generalisation of the belief propagation algorithm \citep{Pearl1982belief,Pearl1988belief}.
\cite{hernandez2015probabilistic} extended the EP framework to neural networks, developing an algorithm coined ``probabilistic backpropagation''.
Furthering this line of work, \cite{Lobato2016black} applied alpha divergences to black box variational inference problems, doing away with the EP framework.
\cite{li2018thesis} extends variational inference to target the family of Rényi divergences \citep{renyi1961measures}, which also generalise the KL-divergence.
EP can also be shown to target a dual of the variational lower bound \citep{li2018thesis}. This idea has been used to construct hybrid algorithms, which may present better properties for hyperparameter optimisation \citep{Adam2021Dual,Lisolin2023EPGPs}.

\subsection{Variational inference for neural networks and its limitations}

Neural networks present very high dimensional and strongly multimodal posterior distributions. This has made it difficult to develop variational inference methods for neural networks that effectively navigate the trade-off between scalability and accuracy of approximation.

The most common choice of variational distribution is a Gaussian that factorises across dimensions\footnote{Factorised approximations are also referred to as \emph{mean field} approximations.}.
This choice allows for simple implementation and is relatively computationally inexpensive. As a result, it has persisted from the first works on VI for neural networks
\citep{hinton1993keeping,Saul1998Meanfield} to more modern approaches \citep{graves2011practical,blundell2015weight}. 
However, it can be shown that modelling dependencies between posterior weights is necessary to obtain calibrated uncertainty estimates \citep{Foong20Approximate}. 

There have been efforts to leverage more flexible variational distributions. \cite{louizos2017multiplicative} use normalising flows as variational approximations.
\cite{dusenberry2020efficient} target multiple posterior modes with rank-1 Gaussian approximations. 
\cite{ober2021global} construct an inducing-point based variational distribution with autoregressive structure across layers. 
On the other hand, \cite{gal2016dropout} and \cite{antoran2020depth} obtain scalability to very large neural networks by using very 
crude variational distributions that consist of randomly zeroing subsets of network weights, and network layers, respectively.
Another family of approaches re-cast popular optimisation algorithms, like Adam, as variational inference \citep{khan2018fast,Osawa2019practical,Khan2021rule}.
Unfortunately, despite these efforts, variational methods often reach solutions that under-perform traditional maximum likelihood learning of NN parameters in terms of predictive accuracy \citep{ashukha2020pitfalls,Wenzel2020good} or underestimate predictive uncertainty \citep{foong2019between,Foong20Approximate}.

\section[Conjugate Gradients]{Approximating the posterior computation: \\ Conjugate Gradients}
\label{sec:CG}

As we saw in \cref{chap:linear_models}, the main impediment to posterior inference in the Gaussian linear model is having to solve large systems of linear equations
(see \cref{eq:posterior_linear_function_draws}
\cref{eqn:posterior_GP_moments}
\cref{eqn:pathwise_samples}). These present time complexity $\c{O}\left((nc)^3\right)$ and memory complexity $\c{O}\left((nc)^2\right)$ in kernelised form and the same complexity, but in the number of parameters $d$, when dealing with the weight-space form. 
The most widely used algorithm to solve linear systems, both in the context of GPs \citep{Gibbs_MacKay97a,WangPGT2019exactgp,Artemev2021conjugate}, and also more generally \citep{numerical07,Boyd2014convex}, is Conjugate Gradients (CG).

CG is an iterative algorithm. Given the system ${(K + B^{-1})}^{-1} Y$, 
CG performs a single matrix-vector product ${(K + B^{-1})} Y$, with cost $\c{O}\left((nc)^2\right)$, at each iteration. The algorithm recovers the exact solution after at most $nc$ steps, asymptotically recovering the cubic cost. However, the algorithm  often converges much faster, delivering very accurate approximations of the linear system solution after only a few iterations.  
The speed of convergence depends on system conditioning, which we discuss in detail in \cref{sec:CG_limiations}.

\subsection{Hyperparameter learning with CG}\label{sec:CG_hyperparam_learning}

Optimising linear model hyperparameters with the marginal likelihood requires both solving linear systems against the loss Hessian matrix and also computing its log-determinant. Although, we can not compute the log-determinant with CG, we can can compute its gradient as $\partial_{\theta}\logdet{(K + B^{-1})} = \tr{\left((K + B^{-1})^{-1} \partial_{\theta} (K + B^{-1})\right)}$. We now apply \cite{Hutchinson90trace}'s trick to substitute the trace with an expectation, obtaining
\begin{align}\label{eq:GP_evidence_gradient}
    \partial_{\theta}\log p(Y; \,\theta) =  &\frac{1}{2} \E_{z \sim \N(0, I_{nc})} z^T (K + B^{-1})^{-1} \partial_{\theta} (K + B^{-1}) z \notag \\
    &+ \frac{1}{2} Y^T(K + B^{-1})^{-1}  \partial_{\theta} \left(K + B^{-1} \right) (K + B^{-1})^{-1} Y.
\end{align}
The above expression is approximated by constructing a MC estimator of the expectation. Evaluating each MC sample requires a linear solve against $K + B^{-1}$.
It is straight forward to apply the same trick to the primal form of the model evidence. 
This approach, which was first used by \cite{Gibbs_MacKay97a}, has become the most popular approximation for hyperparameter learning with large-scale GPs \citep{gardner18GPYtorch}. CG for linear model hyperparameter learning can been paired with preconditioning and low precision computation \citep{maddison2016concrete} or variational lower bounds  \citep{Artemev2021conjugate} to reduce time-to-convergence.

\begin{figure}[t]
\centering
\includegraphics[width=0.85\linewidth]{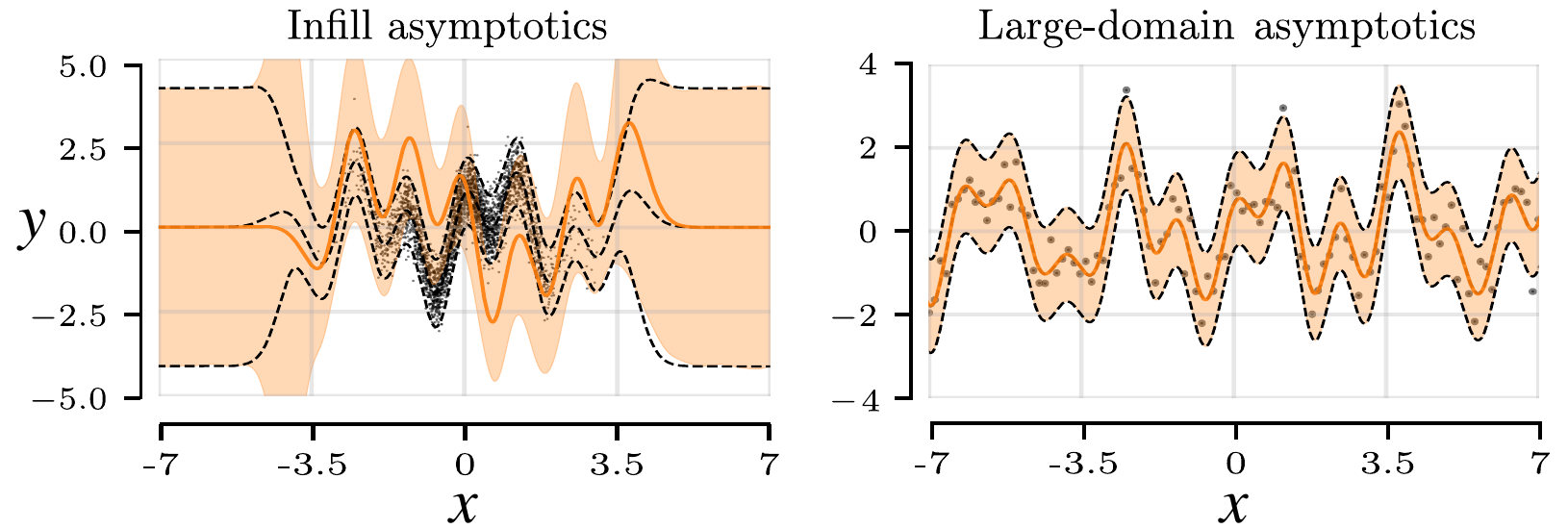}
\caption{Illustration of variational inducing point GP inference with a squared exponential kernel on $10\text{k}$ datapoints from $\sin(2x)+\cos(5x)$ with observation noise distribution $\N(0, 0.5)$.  The inducing point locations are marked with purple dots. All variational parameters are fit with SVGP \cref{eq:hensman_bound}. The true GP posterior is marked with a think black dashed line. Contours denote 2 standard deviation credible intervals for the predictive posterior. 
\emph{Infill asymptotics} considers  $x\sim\N(0,1)$. A large number of points near zero result in a very ill-conditioned kernel matrix, preventing CG from converging (we draw 2000 posterior by CG for 10 minutes on an RTX 2070 GPU.
).
\emph{Large domain asymptotics} considers data on a regular grid with fixed spacing. Note that most of the data is not visible in the plot.
This problem is better conditioned, allowing CG to recover the exact solution.
}
\label{fig:CG_approx_GP}
\end{figure}

\subsection{Limitations of Conjugate Gradient inference}\label{sec:CG_limiations}

The chief limitation of CG is that its convergence speed decreases as the matrix we are solving against becomes more ill-conditioned. 
Given the system ${(K + B^{-1})}^{-1} Y$, the number of matrix-vector products needed to guarantee convergence of CG to within a tolerance of $\eps$ is
\begin{gather}\label{eq:cg_convergence}
\c{O}\del{\sqrt{\cond(K + B^{-1})}\log \frac{\cond(K + B^{-1})\|Y\|}{\eps}}
               \\
\text{with} \quad  \cond(K + B^{-1})  = \frac{\lambda_{\max}(K + B^{-1})}{\lambda_{\min}(K + B^{-1})}, \notag
\end{gather}
where $\lambda_{\max}(K + B^{-1})$ and $\lambda_{\min}(K + B^{-1})$ are the maximum and minimum eigenvalues of $K + B^{-1}$. See \citep{terenin23} for further discussion on \cref{eq:cg_convergence}.
Although CG performs well in many GP use cases, for instance \citep{gardner18GPYtorch,WangPGT2019exactgp}, 
the condition number $\cond({K} {+} B^{-1})$ need not be bounded, and conjugate gradients may fail to converge quickly \citep{terenin23}. We illustrate this in \cref{fig:CG_approx_GP}.
Nonetheless, by exploiting the quadratic structure of the objective, substantially better worst-case convergence rates can be shown for CG than alternatives, like gradient descent \citep{BlanchardK2010optimalkernel,ZouWBGK2021benign}. This makes the results of \cref{chap:SGD_GPs}, where we show that SGD can be used to approximate GP posteriors notably faster than alternative methods, surprising.

\section[The linearised Laplace approximation]{Approximating the function class: \\ the linearised Laplace approximation} \label{sec:Laplace}

The Laplace approximation is a classical technique in Bayesian statistics for constructing Gaussian surrogates for analytically intractable posterior distributions. 
The Laplace approximation was first applied to neural networks by \cite{mackay1992practical}. We will also focus on the neural network setting here, as it is of primary interest for the rest of this thesis, forming the basis of \cref{chap:adapting_laplace}, \cref{chap:sampled_Laplace} and \cref{chap:DIP}. In doing this, we will also see how the Laplace approximation can be applied to non-conjugate linear models which can be seen as a particular case of neural networks.

Let the the function $g : \R^d \times \c{X} \to \R^c$ be a neural network and $v \in \c{V} \subseteq \R^d$ refer to its parameters, flattened into a single vector. We train it to solve a $c$ output prediction problem, by minimising a loss of the form
\begin{gather}\label{eq:NN_loss}
    \c{L}_{g}(v) = \sum_{i=1}^n \ell(y_i, g(v, x_i)) + \c{R}(v),
\end{gather}
where the subscript in $\c{L}_{g}$ makes explicit that our model is the NN $g$, $\ell$ is a data fit term (a negative log-likelihood) which we assume to include any linking functions, and $\c{R}$ is a regulariser. We do not assume either to be quadratic. This procedure returns the weights $v_\star \in \argmin_{v \in \R^{d}} \c{L}_g(v)$ \footnote{We use $\in$ since NN loss functions are almost always multimodal and thus there exist a set of multiple minimisers.}. 

\paragraph{Notation for gradients and Hessians}
We use m $\partial^m_v[g(v, x)](v ')$ to denote the mth order mixed partial derivatives of $g$ with respect to $v$ evaluated at $(v', x)$. We use $\partial^m_x f(x')$ to refer to $\partial^m_x[f(x)](x')$ for single argument functions, where no ambiguity exists.

With that, the Laplace method constructs a locally quadratic approximation to $\c{L}_g$ around the mode 
\begin{gather}\label{eq:Laplace_method_no_lin}
    \c{L}_g(v) = \c{L}_g(v_\star) + \frac{1}{2} \|v - v_\star\|^2_{\partial^2_v  \c{L}_g(v_\star)}  + \c{O}(v^3),
\end{gather}
where the first order term cancels since  $\partial_v \c{L}_g(v_\star) = 0$ and $\partial^2_v  \c{L}_g(v_\star) \in \R^{d \times d}$ is the Hessian of the loss at $v_\star$. We use this quadratic approximation to the loss to define the negative log density of an approximate posterior, which by inspection corresponds to the Gaussian 
\begin{gather}\label{eq:traditional_laplace_posterior}
    \N \left(v_\star, \, (\partial^2_v  \c{L}_g(v_\star))^{-1}\right).
\end{gather}

\begin{remark} \textbf{The asymptotic exactness of the Laplace approximation} \\
The Bernstein–von Mises theorem tells us that for any likelihood function $\ell$ and under relatively weak conditions, the posterior distribution converges to a Gaussian centred at the maximum likelihood parameter setting as the number of observations goes to infinity, i.e. $n \to \infty$ \citep{Bernstein_1946,Walker1969bigdatalikelihoodconvergence}.
This result yields credence to the Laplace approximation in big-data settings. Indeed, \cite{mackay1992practical} reports increased approximation accuracy for larger number of observations.
\end{remark}

Despite the Laplace approximation being the first method developed for Bayesian reasoning with NNs \citep{mackay1992practical}, modern adaptions of the method, some of which are introduced in \cref{chap:adapting_laplace} and \cref{chap:sampled_Laplace}, represent the state-of-the-art in the field of Bayesian deep learning \citep{daxberger2021laplace,antoran2023samplingbased}. \corr{The method has also seen success when applied to non-conjugate linear models, where the likelihood is non-Gaussian \citep{Rue2009Inla}.}
We go on to discuss the use of the Laplace approximation, in its linearised variant, for predictive variance estimation and for model evidence approximation in neural networks.  

\subsection{Linearising our network at prediction time}

Despite the closed form of the Laplace posterior over NN parameters, integrating out the parameters to evaluate the posterior distribution over functions $g(v, \cdot), \, v \sim \N(v_\star, \, (\partial^2_v  \c{L}_g(v_\star))^{-1})$ remains analytically intractable.
\cite{mackay1992practical} resolves this by introducing an additional approximation: a local linearisation of the neural network function around $v_\star$. We also do this, introducing the affine model $h : \R^d \times \c{X} \to \R^c$, which performs the map
\begin{gather}\label{eq:linearised_model}
    h(w, x) \coloneqq g(v_\star, x) + J(x) (w - v_\star)
\end{gather}
where $J(x_i) \coloneqq \partial_v [f(v, x_i)](v_\star) \in \R^{c \times d}$ is the Jacobian of the NN function evaluated at $x$ with respect to its weights, and we denote the approximate model's parameters as $w\in \R^d$ to highlight their linear relationship with the output. With this, the marginals of the posterior distribution over functions $h(w, \cdot), \, w \sim \N(v_\star, \, (\partial^2_v  \c{L}_g(v_\star))^{-1})$ become closed form and Gaussian 
\begin{gather}\label{eq:linearised_predictive_1}
    \N(g(v_\star, x'), \, J(x') (\partial^2_v \c{L}_g(v_\star))^{-1} J(x')^T).
\end{gather}
Here, we have used that the expectation of an affine transform of a Gaussian random variable is the affine transformation of the mean, and since the mean is $v_\star$, the first order term in \cref{eq:linearised_model} cancels, leaving only $g(v_\star, x')$. 

\cite{mackay1992practical} makes one final approximation.
He substitutes the Hessian of the data-fit loss $ \partial^2_{v} [\ell(y, g(v, x))](v_\star)$ for the \nomenclature[z-ggn]{$GGN$}{Generalised Gauss Newton Matrix}  Generalised Gauss Newton matrix (GGN) $J(x)^T \partial^2_{\hat{y}} \ell(y, \hat{y}) J(x)^T$ evaluated at the MAP predictions $\hat{y_i} = g(v_\star, x_i)$.
With this, the precision of the Laplace posterior becomes 
\begin{gather}\label{eq:GGN}
    \sum_{i=1}^{n} \underbrace{J(x_i)^T \partial^2_{\hat{y}_i} \ell(y_i, \hat{y_i}) J(x_i)^T}_{\text{GGN}} + \underbrace{\partial^2_{v} \c{R}(v_\star)}_{\text{Hessian of regulariser}}
\end{gather}
where $\partial^2_{v} \c{R}(v_\star) \in \R^{d\times d}$ is the Hessian of the regulariser, and $\partial^2_{y_i} \ell(y_i, \hat{y_i}) \in \R^{c\times}c$ is the GGN corresponding to the contribution of each observation to the likelihood.

\begin{remark} \textbf{Comparing the Hessian and the GGN}\\
Using the chain rule of the product, we can decompose the Hessian into the GGN, which captures the curvature of the likelihood function but linearises the NN function, and a second term consisting of the gradient of the log-likelihood multiplied with the Hessian of the NN function
\begin{gather}
  \partial^2_{v} [\ell(y, g(v, x))](v_\star)  =  \underbrace{J(x)^T \partial^2_{\hat{y}} \ell(y, \hat{y}) J(x)^T}_{\text{GGN}} + \partial_{\hat{y}} [\ell(y, \hat{y})](g(v_\star, x)) \partial^2_v [g(v, x)] (v_\star).
\end{gather}
From this, we can see that the GGN will be a good approximation to the Hessian when the gradient of the data-fit loss $\partial_{\hat{y}} [\ell(y, \hat{y})](g(v_\star, x))$ is small. It will be exact when we are at an optima of the fit term. For instance, when the NN parametrises the mean of a Gaussian likelihood and the NN output perfectly interpolates the training targets.

Unlike the exact Hessian, the GGN is guaranteed to be PSD. This makes it often preferred in the second order optimisation literature  \citep{Becker1989ImprovingGGN,Schraudolph2002fastcurvature,martens2014new}, since negative curvature results in linear system solutions lying at infinity, causing optimisers to diverge. 
Furthermore, the GGN is cheaper to compute than the full Hessian and better lends itself to efficient block-wise approximations. Examples of the latter are the iLQR algorithm \citep{BEMPORAD202ILQR} and the Kronecker factored approximation \citep{Martens15kron}.
\end{remark}

\begin{figure}[t]
\centering
\includegraphics[width=\linewidth]{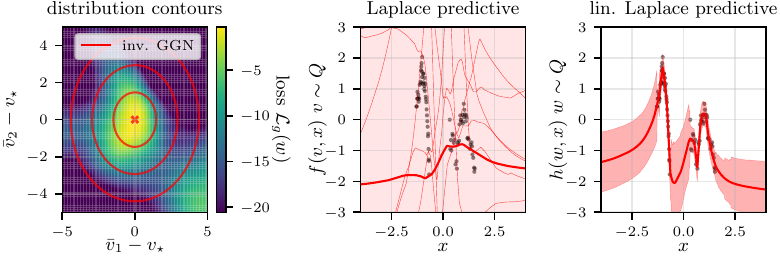}
\caption{Left: 2d projection of a neural network loss landscape around a mode $v_\star$. We also display the 1, 2 and 3 standard deviation contours of the linearised Laplace (i.e. using the GGN approximation to the Hessian) posterior computed at the mode. Middle: we push Laplace posterior through the NN function and display mean and 2 standard deviation credible regions of the posterior predictive distribution. These do not fit the data. We also display the functions corresponding to 4 posterior samples. Right: the linearised Laplace predictive distribution fits the data well and provides sensible errorbars (2 standard deviation credible regions of the posterior predictive distribution). 
}
\label{fig:laplace_posterior_is_bad}
\end{figure}

\cite{Lawrence_thesis} found that the Laplace approximation, without the linearisation step, resulted in very poor quality predictive distributions that did not even assign high density to the train targets. \cite{ritter2018scalable} make a similar observation, but ameliorate the issue by introducing additional hyperparameters that decrease the variance of the posterior over the weights. We reproduce this result in \cref{fig:laplace_posterior_is_bad}. We also show how the true NN posterior can present strongly non-Gaussian features near a mode, leading the Laplace approximation to place some of its mass in very low density regions of the true posterior. It is this that causes poor predictions. However, local linearisation resolves the issue. 
This incongruence was resolved recently, roughly 30 years after the publication of \cite{Mackay1992Thesis}, by the modern formulation of the linearised Laplace approximation \citep{Khan2019approximate,Immer2021Improving,antoran2022adapting}, which we describe in the next section.

\subsection{A modern view of linearised Laplace} \label{subsec:modern_view_laplace}

We now present a modern re-interpretation of the linearised Laplace methodology described in the previous section. The key observation, made by \cite{Khan2019approximate}, is that the GGN-Laplace posterior matches the true posterior of the tangent linear model $h$. Using a similar reasoning, \cite{Immer2021Improving} argue that the GGN-Laplace posterior should be paired with the tangent linear model at prediction time. The authors show that this results in more accurate posterior predictive distributions, which we reproduce in \cref{fig:laplace_posterior_is_bad}.
Building on this, \cite{antoran2022adapting} and \cite{antoran2023samplingbased} present a linearisation-first derivation of linearised Laplace, which we go on to present here.

The linearised Laplace method consists of two consecutive approximations, the latter of which is necessary only if $\ell$ or $\c{R}$ are non-quadratic. That is, if the likelihood or prior are non-Gaussian.
\begin{enumerate}[topsep=0pt]
    \item We take a first-order Taylor expansion of $g$ around $v_\star$, yielding the surrogate model given in \cref{eq:linearised_model}.
    This model's weights linearly combine the rows of the Jacobian matrix, which can be seen as a feature expansion of the input. To make this connection explicit, we henceforth adopt the notation $\phi(x) = J(x)$. 
    The linear model's loss is 
    \begin{gather}\label{eq:linear_model_loss}
        \c{L}_h(w) = \sum_{i=1}^n \ell(y_i, h(w, x_i)) + \c{R}(w).
    \end{gather}
    If this expression is quadratic, we may proceed with conjugate linear-Gaussian inference as described in \cref{chap:linear_models}. The Laplace approximation is not needed. 
  \item If the linear model's loss is non quadratic, we locally approximate it with the Laplace method. This yields a Gaussian posterior of the form
  \begin{equation}\label{eq:lin_laplace_posterior}
    \N(v_\star, \, (\partial^2_v \c{L}_h (v_\star))^{-1}).
\end{equation}
Since the NN and tangent linear model share gradients, that is $\partial_v \c{L}_g(v) = \partial_h \c{L}_h(w)$, if $v_\star$ is a local optima of $\c{L}_g$ it will also be one of $\c{L}_h$.
Direct calculation shows that $\partial^2_v \c{L}_h (v_\star) = A + \Phi^T B \Phi = H$, for $\nabla^2_w\c{R}(v_\star)=A$ and $B$ a block diagonal matrix with blocks $B_i = \nabla^2_{\hat y_i} \ell(y_i, \hat y_i)$ evaluated at $\hat y_i = h(v_\star, x_i) = g(v_\star, x_i)$. We have once again used notation matching the one used for conjugate Gaussian-linear models in \cref{chap:linear_models} to highlight the Laplace approximation's Gaussianisation of the likelihood and prior.
\end{enumerate}
Linearised Laplace has returned us a conjugate Gaussian multi-output linear model with the GGN as its posterior precision.  

\begin{remark} \textbf{Linearisation as a modelling choice} \\
We could go a step further and view the linearisation step as a modelling choice. That is, we would be adopting a Gaussian linear model with basis functions matching the NN's Jacobian around $v_\star$. This would not be a data independent basis, however. The NN has been fit to our dataset in order to find $v_\star$. We overlay the Jacobian basis functions on top of a toy 1d dataset on which the corresponding NN was trained in \cref{fig:linear_NN_prior}. The basis functions present sharp changes in the input regions where there is data, and are more smooth elsewhere. The same is true for the equivalent kernel. Given this data dependence, it is somewhat surprising that linearised Laplace does not result in uncertainty underestimation given that we are using our data twice: one to train our NN and another for inference in the tangent linear model. The most common case of double use of data resulting in overfitting is hyperparameter learning with the model evidence. This overfitting happens, for instance, in the deep kernel learning model \citep{ober2019benchmarking,ober2021promises}, which uses the- linear model evidence to fit basis functions parametrised by neural networks. My intuition is that linearised Laplace escapes overfitting because the NN's weights are not trained with the linearised model's evidence, but with some bespoke NN loss function. As a result, the Jacobian basis functions do not perfectly pass through the training targets.
\end{remark}

\begin{figure}[h]
\centering
\includegraphics[width=\linewidth]{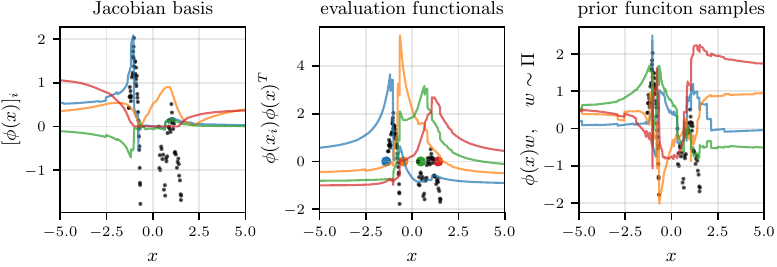}
\caption{Illustration of the prior implied by the linearised NN. The leftmost plot shows 4 dimensions of the Jacobian basis (i.e. the Jacobian with respect to 4 of the NN weights) function of a 2 layer residual MLP trained  on the 1d toy dataset introduced by \cite{antoran2020depth}. This dataset is displayed as black dots. The middle plot shows the kernel implied by the Jacobian basis. It is non-stationary. The rightmost plot shows 4 samples drawn from the linearised NN prior, with the NN loss mode's prediction $g(v_\star, \cdot)$ removed.
}
\label{fig:linear_NN_prior}
\end{figure}

The linearised posterior distribution over NN outputs at a new input $x' \in \c{X}$ is thus
\begin{gather}\label{eq:linearised_predictive_2}
    \N(g(v_\star, x'), \, \phi(x') H^{-1} \phi(x')^T),
\end{gather}
matching the expression used by \citep{MacKay1992interpolation} and given in \cref{eq:linearised_predictive_1}.
In other words, linearised Laplace simply augments our pre-trained NN's predictions with with Gaussian errorbars. Keeping the NN outputs as the mean presents a large advantage over alternative approaches to Bayesian inference in deep learning which often trade off goodness of fit with quality of uncertainty estimates \citep{snoek2019can, daxberger2021bayesian, daxberger2021laplace}. 
Additionally, linearised Laplace tends to provide sensibly shaped errorbars, contrasting with other approximations which fail simple tests like ``in-between'' uncertainty \citep{foong2019inbetween} or ``far-away'' uncertainty \citep{Kristiadi2020being}. 

\begin{remark} \textbf{Connections to the neural tangent kernel and infinitely wide NNs}\\
    The Neural Tangent Kernel (NTK) \citep{Jacot2018NTK,Lee2019wide}\nomenclature[z-ntk]{$NTK$}{Neural Tangent Kernel} is intimately related to linearised Laplace. The NTK matches the linearised model given in \cref{eq:linearised_model}, but with the Taylor expansion point being the point where the NN weights are initialised, instead of an optima of the loss. As the NN width increases, and under some relatively weak conditions which we will not discuss here, the mode of the NN loss goes to the initialisation point. In this setting, the linearised Laplace posterior matches the posterior of a GP with the NTK as its covariance kernel. This distribution is different, however, from the true posterior of the infinitely wide NN model \citep{matthews2018gaussian}. The latter also corresponds to a GP, but its kernel is not the NTK. It is the outer product of the Jacobians of the NN's last layer weights. Thus, the NTK is a sum of the infinitely wide NN kernel and also some other kernels with features matching the NN's non-last layer Jacobians.
\end{remark}

\subsection{Learning hyperparameters with the Laplace evidence}\label{subsec:original_laplace_hyperparam_learning}

An important limitation of linearised Laplace is its predictive variance's sensitivity to the curvature of the likelihood $B$ and regulariser $A$. The values of these matrices derived from the loss used to train the NN often result in miss-calibrated uncertainty, and large performance gains can be obtained by tuning them. To this end, the Laplace approximation provides us with and an estimate of the model evidence, which may be used to learn $A$ and $B$ as well as other hyperparameters. 

Again denoting our set of hyperparameters as $\theta$, we re-arrange Bayes rule \cref{eq:bayes_rule_c2} to expose the model evidence on the left hand side, substitute the posterior density for our Gaussian approximation \cref{eq:lin_laplace_posterior}, and evaluate the functions at $v_\star$, obtaining
\begin{align}\label{eq:laplace_evidence_non_gauss}
   \log  p(Y; \, \theta) &\approx \log p(Y|v_\star) + \log \pi(v_\star) - \N(v_\star; v_\star, H^{-1}) \notag \\
    &= \log p(Y|v_\star) + \log \pi(v_\star) - \frac{1}{2}\logdet H + \frac{d}{2} \log(2 \pi) \coloneqq \c{G}_{v_\star}(\theta),
\end{align}
where we make the dependence of the approximation on the linearisation point $v_\star$ explicit by introducing it as a subscript to $\c{G}$.
If we do not have access to the explicit joint density of parameters and observations $p(Y|v_\star)\pi(v_\star)$, we may use the loss function $\c{L}(v_\star)$ instead. However, we have to introduce the normalisation factors for the respective Gaussian approximations of the likelihood and prior
\begin{gather}\label{eq:laplace_evidence_normalised}
   \c{G}_{v_\star}(\theta) = -\c{L}_f(v_\star)  - \frac{1}{2}\logdet H + \frac{1}{2}\logdet A + \frac{1}{2}\logdet B - \frac{n}{2} \log(2 \pi).
\end{gather}

We may tune the hyperparameters $\theta$ for a NN by choosing them to maximise $\c{G}_{v_\star}(\theta)$.  This may improve our errorbar calibration\footnote{It most likely will not and fixing this is the object of \cref{chap:adapting_laplace}. But one could plausibly conclude that it might from reading the relevant literature.}, but will not change our NN's outputs however, as its parameters are held fixed at $v_\star$. 
\cite{Mackay1992Thesis} proposes to re-train the NN from scratch using the new hyperparameters. Steps of NN training and hyperparameter optimisation are iterated until a joint stationary point of the parameters and hyperparameters is found
\begin{gather}
    v_\star \in \argmin_{v \in \R^d} \c{L}_f(v, \theta_\star) \spaced{and} \theta_\star \in \argmin_{\theta \in \Theta} \c{G}_{v^\star}(\theta)
\end{gather}
where we have made the loss' dependence on the hyperparameters explicit by adding them as an argument.

\subsection{Online Laplace methods}\label{subsec:online_Laplace}

The size of the neural networks and datasets has grown dramatically since 1992. As a result, nowadays, re-training our NN multiple times after hyperparameter updates introduces a prohibitive computational cost. 
This motivates Online Laplace (OL) \nomenclature[z-ol]{$OL$}{Online Laplace} approaches which, at timestep $t$ with parameters $v_t$ and hyperparameters $\theta_t$, perform a step of NN parameter optimisation to minimise $\c{L}_f(v_t; \theta_t)$, obtaining $v_{t+1}$, followed by a hyperparameter update to maximise $\c{G}_{v_{t+1}}( \theta_t)$ \citep{Foresee97BayesNewton,Friston07variational,Immer21Selection}\footnote{\cite{Friston07variational} refers to the described online Laplace procedure as \emph{Variational Laplace}.}.
Critically, the Laplace approximation of the evidence is constructed with both the NN loss and GGN evaluated at the current NN parameter setting $v_{t+1}$. 
Since optimisation has not converged, $v_{t+1} \notin \argmin_{v\in \R^d} \c{L}_f(v; \theta)$. Thus $\c{G}_{v_{t+1}}(\theta_t)$, which discards the first-order Taylor expansion term, is unlikely to provide a local approximation to the true model evidence. Despite this, online Laplace methods have seen success recently, for instance for learning data augmentation hyperparameters \citep{immer2022invariance} and model invariance hyperparameters \citep{vanderouderaa2023learning}. 

In \cite{lin2023online}, a piece of work not covered in this thesis, we construct a Taylor expansion-based Gaussian approximation to the evidence that does not discard the first order term, and thus may be more suitable for online use. We then show that this approximation corresponds to the exact evidence of the tangent linear model. Interestingly, when we drop the first order term, we recover a variational lower bound on the evidence of the linear model, providing some justification for the online approaches of \cite{Foresee97BayesNewton,Friston07variational,Immer21Selection}.

\subsection{Limitations of the linearised Laplace approximation}

Linearised Laplace presents a number of critical limitations and addressing these is the object of much of the rest of this thesis.

Linearised Laplace shares the limitations of linear model inference discussed in \cref{chap:linear_models}: cubic compute cost and quadratic memory cost, in either the number of NN parameters $d$ or the number of outputs times observations $nc$. In modern deep learning problems, both of these quantities tend to be in the tens of millions, or larger. Fortunately, linearising the NN allows us to leverage the approximations for linear models discussed in \cref{sec:VI}, \cref{sec:CG}, and the ones we will introduce next, in \cref{chap:SGD_GPs}. Additionally, there are a number of approximations that exploit the structure of the linearised NN, such as last layer methods \citep{Kristiadi2020being,Eschenhagen2021mixtures}, Kronecker factorised approximations \citep{ritter2018scalable,immer2023stochasticNTK}, and subnetwork methods \citep{daxberger2021bayesian}.

Additionally, linearised Laplace presents some  unique limitations.
Firstly, its basis functions, the NN Jacobians, are computationally expensive to deal with. For a given in put $x \in \c{X}$, computing the Jacobian expansion $J(x)$ requires either $c$ passes of backward mode Automatic Differentiation (AD)\nomenclature[z-ad]{$AD$}{Automatic Differentiation} or $d$ passes of forward mode AD. Clearly the former is preferable for most neural networks, but it still presents an issue when dealing with high-dimensional output spaces, for instance, in many-way image classification, image-restoration, or language modelling. For a large enough model, even storing the $c \times d$ dimensional Jacobian features may be too expensive.
In a textbook implementation of linearised Laplace, Jacobians appear at two different points: 1) when constructing the GGN matrix at inference time, and 2) when making predictions for a new observation $x'$.  
The former problem can be partially ameliorated by leveraging the equivalency between the GGN and the Fisher information matrix for exponential family likelihoods---almost all of the ones used in machine learning. In particular, the Fisher admits unbiased stochastic estimation by sub-sampling output dimensions \citep{martens2014new,Kunstner2019limitations}. Unfortunately, the predictive covariance does not admit this sort of approximation. \cref{chap:sampled_Laplace} presents an implementation of linearised Laplace that completely avoids instantiating Jacobian matrices.

Finally, astute readers may have noticed that the assumption that we find a local optima of the NN training loss is unrealistic. NN optimisation landscapes present a large number of symmetries and invariances, and stochastic optimisation is almost always used to minimise them. Furthermore, early stopping is also almost always used. These techniques prevent us from finding a local minima of the loss. \corr{This is intentional, it can be thought of as regularisation, because reaching a very low loss value would almost surely mean we are overfitting.} One may thus wonder how not having access to a minimum of the loss affects the linearised Laplace approximation? This is addressed in \cref{chap:adapting_laplace}.

\chapter[Stochastic Gradient Descent for Gaussian Processes]{Sampling from Gaussian Process posteriors using Stochastic Gradient descent}\label{chap:SGD_GPs}  %

\ifpdf
  \graphicspath{{Chapters/Chapter4/Figs/Raster/}{Chapters/Chapter4/Figs/PDF/}{Chapters/Chapter4/Figs/}}
\else
  \graphicspath{{Chapters/Chapter4/Figs/Vector/}{Chapters/Chapter4/Figs/}}
\fi

\textit{``When solving a given problem, try to avoid solving a more general problem as an intermediate step.''} --- Vladimir Vapnik\\

Gaussian processes (GPs) provide a comprehensive framework for learning unknown functions in an uncertainty-aware manner.
This often makes GPs the model of choice for sequential decision-making, achieving state-of-the-art performance in tasks such as optimising molecules in computational chemistry settings \citep{Gomez-Bombarelli18} and automated hyperparameter tuning \citep{Snoek2012,Hernandez-Lobato14}.

As we have seen in previous chapters, the main limitation of Gaussian processes is that their computational cost is cubic in the training dataset size.
Significant research efforts have been directed at addressing this limitation, resulting in two key classes of scalable inference methods: (i) \emph{inducing point} methods \citep{titsias09,Hensman2013big}, which approximate the GP posterior, and (ii) \emph{conjugate gradient} methods \citep{Gibbs_MacKay97a,gardner18GPYtorch,Artemev2021conjugate}, which approximate the computation needed to obtain the GP posterior.
Note that in structured settings, such as geospatial learning in low dimensions, specialised techniques are available \citep{Wilson15,Wilkinson2019thesis}.
Throughout this chapter, we focus on the generic setting, where scalability limitations are as of yet unresolved.

In recent years, stochastic gradient descent (SGD) \nomenclature[z-sgd]{$SGD$}{Stochastic Gradient Descent} has emerged as the leading technique for training deep learning models at scale \citep{Ruder16SGD,Zhang2023sgd}. It has also been applied to kernel methods
\citep{dai2014scalable}, and even connected to variational Bayesian inference \citep{Mandt17Descent}.
While the principles behind the effectiveness of SGD are not yet fully understood, empirically, SGD often leads to good predictive performance---even when it does not fully converge.
The latter is the default regime in deep learning, and has motivated researchers to study \emph{implicit biases} and related properties of SGD \citep{Belkin2019,ZouWBGK2021benign}.

In the context of GPs, SGD is commonly used to learn kernel hyperparameters---by optimising the marginal likelihood \citep{gardner18GPYtorch,chen20,chen22} or closely related variational objectives \citep{titsias09,Hensman2013big}.
In this chapter, we explore applying SGD to the complementary problem of approximating GP posterior samples given fixed kernel hyperparameters.
In one of his seminal books on statistical learning theory, Vladimir \cite{vapnik95} famously said: \emph{"When solving a given problem, try to avoid solving a more general problem as an intermediate step."}
Motivated by this viewpoint, as well as the aforementioned property of good performance often not requiring full convergence when using SGD, we ask: \emph{Do the linear systems arising in GP computations necessarily need to be solved to a small error tolerance? If not, can SGD help accelerate these computations?}

We answer the latter question affirmatively, with specific contributions as follows.
(i) In \cref{sec:stochastic_sampling_estimators}, we develop a scheme for drawing GP posterior samples by applying SGD to a quadratic problem.
In particular, we re-cast the pathwise conditioning technique of \citep{wilson20} as an optimisation problem,
and, in \cref{sec:inducing_SGD}, extend the method to inducing point GPs.
In \cref{subsec:variance_reduction_SGD}, we develop a novel low-variance SGD sampling estimator applicable to both linear models, where the kernel is finite dimensional, and GPs.
\nomenclature[z-sdd]{$SDD$}{Stochastic Dual Descent}
For the kernelised setting, in \cref{subsec:SDD}, we introduce Stochastic Dual Descent (SDD), an optimisation scheme that targets a better conditioned dual objective in place of the more-common kernel ridge regression objective. 
(ii) In \cref{sec:implicit_bias}, we characterise the implicit bias in SGD-approximated GP posteriors showing that despite optimisation not fully converging, these match the true posterior in regions both near and far away from the data.
(iii) Finally, in \cref{sec:SGDGP_experiments}, we present the following experimental evidence:
\begin{enumerate}
    \item On standard UCI regression benchmarks with up to 2 million observations, stochastic dual descent either matches or improves upon the performance of conjugate gradients, while strictly outperforming other baselines.
    \item On large-scale parallel Bayesian optimisation, stochastic gradient descent is shown to be superior to preconditioned conjugate gradients and inducing point variational inference, both in terms of the number of iterations and in terms of wall-clock time. In turn, stochastic dual descent is shown to be superior to vanilla stochastic gradient descent.
    \item On a molecular binding affinity prediction task, where Gaussian processes have not previously been shown to be competitive with deep learning approaches, the performance of stochastic dual descent matches that of graph neural networks.
\end{enumerate}

The methods and insights developed in this chapter, in particular the low-variance SGD estimator of weight-space posterior samples, will play a key role in scaling the linearised Laplace method to large scale neural networks and datasets in \cref{chap:sampled_Laplace}.

\section{Pathwise conditioning as an optimisation problem}\label{sec:pathwise_as_optim}

Both a Gaussian process' posterior mean and posterior samples can be expressed as solutions to quadratic optimisation problems.
For the primal, weight-space form, the expressions for the mean and samples were provided in \cref{chap:linear_models}, in  \cref{eq:linear_model_loss} and \cref{eq:sample_then_optimise}, respectively. Here we study the more general kernelised form. To simplify notation, we assume the output dimension is $c=1$ throughout this chapter. As a result, our noise precision matrix $B$ is diagonal. Additionally, we assume our kernel $k$ is stationary, or at least admits random features.

The GP posterior mean minimises the ridge regression loss over functions in the RKHS:
\begin{gather}
    f_\star (\cdot) =  \argmin_{f\in\c{H}} \sum_{i=1}^n [B]_{ii} (y_i - \innerprod{k(x_i, \cdot)}{f}^2  + \|f\|_{\c{H}}^2.
\end{gather}
Using the representer theorem \citep{Scholkopf2001representer}, we transform this objective into a quadratic problem over the representer weights $\alpha \in \R^n$
\begin{align}
  \label{eqn:mean-optim}
  f_\star (\cdot) & = {K}_{(\cdot)X }\alpha_\star = \sum_{i=1}^n \alpha_{*\, i} k(x_i,\cdot)
                       &
  \alpha_\star              & = \argmin_{\alpha\in\R^n} \sum_{i=1}^n  [B]_{ii} (y_i - {K}_{x_iX }\alpha)^2 + \norm{\alpha}_{K}^2
  .
\end{align}
Its optima is $\alpha_\star = (K + B^{-1})^{-1}Y$, matching \cref{eq:c2_gp_mean_representer}.
Recall that we refer to $k(x_i,\cdot)$ as the \emph{evaluation functionals}, and we henceforth refer to $\norm{\alpha}_{K}^2 = \alpha^TK\alpha$ as the \emph{regulariser}.
To construct respective optimisation problem for obtaining posterior samples, we part from the decomposed pathwise expression given in \cref{eqn:pathwise-zero-mean}, which we repeat here for the reader's convenience
\begin{gather}
  \label{eqn:pathwise-zero-mean_c3}
  (f| Y)(\cdot) = \ubr{f_{\star}(\cdot) \vphantom{{K}_{(\cdot)X}} }_{\text{posterior mean}} + \ubr{f(\cdot) \vphantom{{K}_{(\cdot)X}} }_{\text{prior sample}}  - \ubr{{K}_{(\cdot)X} (K + B^{-1})^{-1}(f(X) + \c{E})
  }_{\text{uncertainty reduction term}} \\
  \text{with} \quad
  \c{E}\sim\N(0, B^{-1}) \spaced{and}
  f\sim\GP(0, k) \notag.
\end{gather}
The posterior mean can be obtained by solving \cref{eqn:mean-optim}.
We approximate the prior function sample $f$ using a sum of random Fourier features $\tl{f}$, as described in \cref{subsec:efficient_prior_sampling}.
Each posterior sample's uncertainty reduction term is parametrised by a set of representer weights. These are given by a linear solve against a noisy prior sample evaluated at the observed inputs $(K + B^{-1})^{-1}(\tl{f}(X ) + \eps)$.
Thus, by analogy to \cref{eqn:mean-optim}, we can construct an optimisation objective targeting a sample's representer weights as
\begin{gather}\label{eqn:samples-optim}
  \argmin_{\alpha\in\R^n} \sum_{i=1}^n [B]_{ii} (\tl{f}(x_i) + \eps_i - {K}_{x_iX }\alpha)^2 + \norm{\alpha}_{K}^2  \\
   \text{with} \quad \c{E} \sim\N(0,B^{-1}) 
    \spaced{and}
    \tl{f}(\cdot) =  \phi_{s}(\cdot) w \quad 
    w \sim\N(0, I_d) \quad s \sim \Omega, \notag
\end{gather}
where $\eps_i$ are the individual entries of $\c{E} = [\eps_1, \eps_2, \dotsc, \eps_n]^T$.
We denote as $\phi_s$ a $d$ dimensional random feature expansion. Unless specified otherwise, we assume a stationary kernel and use a cosine expansion with random frequencies drawn from our kernel's spectral density $\Omega$.

\begin{remark} \textbf{Dividing and conquering}\\
    We explicitly separate the posterior mean and 0-mean samples into two separate optimisation problems: \cref{eqn:mean-optim} and \cref{eqn:samples-optim}. However, this need not be the case. We could shift the solution of the sampling objective by exactly the mean function by regressing onto $(\tl{f}(X ) + \c{E} + Y)$ instead of $(\tl{f}(X ) + \c{E})$. Unfortunately, when doing this,  we find the target vector $Y$ to often dominate the objective, resulting in a worse quality estimates of the GP posterior variance. 
\end{remark}

\section{Stochastic estimators of the sampling objective}\label{sec:stochastic_sampling_estimators}

We now develop and analyse techniques for drawing samples from GP posteriors using stochastic gradient descent.
We provide three different stochastic estimators. First a simple, general purpose one in \cref{subsec:SGD_RFF_estimator}. This objective will prove useful when dealing with inducing point GPs, where the innovations discussed next are not applicable.
Then, in \cref{subsec:variance_reduction_SGD}, one with reduced variance when drawing 0-mean posterior samples. We will also provide the weight-space counterpart of this estimator. We will go on to investigate the conditioning of the quadratic objectives targeted by these estimators. This will lead us to develop our third method \emph{Stochastic Dual Descent} in \cref{subsec:SDD}, which brings favourable conditioning to the kernelised setting.
Finally, \cref{subsec:the_right_optimiser} compares different approaches to stochastic optimisation and provides guidelines on best practices.
As a preview of this section's contributions, we showcase SGD's performance, and compare it to CG and inducing point VI, on a pair of toy problems designed to capture complementary computational difficulties, in \cref{fig:scalable-learning-comparison}.

\begin{figure}
\includegraphics{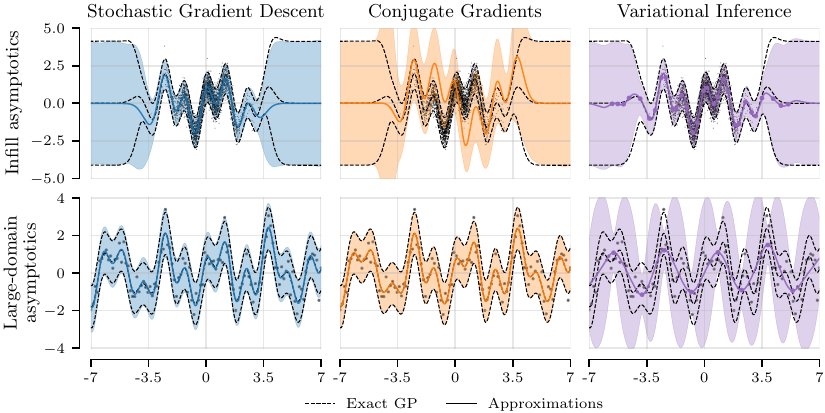}
\caption{Comparison of SGD, CG \citep{WangPGT2019exactgp} and SVGP \citep{Hensman2013big} for GP inference with a squared exponential kernel on $10\text{k}$ datapoints from $\sin(2x)+\cos(5x)$ with observation noise distribution $\text{N}(0, 0.5)$.  We draw 2000 function samples with all methods by running them for 10 minutes on an RTX 2070 GPU.
\emph{Infill asymptotics} considers  $x_i\~[N](0,1)$. A large number of points near zero result in a very ill-conditioned kernel matrix, preventing CG from converging. SGD converges in all of input space except at the edges of the data. SVGP can summarise the data with only 20 inducing points. Note that CG converges to the exact solution if one uses more compute, but produces significant errors if stopped too early, as occurs under the given compute budget.
\emph{Large domain asymptotics} considers data on a regular grid with fixed spacing.
This problem is better conditioned, allowing SGD and CG to recover the exact solution. However, 1024 inducing points are not enough for SVGP to summarise the data.
}
\label{fig:scalable-learning-comparison}
\end{figure}

\subsection{A first approach: mini batching and unbiased random features}\label{subsec:SGD_RFF_estimator}

The optimisation problem \cref{eqn:mean-optim},
requires $\c{O}(n^2)$ operations to compute both the square error and regulariser terms exactly.
The square error loss term is amenable to minibatching, which gives an unbiased estimate in $\c{O}(n)$ operations.
Assuming that $k$ admits random features, we can stochastically estimate the regulariser by expressing the kernel matrix as the expectation of an outer product of feature expansions (see \cref{subsec:random_features}). That is, 
$\norm{\alpha}_{K}^2 = \E_{s \sim \Omega} \alpha^T \Phi_s\Phi^T_s \alpha$ where $\Phi_s \in \R^{ n \times d}$ is the stacked $d$-dimensional random feature expansion of the $n$ inputs. 
Combining both estimators gives our SGD~objective
\begin{gather}
  \label{eqn:stochastic_mean-objective}
  \frac{n}{r} \sum_{i=1}^r [B]_{ii} (y_i - {K}_{x_iX }\alpha)^2 + \alpha^T \Phi_s\Phi^T_s \alpha
\end{gather}
where $r$ is the minibatch size.
This regulariser estimate is unbiased even when drawing a single Fourier feature per step $d=1$. The number of features controls the variance.
Evaluating \cref{eqn:stochastic_mean-objective} presents $\c{O}(n)$ complexity, in contrast with the $\c{O}(n^2)$ complexity of one CG step.
It is straight forward to apply the same estimators to the 0-mean sampling objective in \cref{eqn:samples-optim} obtaining
\begin{gather}
  \label{eqn:stochastic_sample-objective}
  \frac{n}{r} \sum_{i=1}^r[B]_{ii} (\tl{f}(x_i) + \eps_i - {K}_{x_iX }\alpha)^2 +    \alpha^T \Phi_s\Phi^T_s \alpha,
\end{gather}
with a per-step cost of $\c{O}(ns)$, for $s$ the number of posterior samples drawn.
We discuss sublinear inducing point techniques further on, in \cref{sec:inducing_SGD}.

\subsection{A lower variance estimator for SGD-based sampling}\label{subsec:variance_reduction_SGD}

Empirically, the minibatch estimator in \cref{eqn:stochastic_sample-objective} results in high gradient variance. This is because our targets contain unstructured noise $\eps_i$, which is difficult to predict. We propose an alternative sampling objective function which shares the same gradient in expectation, but whose stochastic estimates may present lower variance. We provide both kernelised and weight-space forms for the new objective. We then analyse the variance of the new weight-space objective.

\subsubsection{Kernelised form}

We modify the sampling objective \cref{eqn:samples-optim} by moving the noise into the regulariser term
\begin{gather}\label{eqn:samples-optim-reduced}
   \argmin_{\alpha\in\R^n} \sum_{i=1}^n [B]_{ii} (\tl{f}(x_i) - {K}_{x_iX }\alpha)^2 + \norm{\alpha - \c{E}'}_{K}^2  \\
   \text{with} \quad \c{E}' \sim\N(0,B) 
    \spaced{and}
    \tl{f}(\cdot) =  \phi_{s}(\cdot) w \quad 
    w \sim\N(0, I_d) \quad s \sim \Omega, \notag
\end{gather}
which inverts the covariance of the distribution the noise is sampled from. We highlight this change with the prime notation $\c{E}'$. This modification \emph{preserves the optimal representer weights} since objective \cref{eqn:samples-optim-reduced} equals \cref{eqn:samples-optim} up to a constant.  

\begin{derivation}\textbf{Equivalency of kernelised sampling objectives} \\
To show the equality of both objectives up to a constant, we show both have the same gradient. 

  Let ${L}{L}^T = B^{-1}$ be the Cholesky factorisation of the noise covariance, let $f(X) \sim\N(0, K)$, and let $\epsilon \sim\N(0, {I}_n)$.
  Our objectives are
  \begin{gather}
    \norm[0]{f(X ) + {L} \epsilon - K\alpha}_{B}^2 + \norm{\alpha}_{K}^2
  \end{gather}
  and
  \begin{gather}
    \norm[0]{f(X )  - K\alpha}_{B}^2 + \norm{\alpha - {L}^{-T}\epsilon}_{K}^2
    .
  \end{gather}
  Taking derivatives with respect to $\alpha$, we have
  \begin{gather}
    \partial{\alpha} \left(\norm[0]{f(X ) + {L} \epsilon - K\alpha}_{B}^2 + \norm{\alpha}_{K}^2 \right)
    \\
    \quad= -2K B \left( f(X ) + {L} \epsilon - K\alpha \right) + 2 K\alpha
    \\
    \quad= -2K ( B f(X ) - B K\alpha + {L}^{-T} \epsilon - \alpha),
  \end{gather}
  and
  \begin{gather}
    \partial{\alpha} \left( \norm[0]{f(X ) - K\alpha}_{B}^2 + \norm{\alpha - {L}^{-T}\epsilon}_{K}^2 \right)
    \\
    \quad= -2K B \left( f(X ) - K\alpha \right) + 2 K(\alpha - {L}^{-T} \epsilon )
    \\
    \quad= -2K ( B f(X ) - B K\alpha + {L}^{-T} \epsilon - \alpha),
  \end{gather}
  respectively.
  These expressions match, giving the claim. 
  Furthermore, since both objectives are strictly convex, they both have the same unique minimum.
\end{derivation}

\subsubsection{Weight-space form}

We now apply the same trick for the weight-space form of the sample-then-optimise objective \cref{eq:sample_then_optimise}. This will allow us to scale the linearised Laplace method to real-world sized deep learning problems in \cref{chap:sampled_Laplace}. We begin by stating the zero-mean sample-then-optimise objective:
\begin{gather}\label{eq:old_loss}
         L(w) = \frac{1}{2} \|\c{E} - \Phi w\|^2_{{B}} + \frac{1}{2} \|w - w_0\|^2_{A} \\
    \text{with} \quad  \c{E} \sim\N(0, B^{-1}) \spaced{and}
    w_0 \sim \N(0, A^{-1}) \notag.
\end{gather}
Inspecting the expression, we consider how it may be stochastically estimated:
\begin{itemize}[topsep=0pt]
  \item The first term is data dependent. It corresponds to the scaled squared error in fitting $\mathcal{E}$ as a linear combination of $\Phi$. Its gradient requires stochastic approximation for large datasets.
  \item The second term, a regulariser centred at $w_0$, does not depend on the data. Its gradient can thus be computed exactly at every optimisation step. This differs from the kernelised setting, where the regulariser contained the kernel matrix and required stochastic estimation.
\end{itemize}
Again, we encounter random noise in the targets, and thus, the variance of a mini-batch estimate of the gradient of $\|\Phi z - \mathcal{E}\|_{{B}}^2$ may be large. %
Instead, for $\mathcal{E}$ and $w_0$ defined as above, we propose the following alternative loss, again equal to \cref{eq:old_loss} up to an additive constant independent of the variable being optimised:
\begin{gather}\label{eq:new-loss}
  L'(w) = \frac{1}{2}\|\Phi w\|_{{B}}^2 + \frac{1}{2}\|w - w'_0\|_A^2 \spaced{with} w'_0 = w_0 + A^{-1}\Phi^T {B} \c{E} \\
  \text{where} \quad  \c{E} \sim\N(0, B^{-1}) \spaced{and}
    w_0 \sim \N(0, A^{-1}) \notag.
\end{gather}
The mini-batch gradients of $L'$ and $L$ are equal in expectation and both objective's optima is the same.
However, in $L'$, the randomness from the noise samples $\mathcal{E}$ and the prior sample $w_0$ both feature within the regularisation term---the gradient of which can be computed exactly---rather than in the data-dependent term.

\begin{derivation} \textbf{Equivalency of weight space sampling objectives} \\
Again, the losses $L$ and $L'$ are strictly convex, thus to confirm they have the same unique minimum, it suffices to consider the respective first order optimality conditions. We introduce $\zeta \in \R^d$ and $\zeta' \in \R^d$ such that $\partial_w L(\zeta) = 0$ and $\partial_w L'(\zeta') = 0$. We have,
\begin{equation}
  \partial_w L (\zeta) = \Phi^T{B} (\Phi \zeta - \mathcal{E}) + A(\zeta - w_0),
\end{equation}
and 
\begin{align}
  \partial_{w} L' (\zeta') &= \Phi^T{B}\Phi \zeta' + A(\zeta' - A^{-1}\Phi^T{B}\mathcal{E} - w_0) \\&= \Phi^T{B} (\Phi \zeta' - \mathcal{E}) + A(\zeta' - w_0)
\end{align}
Thus $\zeta = \zeta'$ almost surely. Moreover, $L'(w)=L(w)+C$ for all $w$, for $C$ a constant independent of $w$.
\end{derivation}

For completeness, we provide an alternative path to checking the validity of our sampling objectives. We study the distribution of their optima.
\begin{derivation} \textbf{Distribution of optima of weight-space sampling losses}\\
To determine the distribution of $\zeta = \argmin_{w \in \R^d} L(w)$, we note that it is a linear transformation of zero-mean Gaussian random variables, and thus itself a zero-mean Gaussian random variable. Rearranging the first order optimality condition, we find that
\begin{equation}
  \zeta = H^{-1}(\Phi^T{B}\mathcal{E} + A\theta^0).
\end{equation}
Thus 
\begin{align}
  \E[\zeta\zeta^T] &= H^{-1} \E[(\Phi^T{B}\mathcal{E} + A\theta^0)(\Phi^T{B}\mathcal{E} + A\theta^0)^T] H^{-1} \\
  &= H^{-1} \left(\Phi^T {B} \E[\mathcal{E}\mathcal{E}^T] {B} \Phi + A\E[\theta^0\theta^0] A + 2\Phi^T{B}\E[\mathcal{E}(\theta^0)^T]A\right)H^{-1} \\
  &= H^{-1} (\Phi^T {B} \Phi + A) H^{-1} = H^{-1} H H^{-1} = H^{-1},
\end{align}
and so $\zeta \sim \N(0, H^{-1})$.
\end{derivation}

\subsubsection{Analysis of minibatch gradient variance}

Consider the variance of the single-datapoint stochastic gradient estimators for both weight-space objectives' data dependent terms. At $\smash{z \in \R^{d}}$, for datapoint indices sampled as $j \sim \U(\{1, \dotsc, n\})$, these are
\begin{equation}
  \hat g = n\phi(x_j)^T (\phi(x_j) z - \eps_j) \spaced{and} \hat g' = n\phi(x_j)^T \phi(x_j) z
\end{equation}
for $L$ and $L'$, respectively. Direct calculation, shows that
\begin{equation}
  \textstyle{\frac{1}{n}\left[\var \hat g - \var \hat g'\right]
    = \var(\Phi^T {B} \mathcal{E}) - 2\cov(\Phi^T{B} \Phi z, \Phi^T {B} \mathcal{E})
    \eqqcolon \Delta.}
\end{equation}
Note that both $\var \hat g$ and $\var \hat g'$ are $d \times d$ matrices. We impose an order on these by considering their traces: we prefer the new gradient estimator $\hat g'$ if the sum of its per-dimension variances is lower than that of $\hat g$; that is if $\tr \Delta > 0$. We analyse two key settings:
\begin{itemize}
  \item At initialisation, taking $w = w_0$ (or any other initialisation independent of $\c{E}$),
        \begin{equation}
          \tr\Delta = \tr\{\Phi^T{B}\E[\mathcal{E} \mathcal{E}^T]{B}\Phi\} - \tr\{\Phi^T{B}\Phi \E[w_0  \mathcal{E}^T]{B} \Phi \} = \tr M  > 0.
        \end{equation}
        Recall that $M = \Phi^T B \Phi$. We used that $\E[\mathcal{E}\mathcal{E}^T] = {B}^{-1}$ and since $\mathcal{E}$ is zero mean and independent of $w_0$, we have $\E[w_0\mathcal{E}^T] = \E w_0 \E \mathcal{E}^T = 0$.
        Thus, the new objective $L'$ is always preferred at initialisation.
  \item At convergence, that is, at $\zeta = \argmin_{w\in \R^d} L(w)$, assuming a prior precision of the form $A=aI$, a more involved calculation, contained in Appendix C.3 of \cite{antoran2023samplingbased}, shows that $L'$ is preferred if
        \begin{equation}\label{eq:effective-dim-condition}
          2 a \gamma > \tr M,
        \end{equation}
        where $\gamma$ is the effective dimension \cref{eq:forms_effective_dim}.
        This is satisfied if the regulariser $a$ is large relative to the eigenvalues of $M$, (see Appendix C.4 of \cite{antoran2023samplingbased}), that is, when the effective dimension is low and the parameters are not strongly determined by the data relative to the prior. In practise, we find this to be the case for most heavily overparametrised models, like linearised neural networks, which are central to the following chapters of this thesis.
\end{itemize}
When $L'$ is preferred both at initialisation and at convergence, we expect it to have lower variance for most minibatches throughout training. Even if the proposed objective $L'$ is not preferred at convergence, it may still be preferred for most of the optimisation, before the noise is fit well enough.

\begin{remark} \textbf{Sticking the landing... or not.} \\ 
    The regular sample-then-optimise objective \cref{eq:old_loss} uses random noise as targets in its fit term. There may be an optima of this fit term where we can perfectly interpolate the noise targets. At this point, not only is the gradient of the fit term 0, but so is any minibatch estimate we construct. Of course, the regulariser prevents the optima of the full objective from matching the optima of the fit term.
    
    On the other hand, the fit term of our proposed objective \cref{eq:new-loss} has no targets (or 0 targets). Clearly, the regulariser prevents the optima of the proposed objective being the zero vector. Thus, at the optima, the data-fit term's gradient variance will not go to zero. 

    With this, we can build intuition for the result in \cref{eq:effective-dim-condition}. The weaker the regulariser, the closer the full objective optima is to the optima of the fit term, where the regular sample-then-optimise objective \cref{eq:old_loss} presents lower variance.
\end{remark}

\begin{figure}[h]
\includegraphics[width=0.9\textwidth]{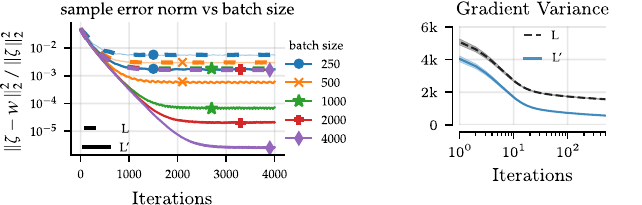}
\caption{Left: optimisation traces for the relative $L_2$ error in the weight-space posterior sample using  our proposed  sample-then-optimise  objective $L'$ \cref{eq:new-loss} and the existing one $L$ \cref{eq:old_loss}. The model is a linearised NN \cref{subsec:linearised_prior} and the task is MNIST. The plotted lines are averaged across 16 samples and 5 seeds.  The low variance objective allows a $\approx16\times$ reduction in batch size without reduction in weight-space posterior sample accuracy. Right: gradient variance throughout optimisation for a single-sample minibatch estimator ($r=1$) of the kernelised sampling objectives. We use an RBF kernel  on the \textsc{elevators} dataset ($n\approx16\text{k}$). Again $L'$ refers to the low variance estimator \eqref{eqn:samples-optim-reduced}. In both plots we run SGD with Nesterov momentum $\rho=0.9$ and geometric averaging. 
}
\label{fig:grad_var_comparison}
\end{figure}

\subsubsection{Demonstration of low-variance sampling objective}

\Cref{fig:grad_var_comparison} illustrates the benefits of both our weight space and kernelised low variance sampling objectives.  For the weight-space version, we use the Jacobian feature expansion corresponding to a LeNet style CNN with $d=29226$ weights. Its linearisation point is found by pre-training the model on MNIST.

\subsection{Stochastic Dual Descent}\label{subsec:SDD}

We now analyse the curvature of the quadratic objectives used for GP posterior sampling in \cref{subsec:SGD_RFF_estimator} and \cref{subsec:variance_reduction_SGD}. This leads us propose a better conditioned objective. In the context of this new objective, we question our previous choice of stochastic approximation. We compare mini-batching and random-feature approximations, and building upon the insights gained, propose a random-coordinate estimator with more desirable properties than either. 
The resulting algorithm: Stochastic dual descent (SDD) can be seen as an adaptation of the stochastic dual coordinate ascent algorithm of \cite{shalev2013stochastic} to the large-scale deep-learning-type gradient descent framework. We also incorporate insights on stochastic approximation from the theoretical work of \cite{dieuleveut2017harder} and \cite{varre2021last}.

Assuming an isotropic noise precision, $B=bI$, the kernelised posterior sampling objectives provided in \cref{subsec:SGD_RFF_estimator} and \cref{subsec:variance_reduction_SGD} are of the form
\begin{equation*}
L_p(\alpha) \coloneqq \frac{1}{2}\norm{z - K \alpha}^2 + \frac{b^{-1}}{2} \norm{\alpha}^2_K 
\end{equation*}
over $\alpha \in \R^n$ and for some choice of target vector $z \in \R^n$.
In the kernel literature \citep{smola1998learning,shalev2013stochastic}, the kernel ridge regression objective $L_p$ is known as the \emph{primal} objective, and thus the subscript $_p$. We adopt this naming in the context of this subsection\footnote{In the Bayesian linear model and Gaussian process literature, the weight space view is refereed as the \emph{primal form}, while the kernelised view is refereed to as the \emph{dual form}. In the kernel literature, the opposite is true; methods that deal with objects living in the RKHS are referred to as dual.}. 
The \emph{primal} gradient and Hessian are
\begin{equation}
    \label{eq:primal_gradient}
\partial_\alpha L_p(\alpha) = K (b^{-1} \alpha - z + K\alpha)
\spaced{and}
\partial_\alpha^2 L_p(\alpha) = K(K+b^{-1} I),
\end{equation}
respectively.
Recall that the speed at which our optimiser approaches $\alpha_\star = (K+b^{-1} I)^{-1}z \in \R^n$ is determined by the condition number of the Hessian: the larger the condition number, the slower the convergence speed.
The intuitive reason for this correspondence is that, to guarantee convergence, the step-size needs to scale inversely with the largest eigenvalue of the Hessian, while progress in the direction of an eigenvector underlying an eigenvalue is governed by the step-size multiplied with the corresponding eigenvalue. 
Letting $\lambda_i: \R^{n\times n} \to \R$ return the $i$th largest eigenvalue of a matrix, for the primal objective, the tight bounds on the relevant eigenvalues are
\begin{equation*}
0 \leq \lambda_n(K(K+b^{-1} I )) \leq \lambda_1(K(K+b^{-1} I)) \leq \kappa n(\kappa n+b^{-1}),
\end{equation*}
where $\kappa = \sup_{x\in \c{X}} k(x,x)$ is finite by assumption. These bounds only allow for a step-size $\beta$ on the order of $(\kappa n(\kappa n+ b^{-1}))^{-1}$ and, since they do not bound the minimum eigenvalue away from zero, we do not have a~priori guarantees for the performance of gradient descent.

\subsubsection{A dual objective}

Consider, instead, minimising the \emph{dual objective}
\begin{equation}\label{eq:sdd_objective}
    L_d(\alpha) = \frac12 \norm{\alpha}_{K+b^{-1} I}^2 - \alpha^T z.
\end{equation}
The dual $L_d$ has the same unique minimiser as $L_p$, namely $\alpha_\star$. We go on to show the duality of $(L_p, b^{-1} L_d)$; the factor of $b^{-1}$ is immaterial.

\begin{derivation} \textbf{Strong duality of SDD objective $L_d$ \cref{eq:sdd_objective}} \\
    We claim that
    \begin{equation*}
        \min_{\alpha \in \R^n} L_p(\alpha) = - b^{-1} \min_{\alpha \in \R^n} L_d(\alpha),
    \end{equation*}
    with $\alpha_\star$ minimising both $L_p$ and $L_d$.

    That $\alpha_\star$ minimises both $L_p$ and $L_d$ can be established from the first order optimality conditions. Now, for the duality, observe that we can write $\min_{\alpha \in \R^n} L_p(\alpha)$ equivalently as the constrained optimisation problem
    \begin{equation*}
        \min_{u \in \R^n} \min_{\alpha \in \R^n} \frac{1}{2} \|u\|^2 + \frac{b^{-1}}{2}\|\alpha\|^2_K \spaced{subject to} u=K\alpha-z\,.
    \end{equation*}
    Observe that this is quadratic in both $u$ and $\alpha$. Introducing Lagrange multipliers $\eta \in \R^n$, in the form $b^{-1} \eta$, where we recall that $b^{-1} > 0$, the solution of the above is equal to that of
    \begin{equation*}
         \min_{u \in \R^n} \min_{\alpha \in \R^n} \sup_{\eta \in \R^n} \frac{1}{2} \|u\|^2 + \frac{b^{-1}}{2}\|\alpha\|^2_K + b^{-1} \eta^T (z-K\alpha - u)\,.
    \end{equation*}
    This is a finite-dimensional quadratic problem, and thus we have strong duality \cite[see, e.g., Examples 5.2.4 in][]{Boyd2014convex}. We can therefore exchange the order of the minimum operators and the supremum, yielding the again equivalent problem
    \begin{equation*}
        \sup_{\eta \in \R^n} \left\{\min_{u \in \R^n} \frac{1}{2} \|u\|^2 - b^{-1} \eta^T  u \right\} + \left\{\min_{\alpha \in \R^n} \frac{b^{-1}}{2}\|\alpha\|^2_K - b^{-1}\eta^T  K\alpha\right\}\, + b^{-1} \eta^T  z.
    \end{equation*}
    Noting that the two inner minimisation problems are quadratic, we solve these analytically using the first order optimality conditions, that is $\alpha = \eta$ and $u = b^{-1}  \eta$, to obtain that the above is equivalent to
    \begin{equation*}
        \sup_{\eta \in \R^n} - b^{-1}\left(\frac{1}{2}\|\eta\|^2_{K+b^{-1} I} - \eta^T  z\right) = -b^{-1} \min_{\eta \in \R^n} L_d(\eta)\,.
    \end{equation*}
    The result follows by chaining the above equalities.
\end{derivation}

The \emph{dual gradient} and Hessian are given by
\begin{equation}
    \label{eq:dual_gradient}
    \partial_\alpha L_d(\alpha) = b^{-1}\alpha - z + K\alpha \spaced{and} \partial_\alpha^2L_d(\alpha) = K + b^{-1} I.
\end{equation}
Observe that when running gradient descent on the dual objective $L_d$, we can use a step-size of order $(\kappa n + b^{-1})^{-1}$. That is, $\kappa n$ higher than before.
Moreover, since the condition number of the dual satisfies $\mathrm{cond}(K+b^{-1} I) \leq1+ \kappa n b $, we have faster convergence, and can provide an a~priori bound on the number of iterations required for any fixed error level for any length $n$ sequence of observations.

\begin{remark} \textbf{The conditioning of weight-space sampling objectives} \\
The weight space sampling objectives \cref{eq:old_loss} and \cref{eq:new-loss} both present the Hessian $M + A = H$. Here, the Hessian of the data fit term appears only once, instead of twice (like in the Hessian of $L_p$ \cref{eq:primal_gradient}). For $A = aI$, $a \leq \lambda_d(H)$, and thus the conditioning number can be bounded from above. \corr{Thus, the improvements from the dual kernelised objective are already baked-into non-kernelised weight-space objectives.} This makes sense, since in \cref{subsec:duality} we saw that the weight space problem is also a dual to the kernelised regression problem (but a different dual than $L_d$).
\end{remark}

\begin{remark} \textbf{An RKHS view of the dual gradient} \\
In \cref{sec:pathwise_as_optim}, we derived the primal objective $L_p$ \cref{eq:primal_gradient} by applying the representer theorem to the regularised regression problem formulated in the RKHS
\begin{gather*}
    \frac{1}{2} \sum_{i=1}^n  (y_i - \innerprod{k(x_i, \cdot)}{f}^2  + \frac{1}{2}b^{-1}\|f\|_{\c{H}}^2.
\end{gather*}
We did this because we can not fit infinite dimensional objects into our computers. However, an alternative could have been to take gradients directly in the RKHS
\begin{gather*}
    f' =    K_{(\cdot)X} (Y - f(X))  + b^{-1} f,
\end{gather*}
apply them to get our updated function
\begin{gather*}
    f_{\text{new}} = f - \beta f' =  f - \beta ( K_{(\cdot)X} (Y - f(X)) +  b^{-1} f),
\end{gather*}
and then express this in terms of representer weights
\begin{gather*}
    f_{\text{new}} = f - \beta f' = K_{(\cdot)X} \left( \alpha - \beta  ( Y - f(X) +  b^{-1} \alpha) \right ),
\end{gather*}
and thus we have 
\begin{gather*}
    \alpha_{\text{new}} = \alpha - \beta  ( Y - f(X) +  b^{-1} \alpha) = \alpha - \beta \partial_{\alpha} L_d(\alpha).
\end{gather*}
In other words, we can derive the dual objective by performing gradient descent in the RKHS and projecting back onto the representer weights once the gradient update has been performed.
Thus, the dual objective is ``dual'' in the sense that it operates directly in the RKHS. 
\end{remark}

\begin{figure}[t]
    \centering
    \includegraphics[width=5.5in]{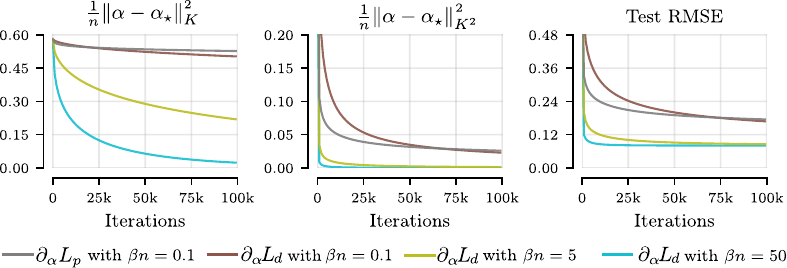}
    \caption{Comparison of full-batch primal and dual gradient descent on \textsc{pol} with varying step-sizes. Primal gradient descent becomes unstable and diverges for $\beta n$ greater than $0.1$. Dual gradient descent is stable with larger step-sizes, allowing for markedly faster convergence than the primal.
    For $\beta n =0.1$, the dual method makes more progress in the $K$-norm, whereas the primal in the $K^2$-norm.
    }
    \label{fig:toy-gradient-primal-vs-dual}
\end{figure}

\subsubsection{Demonstration: dual gradients}

To illustrate the discussion so far, we compare the progress of dual and the primal gradient descent when computing the GP posterior mean representer weights on the UCI \textsc{pol} regression task, with results shown in \cref{fig:toy-gradient-primal-vs-dual}. 
There, for the step-sizes we tried, gradient descent with the primal objective was only stable up to  $\beta n=0.1$, and diverged for larger step-sizes. In contrast, gradient descent with the dual objective is stable with a step-size as much as $500\times$ higher. It converges faster and to a better solution. We show this on three evaluation metrics: 1) distance to $\alpha^\star$ measured in $\|\cdot\|^2_K$, the $K$-norm (squared), 2) in $\|\cdot\|^2_{K^2}$, the $K^2$-norm (squared), and 3) test set root mean square error (RMSE). To understand the difference between the two norms, note that the $K$-norm error bounds the error of approximating $f_\star = K_{(\cdot) X} \alpha_\star$ with $f = K_{(\cdot) X} \alpha$ uniformly. Indeed, as shown below, we have the~bound
\begin{equation}
    \label{eq:pointwise-error}
    \| f -f_\star\|_\infty \leq \sqrt{\kappa} \| \alpha - \alpha_\star\|^2_K\,,
\end{equation}
where $\kappa = \sup_{x\in \c{X}} k(x,x)$. Uniform norm guarantees of this type are crucial for sequential decision making tasks, such as Bayesian optimisation, where test input locations may be arbitrary. The $K^2$-norm metric, on the other hand, reflects training error. Examining the gradients, it is immediate that the primal gradient optimises for the $K^2$-norm, while the dual for the $K$-norm. And indeed, we see in \cref{fig:toy-gradient-primal-vs-dual} that when both methods use $\beta n=0.1$, up to $70$k iterations, the dual method is better on the $K$-norm metric and the primal on $K^2$. Later, the dual gradient method performs better on all metrics. This, too, is to be expected, as the minimum eigenvalue of the Hessian of the dual loss is higher than that of the primal loss.

\begin{derivation} \textbf{Uniform bound on function approximation error} \\ 
We first introduce some machinery that allows us to formally reason about elements in a (potentially) infinite dimensional RKHS $\c{H}$.
For observations $X$, let $K_{X(\cdot)} \colon \c{H} \to \R^n$ be the linear operator mapping $h \mapsto f(X)$, where $f(X) = (f(x_1), \dotsc, f(x_n))$. We will write $K_{(\cdot)X}$ for the adjoint of $K_{X(\cdot)}$, and observe that $K$ is the matrix of the operator $K_{X(\cdot)}K_{(\cdot)X}$ with respect to the standard basis. 

With that, first, observe that,
\begin{align*}
\|f - f_\star\|_\infty 
&= \sup_{x \in \c{X}} |f(x) - f_\star(x)| \tag{defn. of sup norm}\\
&= \sup_{x \in \c{X}} |\langle k(x, \cdot), f - f_\star \rangle| \tag{reproducing property} \\
&\leq \sup_{x \in \c{X}} \|k(x, \cdot)\|_\c{H} \|f - f_\star\|_\c{H} \tag{CBS}\\
&\leq \sqrt{\kappa} \|f - f_\star\|_\c{H}\,, \tag{defn. of $\kappa$}
\end{align*}
Now, observe that $f = K_{(\cdot)X}\alpha$ and $f_\star = K_{(\cdot)X}\alpha_\star$, and so we have the equalities
\begin{align*}
    \|f - f_\star\|^2_\c{H}
    &= \langle K_{(\cdot)X}(\alpha - \alpha_\star), K_{(\cdot)X} (\alpha - \alpha_\star) \rangle \tag{defn. norm}\\
    &= \langle \alpha - \alpha_\star, K_{X(\cdot)}K_{(\cdot)X} (\alpha - \alpha_\star) \rangle \tag{defn. adjoint}\\
    &= \| \alpha - \alpha_\star\|^2_K\,. \tag{defn. $K$}
\end{align*} 
Combining the above two equalities yields the claim.
\end{derivation}

\subsubsection{Randomised Gradients: Random Features versus Random Coordinates}\label{subsec:stochastic_approximation}

We now study the construction of a stochastic objective to estimate the dual gradient \cref{eq:dual_gradient} in linear time. The minibatching plus random feature estimator presented in \cref{subsec:SGD_RFF_estimator} is not suitable for the dual objective because the kernel matrix does not appear in the regulariser of the dual objective. However, it does appear in the data fit term. Thus, we compare random feature and minibatch estimators.

We begin with random features. Recall that $K = E_{s \sim \Omega}  \Phi_s\Phi^T_s$ where $\Phi_s \in \R^{n \times d}$ is a $d$ dimensional random feature expansion of $X$.
It follows that
\begin{equation*}
    \widetilde \partial_\alpha L_d(\alpha) = b^{-1}\alpha - z + \Phi_s\Phi^T_s \alpha
\end{equation*}
gives an unbiased estimate of $\partial_\alpha L_d(\alpha)$.

An alternative is to use minibatching
\begin{equation} \label{eq:random_coordinate}
    \widehat \partial_\alpha L_d(\alpha) =  n e_i e_i^T \partial_\alpha L_d(\alpha) = n e_i (b^{-1} \alpha_i - z_i + [K]_i \alpha) 
\spaced{with}
i \sim \U{\{1, \dotsc, n\}},
\end{equation}
where $e_i$ are the elements of the canonical basis, i.e. $e_1 = [1, 0, 0, \dotsc]^T$.
Thanks to $\E[ n e_i e_i^T ] = I$, this is also an unbiased estimate of $\partial_\alpha L_d(\alpha)$.
Note that the cost of calculating either $\widetilde \partial_\alpha L_d(\alpha)$ or $\widehat \partial_\alpha L_d(\alpha)$ is linear in $n$, achieving our goal of reduced computation time. 
Also, note that while $\widetilde \partial_\alpha L_d(\alpha)$, which we call the random feature estimate, is generally a dense vector, $\widehat \partial_\alpha L_d(\alpha)$,  is sparse. Since all but one coordinates of $\widehat \partial_\alpha L_d(\alpha)$ are zero, we refer to this as the \emph{random coordinate estimate}\footnote{In practise, we implement this estimator by sampling multiple coordinates at each step, not just one.}.

The nature of the noise introduced by these estimators, and thus their qualities, are quite different. In particular, one can show that
\begin{align*}
\norm{ \widehat \partial_\alpha L_d(\alpha) - \partial_\alpha L_d(\alpha) } \le \norm{ (n e_i e_i^T -I) (K+b^{-1} I) } \norm{\alpha - \alpha_\star}\,.
\end{align*}
As such, the noise introduced by $\widehat \partial_\alpha L_d(\alpha)$ is proportional to the distance between the current iterate $\alpha$, and the optima $\alpha^\star$. The noise goes to 0 when the optima is reached. This estimator does stick the landing!
For $\widetilde \partial_\alpha L_d(\alpha)$, letting $\widetilde K = \Phi_s\Phi^T_s$, we have
\begin{align*}
\norm{ \widetilde \partial_\alpha L_d(\alpha) - \partial_\alpha L_d(\alpha) } = \norm{(\widetilde K-K) \alpha}\,.
\end{align*}
As such, the error in $\widetilde \partial_\alpha L_d(\alpha)$ is \emph{not reduced} as $\alpha$ approaches $\alpha_\star$.
In the optimisation literature, $\widetilde \partial_\alpha $ would be classed as an \emph{additive noise} gradient oracle, whereas $\widehat \partial_\alpha$ as a \emph{multiplicative noise} oracle \citep{dieuleveut2017harder}. 
Intuitively, multiplicative noise oracles automatically reduce the amount of noise injected as the iterates get closer to their target. While harder to analyse, multiplicative noise oracles often yield better performance: see, for example, \cite{varre2021last}.

\begin{remark} \textbf{Incompatibility of multiplicative noise with the weight-space problem} \\
Unfortunately, the random coordinate estimator \cref{eq:random_coordinate} is not applicable to the weight space sample-then-optimise objective \cref{eq:new-loss}. The multiplicative noise estimator relies on the data-fit and regulariser terms being minibatched jointly. This is possible in the kernelised setting, since they are both $n$ dimensional. However, in the weight-space setting, the data-fit objective is composed of $n$ terms, while the regulariser is composed of $d$ terms, one per model parameter. One could subsample the parameters of the linear model in the data-fit term, but this would require us having to compute the full dataset's feature expansion at each step. This would be computationally intractable for the Jacobian basis functions we deal with in the later chapters of this thesis.
\corr{Thus, for the weight-space formulation, we must fall-back on additive-noise and the reduced variance estimator corresponding to the objective in \cref{eq:new-loss} is the best we can do.}
\end{remark}

\begin{remark} \textbf{Can we apply the variance reduction strategy of \cref{subsec:variance_reduction_SGD} to the random coordinate estimator of the dual objective?} \\ 
The variance reduction strategy presented earlier in this chapter amounts to moving the random noise in the sample-then-optimise targets from the data fit term to the regulariser. Here, we do this for the dual gradient \cref{eq:dual_gradient}. Letting the noisy targets be $z = f(X) + \c{E}$, we move $\c{E}$ to the regulariser while inverting its covariance by premultiplying by the noise precision $b$
\begin{gather*}
    \partial_\alpha L_d = (K\alpha - f(X) - \c{E}) + b^{-1} \c{E} = \underbrace{K\alpha - f(X)}_{\textrm{new fit gradient}} - \underbrace{b^{-1} (\alpha - b \c{E})}_{\textrm{new reg. gradient}}.
\end{gather*}
Now, by inspection, it is clear that the random coordinate estimator, which subsamples the entries of the fit term and regulariser jointly, produces the same result when applied to both of the forms of the dual gradient written above. It is clear that we have nothing to gain by applying the variance reduction strategy.
\end{remark}

\begin{figure}[t]
    \begin{algorithm2e}[H]\LinesNotNumbered
    	\KwData{Kernel matrix $K$ with rows $K_1,\dots,K_n\in \R^n$, targets $z \in \R^n$,\\ 
            likelihood precision $b > 0$, number of steps $T \in \N^+$, 
            batch size $r \in \{1, \dotsc, n\}$, \\
            step size $\beta > 0$, 
            momentum parameter $\rho \in [0,1)$, 
            averaging parameter $\chi \in (0,1]$ }
    Set $v_0 = 0$; $\alpha_0 = 0$; $\overline \alpha_0 = 0$ \tcp*{all in $\R^n$}
    	\While{$t \in \{1, \dotsc, T \}$}{
    	 Sample $\mathcal{I}_t = (i^t_1, \dotsc, i^t_r) \sim \U{\{1,\dotsc, n\}}$ independently \tcp*{rand. coord.}
        $g_t = \frac{n}{r} \sum_{i \in \mathcal{I}_t} (\smash{( K_i + b^{-1} e_i)^T (\alpha_{t-1} + \rho v_{t-1})} - b_i)e_i$ \tcp*{gradient estimate}
        $v_t = \rho v_{t-1} - \beta g_t$ \tcp*{velocity update}
        $\alpha_t = \alpha_{t-1} + v_t$ \tcp*{parameter update}
        $\overline\alpha_{t} = \chi\alpha_t + (1-\chi)\overline\alpha_{t-1}$ \tcp*{geometric averaging}
    	}
    \KwResult{$\overline\alpha_T$ }
    \caption{\emph{Stochastic dual descent} for approximating $\alpha^\star = (K+ b^{-1} I)^{-1}z$}
    \label{alg:compute-mean}
    \end{algorithm2e}
   
\end{figure}

We term the combination of the dual gradient \cref{eq:dual_gradient} with random coordinate estimation \cref{eq:random_coordinate} as \emph{Stochastic Dual Descent} (SDD). The corresponding algorithm is provided in \cref{alg:compute-mean}. We discuss optimisation strategies in \cref{subsec:the_right_optimiser}. We henceforth distinguish this algorithm from the one that uses the primal loss \cref{eqn:samples-optim} and minibatching of only the fit term, as opposed to random coordinate estimation, by referring to the latter as SGD.

\begin{figure}[t]
    \centering
    \includegraphics[width=5.5in]{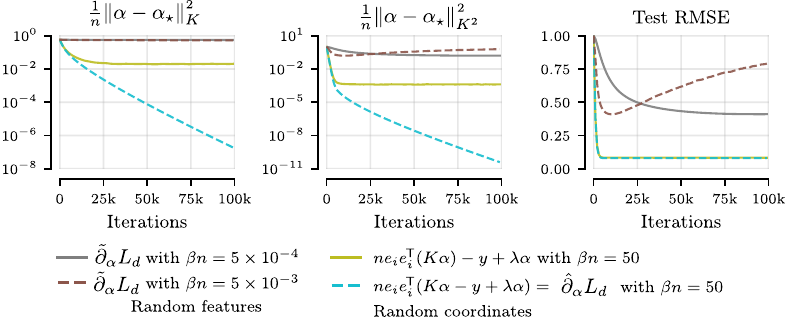}
    \caption{A comparison of dual (stochastic) gradient descent on the \textsc{pol} data set with either random Fourier features or random coordinates, using batch size $r=512$, momentum $\rho=0.9$ and averaging parameter $\chi=0.001$ (see \cref{subsec:the_right_optimiser} for explanation of latter two). Random features converge with $\beta n=5\times 10^{-4}$ but perform poorly, and diverge with a higher step-size. Random coordinates are stable with $\beta n = 50$ and show much stronger performance on all metrics. We include a version of random coordinates where only the $K\alpha$ term is subsampled: this breaks the multiplicative noise property, and results in an estimate which is worse on both the $K$-norm and the $K^2$-norm metric.
}
    \label{fig:toy-features-vs-coordinates}
\end{figure}

\subsubsection{Demonstration: additive vs multiplicative noise}

In \cref{fig:toy-features-vs-coordinates}, we compare variants of stochastic dual descent with either random (Fourier) features or random coordinates.
We see that random features, which produce high-variance additive noise, can only be used with very small step-sizes and have poor asymptotic performance. 
We test two versions of random coordinates: $\widehat \partial_\alpha L_d(\alpha)$, where, as presented, we subsample the whole gradient, and an alternative, $n e_i e_i^T(K\alpha) - y - b^{-1}\alpha$, where only the $K\alpha$ term is subsampled. While both are stable with much higher step-sizes than random features, the latter has worse asymptotic performance. This is a kind of \emph{Rao-Blackwellisation trap:} introducing the known value of $-y+b^{-1}\alpha$ in place of its estimate $ne_ie_i^T (-y + b^{-1}\alpha)$ destroys the multiplicative property of the noise, making things worse, not better.

\begin{figure}[t]
    \centering
    \includegraphics[width=5.5in]{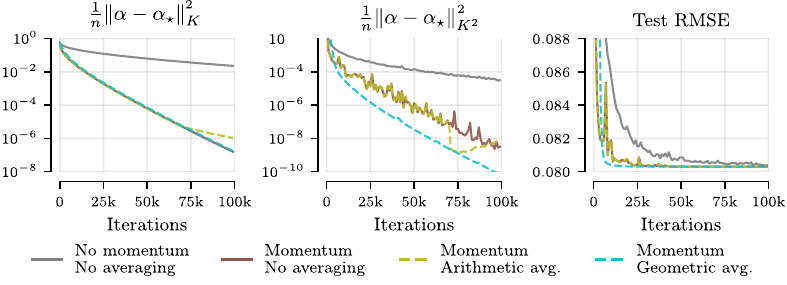}
    \caption{Comparison of optimisation strategies for random coordinate estimator of the dual objective on the \textsc{pol} data set, using momentum $\rho = 0.9$, averaging parameter $\chi = 0.001$, batch size $r=128$, and step-size $\beta n = 50$. Nesterov's momentum significantly improves convergence speed across all metrics. The dashed olive line, marked \emph{arithmetic averaging}, shows the regular iterate up until 70k steps, at which point averaging commences and the averaged iterate is shown. Arithmetic iterate averaging slows down convergence in $K$-norm once enabled. Geometric iterate averaging, on the other hand, outperforms arithmetic averaging and unaveraged iterates throughout optimisation.}\label{fig:toy-iterate-averaging}
\end{figure}

\subsection{Getting the optimiser \emph{right}}\label{subsec:the_right_optimiser}

Momentum, or acceleration, is a range of modifications to the usual gradient descent updates that aim to improve the rate of convergence, in particular with respect to its dependence on the curvature of the optimisation problem \citep{polyak1964some}.  
We use Nesterov's momentum \citep{nesterov1983method}, as adapted for deep learning by \cite{sutskever2013importance}, because it is readily available in standard deep learning libraries.
\Cref{alg:compute-mean} gives the precise updates. We use a momentum of $\rho=0.9$ throughout.
Comparing the plots in \Cref{fig:toy-iterate-averaging}, we see that momentum is vital on this problem, independently of iterate averaging.

When gradient updates are stochastic with additive noise and step-size is constant, the resulting iterates will bounce around the optimum rather than converging. 
In this setting, one can recover a convergent algorithm by arithmetically averaging the tail iterates, a procedure called \emph{Polyak--Ruppert averaging} \citep{polyak1990new,ruppert1988efficient,polyak1992acceleration}. 
While Polyak--Ruppert averaging is necessary with constant step-size and additive noise, it is not under multiplicative noise \citep{varre2021last}, and indeed can slow convergence. We recommend using \emph{geometric averaging} instead, where we let $\overline\alpha_0 = \alpha_0$ and, at each step, compute
\begin{equation*}
    \overline\alpha_t = \chi\alpha_t + (1-\chi) \overline\alpha_{t-1} \spaced{for an averaging parameter} \chi \in (0,1],
\end{equation*}
and return $\overline\alpha_T$. Geometric averaging is an anytime approach. It does not rely on fixed averaging-window size, and thus can be used in combination with early stopping, and the value of $\chi$ can be tuned adaptively. Here and throughout, we set $\chi = 100/T$, for $T$ the total number of steps we perform.
\Cref{fig:toy-iterate-averaging} shows that geometric averaging outperforms both arithmetic averaging, and simply returning the last iterate $\alpha_T$ without any averaging.

 \begin{figure}[t]
    \centering
    \includegraphics{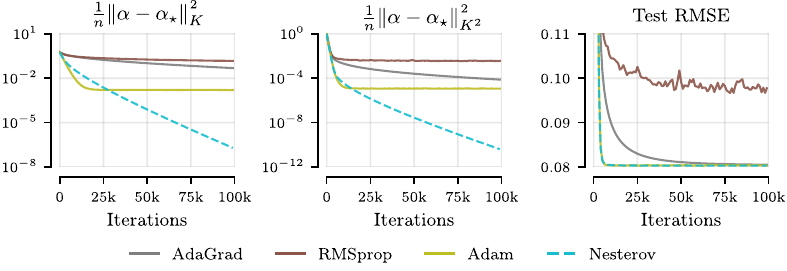}
    \caption{Comparison of stochastic dual descent on \textsc{pol} with batch size $r = 512$ and averaging parameter $\chi = 0.001$, using different optimisers. While Adam and Nesterov perform similarly on Test RMSE, the latter has much closer convergence in both $K$-norm and $K^2$-norm}
    \label{fig:opt_comparison}
\end{figure}

\subsubsection{Demonstration: Comparing Polyak momentum and geometric averaging with popular optimisers}

In \cref{fig:opt_comparison}, we report the performance of different optimisers on the dual problem. While algorithms such as AdaGrad, RMSprop, and Adam are designed to tackle problems with a non-constant curvature, the problem of sampling from a GP posterior is quadratic. Here, Nesterov-type momentum is theoretically rate-optimal. For all optimisers, we tune the step-size in a range of $[0.01, 100]$ and report the best performance; Adam with $0.05$, AdaGrad with $10$, RMSProp with $0.05$, and Nesterov's momentum with $50$.  As predicted by the theory, Nesterov does best.

\section{SGD for inference with inducing points}\label{sec:inducing_SGD}

So far, our sampling objectives have presented linear cost in the dataset size.
In the large-scale setting, algorithms with costs independent of the dataset size are often preferable.
For GPs, this can be achieved through \emph{inducing point posteriors} \citep{titsias09,Hensman2013big}, reviewed in \cref{sec:inducing_point_VI}, to which we now extend SGD sampling.

Let $Z = (z_1, z_2, \dotsc, z_m) \in X^m$ be a set of $m \in \N$ inducing points.

By inspecting \cref{eq:titsias_predictive_mean_fn}, we see that the optimal inducing point mean $\mu_{f|Y}^{(Z)}$ can be written
\begin{align}
  \label{eqn:inducing-optim}
  \mu_{f|Y}^{(Z)}(\cdot)  = {K}_{(\cdot)Z}\alpha_\star = \sum_{j=1}^m \alpha_{*\, i} k(z_j,\cdot) \quad
  \alpha_\star = \argmin_{\alpha\in\R^m} \sum_{i=1}^n [B]_{ii} (y_i - {K}_{x_i Z}\alpha)^2 + \norm{\alpha}_{{K}_{ZZ}}^2,
\end{align}
and we can parameterise the uncertainty reduction term in the same way but with representer weights given by
\begin{gather}
  \label{eqn:inducing-samples-ht-optim}
  \argmin_{\alpha\in\R^m} \sum_{i=1}^n [B]_{ii} (f(x_i) + \eps_i- {K}_{x_i Z}\alpha)^2 + \norm{\alpha}_{{K}_{ZZ}}^2\\
\text{with} \quad f^{(Z)}(x_i) \sim\N(0, {K}_{x_i Z}{K}_{ZZ}^{-1}{K}_{Z x_i }) \spaced{and} \c{E}\sim\N(0, B^{-1}).\notag
\end{gather}

\begin{derivation}
     \textbf{Derivation of inducing point sampling objectives \cref{eqn:inducing-optim} and \cref{eqn:inducing-samples-ht-optim}.}\\

Both expressions are derived in the same way, with only the targets we regress against changing between the objective for the variational posterior mean and samples. We part from 
 the pathwise form of the 0-mean Kullback--Leibler-optimal inducing point GP
\begin{align*}
    (f^{(Z)}| Y)(\cdot) &=\\
    &= f(\cdot) + u^{(Z)}_{f|Y}(\cdot) - {K}_{(\cdot)Z} {K}_{ZZ}^{-1} {K}_{ZX } ({K}_{X Z} {K}_{ZZ}^{-1} {K}_{ZX } + B^{-1})^{-1}(f^{(Z)}(X ) + \c{E})
    \\
    &\quad \c{E}\sim\N(0,B^{-1})
    \qquad
    f\sim\GP(0, k)
    \qquad
    f^{(Z)}(\cdot) = {K}_{(\cdot)Z}{K}_{ZZ}^{-1}f(Z).
\end{align*}
and apply the Woodbury identity to obtain
  \begin{align}
     & {K}_{(\cdot)Z} {K}_{ZZ}^{-1} {K}_{ZX } ({K}_{X Z} {K}_{ZZ}^{-1} {K}_{ZX } + B^{-1})^{-1}(f^{(Z)}(X ) + \eps)
    \\
     & \quad={K}_{(\cdot)Z} ({K}_{X Z} B {K}_{X Z} + {K}_{ZZ}  )^{-1} {K}_{ZX } B(f^{(Z)}(X ) + \eps)
    \\
     & \quad= {K}_{(\cdot)Z} \alpha_\star
    .
  \end{align}
  Now, we recognize $({K}_{X Z} B {K}_{X Z} + {K}_{ZZ}  )^{-1} {K}_{ZX } B(f^{(Z)}(X ) + \eps) = \alpha_\star$ as the expression for the
optimiser of a ridge-regularised linear regression problem—see \cref{eq:linear_regression_loss}—with parameters $\alpha$, features $K_{XZ}$ , Gaussian noise of covariance $B^{-1}$, and regulariser curvature $K_{ZZ}$.
The targets are given by the random variable $(f^{(Z)}(X ) + \c{E})$.
\end{derivation}

\corr{Exact implementation of \cref{eqn:inducing-samples-ht-optim} is precluded by the need to draw prior samples from a Gaussian with covariance ${K}_{X Z}{K}_{ZZ}^{-1}{K}_{ZX }$. This would require inverting ${K}_{ZZ}$, which presents cubic cost in $m$ and may be poorly conditioned.} 
However, we identify this matrix as a Nyström \corr{(i.e. low rank for $m < n$)} approximation to $K$.
\corr{Thus, we can approximate \cref{eqn:inducing-samples-ht-optim} by replacing $f^{(Z)}\sim\GP(0, {K}_{(\cdot) , Z}{K}_{Z, Z}^{-1}{K}_{Z, (\cdot)})$ with $f\sim\GP(0, k(\cdot, \cdot))$, which can be, in turn, accurately approximated with random features \cref{eqn:rff_approx-prior}.
} The error in approximating $f^{(Z)}$ with $f$ is small when the number of inducing points $m$ is large and the inducing points are close enough to the data. That is, whenever the inducing point GP is a good approximation to the posterior GP.

\begin{figure}[t]
\includegraphics{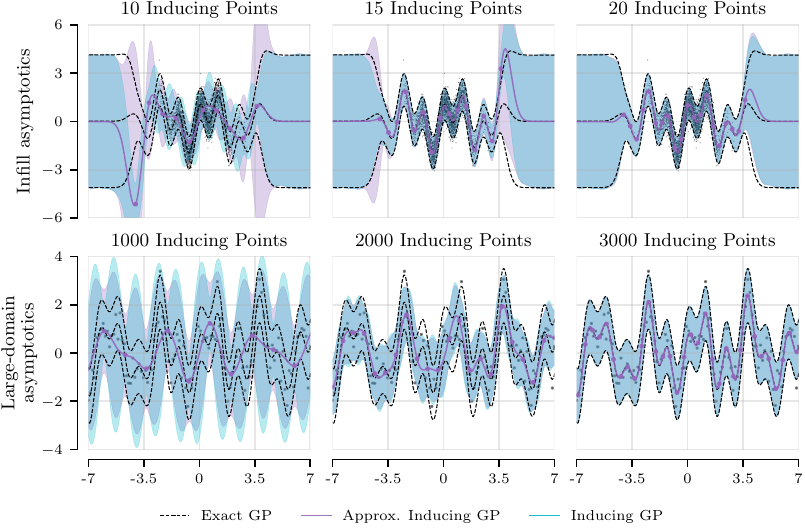}
\caption{Comparison of exact and approximate inducing point posteriors for a GP with squared exponential kernel and $10\text{k}$ data points generated using the true regression function $\sin(2x) + \cos(5x)$ under two different data-generation schemes: \emph{infill asymptotics}, which considers $x_i \sim\N(0,1)$, and \emph{large-domain asymptotics}, which considers $x_i$ on an evenly spaced grid with fixed spacing. We see that the approximation needed to apply inducing points is only inaccurate in situations where the inducing point posterior itself has significant error, which generally manifests itself as error bars that are larger than those of the exact posterior.}
\label{fig:inducing_point_approx}
\end{figure}

\begin{remark}\textbf{on the Error in the Nyström Approximation ${K}_{X , Z}{K}_{Z, Z}^{-1}{K}_{Z, X } \approx {K}$} \\
  \Cref{fig:inducing_point_approx} compares the KL-optimal inducing point posterior GP with that obtained when taking the prior function samples which we fit with the representer weighed evaluation functionals to be $f(X)$ with $f \sim \GP(0, k)$ instead of
  $f^{(Z)}(X ) = {K}_{X Z}{K}_{ZZ}^{-1}f(Z)$. This amounts to approximating the Nyström-type matrix ${K}_{X , Z}{K}_{Z, Z}^{-1}{K}_{Z, X }$ with its exact counterpart ${K}_{X , X }$.
  Both of these matrices become very similar if there is an inducing point placed sufficiently close to every data point.
  In practice, this tends to occur when an inducing point is placed within roughly a half-length-scale of every observation.
  This is effectively what is needed for inducing point methods to provide a good approximation of the exact GP.
  This is reflected in \Cref{fig:inducing_point_approx}, where we see that our approximate inducing point posterior differs from the exact inducing point posterior only in situations where the latter fails to be a good approximation to the exact GP in the first place.
  This manifests as the approximate method providing larger error bars.
  When the number of inducing points increases, both methods become indistinguishable from each other and the exact GP.
  Fortunately, the linear cost of SGD in the number of inducing points allows us to use a very large number of these in practice.
\end{remark}

We now turn to stochastic estimation of the inducing point sampling objectives \cref{eqn:inducing-optim} and \cref{eqn:inducing-samples-ht-optim}. Sadly, none of the tricks developed in this chapter are applicable.
Since the curvature of the data-fit term $K_{ZX} B K_{XZ}$ differs from that of the regulariser $K_{zz}$, we can not apply stochastic dual descent. Additionally, the data fit term is a sum of $n$ terms, while the regulariser is a sum of $m$ terms; we can not apply the random coordinate estimator either.
Finally, the low-variance estimator of \cref{subsec:variance_reduction_SGD}, would require sampling the noise in the regularisation term from $\N(0, K_{zz}^{-1} K_{ZX} B K_{XZ}K_{zz}^{-1})$, which is also intractable for large numbers of inducing points. 
With this, we apply the simple minibatching plus random feature stochastic estimator given in \cref{eqn:stochastic_mean-objective} to the inducing point sampling objective.

The inducing point objectives differ from those presented in previous sections in that there are $\c{O}(m)$ and not $\c{O}(n)$ learnable parameters, and we may choose the value of $m$ and locations $Z$ freely.
The cost of inducing point representer weight updates is thus $\c{O}(s m)$, where $s$ is the number of samples.

\subsubsection{Demonstration: inducing point SGD}
\begin{figure}[t]
\includegraphics[width=1\textwidth]{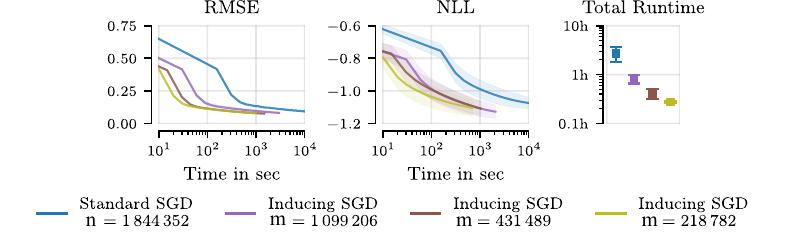}
\caption{Test RMSE and negative log-likelihood (NLL) obtained by SGD and its inducing point variants, for decreasing numbers of inducing points, given in the rightmost plot, as a function of time on an A100 GPU, on the \textsc{houseelectric} dataset ($n\approx2$M). }
\label{fig:grad_var_inducing_sgd}
\end{figure}

We demonstrate the inducing point variant of our method, on \textsc{houseelectric}, our largest dataset ($n{=}2$M).
We select varying numbers of inducing points from the pool of train points. In particular, we use a $K$-nearest-neighbour algorithm to find and eliminate the points nearest to other points (in terms of euclidean distance).
\Cref{fig:grad_var_inducing_sgd} shows the time required for $100\text{k}$ SGD steps scales roughly linearly with inducing points. It takes 68m for full SGD and 50m, 25m, and 17m for $m{=}1099\text{m}, \,728\text{k}$, and $ 218\text{k}$, respectively.
Performance in terms of RMSE and NLL degrades less than $10\%$ even when using~$218\text{k}$~points.

\begin{table}[ht]
\caption{\ Time to convergence (on an A100 GPU) and predictive performance for all approximate inference methods under consideration in this chapter, including inducing point SGD, on the \textsc{houseelectric} dataset. Experimental details are provided below, in \cref{sec:SGDGP_experiments}.}
\small
\setlength{\tabcolsep}{3pt}
\begin{tabular}{c c c c c c c}
\toprule
Model & \multicolumn{3}{c}{Inducing SGD} & Standard SGD & CG & SVGP \\
$m$ & 218\,782 & 431\,489 & 1\,099\,206 & 1\,844\,352 & 1\,844\,352 & 1\,024 \\
\midrule
RMSE & \textbf{0.08\,$\pm$\,0.00} & \textbf{0.08\,$\pm$\,0.00} & \textbf{0.08\,$\pm$\,0.01} & 0.09\,$\pm$\,0.00 & 0.87\,$\pm$\,0.14 & 0.10\,$\pm$\,0.02 \\
Hours & 0.28\,$\pm$\,0.01 & 0.41\,$\pm$\,0.09 & 0.83\,$\pm$\,0.17 & 2.69\,$\pm$\,0.91 & 2.62\,$\pm$\,0.01 & \textbf{0.04\,$\pm$\,0.00} \\
NLL & \textbf{-1.10\,$\pm$\,0.05} & \textbf{-1.11\,$\pm$\,0.04} & \textbf{-1.13\,$\pm$\,0.04} & \textbf{-1.09\,$\pm$\,0.04} & 2.07\,$\pm$\,0.58 & -0.94\,$\pm$\,0.13 \\
\bottomrule \\
\end{tabular}
\label{tab:inducing_regression}
\end{table}

\Cref{tab:inducing_regression} provides quantitative results for inducing point SGD on the \textsc{houseelectric} dataset.
SGD's time to convergence is shown to scale roughly linearly in the number of inducing points.
However, for this dataset, keeping only $10\%$ of observations as inducing points and thus obtaining $10\times$ faster convergence leaves performance unaffected.
This suggests the dataset can be summarised well by a small number of points.
Indeed, SVGP obtains almost as strong performance as SGD in terms of RMSE with only $1024$ inducing points.
SVGP's NLL is weaker however, which is consistent with known issues of uncertainty overestimation when using a too small amount of inducing points.
On the other hand, the large and potentially redundant nature of this dataset makes the corresponding optimisation problem ill-conditioned, hurting CG's performance.

\section{Analysing the implicit bias of stochastic gradient descent}\label{sec:implicit_bias}

We have detailed an SGD-based scheme for obtaining approximate samples from a posterior Gaussian process.
Despite SGD's significantly lower cost per-iteration than CG, its convergence to the true optima, shown in \Cref{fig:exact_vs_low_noise}, is much slower in both Euclidean representer weight space, and the reproducing kernel Hilbert space (RKHS) induced by the kernel.
Despite this, the predictions obtained by SGD are very close to those of the exact GP, and effectively achieve the same test RMSE.
Moreover, \Cref{fig:error-and-eigenfunctions} shows the SGD posterior on a 1D toy task exhibits error bars of the correct width close to the data, and which revert smoothly to the prior far away from the data.
Empirically, differences between the SGD and exact posteriors concentrate at the borders of data-dense regions.

\begin{figure}[t]
\includegraphics{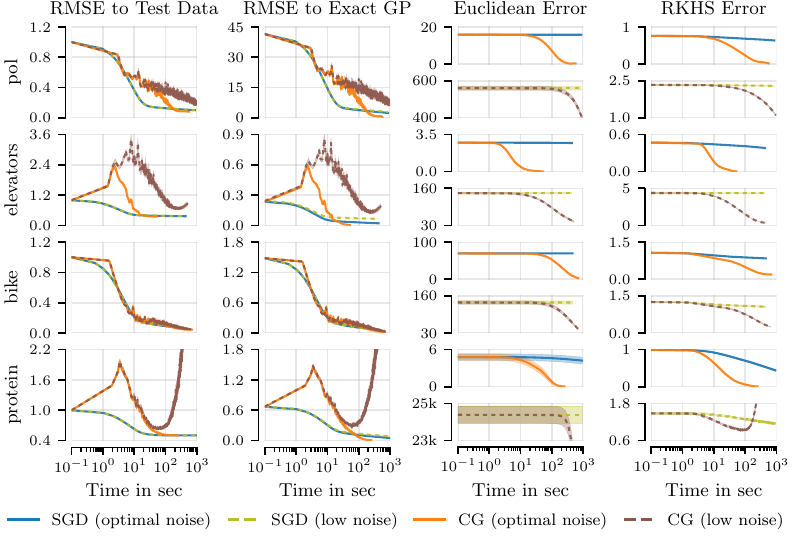}
\caption{
Convergence of the GP posterior mean with SGD and CG as a function of time (on an A100 GPU) on the 
\textsc{pol} ($N \approx 15\text{k}$),
\textsc{elevators} ($N \approx 16\text{k}$),
\textsc{bike} ($N \approx 17\text{k}$) and
\textsc{protein} ($N \approx 46\text{k}$) datasets, while setting the noise scale to (i) maximise exact GP marginal likelihood and (ii) to $10^{-3}$, labelled \emph{low noise}. We plot, in left-to-right order, test RMSE, RMSE to the exact GP mean at the test inputs, which is related to the $K^2$ norm $\|\alpha - \alpha_\star\|_{K^2}$, representer weight euclidean error $\norm{\alpha - \alpha^{*}}$, and RKHS error $\norm[0]{\mu_{f|Y} - \mu_{\text{SGD}}}_{\c{H}} = \|\alpha - \alpha_\star\|_{K}$, i.e. $K$ norm. In the latter two plots, the low-noise setting is shown on the bottom.
}
\label{fig:exact_vs_low_noise}
\end{figure}

We now argue the behavior seen in \Cref{fig:error-and-eigenfunctions} is a general feature of SGD: one can expect it to obtain good performance even in situations where it does not converge to the exact solution.
Consider posterior function samples in pathwise form, namely $(f| Y)(\cdot) = f(\cdot) + {K}_{(\cdot)X }\alpha$, where $f\sim\GP(0, k)$ is a prior function sample and $\alpha$ are the learnable representer weights.
We characterise the behavior of SGD-computed approximate posteriors by splitting the input space $\c{X}$ into 3 regions, which we call the \emph{far-away}, \emph{interpolation}, and \emph{extrapolation} regions.
This is done as follows.

\paragraph{\textnormal{\emph{(I) The Far-away Region.}}}
This corresponds to points sufficiently distant from the observed data.
Here, for kernels that decay over space, the evaluation functionals $k(x_i, \cdot)$ go to zero.
Thus, both the true posterior and any approximations formulated pathwise revert to the prior. More precisely, let $X = \R^d$, let $k$ satisfy $\,\lim_{c \to\infty} k(x', c\cdot x) = 0$ for all $x'$ and $x$ in $\c{X}$, and let $(f| Y)(\cdot)$ be given by $(f| Y)(\cdot) = f(\cdot) + {K}_{(\cdot)X }\alpha$, with $\alpha \in \R^n$. Then, for any fixed $\alpha$, any choice of $x \in \c{X}$, it follows immediately that $ \lim_{c\to\infty} (f| Y)(c\cdot x) = f(c\cdot x)$. Therefore, SGD cannot incur error in regions which are sufficiently far away from the data.
This effect is depicted in \Cref{fig:error-and-eigenfunctions}.

\begin{figure}[t]
\includegraphics{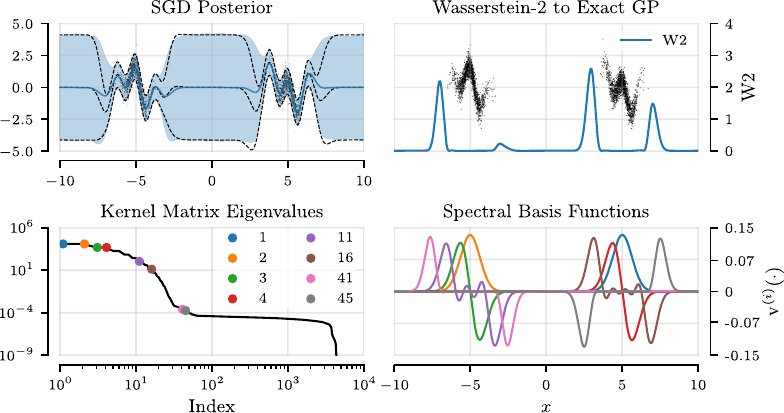}
\caption{SGD error and spectral basis functions. Top-left: SGD (blue) and exact GP (black, dashed) fit to a $n{=}10\text{k}$, toy regression dataset. Top-right: 2-Wasserstein distance (W2) between both processes' marginals. The W2 values are low near the data (interpolation region) and far away from the training data. The error concentrates at the edges of the data (extrapolation region). Bottom: The low-index spectral basis functions lie on the interpolation region, where the W2 error is low, while functions of index $10$ and larger lie on the extrapolation region where the error is large.}
\label{fig:error-and-eigenfunctions}
\end{figure}

\paragraph{\textnormal{\emph{(II) The Interpolation Region.}}}
This includes points close to the training data.
We characterise this region through subspaces of the RKHS, where we show SGD incurs small error.

Let $K = {V}\Lambda{V}^T$ be the eigendecomposition of the kernel matrix.
We index the eigenvalues $\Lambda = \text{diag}(\lambda_1,\dotsc, \lambda_n)$ in descending order.
Define the \emph{spectral basis functions} as eigenvector-weighed linear combinations of evaluation functionals
\begin{gather}
  v^{(i)}(\cdot) = \sum_{j=1}^n \frac{[V]_{ji}}{\sqrt{\lambda_i}} k(x_j, \cdot)
  .
\end{gather}
These functions are orthonormal with respect to the RKHS inner product.
To characterise them further, consider the following characterisation of eigenvalues and eigenvectors in the RKHS $\c{H}$ 
\begin{gather}
  v^{(i)}(\cdot) = \argmax_{v \in \c{H}} \cbr{\sum_{i=1}^n v(x_i)^2 : \norm{v}_{\c{H}} = 1, \innerprod[0]{v}{v^{(j)}} = 0, \forall j < i}.
\end{gather}
This tells us that the top spectral basis function, $v^{(1)}(\cdot)$, is a function of fixed RKHS norm---that is, of fixed degree of smoothness, as defined by the kernel $k$---which takes maximal values at the observations $x_1,..,x_n$.
Thus, $v^{(1)}$ will be large near clusters of observations.
The same will be true for the subsequent spectral basis functions, which also take maximal values at the observations, but are constrained to be RKHS-orthogonal to previous spectral basis functions.
\Cref{fig:error-and-eigenfunctions} confirms that the top spectral basis functions are indeed centred on the observed data.

Empirically, SGD matches the true posterior in the region of the top spectral basis functions, i.e. in the data dense regions.
We now formalise this observation by showing that SGD converges quickly in the directions spanned by spectral basis functions with large eigenvalues. For this, we consider the primal objective \cref{eqn:mean-optim} with minibatching for the data fit but no random feature estimation of the regulariser; this provides us with a sub-Gaussian additive noise estimator of the gradient. To simplify the analysis, we assume the use of arithmetic iterate averaging, as opposed to geometric averaging, and no momentum.
Let $\proj_{v^{(i)}}(\cdot)$ be the orthogonal projection onto the subspace spanned by $v^{(i)}$.

\begin{restatable}{proposition}{PropSGD}
  Let $\delta>0$.
  Let $B^{-1} = b^{-1}{I}$ for $b^{-1} > 0$.
  Let $\mu_{\text{SGD}}$ be the predictive mean function obtained by arithmetically-averaged SGD after $t$ steps, starting from an initial set of representer weights equal to zero, and using a sufficiently small learning rate of $0 < \beta <\frac{b^{-1}}{\lambda_1(\lambda_1 + b^{-1})}$.
  Assume the stochastic estimate of the gradient is $G$-sub-Gaussian.
  Then, with probability $1-\delta$, we have for $i=1,..,N$ that
  \begin{gather}
    \norm[1]{\proj_{v^{(i)}} \mu_{f|Y} - \proj_{v^{(i)}} \mu_{\text{SGD}}}_{\c{H}} \leq \frac{1}{\sqrt{\lambda_i t}}\del{\frac{b \norm{Y}_2}{\eta} + G\sqrt{ \frac{2}{t} \log\frac{N}{\delta}}}
    .
  \end{gather}
\end{restatable}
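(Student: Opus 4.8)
The plan is to diagonalise the problem in the eigenbasis of the kernel matrix $K = V\Lambda V^T$ --- which is exactly the basis underlying the spectral basis functions $v^{(i)}$ --- and to show that arithmetically-averaged SGD decouples into $n$ independent scalar recursions, one per eigendirection, each converging at a rate governed by $\lambda_i$. First I would reduce the RKHS projection error to a single coordinate error. Writing any candidate mean as $K_{(\cdot)X}\alpha$, the reproducing-kernel inner product gives $\innerprod{K_{(\cdot)X}\alpha}{v^{(i)}} = \sqrt{\lambda_i}\,(V^T\alpha)_i$, so that
\[
\norm{\proj_{v^{(i)}}\del{\mu_{f|Y} - \mu_{\text{SGD}}}}_{\c{H}} = \sqrt{\lambda_i}\,\abs{e_i^T V^T(\alpha_\star - \overline{\alpha}_t)}.
\]
It therefore suffices to control the $i$-th coordinate, in the eigenbasis, of the averaged representer-weight error, reweighted by $\sqrt{\lambda_i}$.

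Next, I would change variables to $\tilde\alpha = V^T\alpha$. The exact primal gradient $K\del{(bK+I)\alpha - bY}$ is diagonal in this basis, so the deterministic dynamics decouple: the error in direction $i$ contracts per step by $\rho_i = 1 - \beta\lambda_i(b\lambda_i+1)$. The assumed step-size $\beta < b^{-1}/\del{\lambda_1(\lambda_1 + b^{-1})} = 1/\del{\lambda_1(b\lambda_1+1)}$ guarantees $\rho_i \in [0,1)$ for every $i$, which delivers both stability and a per-direction contraction $1-\rho_i = \beta\lambda_i(b\lambda_i+1)$ that is faster for larger $\lambda_i$. Since the minibatched-but-regulariser-exact estimator is an additive $G$-sub-Gaussian gradient oracle, its noise $\zeta_{t,i} = (V^T\xi_t)_i$ is again sub-Gaussian (as $V$ is orthonormal). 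Solving the resulting scalar recursion in closed form and averaging over $t$ steps splits $\sqrt{\lambda_i}\,\tilde e_i$ into a deterministic bias term --- driven only by the initial error $\tilde e_i^{(0)} = -\tilde\alpha_{\star,i}$, since $\alpha_0 = 0$ --- and a martingale noise term.

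Then I would bound each piece. For the bias, the geometric sum $\tfrac{1}{t}\sum_{\tau<t}\rho_i^\tau = \tfrac{1-\rho_i^t}{t(1-\rho_i)}$, together with $\abs{\tilde\alpha_{\star,i}} = b\,\abs{(V^TY)_i}/(b\lambda_i+1) \le b\norm{Y}_2$ and $(b\lambda_i+1)\ge 1$, yields a bound of order $b\norm{Y}_2/\del{\sqrt{\lambda_i t}\,\beta}$, which is precisely the first term with $\eta = \beta$. For the noise, the averaged martingale is a weighted sum of independent sub-Gaussian increments whose coefficients are damped by the contraction factor $1-\rho_i$; I would bound its variance proxy through $\sum_s c_s^2$ and apply a sub-Gaussian (Hoeffding/Azuma) tail bound to control each coordinate with probability $1-\delta/N$. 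A union bound over the $N$ eigendirections then produces the $\log\frac{N}{\delta}$ factor and the simultaneous-in-$i$ guarantee, giving the second term.

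The main obstacle is getting the eigenvalue-dependent rate exactly right: one must track how the $\sqrt{\lambda_i}$ reweighting from the RKHS projection interacts with the contraction rate $\beta\lambda_i(b\lambda_i+1)$ (which accelerates the high-eigenvalue directions) and with the variance of the Polyak--Ruppert average, so that the error genuinely shrinks like $\lambda_i^{-1/2}$ in the data-dense, large-eigenvalue directions. Pinning down the correct power of $t$ in the noise term via sub-Gaussian concentration of the averaged iterate, and justifying that a single uniform sub-Gaussian constant $G$ legitimately captures the additive noise of the minibatched estimator, are the delicate technical points; the remaining algebra (the geometric sums and the change of basis) is routine.
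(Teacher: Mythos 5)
Your skeleton --- reducing the $\c{H}$-norm of the projection onto $v^{(i)}$ to $\sqrt{\lambda_i}\,|e_i^\top V^\top(\alpha_\star-\bar\alpha_t)|$, diagonalising so that each eigendirection contracts at rate $1-\beta\lambda_i(b\lambda_i+1)$, and splitting the averaged error into a bias term plus a noise term with a union bound over the $N$ directions --- is the natural one and matches the structure of the proof this thesis defers to. Your bias term is handled correctly: $|\tilde\alpha_{\star,i}|=b|(V^\top Y)_i|/(b\lambda_i+1)\le b\norm{Y}_2$ together with the geometric sum gives a contribution of order $b\norm{Y}_2/(\beta\sqrt{\lambda_i}\,t)$, which implies the stated first term once one recognises that the statement's $\eta$ is the learning rate $\beta$ under its original name.

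The gap is the noise term, precisely the point you flag as delicate and then assert resolves itself. Under your reading of the assumption --- independent per-step increments, each $G$-sub-Gaussian --- the coefficients in the averaged noise sum are bounded by $1/(t\lambda_i(b\lambda_i+1))$, so the variance proxy is at most $G^2/(t\lambda_i^2(b\lambda_i+1)^2)$, and a Hoeffding tail bound plus the $\sqrt{\lambda_i}$ reweighting yields a term of order $\frac{G}{\sqrt{\lambda_i t}}\sqrt{2\log\frac{N}{\delta}}$. The proposition claims $\frac{1}{\sqrt{\lambda_i t}}\,G\sqrt{\frac{2}{t}\log\frac{N}{\delta}}=\frac{G}{\sqrt{\lambda_i}\,t}\sqrt{2\log\frac{N}{\delta}}$, which is smaller by a factor of $\sqrt{t}$. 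No sharper bookkeeping of the damped coefficients closes this: with per-step noise of constant scale, the stochastic error of an arithmetic (Polyak--Ruppert) average is $\Theta(t^{-1/2})$ per coordinate --- the CLT rate --- so the claimed $t^{-1}$ decay is unreachable from your assumptions. To obtain the stated constants, $G$ must be the sub-Gaussian parameter of the \emph{accumulated} noise $\sum_{s<t}\xi_s$ over the whole run (equivalently, per-step noise that is $G/\sqrt{t}$-sub-Gaussian), and the clean way to see where it enters is the telescoped identity for averaged SGD on a quadratic, with $H=K(bK+I)$,
\begin{equation*}
\bar\alpha_t-\alpha_\star=\frac{1}{\beta t}H^{-1}(\alpha_0-\alpha_t)-\frac{1}{t}H^{-1}\sum_{s=0}^{t-1}\xi_s ,
\end{equation*}
from which the stated second term follows immediately under that reading. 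As written, your argument either proves a strictly weaker bound (no $1/\sqrt{t}$ inside the bracket) or silently requires this reinterpretation of the sub-Gaussian assumption; in either case the stated power of $t$ is not established.
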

This is an extension of a standard result on the convergence of SGD \citep{LeCun1992Automatic} to the span of the spectral basis functions.
For the proof, as well as an additional pointwise convergence bound, and a variant that handles projections onto general subspaces spanned by basis functions, we refer to Appendix E of \cite{lin2023sampling}.
In general, we expect $G$ to be at most $\c{O}(\lambda_1^2 \norm{Y}_\infty)$ with high probability. An analogous result is straightforward to obtain for the dual gradient \cref{eq:dual_gradient}. It allows us to raise our learning rate to $0 < \beta <\frac{1}{\lambda_1 + b^{-1}}$.

The result extends immediately from the posterior mean to posterior samples.
As consequence, \emph{SGD converges to the posterior GP quickly in the data-dense region}, namely where the spectral basis functions corresponding to large eigenvalues are located.
Since convergence speed on the span of each basis function is independent of the magnitude of the other basis functions' eigenvalues, SGD can perform well even when the kernel matrix is ill-conditioned.
This is shown in \Cref{fig:exact_vs_low_noise}.

\paragraph{\textnormal{\emph{(III) The Extrapolation Region.}}}
This can be found by elimination from the input space of the far-away and  interpolation regions, in both of which SGD incurs low error.
Consider the spectral basis functions $v^{(i)}(\cdot)$ with small eigenvalues.
By orthogonality of $v^{(1)},..,v^{(N)}$, such functions cannot be large near the observations while retaining a prescribed norm.
Their mass is therefore placed away from the observations.
SGD converges slowly in this region, resulting in a large error in its solution in both a Euclidean and RKHS sense, as seen in \Cref{fig:exact_vs_low_noise}.
Fortunately, due to the lack of data in the extrapolation region, the excess test error incurred due to SGD nonconvergence may be low, resulting in benign nonconvergence \citep{ZouWBGK2021benign}.
\Cref{fig:error-and-eigenfunctions} shows the Wasserstein distance to the exact GP predictions is high in this region, as SGD tends to return small representer weights, thereby reverting to the prior.

\subsubsection{Demonstration: resilience to ill conditioning}

\begin{table}
  \scriptsize
  \setlength{\tabcolsep}{2.5pt}
  \renewcommand{\arraystretch}{1.1}
  \begin{tabular}{l c c c c c c c c c c}
    \toprule
    \multicolumn{2}{c}{Dataset} & \textsc{pol}                & \textsc{elevators}         & \textsc{bike}               & \textsc{protein}           & \textsc{keggdir}            & \textsc{3droad}             & \textsc{song}              & \textsc{buzz}              & \textsc{houseelec}          \\
    \multicolumn{2}{c}{$N$}     & 15000                       & 16599                      & 17379                       & 45730                      & 48827                       & 434874                      & 515345                     & 583250                     & 2049280                     \\
    \midrule
    \multirow{3}{*}{\rotatebox[origin=c]{90}{RMSE}}
 & SGD
 & 0.13\,$\pm$\,0.00 & 0.38\,$\pm$\,0.00 & 0.11\,$\pm$\,0.00 & \textbf{0.51\,$\pm$\,0.00} & 0.12\,$\pm$\,0.00 & \textbf{0.11\,$\pm$\,0.00} & \textbf{0.80\,$\pm$\,0.00} & 0.42\,$\pm$\,0.01 & \textbf{0.09\,$\pm$\,0.00} \\
 & CG
 & \textbf{0.08\,$\pm$\,0.00} & \textbf{0.35\,$\pm$\,0.00} & \textbf{0.04\,$\pm$\,0.00} & \textbf{0.50\,$\pm$\,0.00} & \textbf{0.08\,$\pm$\,0.00} & 0.15\,$\pm$\,0.01 & 0.85\,$\pm$\,0.03 & 1.41\,$\pm$\,0.08 & 0.87\,$\pm$\,0.14 \\
 & SVGP
 & 0.10\,$\pm$\,0.00 & 0.37\,$\pm$\,0.00 & 0.08\,$\pm$\,0.00 & 0.62\,$\pm$\,0.00 & 0.10\,$\pm$\,0.00 & 0.64\,$\pm$\,0.01 & 0.82\,$\pm$\,0.00 & \textbf{0.34\,$\pm$\,0.00} & 0.10\,$\pm$\,0.02 \\
\midrule
    \multirow{3}{*}{\rotatebox[origin=c]{90}{RMSE\textsuperscript{\ensuremath{\dagger}}}}
                              & SGD
                              & \textbf{0.13\,$\pm$\,0.00}  & \textbf{0.38\,$\pm$\,0.00} & 0.11\,$\pm$\,0.00           & \textbf{0.51\,$\pm$\,0.00} & \textbf{0.12\,$\pm$\,0.00}  & \textbf{0.11\,$\pm$\,0.00}  & \textbf{0.80\,$\pm$\,0.00} & \textbf{0.42\,$\pm$\,0.01} & \textbf{0.09\,$\pm$\,0.00}  \\
                              & CG
                              & 0.16\,$\pm$\,0.01           & 0.68\,$\pm$\,0.09          & \textbf{0.05\,$\pm$\,0.01}  & 3.03\,$\pm$\,0.23          & 9.79\,$\pm$\,1.06           & 0.34\,$\pm$\,0.02           & 0.83\,$\pm$\,0.02          & 5.66\,$\pm$\,1.14          & 0.93\,$\pm$\,0.19           \\
                              & SVGP
 & --- & --- & --- & --- & --- & --- & --- & --- & --- \\
    \bottomrule                                                                                                                                                                                                                                                                                         \\
  \end{tabular}
  \caption{Mean and std. err. of the test RMSE and low-noise test RMSE ($\dagger$) obtained by the GP predictive mean computed with SGD, CG and SVGP. The latter method is omitted for the low noise setting, where it fails to run. Metrics are reported for the datasets normalised to zero mean and unit variance. The full experimental setup is described below in \cref{sec:SGDGP_experiments}.}
  \label{tab:low_noise_regression}
\end{table}

We explore how this section's result affects the algorithms under consideration by setting a small isotropic noise variance of $b^{-1} = 10^{-6}$ and running them on our set of UCI regression datasets.
\Cref{tab:low_noise_regression} shows the performance of CG severely degrades on all datasets. SVGP diverges for all datasets.
SGD's results remain essentially-unchanged. This is because the noise only changes the smallest kernel matrix eigenvalues substantially and these do not affect convergence in the direction of the top spectral basis functions.
This mirrors results presented in \Cref{fig:exact_vs_low_noise}.

\section{Experiments and benchmarks}\label{sec:SGDGP_experiments}

We now turn to empirical evaluation of SGD GPs and SDD GPs.
We compare these with the two most popular scalable Gaussian process techniques: preconditioned conjugate gradient (CG) optimisation \citep{gardner18GPYtorch,WangPGT2019exactgp} and sparse stochastic variational inference (SVGP) \citep{titsias09,Hensman2013big}.
We employ the \textsc{jax.scipy} CG implementation and follow \cite{WangPGT2019exactgp} in using a pivoted Cholesky preconditioner of size 100. Our preconditioner implementation resembles the implementation of the \textsc{TensorFlow Probability} library.
For a small subset of datasets, we find the preconditioner to lead to slower convergence, and we report the results for conjugate gradients without preconditioning instead.
We employ the GPJax \citep{Pinder2022} SVGP implementation and initialise inducing point locations with the $K$-means algorithm.
In all SGD and SDD experiments, we use a Nesterov momentum value of $\rho=0.9$ and geometric averaging with $\chi=100/T$ for $T$ the total number of steps. The latter is chosen on a task-dependent basis.
For SGD, at each step, we draw $100$ random features to unbiasedly estimate the regulariser term.
When drawing posterior samples with all methods we use pathwise conditioning with $2000$ random Fourier features to draw each prior function.

\subsection{UCI benchmark datasets}

We first compare SGD-based predictions with baselines in terms of predictive performance, scaling of computational cost with problem size, and robustness to the ill-conditioning of linear systems.
Following \cite{WangPGT2019exactgp}, we consider 9 datasets from the UCI repository \citep{UCI_repo} ranging in size from $n=15\text{k}$ to $n \approx 2$M datapoints and input dimensionality from $d'=3$ to $d' =90$.
We report mean and standard deviation over five $90\%$-train $10\%$-test splits for the small and medium datasets, and three splits for the largest dataset.

\paragraph{GP hyperparameters}
\label{app:hyperparameters}
We use a zero prior mean function and the Matérn-$\nicefrac{3}{2}$ kernel, and share hyperparameters across all methods, including baselines.
For each dataset, we choose a homoscedastic Gaussian noise variance, a single kernel variance, and a separate length scale per input dimension.
For datasets with less than $50\text{k}$ observations, we tune these hyperparameters to maximise the exact GP marginal likelihood \cref{eq:GP_evidence}.
The cubic cost of this procedure makes it intractable at a larger scale: instead, for datasets with more than $50\text{k}$ observations, we obtain hyperparameters using the following procedure:
\begin{enumerate}
  \item  From the training data, select a \emph{centroid} data point uniformly at random.
  \item  Select the subset of $10\text{k}$ data points with the smallest Euclidean distance to the centroid.
  \item  Find hyperparameters by maximizing the exact GP marginal likelihood using this subset of data.
  \item  Repeat the preceding steps for $10$ different centroids, and average the resulting hyperparameters.
\end{enumerate}
This approach avoids aliasing bias \cite{barbano2022design} due to data subsampling and is tractable for large datasets.

\paragraph{Inference method hyperparameters}
We run SGD for $100\text{k}$ steps, with a fixed batch size of $512$ for both the mean function and samples.
For all regression experiments, we use a learning rate of $0.5$ to estimate the mean function representer weights, and a learning rate of $0.1$ to draw samples.
For SDD, we use step-sizes $100\times$ larger than SGD, except for \textsc{elevators}, \textsc{keggdirected} and \textsc{buzz}, where this causes divergence and we use $10\times$ larger step-sizes instead. We run CG to a tolerance of $0.01$, except for the 4 largest data sets, where we stop CG after 100 iterations---this still provides CG with a larger compute budget than first-order methods. 
For SVGP, we use $3,000$ inducing points for the smaller five data sets and $9,000$ for the larger four, so as to match the runtime of the other methods. For all methods, we estimate predictive variances for log-likelihood computations from 64 function samples drawn using pathwise conditioning.

\begin{table}[t]
\caption{Root mean square error (RMSE), compute time (on an A100 GPU), and negative log-likelihood (NLL), for 9 UCI regression tasks for all methods considered. We report mean values and standard error across five $90\%$-train $10\%$-test splits for all data sets, except the largest, where three splits are used. Targets are normalised to zero mean and unit variance. This work denoted by SDD.}
\label{tab:UCI_regression}
\vspace{8pt}
\centering
\scriptsize
\setlength{\tabcolsep}{2.5pt}
\renewcommand{\arraystretch}{1.1}
\begin{tabular}{l l c c c c c c c c c}
\toprule
\multicolumn{2}{c}{Data} & \textsc{pol} & \textsc{elevators} & \textsc{bike} & \textsc{protein} & \textsc{keggdir} & \textsc{3droad} & \textsc{song} & \textsc{buzz} & \textsc{houseelec} \\
\multicolumn{2}{c}{Size} & 15k & 17k & 17k & 46k & 49k & 435k & 515k & 583k & 2M \\
\midrule
\multirow{4}{*}{\rotatebox[origin=c]{90}{RMSE}}
 & SDD
 & \textbf{0.08\,$\pm$\,0.00} & \textbf{0.35\,$\pm$\,0.00} & \textbf{0.04\,$\pm$\,0.00} & \textbf{0.50\,$\pm$\,0.01} & \textbf{0.08\,$\pm$\,0.00} & \textbf{0.04\,$\pm$\,0.00} & \textbf{0.75\,$\pm$\,0.00} & \textbf{0.28\,$\pm$\,0.00} & \textbf{0.04\,$\pm$\,0.00} \\
 & SGD
 & 0.13\,$\pm$\,0.00 & 0.38\,$\pm$\,0.00 & 0.11\,$\pm$\,0.00 & 0.51\,$\pm$\,0.00 & 0.12\,$\pm$\,0.00 & 0.11\,$\pm$\,0.00 & 0.80\,$\pm$\,0.00 & 0.42\,$\pm$\,0.01 & 0.09\,$\pm$\,0.00 \\
 & CG
 & \textbf{0.08\,$\pm$\,0.00} & \textbf{0.35\,$\pm$\,0.00} & \textbf{0.04\,$\pm$\,0.00} & \textbf{0.50\,$\pm$\,0.00} & \textbf{0.08\,$\pm$\,0.00} & 0.18\,$\pm$\,0.02 & 0.87\,$\pm$\,0.05 & 1.88\,$\pm$\,0.19 & 0.87\,$\pm$\,0.14 \\
 & SVGP
 & 0.10\,$\pm$\,0.00 & 0.37\,$\pm$\,0.00 & 0.08\,$\pm$\,0.00 & 0.57\,$\pm$\,0.00 & 0.10\,$\pm$\,0.00 & 0.47\,$\pm$\,0.01 & 0.80\,$\pm$\,0.00 & 0.32\,$\pm$\,0.00 & 0.12\,$\pm$\,0.00 \\
\midrule
\multirow{4}{*}{\rotatebox[origin=c]{90}{Time (min)}}
 & SDD
 & 1.88\,$\pm$\,0.01 & 1.13\,$\pm$\,0.02 & 1.15\,$\pm$\,0.02 & 1.36\,$\pm$\,0.01 & 1.70\,$\pm$\,0.00 & \textbf{3.32\,$\pm$\,0.01} & \textbf{185\,$\pm$\,0.56} & \textbf{207\,$\pm$\,0.10} & \textbf{47.8\,$\pm$\,0.02} \\
 & SGD
 & 2.80\,$\pm$\,0.01 & 2.07\,$\pm$\,0.03 & 2.12\,$\pm$\,0.04 & 2.87\,$\pm$\,0.01 & 3.30\,$\pm$\,0.12 & 6.68\,$\pm$\,0.02 & 190\,$\pm$\,0.61 & 212\,$\pm$\,0.15 & 69.5\,$\pm$\,0.06 \\
 & CG
 & \textbf{0.17\,$\pm$\,0.00} & \textbf{0.04\,$\pm$\,0.00} & \textbf{0.11\,$\pm$\,0.01} & \textbf{0.16\,$\pm$\,0.01} & \textbf{0.17\,$\pm$\,0.00} & 13.4\,$\pm$\,0.01 & 192\,$\pm$\,0.77 & 244\,$\pm$\,0.04 & 157\,$\pm$\,0.01 \\
 & SVGP
 & 11.5\,$\pm$\,0.01 & 11.3\,$\pm$\,0.06 & 11.1\,$\pm$\,0.02 & 11.1\,$\pm$\,0.02 & 11.5\,$\pm$\,0.04 & 152\,$\pm$\,0.15 & 213\,$\pm$\,0.13 & 209\,$\pm$\,0.37 & 154\,$\pm$\,0.12 \\
\midrule
\multirow{4}{*}{\rotatebox[origin=c]{90}{NLL}}
 & SDD
 & \textbf{-1.18\,$\pm$\,0.01} & \textbf{0.38\,$\pm$\,0.01} & -2.49\,$\pm$\,0.09 & \textbf{0.63\,$\pm$\,0.02} & \textbf{-0.92\,$\pm$\,0.11} & \textbf{-1.70\,$\pm$\,0.01} & \textbf{1.13\,$\pm$\,0.01} & \textbf{0.17\,$\pm$\,0.06} & \textbf{-1.46\,$\pm$\,0.10} \\
 & SGD
 & -0.70\,$\pm$\,0.02 & 0.47\,$\pm$\,0.00 & -0.48\,$\pm$\,0.08 & 0.64\,$\pm$\,0.01 & -0.62\,$\pm$\,0.07 & -0.60\,$\pm$\,0.00 & 1.21\,$\pm$\,0.00 & 0.83\,$\pm$\,0.07 & -1.09\,$\pm$\,0.04 \\
 & CG
 & \textbf{-1.17\,$\pm$\,0.01} & \textbf{0.38\,$\pm$\,0.00} & \textbf{-2.62\,$\pm$\,0.06} & \textbf{0.62\,$\pm$\,0.01} & \textbf{-0.92\,$\pm$\,0.10} & 16.3\,$\pm$\,0.45 & 1.36\,$\pm$\,0.07 & 2.38\,$\pm$\,0.08 & 2.07\,$\pm$\,0.58 \\
 & SVGP
 & -0.67\,$\pm$\,0.01 & 0.43\,$\pm$\,0.00 & -1.21\,$\pm$\,0.01 & 0.85\,$\pm$\,0.01 & -0.54\,$\pm$\,0.02 & 0.60\,$\pm$\,0.00 & 1.21\,$\pm$\,0.00 & 0.22\,$\pm$\,0.03 & -0.61\,$\pm$\,0.01 \\
\bottomrule
\end{tabular}

\end{table}

\paragraph{Results}
The results, reported in \Cref{tab:UCI_regression}, show that SDD matches or outperforms all baselines on all UCI data sets in terms of root mean square error of the mean prediction across test data. SDD strictly outperforms SGD on all data sets and metrics, matches CG on the five smaller data sets, where the latter reaches tolerance, and outperforms CG on the four larger data sets. The same holds for the negative log-likelihood metric (NLL), except on \textsc{bike}, where CG marginally outperforms SDD. 
Since SDD requires only one matrix-vector multiplication per step, as opposed to two for SGD, it provides about $30\%$ wall-clock time speed-up relative to SGD.
Although we run SDD for 100k iterations to match the SGD baseline, SDD often converges earlier than that.

\begin{remark} \textbf{Why SGD and SDD outperform CG on large problems}\\
SGD and SDD present two key advantages which make them perform well on very large scale tasks. The first is their relative insensitivity to problem conditioning (see \cref{sec:implicit_bias}). We only expect SGD to converge in the direction of the top eigenvectors of the curvature matrix. Poor conditioning will make it converge slowly in the bottom eigendirections, but optimisation noise would prevent convergence in those directions anyway. On the other hand, CG's runtime is heavily determined by conditioning (see \cref{sec:CG_limiations}).
The second is SGD and SDD's compatibility with early stopping.
From \Cref{fig:exact_vs_low_noise}, we see that SGD makes the vast majority of its progress in prediction space in its first few iterations, improving roughly monotonically with the number of steps.
Thus, early stopping after $100\text{k}$ iterations incurs only moderate errors.
In contrast, CG monotonically decreases euclidean error and error measured in the RKHS norm but its initial steps actually increase test error (which is more related to the $K^2$ norm), resulting in very poor performance if stopped too early.
\end{remark}

\subsection{Large-scale Bayesian optimisation}

A fundamental goal of scalable Gaussian processes is to produce uncertainty estimates useful for sequential decision making.
Motivated by problems in large-scale recommender systems, where both the initial dataset and the total number of users queried are simultaneously large \citep{rubens2015active, elahi2016survey}, we benchmark SGD on a large-scale Bayesian optimisation task.
We draw a target function from a GP prior $f \sim\GP(0, k)$ and optimise it on $\c{X} = [0, 1]^{d'}$ using parallel Thompson sampling \citep{hernandezlobato2017Parallel}, which we described in \cref{subsec:decision_making}.
We use an acquisition batch size of $1000$ samples, and maximise them with a multi-start gradient descent-based approach described in \Cref{subsec:decision_making}\footnote{Please refer to Appendix A.3 of \cite{lin2023sampling} for more details on our function maximisation strategy.}. We run 30 acquisition steps, acquiring a total of $30k$ observations.
We set the search space dimensionality to $d' = 8$, the largest considered by \cite{wilson20}, and initialise all methods with the same dataset of $50\text{k}$ observations sampled uniformly at random from $\c{X}$.
To eliminate model misspecification confounding, we use a Matérn-$\nicefrac{3}{2}$ kernel and consider length scales of $(0.1, 0.2, 0.3, 0.4, 0.5)$ for both the target function and our models. For each length scale, we repeat the experiment for 10 seeds.

In large-scale Bayesian optimisation, training and posterior function optimisation costs can become significant, and predictions may be needed on demand.
For this reason, we include two variants of the experiment, one with a small compute budget, where SGD and SDD are run for 15k steps, SVGP is given 20k steps and CG is run for 10 steps, and one with a large budget, where all methods are run for 5 times as many steps. We present the results on this task, broken down by lengthscale value, in \cref{fig:bayesopt-task}. In both large and small compute settings, and across lenghtscales, SDD makes the most progress, in terms of maximum value found, while using the least compute. Unlike SVGP and CG, The performance of SDD and SGD degrades gracefully when compute budget is limited.
Here, SVGP performs well---on par with SGD---in the large length scale setting, where many observations can likely be summarised with 1024 inducing points.
CG suffers from slow convergence due to ill-conditioning here.
On the other hand, CG performs on par with SGD in the better-conditioned small length scale setting, while SVGP suffers.
In the large compute setting, all methods perform similarly per acquisition step for all length scales except the small one, where SVGP suffers.

\begin{figure}[htb]
    \centering
    \includegraphics[width=0.8\textwidth]{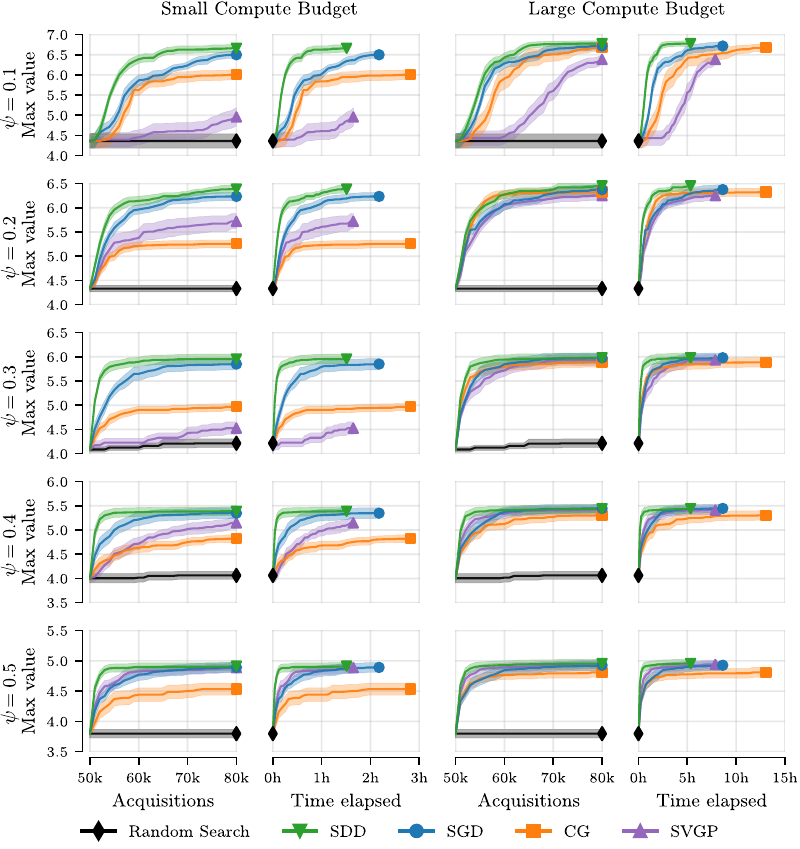}
    \caption{Maximum function values, with mean and standard error across 10 seeds, obtained by parallel Thompson sampling, for functions with different length-scales $\psi$, plotted as functions of acquisition steps and the compute time on an A100 GPU. All methods share an initial data set of 50k points, and take 30 Thompson steps, acquiring a batch of 1000 points in each. The algorithms perform differently across the length-scales: CG performs better in settings with smaller length-scales, which give better conditioning; SVGP tends to perform better in settings with larger length-scales and thus higher smoothness; SGD and SDD perform well in both settings. }
    \label{fig:bayesopt-task}
\end{figure}
\clearpage

\subsection{Molecule-protein binding affinity prediction}\label{expt:molecules}

The binding affinity between a molecule and certain proteins is a widely used preliminary filter in drug discovery \citep{pinza2019docking}, and machine learning is increasingly used to estimate this quantity \citep{yang2021efficient}. In this final experiment, we show that Gaussian processes with SDD are competitive with graph neural networks for binding affinity prediction.

\paragraph{Dataset setup}

We use the \textsc{dockstring} regression benchmark of \cite{Ortegon2022dockstring}, which contains five tasks, corresponding to five different proteins. 
The inputs are the graph structures of $250$k candidate molecules, and the targets are real-valued affinity scores from the docking simulator \emph{AutoDock Vina}~\citep{trott2010autodock}. 
We perform all of the preprocessing steps for this benchmark outlined by \cite{Ortegon2022dockstring}, 
including limiting the maximum docking score to $5$.
For each protein, we use a standard train-test splits of $210$k and $40$k molecules, respectively. These were produced by structure-based clustering to avoid similar molecules from occurring both in the train and test set.
We use Morgan fingerprints of dimension $1024$~\citep{rogers2010extended} to represent the molecules.%

\paragraph{Primer on fingerprints, Tanimoto Kernel and its random features}
Molecular fingerprints are a way to encode the structure of molecules
by indexing sets of subgraphs present in a molecule.
There are many types of fingerprints.
Morgan fingerprints represent the subgraphs up to a certain radius around each atom in a molecule \citep{rogers2010extended}.
The fingerprint can be interpreted as a sparse vector of counts, analogous to a `bag of words' representation of a document.
Accordingly, the Tanimoto coefficient $\mathfrak{T}(x,x')$, also called the Jaccard index, is a way to measure similarity between fingerprints, given by
\begin{equation*}
    \mathfrak{T}(x,x')=\frac{\sum_i\min(x_i,x'_i)}{\sum_i{\max(x_i,x'_i)}}.
\end{equation*}
This function is a valid kernel and has a known random feature expansion using random hashes \citep{Tripp2023tanimoto}. We use this kernel for our GPs.
The feature expansion builds upon prior work for fast retrieval of documents using random hashes that approximate the Tanimoto coefficient; that is, a distribution $P_\mathfrak{h}$ over hash functions $\mathfrak{h}$ such that
\begin{equation*}
    P_(\mathfrak{h}(\mathfrak{h}(x)=\mathfrak{h}(x')) = \mathfrak{T}(x,x')\ .
\end{equation*}
Per \cite{Tripp2023tanimoto}, we extend such hashes into random \emph{features} by using them to index a random tensor whose entries are independent Rademacher random variables. We use the random hash of \cite{ioffe2010improved}.

\paragraph{Gaussian process Setup}

As the Tanimoto kernel itself has no hyperparameters,
the only kernel hyperparameters are a constant scaling factor $a^{-1} > 0$ for the kernel,
the noise variance $b^{-1}$,
and a constant GP prior mean $\mu_0$ (the Gaussian process regresses on $y-\mu_0$ in place of $y$).
These were chosen by \cite{Tripp2023tanimoto} by maximising the evidence of an exact GP given a randomly chosen subset of the data and held constant during the optimisation of the inducing points.
The values are given in \cref{tab:molecule_regression_hyperparamters}. The same values are also used for SGD and SDD to ensure that the differences in accuracy are solely due to the quality of the GP posterior approximation.
The SGD method uses 100-dimensional random features for the regulariser.

\begin{table}[ht]
    \caption{Hyperparameters for all Gaussian process methods used in the molecule-protein binding affinity experiments of \cref{expt:molecules}.}\label{tab:molecule_regression_hyperparamters}
    \vspace{8pt}
    \centering
    \setlength{\tabcolsep}{2.5pt}
    \renewcommand{\arraystretch}{1.1}
    \begin{tabular}{l c c c c c}
    \toprule
    Data & \textsc{ESR2} & \textsc{F2} & \textsc{KIT} & \textsc{PARP1} & \textsc{PGR} \\
    \midrule
    $a^{-1}$   & 0.497 & 0.385 & 0.679 & 0.560 & 0.630 \\
    $b^{-1}$   & 0.373 & 0.049 & 0.112 & 0.024 & 0.332 \\
    $\mu_0$   & -6.79 & -6.33 & -6.39 & -6.95 & -7.08 \\
    \bottomrule \\
    \end{tabular}
\end{table}

\subsubsection{Results}

\begin{table}[t]
    \caption{Test set $R^2$ scores obtained for each target protein on the \textsc{dockstring} molecular binding affinity prediction task. Results with (\ensuremath{\cdot})\textsuperscript{\ensuremath{\dagger}} are from \cite{Ortegon2022dockstring}, those with (\ensuremath{\cdot})\textsuperscript{\ensuremath{\ddagger}} are from \cite{Tripp2023tanimoto}. SVGP uses 1000 inducing points. SDD denotes this work.}
    \label{tab:molecule_regression}
    \vspace{8pt}
    \centering
    \scriptsize
    \setlength{\tabcolsep}{2.5pt}
    \renewcommand{\arraystretch}{1.1}
    \hfill
\begin{tabular}{l c c c c c}
\toprule
Method & \textsc{ESR2} & \textsc{F2} & \textsc{KIT} & \textsc{PARP1} & \textsc{PGR} \\
\midrule
Attentive FP\textsuperscript{\ensuremath{\dagger}}      & \textbf{0.627} & \textbf{0.880} & \textbf{0.806} & \textbf{0.910} & \textbf{0.678} \\
MPNN\textsuperscript{\ensuremath{\dagger}}              & 0.506 & 0.798 & 0.755 & 0.815 & 0.324 \\
XGBoost\textsuperscript{\ensuremath{\dagger}}             & 0.497 & 0.688 & 0.674 & 0.723 & 0.345 \\
\bottomrule
\end{tabular}
\hfill
\begin{tabular}{l c c c c c}
\toprule
Method & \textsc{ESR2} & \textsc{F2} & \textsc{KIT} & \textsc{PARP1} & \textsc{PGR} \\
\midrule
SDD             & \textbf{0.627} & \textbf{0.880} & 0.790 & 0.907 & 0.626 \\
SGD               & 0.526 & 0.832 & 0.697 & 0.857 & 0.408 \\
SVGP\textsuperscript{\ensuremath{\ddagger}}     & 0.533 & 0.839 & 0.696 & 0.872 & 0.477 \\ 
\bottomrule
\end{tabular}
\hfill

\end{table}

In \cref{tab:molecule_regression}, following \cite{Ortegon2022dockstring}, we report $R^2$ values. Alongside results for SDD and SGD, we incldue results from \cite{Ortegon2022dockstring} for XGBoost, and for two graph neural networks, MPNN \citep{gilmer2017neural} and Attentive FP \citep{xiong2019pushing}, the latter of which is the state-of-the-art for this task. 
We also include the results for SVGP reported by \cite{Tripp2023tanimoto}. 
These results show that SDD matches the performance of Attentive FP on the ESR2 and FP2 proteins, and comes close on the others. 
To the best of our knowledge, this is the first time Gaussian processes have been shown to be competitive on a large-scale molecular prediction task.

\section{Discussion}

In this chapter, we explored using stochastic gradient algorithms to approximately compute Gaussian process posterior means and function samples at scale.
We derived optimisation objectives with linear and sublinear (via inducing points) cost for both. 
We studied variance reduction techniques for the posterior sampling objective and also its conditioning. The latter investigation led to the development of stochastic dual descent, a specialised first-order stochastic optimisation algorithm for  Gaussian processes. To design this algorithm, we combined a number of ideas from the optimisation literature with Gaussian-process-specific ablations, arriving at an algorithm which is simultaneously simple and matches or exceeds the performance of relevant baselines.
We showed that SGD can produce accurate predictions, even in cases when it is early stopped and does not converge to an optimum.
We developed a spectral characterisation of the effects of non-convergence, showing that it manifests itself mainly through error in an extrapolation region located away---but not too far away---from the observations.
We benchmarked SGD and SDD, showing they yield strong performance on standard regression benchmarks and on a large-scale Bayesian optimisation benchmark. SDD matches the performance of state-of-the-art graph neural networks on a molecular binding affinity prediction task.

Being able to perform posterior inference in large-scale linear models, through stochastic optimisation, is the first step towards tackling the ultimate goal of this thesis: performing Bayesian reasoning with large-scale neural networks. \cref{chap:adapting_laplace} will further pursue this goal by studying connections between linear models and neural networks through the linearised Laplace approximation. In particular, the next chapter will focus on hyperparameter optimisation using the model evidence. \corr{Building upon this, \cref{chap:sampled_Laplace} will use the methods introduced in this chapter to scale Bayesian inference to large-scale linearised NNs. For this, we will have to work with the non-kernelised setting, where SDD is not-applicable. We will rely on SGD instead.}

\chapter[A modernised Laplace approximation]{Adapting the linearised Laplace model evidence for modern deep learning}  %
\label{chap:adapting_laplace}

\ifpdf
    \graphicspath{{Chapter5/Figs/Raster/}{Chapter5/Figs/PDF/}{Chapter5/Figs/}}
\else
    \graphicspath{{Chapter5/Figs/Vector/}{Chapter5/Figs/}}
\fi

Model selection and uncertainty estimation are two important open problems in deep learning. The former aims to select network hyperparameters and architectures without costly cross-validation \citep{Mackay1992Thesis,Immer21Selection,immer2022invariance}. The latter provides a measure of fidelity of network predictions that can be used in downstream tasks such as experimental design \citep{barbano2022design}, sequential decision making \citep{Janz19successor}, and in safety-critical settings \citep{fridman2019arguing}.
\corr{This thesis does not attempt to compute exact Bayesian posterior credible regions or the exact model evidence for NNs. This is likely impossible when dealing with large-scale networks. Instead, we will sacrifice orthodoxy and pursue Bayesian-inspired methods that scale well and provide good results.
To this end, we focus on a classical approximate approach to these two problems: the linearised Laplace method \citep{Mackay1992Thesis},} which has recently been shown to be one of the best performing methods for \corr{approximate} inference in neural networks \citep{Khan19approximate,Kristiadi2020being,Immer2021Improving,daxberger2021bayesian,daxberger2021laplace}.

Linearised Laplace approximates the output of a neural network (NN) with a first order Taylor expansion (a linearisation) around optimal NN parameters. It then uses standard linear-model-type error bars to approximate the uncertainty in the output of the NN, while retaining the NN point-estimate as the predictive mean.
The latter feature means that, unlike other Bayesian deep learning procedures, the linearised Laplace uncertainty estimates do not come at the cost of the accuracy of the predictive mean  \citep{snoek2019can,ashukha2020pitfalls,antoran2020depth}. A downside of the method is that its uncertainty estimates are very sensitive to the choice of the prior precision hyperparameter \citep{daxberger2021bayesian}. Our work looks at the model evidence maximisation method for choosing this hyperparameters, as used in the seminal work of \citet{Mackay1992Thesis}.
In contrast with often used cross-validation, evidence maximisation reduces model selection to an (often convex) optimisation problem, and can scale to a large number of hyperparameters. 

The methods studied in this chapter differ from those of \cite{Mackay1992Thesis} in that we deal with the fully \emph{post-hoc setting}. In modern settings, retraining our NN every time we update the hyperparameters is prohibitively expensive. Thus, we work with a pre-trained NN and do not re-train it once the hyperparameters have been updated. This chapter also differs from the recent body of work of \cite{Immer21Selection,immer2022invariance,immer2023effective}, since the latter focuses on the \emph{online} setting, where the NN is trained and the hyperparameters are optimised concurrently.  We consider the post-hoc setting to be the one of most general interest, since it ensures compatibility with existing and future deep learning training techniques.

Our contributions, presented after a review of the necessary preliminaries in \cref{sec:c5_preliminaries}, are the identification of certain incompatibilities between the assumptions underlying the classical linearised Laplace model evidence and modern deep learning methodology, and a number of recommendations on how to adapt the method in light of these. In particular:
\begin{itemize}[topsep=0pt]
    \item A core assumption of linearised Laplace is that the point of linearisation is a minimum of the training loss. When the neural network is not trained to convergence (and this is almost never done), this does not hold and results in
    severe deterioration of the model evidence estimate. In \cref{sec:choice_of_posterior_mode}, we show that this can be corrected by instead considering the optima of the linearised model's loss, that is solving a quadratic optimisation problem.    
    \item In \cref{sec:normalised-networks}, we show that for networks with normalisation layers (such as batch norm \citep{ioffe2015batch}), the linearised Laplace predictive distribution can fail to be well-defined. However, this can be resolved by separately parametrising the prior corresponding to normalised and non-normalised network parameters. We also show that a standard feature-normalisation method, the g-prior \citep{zellner_1986,minka2000bayesian}, resolves this pathology.
\end{itemize}
We provide both theoretical and, in \cref{sec:adapting_experiments}, empirical justification for both points above. The resulting recommended procedure significantly outperforms a na\"ive linearised Laplace implementation on a series of standard tasks and a wide range of neural architectures:  MLPs, classic CNNs, residual networks with and without normalisation layers, generative autoencoders and transformers.

\section{Post-hoc linearised neural net hyperparameter selection}\label{sec:c5_preliminaries}

We consider the problem of selecting a Gaussian prior precision, hereon also referred to as the regulariser, with the objective of obtaining calibrated linearised Laplace uncertainty estimates. 
We go on to review the aspects of linearised Laplace that pertain to post-hoc selection of this hyperparameter. We refer the reader to \cref{sec:Laplace} for a detailed review of linearised Laplace.

\subsubsection{Setup and notation}

We consider the \emph{post-hoc} setting. We work with a neural network $g \colon  \c{V} \times \c{X} \mapsto \c{Y}$ with parameter space $ \c{V} \subseteq \R^d$, input space $\c{X}$ and output space $\c{Y} \subseteq \R^{c}$.
We assume access to a pre-trained set of weights $\tilde v \in  \c{V}$ which we keep fixed throughout the chapter.
Additionally, we assume these were obtained by minimising a regularised objective of the form
\begin{gather}
    \c{L}_{g, A}(v) =  L(g(v,\cdot)) + \| v \|^2_A,
\end{gather}
for $L \colon  \c{Y}^{ \c{V}\times\c{X}} \mapsto \R_+$ of the form  $L(g(v, \cdot)) = \sum_i^n \ell(y_i, g(v, x_i))$ where $\ell$ is a negative log-likelihood function.  We assume any linking functions are absorbed into $\ell$. $\| v\|^2_A$ corresponds to the log density of a Gaussian prior over $v$ for some initial value the of the positive-definite prior precision matrix $A \in \R^{d\times d}$. However, we henceforth treat $A$ as a model hyperparameter.  
Throughout this chapter we use $\cdot$ to denote by a
vector-matrix or matrix-matrix product where this may help
with clarity. 

\subsubsection{Linearisation and posterior approximation}
The parameter setting $\tilde v$ acts as the linearisation point around which we approximate $g$ with the affine function 
\begin{equation}\label{eq:h-def_c5}
    h(w, x) = g(\tilde v, x) + \partial_v [g(v, x)](\tilde v) \cdot ( w - \tilde v ),
\end{equation}
with parameters $w\in \R^d$. We then approximate the loss function for the linearised model, $ \c{L}_{h,  A}(w) = L(h(w, \cdot)) + \|w\|^2_ A$, with a second order Taylor expansion about $\tilde v$. Since $\partial_w h (\tilde v, \cdot) \;{=}\; \partial_v g(\tilde v, \cdot)$ and, by assumption, $\tilde v {\in} \argmin_{v} \c{L}_{g, A}$, we have that $\partial_w  \c{L}_{h,  A}(\tilde v) \:{=}\: \partial_v  \c{L}_{g,  A}(\tilde v) \:{=}\: 0$, and thus the first order term vanishes. This leaves us with the approximation
\vspace{-0.07cm}
\begin{equation}\label{eq:G_function}
    \c{L}_{h,  A}(\tilde v) + \frac{1}{2}\| w-\tilde v\|^{2}_{\partial^2_w  \c{L}_{h,  A}(\tilde v)}.
    \vspace{-0.05cm}
\end{equation}
We define an approximate posterior $Q$ by taking its Lebesgue density to be proportional to the exponential of minus this approximate loss. That is, a Gaussian with mean $\tilde v$ and covariance $(\partial^2_w  \c{L}_{h,  A}(\tilde v))^{-1}$. We henceforth adopt the notation of \cref{chap:linear_models} and \cref{sec:Laplace}, writing the Hessian of $ \c{L}_{h,  A}$ at $\tilde v$ as
\vspace{-0.05cm}
\begin{equation}
    \smash{\partial^2_w  \c{L}_{h,  A}(\tilde v) = M +  A} \,\,\,\,\text{with}\,\,\,\, M = \partial^2_w [L(h(w, \cdot))](\tilde v),
    \vspace{-0.05cm}
\end{equation}
and $J(\cdot) = \partial_v g(\tilde v, \cdot)$ for the Jacobian of $g$ at $\tilde v$. The approximate predictive posterior is given by the GP $h(w, \cdot)$, $w \sim Q$. Since $h$ is affine, this is again Gaussian. Its marginal at a test point $x' \in \c{X}$ is
\begin{gather}\label{eq:linearised_predictive_c5}
    \N(g(w_\star, x'), \, J(x') (M+A)^{-1} J(x')^T).
\end{gather}
\subsubsection{Model selection} 

The predictive posterior $h(w, \cdot)$, $w \sim Q$ corresponds to a GP with covariance kernel $J(\cdot)(M + A)^{-1}J(\cdot')^T$, revealing an explicit dependence on the regulariser $A$.  This parameter significantly affects the predictive posterior variance, but we have no simple method for choosing it \textit{a~priori}. We instead follow an empirical-Bayes procedure: we interpret $ A$ as the precision of a prior $\Pi = \N(0,  A)$ and choose $ A$ as that most likely to generate the observed data given the prior linearised model $h(w, \cdot);\, w \sim \Pi$. This yields the objective 
\begin{equation}\label{eq:model-evidence}
    \c{G}_{\tilde v}( A) = -\frac{1}{2}\left[ \|\tilde v\|^{2}_{ A}  + \log \det( A^{-1}M + I)\right] + C,
\end{equation}
where $C$ is independent of $A$. \corr{We have made explicit the objective's dependence on the linearisation point, which we assume fixed throughout optimisation of $A$, with the subscript $_{\tilde v}$.}
Equation \cref{eq:model-evidence} is called the model evidence.  
 Throughout, we will constrain $ A$ to the set of positive diagonal matrices, as in \citet{Mackay1992Thesis}. Maximising $ \c{G}_{\tilde v}$ is a concave optimisation problem. 

\subsubsection{Discussion: advantages and limitations of linearised Laplace in the modern setting}

The posterior predictive mean is fixed to match $g(\tilde v, \cdot)$, ignoring that a change in $A$ will almost surely change the modes of $\mathcal{L}_{g,A}$.
This choice keeps the NN's predictions unchanged. This is considered an advantage of linearised Laplace over competing Bayesian deep learning methods, which are often forced to compromise the accuracy of their predictive mean for better calibrated uncertainty.

We made a number of assumptions in our derivation. First, that the data-fit term $L$ is convex. This is satisfied by the standard losses used to train neural networks. We also assumed that the true posterior over the NN weights is sharply peaked around its optima such that it can be approximated well by a quadratic expansion and that $h$ is a good approximation to $g$ near the linearisation point. These assumptions we do not question further. We made one further important assumption, that the linearisation point $\tilde v$ is a local minimum of $ \c{L}_{g,  A}$ and thus it is also a  minima of the linearised loss $ \c{L}_{h,  A}$. This final assumption will be the focus of our work.

Since the linearised Laplace method with model-evidence maximisation was first introduced by \citet{Mackay1992Thesis}, deep learning training procedures and architectures have changed. 
Stochastic first order methods are used to minimise the loss function in place of the second order full-batch methods common in classical literature \citep{lecun1996effiicient,Amari2000info}. We  often do not use a low value of the loss $\c{L}_{g,  A}$ as a stopping criterion, but instead monitor some separate validation metric. Also, normalisation layers are ubiquitous. 

Since the derivations of this section assume that we linearise $g$ and expand $\c{L}_{h,  A}$ about a local minimum of $\c{L}_{g,  A}$ (and thus of $\c{L}_{h,  A}$), modern practises pose difficulties for the presented method. The rest of this chapter explores these issues, proposes a modern adaptation of the linearised Laplace method, and discusses some interesting special cases. 

\section{On the choice of posterior mode}\label{sec:choice_of_posterior_mode}

We consider a na\"ive implementation of the linearised Laplace method in the context of modern neural networks as that using the linearisation point $\tilde v$ in the expression for the model evidence $\c{G}_{\tilde v}$ \cref{eq:model-evidence}, even if this point is known to not be a local minimum of $\c{L}_{g,  A}$. 
We now propose an~alternative. %

We begin, as before, by linearising $g$ about  $\tilde v$, the point returned by (possibly stochastic or incomplete) optimisation of the neural network loss $\c{L}_{g,  A}$ and constructing the feature expansion $J\colon x \mapsto \partial_v g(\tilde v, \cdot)$. Under broad assumptions discussed in \cref{sec:dense-final-layer}, this choice means the posterior mean $g(\tilde v, \cdot)$ is contained within the linear span of the Jacobian features $J(\cdot)$\footnote{\corr{When the NN's output layer is linear, $g(\tilde v, \cdot)$ is a linear combination of the final layer activations and the Jacobian contains the last layer activations.}}. This yields credence to the interpretation of the linear model's error bars as uncertainty about the NN output.

We diverge from \cref{sec:c5_preliminaries} in how we approximate $\c{L}_{h,  A}$. We start by noting that since $\tilde v$ is not a local minimum of $\c{L}_{g,  A}$, it is not one of $\c{L}_{h,  A}$ either.

\begin{observation} \label{result:f-not-then-h-not} For network $g$ with linearisation $h$ about $\tilde v$ and a positive definite regulariser $ A$, if $\tilde v$ is not a stationary point of $\c{L}_{g,  A}$, it is not a local minimum of $\c{L}_{h, A}$.
\end{observation}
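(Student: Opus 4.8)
The plan is to establish the result through its contrapositive at the level of gradients: I will show that the gradients of $\c{L}_{g, A}$ and $\c{L}_{h, A}$ coincide exactly at the linearisation point $\tilde v$, so that non-stationarity of the former transfers immediately to the latter. Since $\c{L}_{h,A}$ is differentiable---being a composition of the smooth negative log-likelihood $\ell$ with the affine map $h$, plus the smooth quadratic regulariser $\|\cdot\|^2_A$---any local minimum of $\c{L}_{h,A}$ must be a stationary point. Hence, if $\tilde v$ fails to be stationary for $\c{L}_{h,A}$, it cannot be a local minimum, and the whole argument reduces to verifying this single gradient identity.

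First I would compute $\partial_w \c{L}_{h,A}(\tilde v)$ by the chain rule. Writing $\c{L}_{h,A}(w) = \sum_{i=1}^n \ell(y_i, h(w, x_i)) + \|w\|^2_A$ and using that $h$ is affine in $w$ with constant Jacobian $\partial_w h(w, x_i) = \partial_v[g(v, x_i)](\tilde v) = J(x_i)$, we obtain
\begin{equation*}
\partial_w \c{L}_{h,A}(w) = \sum_{i=1}^n J(x_i)^T \partial_{\hat{y}}\ell(y_i, h(w, x_i)) + 2A w.
\end{equation*}
The crucial point is that, because $h$ is affine, its Jacobian is genuinely constant in $w$, so no second-derivative terms of $g$ enter the gradient.

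Second, I would evaluate at $w = \tilde v$ and exploit the two defining properties of the first-order Taylor expansion: value matching $h(\tilde v, x_i) = g(\tilde v, x_i)$ and derivative matching $\partial_w h(\tilde v, x_i) = \partial_v g(\tilde v, x_i) = J(x_i)$. The former gives $\partial_{\hat{y}}\ell(y_i, h(\tilde v, x_i)) = \partial_{\hat{y}}\ell(y_i, g(\tilde v, x_i))$, so the data-fit gradient of $\c{L}_{h,A}$ at $\tilde v$ equals $\sum_i J(x_i)^T \partial_{\hat{y}}\ell(y_i, g(\tilde v, x_i))$, which is precisely the data-fit gradient of $\c{L}_{g,A}$ at $\tilde v$. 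The regulariser gradient $2A\tilde v$ is identical in both losses, being the same smooth quadratic evaluated at the same point. Therefore $\partial_w \c{L}_{h,A}(\tilde v) = \partial_v \c{L}_{g,A}(\tilde v)$.

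To finish, I would invoke the contrapositive: if $\tilde v$ is not a stationary point of $\c{L}_{g,A}$, i.e. $\partial_v \c{L}_{g,A}(\tilde v) \neq 0$, then the identity forces $\partial_w \c{L}_{h,A}(\tilde v) \neq 0$, so $\tilde v$ is not stationary for the differentiable function $\c{L}_{h,A}$ and hence not a local minimum. I do not anticipate a substantive obstacle; the statement is essentially immediate once the gradient identity is in hand. The only points requiring care are (i) confirming that $h$ has a constant Jacobian so that the gradient computation does not pick up curvature of $g$, and (ii) observing that the argument uses only gradient matching and never the positive definiteness of $A$---the latter is needed elsewhere (for instance to ensure $M + A$ is invertible) but is irrelevant here.
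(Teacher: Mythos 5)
Your proof is correct and follows essentially the same route as the paper's: both establish the gradient identity $\partial_w \c{L}_{h,A}(\tilde v) = \partial_v \c{L}_{g,A}(\tilde v)$ via value and Jacobian matching of the Taylor expansion at $\tilde v$, and then conclude that non-stationarity rules out a local minimum of the differentiable $\c{L}_{h,A}$. Your extra remarks (the constant Jacobian of $h$ preventing curvature terms, and the irrelevance of positive definiteness of $A$ here) are accurate but not needed beyond what the paper already does.
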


\begin{derivation} \textbf{Proof of \cref{result:f-not-then-h-not} } \\

\begin{proof}
Since $\tilde v$ is not a stationary point of $\c{L}_{g,  A}$, the gradient $\partial_v \c{L}_{g, A}(\tilde v)$ is not identically zero. But $\partial_v \c{L}_{g, A}(\tilde v)$ equal to
\begin{align*}
    \sum_i \partial_{\hat{y}_i} &[\ell(\hat{y}_i, y_i)](g(\tilde v, x_i))  \partial_v [g(v, x_i)](\tilde v) + \partial_v [\|v\|^2_{ A}](\tilde v)\\
    &= \sum_i \partial_{\hat{y}_i} [\ell(\hat{y}_i, y_i)](h(\tilde v, x_i))  \partial_w [h(w, x_i)](\tilde v) + \partial_w [\|w\|^2_{ A}](\tilde v),
\end{align*}
which is in turn equal to $\partial_w \c{L}_{h, A}(\tilde v)$. Since this is thus non-zero, $\tilde v$ cannot be a local minimum of $\c{L}_{h, A}$.
\end{proof}
\end{derivation}

Thus, $\tilde v$ is not a suitable point for a quadratic approximation to $\c{L}_{h, A}$ without a first order term. However, for any given $ A$, the loss for the linearised model $\c{L}_{h,  A}$ is a convex function of $w$ ($L$ is convex and $h$ is linear in $w$), and thus has a well defined minimiser; expanding the loss about this minimiser will yield a more faithful approximation to the evidence. 
Moreover, for each fixed $w$, $\c{G}_{w}( A)$ is concave in $ A$, yielding a maximiser. Iteratively minimising the convex $\c{L}_{h, A}(w)$ and maximising the concave $\c{G}_{w}( A)$ yields a simultaneous stationary point $(w_\star,  A_\star)$ satisfying  
\begin{equation*}
    \textstyle{w_\star \!\in \argmin_w \c{L}_{h, A_\star}(w) \!\!\spaced{and}\!\!  A_\star \!\in \argmax_ A \c{G}_{w_\star}( A).}
\end{equation*} 
Our adaption performs evidence maximisation with an affine model $h$ where the basis expansion $J$ is fixed.
Unlike \citet{Mackay1992Thesis}, we do not retrain the neural network. Instead, we re-fit the linear model. \corr{\cref{chap:sampled_Laplace} will introduce methods that efficiently implement this iterative optimisation scheme.}

\begin{figure}[t]
\vspace{-0.0cm}
    \centering
    \includegraphics[width=0.7\linewidth]{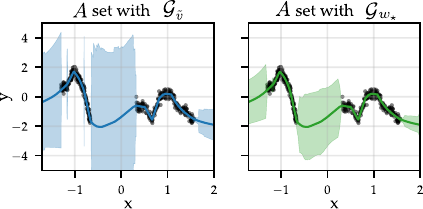}
    \caption{Linearised Laplace predictive mean and std-dev for a 2.6k parameter MLP trained on toy dataset from \citet{antoran2020depth}. Choosing $ A$ with $\c{G}_{\tilde v}$ yields error bars larger than the marginal std-dev of the targets. \Cref{rec:linear-weights} (using $\c{G}_{w_{\star}}$) solves~this. }
    \label{fig:lin_vs_nn}
\end{figure}

In practice, we make one further approximation: rather than evaluating the curvature $\partial^2_w L(h(w, \cdot))$ afresh at 
successive modes of $\c{L}_{h,  A}$ found during the iterative procedure for computing $(w_\star,  A_\star)$, we use the curvature at the linearisation point $M=\partial^2_w [L(h(w, \cdot))](\tilde v)$ throughout. This avoids the  expensive re-computation of the Hessian; experimentally we find that this does not affect the results\footnote{The Hessian depends on the linear model weights $w$ only trough the predictions made by the linear model. If our pre-trained NN is well-fit to the data, we do not expect the linearised NN's MAP predictions to differ much from the NN prediction at the linearisation point.} (see \cref{subsec:exp_assumptions}). The resulting model evidence expression matches that in \cref{eq:model-evidence}, with only the weights featuring in the norm~changed,
\begin{equation}\label{eq:model-evidence-new}
    \vspace{-0.05cm}
    \c{G}_{w_{\star}}( A) = -\frac{1}{2}\left[ \|w_{\star}\|^{2}_{ A}  + \log \det( A^{-1}M + I) \right] + C.
\end{equation}

\begin{recommendation}\label{rec:linear-weights}
While using the linearisation point $\tilde v$ in the construction of the feature expansion $J$ and the Hessian $M$ (as introduced in \cref{sec:c5_preliminaries}), find a joint optimum $(w_\star,  A_\star)$ for the feature-linear model and employ these to construct the corresponding model evidence $\c{G}_{w_\star}$ (equation \cref{eq:model-evidence-new}) and to compute the predictive variance (equation \cref{eq:linearised_predictive_c5}).
\end{recommendation}

\begin{figure}[htbp]
\vspace{-0.0cm}
    \centering
    \includegraphics[width=0.5\linewidth]{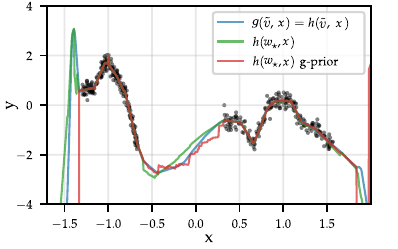}
    \caption{\corr{Comparison of the predictive means a a 2.6k parameter MLP trained on toy dataset from \citet{antoran2020depth} (blue) with the posterior mean of its tangent linear model with an isotropic Gaussian prior (green) and the posterior mean of the tangent linear model with the diagonal g-prior, introduced in \cref{subsec:g-prior} (red).}}
    \label{fig:toy_mean_functions}
\end{figure}

We thus recommend employing a posterior distribution for $h$ of the same form as given in \cref{eq:linearised_predictive_c5}, but with prior precision $ A_\star$. \corr{We do not recommend using the mean predictions of the tangent linear model $h(w_\star, \cdot)$ as the posterior mean function since this introduces additional computational load while empirically providing little to no benefit.
We verified this across a range of tasks, including image classification and tomographic image reconstruction. This is illustrated for a 1d toy problem in \cref{fig:toy_mean_functions}. Here, the linearised model's mean resembles the NN's mean but is less smooth. We attribute this non-smoothness to the inclusion of a linear dependence on ReLU features from network layers near the input. 
}

\subsubsection{Demonstration: 1d regression and a simple CNN}

\begin{figure}[t]
\vspace{-0.0cm}
    \centering
    \includegraphics[width=\linewidth]{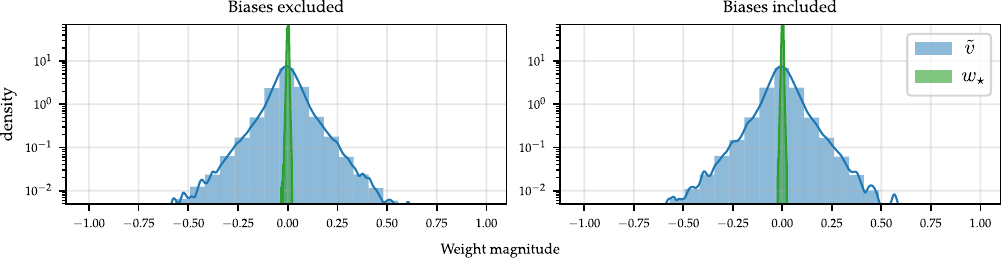}
    \caption{Histograms of the individual entries of $\tilde v$ and $w_{\star}$ for the models in the bias exclusion experiment of \cref{subsec:exp_assumptions}. We use a $d=46$k ResCNN described in \cref{sec:adapting_experiments} and train it on MNIST.}
    \label{fig:lin_vs_nn_hist}
\end{figure}

\cref{fig:lin_vs_nn} shows how choosing the prior precision with the evidence objective that contains the linearisation point $\tilde v$ results in uncertainty overestimation; the predictive distribution's marginal standard deviation is much larger than the marginal standard deviation of the targets. This is resolved by applying \cref{rec:linear-weights}.
We further explore the differences between the norm of the linear model MAP $w_\star$ and the linearisation point $\tilde v$ by, in \cref{fig:lin_vs_nn_hist}, plotting the histogram for both given a small ResNet-style CNN described in \cref{sec:adapting_experiments}. The linear model weights present a much narrower distribution around 0. This is commensurate with their use in the model evidence resulting in larger prior precisions and thus smaller errorbars. The plot also ablates whether considering model biases in the linearisation makes a difference to this recommendation, and it does not.

\section{Linearised Laplace with normalised networks} \label{sec:normalised-networks}

We now study linearised Laplace in the presence of \corr{scale-invariance introduced by} normalisation layers. For this, we put forth the following formalism:

\begin{definition}[Normalised networks] \label{def:normalised_networks}
We say that a set of networks $\mathfrak{G} \subset \c{Y}^{ \c{V}\times \c{X}}$ is normalised if $ \c{V}$ can be written as a \emph{direct sum} $ \c{V}' \oplus  \c{V}''$, with $ \c{V}''$ non-empty, such that for all networks $g \in \mathfrak{G}$ and parameters $v' + v'' \in  \c{V}' \oplus  \c{V}''$,
\begin{equation*}
   g(v' + v'', \,\cdot\,) = g(v' + \mathfrak{c}v'', \,\cdot\,)
\end{equation*}
for all $\mathfrak{c} \in \R_+$.
\end{definition}
Throughout, we write $v',v''$ to denote the respective projections onto $ \c{V}', \c{V}''$ of a parameter $v \in  \c{V}$; for ease of notation, we will assume these projections are aligned with a standard basis on $ \c{V}$. That is, we write our parameter vectors as the sum of $\mathfrak{c} v''$, which is only non-zero for normalised weights and $g$ is invariant to $\mathfrak{c}$, and $v'$, for which the opposite holds.  

\begin{remark} \textbf{Example to illustrate how \cref{def:normalised_networks}} \\ 

Consider an MLP $g:  \c{V} \times \mathcal{X} \to \mathcal{Y}$ with a single input dimension $\mathcal{X} = \mathbb{R}$, a single output dimension $\mathcal{Y} = \mathbb{R}$, single hidden layer and 2 hidden units $\c{V} = \R^4$. We apply layer norm after the input layer parameters. The model parameters are $v = [v_{1}, v_{2}, v_{3}, v_{4}] \in  \c{V}$ with $v_{1}, v_{2}$ belonging to the input layer and $v_{3}, v_{4}$ to the readout layer.  We assume there are no biases without loss of generality. 

Denoting the outputs of the first parameter layer $\mathfrak{a} = [v_{1}x, v_{2}x]$, layer norm applies the function 
\begin{gather*}
   \frac{ \mathfrak{a} - \mathbb{E}[\mathfrak{a}]}{\sqrt{\text{Var}(\mathfrak{a})}} \mathfrak{b} + \mathfrak{e} \spaced{with} \mathbb{E}[\mathfrak{a}] = 0.5 v_{1}x + 0.5 v_{2}x  \\ 
   \textrm{and} \quad \text{Var}(\mathfrak{a}) = 0.5 (v_{1}x - \mathbb{E}[\mathfrak{a}])^2 + 0.5 (v_{2}x - \mathbb{E}[a])^2
\end{gather*}
for $\mathfrak{b} \in \mathbb{R},\, \mathfrak{e} \in \mathbb{R}$. Now take $\mathfrak{c} \in \mathbb{R}_{+}$ to see that the output of the layernorm layer is invariant to scaling the input layer parameters by $\mathfrak{c}$
\begin{gather*}
   \frac{ \mathfrak{c} \mathfrak{a} - \mathbb{E}[\mathfrak{c} \mathfrak{a}]}{\sqrt{\text{Var}(\mathfrak{c} \mathfrak{a})}}  = 
   \frac{\mathfrak{c} (\mathfrak{a} - \mathbb{E}[\mathfrak{a}])}{\mathfrak{c}\sqrt{\text{Var}(\mathfrak{a})}} 
 =
   \frac{ \mathfrak{a} - \mathbb{E}[\mathfrak{a}]}{\sqrt{\text{Var}(\mathfrak{a})}}.
\end{gather*}
Thus, we have $g([v_{1}, v_{2}, v_{3}, v_{4}], \cdot) = g([\mathfrak{c}v_{1}, \mathfrak{c}v_{2}, v_{3}, v_{4}], \cdot)$.

Now let $ \c{V}$ be the result of the internal \underline{\emph{direct sum}} $ \c{V} =  \c{V}' \oplus  \c{V}''$, and $v', \, v''$ be projections of $v$ onto the subspaces $ \c{V}' \,\&\,  \c{V}''$, respectively, so that $v = v' + v''$. The operator $+$ is defined as the vector sum, as usual. In the simplest case where $ \c{V}'$ and $ \c{V}''$ are aligned with the standard basis, this corresponds to  vectors in $ \c{V}'$ having zero valued entries in the place of parameters to which normalisation is applied, $v' = [0, 0, v_{3}, v_{4}]$. Vectors in $ \c{V}''$ are non-zero for normalised parameters, $v'' = [v_{1}, v_{2}, 0, 0]$.
Finally, we write the property of interest
\begin{gather*}
    g(v' + \mathfrak{c} v'', \cdot) = g(v' +  v'', \cdot).
\end{gather*}
\end{remark}

Our formalism requires only that a single group of normalised parameters $ \c{V}''$ exists. However, by applying the definition repeatedly\corr{, introducing a separate scaling constant per layer,} we encompass networks with any number of normalisation layers, and all our results extend to this case. This formalism can be used to model the \corr{scale-invariant effect} of layer norm \citep{Ba2016layer}, group norm \citep{He2020group} or batch norm \citep{ioffe2015batch}, and even some so-called normalisation-free methods \citep{Brock21propagation,Brock2021free}. \corr{However, it is worth noting that each of these normalisation strategies introduce additional effects that are not of interest to this chapter and are deliberately not described by our formalism.}

\begin{figure}[tbh]
    \centering
    \includegraphics[width=\linewidth]{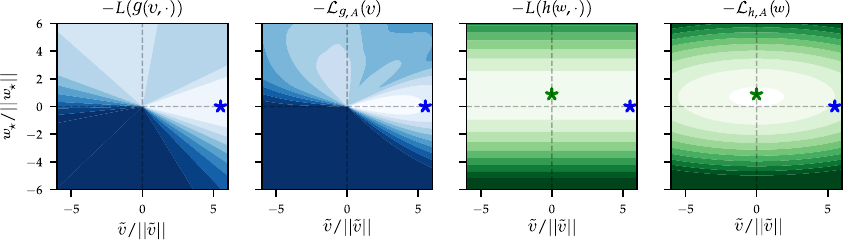}
    \caption{ Log likelihood $L(g(v, \cdot))$ (left) and log posterior $\c{L}_{g,A}$ (left middle) density for an MLP with layer norm, both plotted as functions of a 2d slice of the input layer weights.
    The horizontal axis corresponds to the direction of $\tilde v$ while the vertical to $w_{\star}$. The linearisation point found with SGD $\tilde v$ (\textcolor{blue}{$\bigstar$})  is not an optima of $\c{L}_{g,  A}$. We can always increase the value of  $\c{L}_{g,A}$ by moving towards the origin along the horizontal axis, without changing the likelihood.  $\tilde v$ is not an optima of the linear model's Log likelihood  $L(h(w, \cdot))$ (middle right) or log posterior $\c{L}_{g, A}$ (right) either. The linear model log posterior $\c{L}_{h,  A}$ is convex and optimised~by~$w_{\star}$~(\textcolor{ForestGreen}{$\bigstar$}).}
    \label{fig:contour}
\end{figure}

Our focus on normalised networks is motivated by the following observation:
\begin{proposition}\label{proposition:no-normalised-minimum}
For any normalised network $g$ and positive definite matrix $ A$, the loss $\c{L}_{g, A}$ has no local minima. 
\end{proposition}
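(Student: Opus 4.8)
The plan is to show that from \emph{every} point at which the loss is defined there is a strict descent direction, namely the one obtained by scaling the normalised parameters $v''$ towards the origin; the existence of such a direction everywhere precludes any local minimum. Fix an arbitrary parameter $v = v' + v''$ with $v' \in \mathcal{V}'$ and $v'' \in \mathcal{V}''$, and consider the scaling ray $\mathfrak{c} \mapsto v' + \mathfrak{c}v''$ for $\mathfrak{c} \in \R_+$. By \cref{def:normalised_networks}, the network output, and hence the data-fit term $L(g(v' + \mathfrak{c}v'', \cdot))$, is constant in $\mathfrak{c}$.

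The key computation is the behaviour of the regulariser along this ray. Because $A$ is constrained to be positive diagonal (as in \citet{Mackay1992Thesis}) and the projections onto $\mathcal{V}'$ and $\mathcal{V}''$ are aligned with the standard basis, the two blocks have disjoint coordinate support and are therefore $A$-orthogonal, i.e. $(v')^T A v'' = 0$. Expanding the quadratic then gives $\|v' + \mathfrak{c}v''\|_A^2 = \|v'\|_A^2 + \mathfrak{c}^2\|v''\|_A^2$, with no cross term, so that the restricted loss is
\begin{equation*}
    \psi(\mathfrak{c}) \coloneqq \mathcal{L}_{g,A}(v' + \mathfrak{c}v'') = \big[L(g(v, \cdot)) + \|v'\|_A^2\big] + \mathfrak{c}^2\|v''\|_A^2 .
\end{equation*}
For any $v$ with $v'' \neq 0$, positive definiteness of $A$ gives $\|v''\|_A^2 > 0$, so $\psi$ is strictly increasing on $(0,\infty)$. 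Taking $\mathfrak{c} = 1 - \delta$ for small $\delta > 0$ yields the point $v - \delta v''$, which lies arbitrarily close to $v$ yet satisfies $\mathcal{L}_{g,A}(v - \delta v'') = \psi(1-\delta) < \psi(1) = \mathcal{L}_{g,A}(v)$. Hence every neighbourhood of $v$ contains a point of strictly smaller loss, and $v$ is not a local minimum.

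It then remains to dispatch the degenerate case $v'' = 0$. I would argue that this point lies outside the domain on which the loss is meaningfully defined: were $g$ continuous there, constancy of $g(v' + \mathfrak{c}v'', \cdot)$ in $\mathfrak{c}$ together with $\lim_{\mathfrak{c}\to 0^+}(v' + \mathfrak{c}v'') = v'$ would force $g(v' + v'', \cdot) = g(v', \cdot)$ for \emph{every} $v'' \in \mathcal{V}''$, i.e. $g$ would not depend on the normalised parameters at all, contradicting that $\mathcal{V}''$ is a genuine (non-empty) group of normalised weights. The main obstacle is precisely this degenerate point together with the reliance on $A$-orthogonality of the split: the strict-descent argument breaks down if $A$ couples $v'$ and $v''$ through off-diagonal entries, since then the vertex of the parabola $\psi$ can fall at $\mathfrak{c} = 1$ and, when paired with a strictly convex data-fit term, a genuine local minimum can appear. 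Restricting $A$ to be diagonal, consistent with the chapter's standing assumption, is exactly what makes the cross term vanish and the argument go through.
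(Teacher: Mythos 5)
Your proof is correct and follows essentially the same route as the paper's: along the ray $\mathfrak{c} \mapsto v' + \mathfrak{c}v''$ the data-fit term is constant by scale invariance while the regulariser strictly decreases as $\mathfrak{c}$ shrinks, so no point can be a local minimum. You are in fact somewhat more careful than the paper, whose one-line argument silently relies on the cross term $(v')^{\top} A\, v''$ vanishing (i.e.\ on the chapter's standing assumption that $A$ is positive \emph{diagonal}, despite the proposition saying only ``positive definite'') and does not address the degenerate case $v'' = 0$; both refinements you add are sound, with the caveat that your $v''=0$ argument is best read as excluding that point from the loss's domain (as happens for genuine normalisation layers, where one divides by a zero variance) rather than as a formal contradiction with \cref{def:normalised_networks}.
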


To see this, note that the data term fit $L(g(v' + \mathfrak{c}v'', \cdot))$ is invariant to the choice of $\mathfrak{c}>0$, but we can always decrease the prior term $\|v'+ \mathfrak{c}v''\|^2_ A$ by decreasing $\mathfrak{c}$. Since $\mathfrak{c} \in \R_{+}$ has no minimal value,  $\c{L}_{g, A}(v'+ \mathfrak{c}v'') = L(g(v'+\mathfrak{c}v'', \cdot)) + \|v'+\mathfrak{c}v''\|^2_ A$ has no local minima. This is illustrated in \cref{fig:contour}.

As in \cref{sec:choice_of_posterior_mode}, minimisers of the linear loss $\c{L}_{h,  A}$ remain well-defined (the loss remains strictly convex). However, in this case, $\tilde v$ cannot minimise $\c{L}_{h, A}$: the linearisation point minimises $\c{L}_{h,  A}$ only if it minimises $\c{L}_{g,  A}$ (recall \cref{result:f-not-then-h-not}), and this is now impossible! To correct this, from hereon we follow \cref{rec:linear-weights}.

An even larger concern raised by \cref{proposition:no-normalised-minimum} is that the linearisation point is identified only up to the scaling $\mathfrak{c}$ of the normalised parameters $\tilde v''$. Since $\mathfrak{c}$ is arbitrary, and does not affect the predictions of the neural network (by definition), it ought not affect the predictive variance returned by the linearised Laplace method. \corr{However, due to scaling of the Jacobian features with $\mathfrak{c}$ which we go on to show in the following section, in general, it does.}
 See \cref{fig:k_scan} for a demonstration of this.

\subsection{The layerwise prior}\label{subsec:layerwise_prior}

As shown by the following proposition, it suffices to regularise linear model weights corresponding to the normalised parameters $w'' \in \c{V}''$ separately from $w'\in \c{V}'$, and choose both regularisation strengths with the model evidence \cref{eq:model-evidence-new}, to recover a unique predictive posterior independent of $\mathfrak{c}$.

\begin{figure}[t]
    \centering
    \includegraphics[width=0.7\linewidth]{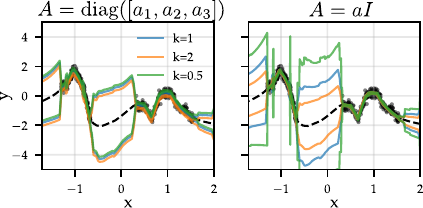}
    \caption{For a normalised MLP with an isotropic prior precision, modifying the scale of the normalised weights $\tilde v''$ in the linearisation point changes the error bars after hyper-parameter optimisation (right). Incorporating \cref{rec:factorised_prior} fixes the issue (left). }
    \label{fig:k_scan}
\end{figure}

\begin{proposition}\label{proposition:unique-posterior}
For normalised neural networks, using a regulariser of the form $\|w'\|^2_{ A'} + \|w''\|^2_{ A''}$ with $ A'$ and $ A''$ parametrised independently and chosen according to \cref{rec:linear-weights}, the predictive posterior $h(w,\cdot)$, $w \sim Q$ induced by a linearisation point $\tilde v'+\mathfrak{c} \tilde v''$ is independent of the choice of $\mathfrak{c}>0$.
\end{proposition}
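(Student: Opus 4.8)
The plan is to exhibit an exact reparametrisation symmetry of the entire hyperparameter-learning problem under rescaling of the normalised weights, and then to check that the predictive posterior is a symmetry-invariant of this problem. Throughout write $\tilde v_{\mathfrak{c}} = \tilde v' + \mathfrak{c}\tilde v''$, let $h_{\mathfrak{c}}$ denote the linearisation of $g$ about $\tilde v_{\mathfrak{c}}$, and write the quantities built from it as $J_{\mathfrak{c}}, M_{\mathfrak{c}}$.

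First I would record how the ingredients scale with $\mathfrak{c}$. Writing $G(v',v'') = g(v'+v'',\cdot)$, \cref{def:normalised_networks} reads $G(v',v'') = G(v',\mathfrak{c}v'')$ for all $\mathfrak{c}>0$. Differentiating this identity in $v''$ and in $v'$ gives, respectively,
\[
(\partial_{v''}G)(v',\mathfrak{c}v'') = \tfrac{1}{\mathfrak{c}}(\partial_{v''}G)(v',v''), \qquad (\partial_{v'}G)(v',\mathfrak{c}v'') = (\partial_{v'}G)(v',v'').
\]
Hence, writing the Jacobian at $\tilde v_{\mathfrak{c}}$ in block form (block $1$ for $\mathcal{V}'$, block $2$ for $\mathcal{V}''$) as $J_{\mathfrak{c}}(\cdot) = [\,J'(\cdot),\ \tfrac{1}{\mathfrak{c}}J''(\cdot)\,]$, we obtain $J_{\mathfrak{c}}(\cdot) = J_{1}(\cdot)\,S_{\mathfrak{c}}$ with $S_{\mathfrak{c}} = \diag(I,\tfrac{1}{\mathfrak{c}}I)$. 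Because $M = \sum_i J(x_i)^T B_i J(x_i)$ uses per-datum factors $B_i$ that depend only on the scale-invariant network predictions $g(\tilde v_{\mathfrak{c}},x_i) = g(\tilde v_1,x_i)$, it follows that $M_{\mathfrak{c}} = S_{\mathfrak{c}}M_{1}S_{\mathfrak{c}}$.

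Next I would introduce the reparametrisation $\omega = S_{\mathfrak{c}}w$ together with $\tilde A = S_{\mathfrak{c}}^{-1}A\,S_{\mathfrak{c}}^{-1}$ and show it intertwines the two optimisation problems. Using $S_{\mathfrak{c}}\tilde v_{\mathfrak{c}} = \tilde v_{1}$ one checks directly that $h_{\mathfrak{c}}(w,\cdot) = h_{1}(\omega,\cdot)$, so the data-fit terms agree, while $\|w\|^2_A = \|\omega\|^2_{\tilde A}$; hence $\mathcal{L}_{h_{\mathfrak{c}},A}(w) = \mathcal{L}_{h_{1},\tilde A}(\omega)$. For the evidence the quadratic term matches as above, and for the log-determinant the Weinstein--Aronszajn identity $\det(UV+I)=\det(VU+I)$ gives
\[
\det(A^{-1}M_{\mathfrak{c}}+I) = \det(A^{-1}S_{\mathfrak{c}}M_{1}S_{\mathfrak{c}}+I) = \det(S_{\mathfrak{c}}A^{-1}S_{\mathfrak{c}}M_{1}+I) = \det(\tilde A^{-1}M_{1}+I),
\]
so $\mathcal{G}^{(\mathfrak{c})}_{w}(A) = \mathcal{G}^{(1)}_{\omega}(\tilde A)$. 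The crucial point, and the reason the proposition needs an independently parametrised prior, is that since $S_{\mathfrak{c}}$ is block-diagonal the map $A\mapsto\tilde A$ is a bijection of the admissible family of block-diagonal positive prior precisions onto itself (sending $A'\mapsto A'$, $A''\mapsto\mathfrak{c}^2 A''$); this closure fails for a single isotropic prior, which is exactly the pathology underlying \cref{proposition:no-normalised-minimum}.

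Finally I would conclude. The reparametrisation is a bijection carrying the feasible set to itself and preserving both $\mathcal{L}$ and $\mathcal{G}$, so it maps joint stationary points of the \cref{rec:linear-weights} procedure for scaling $\mathfrak{c}$ bijectively onto those for scaling $1$; in particular if $(w_\star,A_\star)$ is stationary at scaling $\mathfrak{c}$ then $A_\star = S_{\mathfrak{c}}A_\star^{(1)}S_{\mathfrak{c}}$ for the corresponding $A_\star^{(1)}$ at scaling $1$. Substituting $J_{\mathfrak{c}} = J_1 S_{\mathfrak{c}}$, $M_{\mathfrak{c}} = S_{\mathfrak{c}}M_1 S_{\mathfrak{c}}$ and this $A_\star$ into the predictive covariance and cancelling the factors of $S_{\mathfrak{c}}$,
\[
J_{\mathfrak{c}}(x')(M_{\mathfrak{c}}+A_\star)^{-1}J_{\mathfrak{c}}(x')^T = J_{1}(x')(M_{1}+A_\star^{(1)})^{-1}J_{1}(x')^T,
\]
which is independent of $\mathfrak{c}$; the predictive mean $g(\tilde v_{\mathfrak{c}},x') = g(\tilde v_1,x')$ is invariant by definition. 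The main obstacle I anticipate is the stationary-point bookkeeping: the conclusion strictly requires that the iterative procedure \emph{select} corresponding stationary points across different $\mathfrak{c}$. I would argue this from the strict convexity of $\mathcal{L}_{h,A}$ in $w$ and the concavity of $\mathcal{G}$ in $A$, so that each alternating step has a unique solution and, under an equivariant initialisation, the two trajectories remain exact images of one another under $(w,A)\mapsto(S_{\mathfrak{c}}w,\,S_{\mathfrak{c}}^{-1}AS_{\mathfrak{c}}^{-1})$, making the selected predictive posterior genuinely $\mathfrak{c}$-independent.
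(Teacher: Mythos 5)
Your proof is correct and takes essentially the same route as the paper's own: your scaling identities for $J_{\mathfrak{c}}$ and $M_{\mathfrak{c}}$ are exactly \cref{lemma:scaling}, your intertwining of $(\mathcal{L},\mathcal{G})$ under the bijection $(w,A)\mapsto(S_{\mathfrak{c}}w,\,S_{\mathfrak{c}}^{-1}AS_{\mathfrak{c}}^{-1})$ (which is well defined on the admissible prior family only because $A'$ and $A''$ are parametrised independently) is precisely \cref{lemma:stationary-points}, and the final cancellation of the $S_{\mathfrak{c}}$ factors in $J_{\mathfrak{c}}(x')(M_{\mathfrak{c}}+A_{\mathfrak{c}})^{-1}J_{\mathfrak{c}}(x')^{T}$ matches the paper's concluding step. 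The only departures are cosmetic: you prove the log-determinant equality via the Weinstein--Aronszajn identity where the paper uses the Schur determinant lemma with explicit block computations, and you spell out the stationary-point-selection caveat (resolved via strict convexity/concavity and equivariant initialisation) that the paper leaves implicit.
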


Briefly, the result follows because the Jacobian entries corresponding to weights $\mathfrak{c} v''$ scale with $\mathfrak{c}^{-1}$.
This is illustrated in \cref{fig:contour} (leftmost plot), where
as we move further from the origin, weight settings of equal likelihood $L(g(v, \cdot))$ move further from each other. Given an un-scaled reference solution $(w_\star,  A_\star)$,
as we vary $\mathfrak{c}$ in $\mathfrak{c} \tilde v''$, the linear model weights and prior precisions 
that simultaneously optimise $\c{L}_{h,  A_{\star}}$ and~$\c{G}_{w_{\star}}$
 scale as $(w_\star', \mathfrak{c}w_\star'')$, and $( A_{\star}', \mathfrak{c}^{-2}  A''_\star)$ respectively. These scalings cancel each other in the predictive posterior, which remains invariant.
When $ A''$ can not change independently of $ A'$, this cancellation does not occur. An empirical demonstration is provided in \cref{fig:k_scan}. We now present the full proof.

\begin{derivation} \textbf{Proof of \cref{proposition:unique-posterior}} \\
    \paragraph{Notation} Consider a linearisation point $\tilde v' + \tilde v''$, with corresponding linearised function $h$, basis function $J$ and Hessian $M$. For $\mathfrak{c}>0$, $h_\mathfrak{c},J_\mathfrak{c},M_\mathfrak{c}$ denote these quantities corresponding to a linearisation point $\tilde v_{\mathfrak{c}} \coloneqq \tilde v' + \mathfrak{c} \tilde v''$. Moreover, we write
\begin{equation*}
    J = \begin{bmatrix} J' \\ J''
    \end{bmatrix} 
    \spaced{and} 
    M = \begin{bmatrix}
        M' & X^T \\
        X & M''
    \end{bmatrix}
\end{equation*}
for the sub-entries of $J$ and $M$ with dependencies on $v'$ and $v''$ respectively, with $X$ containing cross-terms. We refer to sub-entries of $J_\mathfrak{c}$ and $M_\mathfrak{c}$ in the same manner. 

With notation in place, we have the following scaling result:

\begin{lemma}\label{lemma:scaling}
Let $g$ be a normalised network and consider two alternative linearisation points $\tilde v = \tilde v' + \tilde v''$ and $\tilde v_\mathfrak{c} = \tilde v' + \mathfrak{c} \tilde v''$ for some $\mathfrak{c}>0$. Then,
\begin{equation*}
    \corr{\begin{bmatrix} J'_\mathfrak{c} \\ \mathfrak{c} J''_\mathfrak{c}
    \end{bmatrix} = J
    \spaced{and}
    \begin{bmatrix}
        M'_\mathfrak{c} & \mathfrak{c} X_\mathfrak{c}^T \\
        \mathfrak{c}X_\mathfrak{c} & \mathfrak{c}^2 M''_\mathfrak{c}
    \end{bmatrix} = M.}
\end{equation*}
Moreover, for all $w \in  \c{V}$, $h_\mathfrak{c}(w' + \mathfrak{c}w'', \cdot) = h(w' + w'', \cdot)$.
\end{lemma}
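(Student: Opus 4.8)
The plan is to derive every identity from the single scale-invariance relation of \cref{def:normalised_networks}, namely $g(v' + \mathfrak{c} v'', x) = g(v' + v'', x)$ for all $\mathfrak{c} > 0$, by differentiating it in the two parameter blocks and then propagating the resulting scalings through the (exact) Hessian and the affine map $h_\mathfrak{c}$. The only inputs are this invariance, differentiability of $g$, and the affine form of $h$; everything else is bookkeeping.

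First I would obtain the Jacobian scalings. Differentiating the invariance with respect to the non-normalised block $v'$ (holding $\mathfrak{c}$ and $v''$ fixed) gives $\partial_{v'} g(v' + \mathfrak{c} v'', x) = \partial_{v'} g(v' + v'', x)$; evaluating at $(\tilde v', \tilde v'')$ yields $J'_\mathfrak{c} = J'$. Differentiating instead with respect to the normalised block $v''$, the chain rule applied to the argument $v' + \mathfrak{c} v''$ produces a factor of $\mathfrak{c}$, so $\mathfrak{c}\, \partial_{v''} g(v' + \mathfrak{c} v'', x) = \partial_{v''} g(v' + v'', x)$, which at $(\tilde v', \tilde v'')$ reads $\mathfrak{c} J''_\mathfrak{c} = J''$. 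This is exactly the first claimed identity. Placing this factor of $\mathfrak{c}$ correctly is the one spot that needs care, and I regard it as the crux of the lemma.

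Next I would handle the Hessian. Because $h$ is affine in $w$, the map $w \mapsto L(h(w,\cdot))$ has second derivative $M = \sum_i J(x_i)^T B_i J(x_i)$ \emph{exactly} (with no neglected term), where $B_i = \partial^2_{\hat y} \ell(y_i, \hat y_i)$ is evaluated at $\hat y_i = h(\tilde v, x_i) = g(\tilde v, x_i)$. The key observation is that $g(\tilde v_\mathfrak{c}, x_i) = g(\tilde v, x_i)$ is unchanged by the rescaling, so the output-space curvatures $B_i$ coincide for the two linearisation points; hence $M_\mathfrak{c} = \sum_i J_\mathfrak{c}(x_i)^T B_i J_\mathfrak{c}(x_i)$ with the \emph{same} $B_i$. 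Substituting the block decomposition $J_\mathfrak{c} = [J'_\mathfrak{c}, J''_\mathfrak{c}] = [J', \mathfrak{c}^{-1} J'']$ into each block gives $M'_\mathfrak{c} = M'$, $X_\mathfrak{c} = \mathfrak{c}^{-1} X$ and $M''_\mathfrak{c} = \mathfrak{c}^{-2} M''$, which is precisely the claimed matrix identity after premultiplying the blocks by the stated powers of $\mathfrak{c}$.

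Finally, for the function identity I would substitute directly. Writing $w - \tilde v_\mathfrak{c} = (w' - \tilde v') + \mathfrak{c}(w'' - \tilde v'')$ and using $J'_\mathfrak{c} = J'$, $\mathfrak{c} J''_\mathfrak{c} = J''$ collapses $J_\mathfrak{c}(x)\bigl((w' + \mathfrak{c} w'') - \tilde v_\mathfrak{c}\bigr)$ to $J(x)\bigl((w' + w'') - \tilde v\bigr)$; combined with $g(\tilde v_\mathfrak{c}, x) = g(\tilde v, x)$ this yields $h_\mathfrak{c}(w' + \mathfrak{c} w'', x) = h(w' + w'', x)$ for arbitrary $w = w' + w''$, completing the proof.
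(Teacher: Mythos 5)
Your proof is correct and follows essentially the same route as the paper's: derive the block scalings $J'_\mathfrak{c}=J'$ and $\mathfrak{c}J''_\mathfrak{c}=J''$ from the defining invariance (your chain-rule differentiation is just a compact form of the paper's limit-definition argument), use affineness of $h$ so the Hessian reduces to the GGN with output-space curvatures $B_i$ that coincide because $g(\tilde v_\mathfrak{c},x_i)=g(\tilde v,x_i)$, and substitute the scalings into $h_\mathfrak{c}$. The only cosmetic difference is ordering: the paper proves the $h_\mathfrak{c}$ identity before the Hessian claim and invokes it there, whereas you appeal directly to the invariance of $g$ at the linearisation points, which is equally valid.
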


\begin{proof} 
First, we consider $J'_\mathfrak{c}$ and $J''_\mathfrak{c}$. For $J'_\mathfrak{c}$, take any $\tilde v' \in  \c{V}'$ and consider the directional derivative $D_{v'} g(\tilde v' + \mathfrak{c} \tilde v'')$. From the limit definition,
\begin{align*}
    D_{v'} g(\tilde v' + \mathfrak{c} \tilde v'', \cdot) 
    &= \lim_{\delta \downarrow 0} \frac{1}{\delta}\left[g ((\tilde v' + \delta v' + \mathfrak{c} \tilde v'' ), \cdot) - g ((\tilde v' + \mathfrak{c} \tilde v''), \cdot) \right] \\
    &= \lim_{\delta \downarrow 0} \frac{1}{\delta}\left[g ((\tilde v' + \delta v' + \tilde v'' ), \cdot) - g ((\tilde v' + \tilde v''), \cdot) \right] \\
    &= D_{v'} g(\tilde v' + \tilde v'', \cdot).
\end{align*}

From the Jacobian-product definition, we have $J'_\mathfrak{c} \cdot v' = J' \cdot v'$. Since $v' \in  \c{V}'$ was arbitrary and we are working on a finite-dimensional Euclidean space, this shows $J'_\mathfrak{c} = J'$. For $J''_\mathfrak{c}$, consider $D_{v''} g(\tilde v' + \mathfrak{c} \tilde v'')$ for $v'' \in  \c{V}''$ arbitrary. We have
\begin{align*}
    D_{v''} g(\tilde v' + \mathfrak{c} \tilde v'', \cdot) 
    &= \lim_{\delta \downarrow 0} \frac{1}{\delta}\left[g ((\tilde v' + \mathfrak{c} \tilde v'' + \delta v'' ), \cdot) - g ((\tilde v' + \mathfrak{c} \tilde v''), \cdot) \right] \\
    &= \lim_{\delta \downarrow 0} \frac{1}{\delta}\left[g (\tilde v' + \tilde v'' + \frac{\delta}{\mathfrak{c}} v''), \cdot) - g ((\tilde v' + \tilde v''), \cdot) \right] \\
    &= \frac{1}{\mathfrak{c}}\lim_{\delta' \downarrow 0} \frac{1}{\delta'}\left[g ((\tilde v' + \tilde v'' + \delta' v''), \cdot) - g ((\tilde v' + \tilde v''), \cdot) \right] \\
    &= \frac{1}{\mathfrak{c}}D_{v''} g(\tilde v' + \tilde v'', \cdot).
\end{align*}
Repeating the same argument as for $J'_\mathfrak{c}$, we obtain $J''_\mathfrak{c} = \frac{1}{\mathfrak{c}} J''$.

Now, we look at the scaling of $h_\mathfrak{c}$. By definition, using that $g$ is normalised and the previously derived scaling for $J_\mathfrak{c}$, 
\begin{align*}
    h_\mathfrak{c}(w' + \mathfrak{c}w'', \cdot) &= g(\tilde v' + \mathfrak{c} \tilde v'', \cdot) + J'_\mathfrak{c} (w' -\tilde v') + J''_\mathfrak{c} (\mathfrak{c}w'' - \mathfrak{c} \tilde v'')\\
    &= g(\tilde v' + \tilde v'', \cdot) + J'(w' -\tilde v') + J''(w'' -\tilde v'') \\
    &= h(w' + w'', \cdot),
\end{align*}
which is the claimed result.

For $M_\mathfrak{c}$, we examine it entry-wise. We have,
\begin{align*}
    [M_\mathfrak{c}]_{mn} 
    &= \partial_{w_m}\partial_{w_n} [L(h_\mathfrak{c}(w, \cdot))](\tilde v_\mathfrak{c}) \\
    &= \sum_i \partial_{w_m} [h_\mathfrak{c}(w, x_i)](\tilde v_\mathfrak{c}) \cdot \partial^2_{\hat{y_i}} [\ell(\hat{y}_i, y_i)](h_\mathfrak{c}(\tilde v_\mathfrak{c}, x_i)) \cdot \partial_{w_n} [h_\mathfrak{c}(w, x_i)](\tilde v_\mathfrak{c})
    \\ &\quad\quad\quad+ \sum_j \partial_{\hat{y_i}} [\ell(\hat{y}_i, y_i)](h_\mathfrak{c}(\tilde v_\mathfrak{c}, x_i)) \cdot \partial_{w_m} \partial_{w_n} [h_\mathfrak{c}(w, x_j)](\tilde v_\mathfrak{c}).
\end{align*}
Now since $h_\mathfrak{c}$ is affine, it has no curvature and thus $\partial_{w_m} \partial_{w_n} h_\mathfrak{c}(w, x)$ is identically zero for all $w\in\c{V}$ and $x \in \c{X}$. With that, the second term in the sum vanishes. For the first sum, consider the \emph{middle term}, the curvature of the negative log-likelihood function, and use $h_\mathfrak{c}(w' + \mathfrak{c}w'', \cdot) = h(w' + w'', \cdot)$ to see that it is invariant to $\mathfrak{c}$. Finally, note that $\partial_{w_m} h_\mathfrak{c}$ and $\partial_{w_n} h_\mathfrak{c}$ are entries of $J_\mathfrak{c}$ and inherit scaling from therein. Specifically, if both $w_m$ and $w_n$ belong to $ \c{V}''$, we obtain $\mathfrak{c}^2$ scaling; if just one belongs to $ \c{V}''$, we get $\mathfrak{c}$ scaling, and otherwise we obtain constant scaling. This completes the result for $M_\mathfrak{c}$.
\end{proof}

We now turn to how the optimal weights and regularisation parameters scale with the parameter $\mathfrak{c}$.

\begin{lemma}\label{lemma:stationary-points}
For $\mathfrak{c} > 0$, let $h_\mathfrak{c}$ be a linearisation of a normalised network $g$ about $\tilde v'+\mathfrak{c} \tilde v''$. Then $(w_\mathfrak{c},  A_\mathfrak{c})$ are an optima of the resulting objectives $( \c{L}_{h_\mathfrak{c}, A_\mathfrak{c}},  \c{G}_{w_\mathfrak{c}})$ respectively if and only if they are of the form 
\begin{equation*}
    (w_\mathfrak{c},  A_\mathfrak{c}) = (w'_\star + \mathfrak{c}w''_\star,  \: A'_\star + \mathfrak{c}^{-2} A''_\star)
\end{equation*}
where $(w_\star,  A_\star)$ are optima of $( \c{L}_{h, A_\star}, \c{G}_{w_\star})$ with $h$ a linearisation of $g$ about $\tilde v' + \tilde v''$.
\end{lemma}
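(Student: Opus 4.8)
The plan is to exploit the scaling relations established in \cref{lemma:scaling} to build a pair of linear isomorphisms that identify the scaled optimisation problem (linearised about $\tilde v' + \mathfrak{c}\tilde v''$) with the unscaled one (about $\tilde v' + \tilde v''$), and under which the values of \emph{both} objectives are invariant. Concretely, I would introduce the weight map $\sigma_{\mathfrak{c}}(w) = w' + \mathfrak{c}w''$ and the precision map $\tau_{\mathfrak{c}}(\diag(A', A'')) = \diag(A', \mathfrak{c}^{-2}A'')$, which are bijections on $\c{V}$ and on the positive block-diagonal precisions, respectively. The claimed form $(w'_\star + \mathfrak{c}w''_\star,\, A'_\star + \mathfrak{c}^{-2}A''_\star)$ is precisely $(\sigma_{\mathfrak{c}}(w_\star), \tau_{\mathfrak{c}}(A_\star))$, so the lemma reduces to showing that $\sigma_{\mathfrak{c}}, \tau_{\mathfrak{c}}$ carry joint stationary points of the unscaled pair $(\c{L}_{h, A}, \c{G}_{w})$ to those of the scaled pair bijectively.

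First I would establish value-preservation of the linearised loss: for all $w$ and all positive block-diagonal $A = \diag(A', A'')$, that $\c{L}_{h_{\mathfrak{c}}, \tau_{\mathfrak{c}}(A)}(\sigma_{\mathfrak{c}}(w)) = \c{L}_{h, A}(w)$. The data-fit term is unchanged because \cref{lemma:scaling} gives $h_{\mathfrak{c}}(w' + \mathfrak{c}w'', \cdot) = h(w' + w'', \cdot)$, and the regulariser is unchanged because $\|\mathfrak{c}w''\|^2_{\mathfrak{c}^{-2}A''} = \|w''\|^2_{A''}$ while the $w'$ block is untouched. Since $\sigma_{\mathfrak{c}}$ is a linear bijection, this identity immediately matches the argmins: $w$ minimises $\c{L}_{h, A}$ iff $\sigma_{\mathfrak{c}}(w)$ minimises $\c{L}_{h_{\mathfrak{c}}, \tau_{\mathfrak{c}}(A)}$, both being unique by strict convexity.

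Next I would prove value-preservation of the evidence: $\c{G}^{(\mathfrak{c})}_{\sigma_{\mathfrak{c}}(w)}(\tau_{\mathfrak{c}}(A)) = \c{G}_{w}(A)$, where $\c{G}^{(\mathfrak{c})}$ denotes the evidence formed with the scaled Hessian $M_{\mathfrak{c}}$. The quadratic term follows from the same cancellation $\|\sigma_{\mathfrak{c}}(w)\|^2_{\tau_{\mathfrak{c}}(A)} = \|w\|^2_A$. The remaining $\logdet(\tau_{\mathfrak{c}}(A)^{-1}M_{\mathfrak{c}} + I)$ term is the delicate step: using the block scalings $M'_{\mathfrak{c}} = M'$, $X_{\mathfrak{c}} = \mathfrak{c}^{-1}X$, $M''_{\mathfrak{c}} = \mathfrak{c}^{-2}M''$ from \cref{lemma:scaling} together with $\tau_{\mathfrak{c}}(A)^{-1} = \diag((A')^{-1}, \mathfrak{c}^2(A'')^{-1})$, one checks that
\[
  \tau_{\mathfrak{c}}(A)^{-1}M_{\mathfrak{c}} = S\,(A^{-1}M)\,S^{-1} \spaced{with} S = \diag(I, \mathfrak{c}I),
\]
so the two matrices are similar and their shifted determinants coincide. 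This is the only place where the precise powers of $\mathfrak{c}$ appearing in $\sigma_{\mathfrak{c}}$ and $\tau_{\mathfrak{c}}$ are pinned down, and I expect it to be the main obstacle: forcing the off-diagonal cross-terms $X$ to transform consistently is what fixes the exponent choices, and it is easy to mismatch the signs of the powers here.

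Finally, I would combine the two identities. Because $\sigma_{\mathfrak{c}}$ and $\tau_{\mathfrak{c}}$ are linear isomorphisms under which the objective values are invariant, the chain rule shows they map stationary points to stationary points bijectively. Applied to the coupled conditions $w_\star \in \argmin_w \c{L}_{h, A_\star}(w)$ and $A_\star \in \argmax_A \c{G}_{w_\star}(A)$, this yields that $(w_{\mathfrak{c}}, A_{\mathfrak{c}})$ solves the scaled stationarity conditions if and only if $(\sigma_{\mathfrak{c}}^{-1}(w_{\mathfrak{c}}), \tau_{\mathfrak{c}}^{-1}(A_{\mathfrak{c}})) = (w_\star, A_\star)$ solves the unscaled ones, which is exactly the claimed characterisation. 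Strict convexity in $w$ and concavity in $A$ guarantee that these stationary points are the genuine optima referenced in the statement.
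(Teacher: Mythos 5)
Your proof is correct, and its overall skeleton matches the paper's: establish that the loss and the evidence are both invariant under the paired scaling maps $w'+w'' \mapsto w'+\mathfrak{c}w''$ and $A'+A'' \mapsto A'+\mathfrak{c}^{-2}A''$, then conclude by noting these maps are bijections, so optima correspond to optima. Where you genuinely diverge is the technical heart of the argument, the log-determinant identity. The paper expands $\log\det(A^{-1}M+I)$ as $\log\det(M+A)-\log\det A$ and applies the Schur determinant lemma to each piece, checking that the Schur complement is $\mathfrak{c}$-independent and that the $(\mathfrak{c}^{-2})^{d''}$ prefactors in numerator and denominator cancel. You instead exhibit the similarity $\tau_{\mathfrak{c}}(A)^{-1}M_{\mathfrak{c}} = S\,(A^{-1}M)\,S^{-1}$ with $S=\diag(I,\mathfrak{c}I)$, which I have verified against the block scalings $M'_{\mathfrak{c}}=M'$, $X_{\mathfrak{c}}=\mathfrak{c}^{-1}X$, $M''_{\mathfrak{c}}=\mathfrak{c}^{-2}M''$ of \cref{lemma:scaling}; then $\det(\tau_{\mathfrak{c}}(A)^{-1}M_{\mathfrak{c}}+I)=\det\bigl(S(A^{-1}M+I)S^{-1}\bigr)=\det(A^{-1}M+I)$ in one line. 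Your route is cleaner: it works directly with the quantity appearing in $\c{G}$, avoids block factorisations and explicit cancellation of $\mathfrak{c}$-powers, and makes transparent why the exponents in the two scaling maps must be what they are (they are forced by requiring the off-diagonal blocks of the similar matrices to agree). The paper's Schur route, by contrast, produces the intermediate fact that $\det(A_{\mathfrak{c}})$ and $\det(M_{\mathfrak{c}}+A_{\mathfrak{c}})$ each scale by $(\mathfrak{c}^{-2})^{d''}$, which is mildly informative but not needed for the lemma. One stylistic remark: your closing appeal to the chain rule and to strict convexity/concavity is superfluous — value-preservation under a bijection already carries argmins to argmins and argmaxes to argmaxes, which is exactly how the paper concludes.
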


\begin{proof} To prove the result, we will show that $ \c{L}_{h_\mathfrak{c},  A_\mathfrak{c}}(w'+\mathfrak{c}w'') =  \c{L}_{h, A_\star}(w' + w'')$ for all $w' + w'' \in  \c{V}$ and $ \c{G}_{w_\mathfrak{c}}( A' + \mathfrak{c}^{-2} A'') =  \c{G}_{w_\star}( A' +  A'')$ for all strictly diagonal positive matrices $ A', A''$ of compatible sizes. Then, the result follows by noting that for $\mathfrak{c}>0$ fixed, the mappings $w' + w'' \mapsto w' + \mathfrak{c}w''$ and $ A' +  A'' \mapsto  A' + \mathfrak{c}^{-2} A''$ are bijections.

Consider the objective $ \c{L}_{h_\mathfrak{c}, A_\mathfrak{c}}$. By definition, $ \c{L}_{h_\mathfrak{c}, A_\mathfrak{c}}(w' + \mathfrak{c}w'')$ is given by
\begin{align*}
    L(h_\mathfrak{c}(w' + &\mathfrak{c}w'', \cdot)) + \|w'\|^2_{ A'_\mathfrak{c}} + \|\mathfrak{c}w''\|^2_{ A''_\mathfrak{c}} \\
    &= L(h(w' + w'', \cdot)) + \|w'\|^2_{ A'_\star} + \|w''\|^2_{ A''_\star},
\end{align*}
where the equality follows by \cref{lemma:scaling} and the definition of $ A_\mathfrak{c}$. The bottom expression is equal to $ \c{L}_{h, A_\star}(w' + w'')$ proving the equality for the loss term.

Consider the objective $ \c{G}_{w_\mathfrak{c}}$. For our claim, we need to show that
\begin{align*}
    \|w_\mathfrak{c}\|^2_{ A' + \mathfrak{c}^{-2} A''} &+ \log \frac{\det(M_\mathfrak{c} +  A' + \mathfrak{c}^{-2} A'')}{\det( A' + \mathfrak{c}^{-2} A'')} \\ 
    &= \|w_\star\|^2_{ A' +  A''} + \log \frac{\det(M +  A' +  A'')}{\det( A' +  A'')}.
\end{align*}
The equality $\|w_\mathfrak{c}\|_{ A' + \mathfrak{c}^{-2} A''} = \|w_\star\|_{ A' +  A''}$ holds trivially. We now show equality of the determinants. Let $d',d''$ denote the dimensions of $ \c{V}'$ and $ \c{V}''$ respectively. By the Schur determinant lemma, the numerator $\det(M_\mathfrak{c} +  A' + \mathfrak{c}^{-2} A'')$ is equal to
\begin{align*}
    \det(M''_\mathfrak{c} + \frac{ A''_{d':}}{\mathfrak{c}^2})\,\det(M'_\mathfrak{c} +  A'_{:d'} - X(M''_\mathfrak{c} + \frac{ A''_{d':}}{\mathfrak{c}^2})^{-1}X^T),
\end{align*}
where $ A'_{:d'} = [ A'_{ij} \colon i,j \leq d']$ and $ A''_{d':}$ is defined similarly. Using \cref{lemma:scaling}, $\det(M''_\mathfrak{c} + \frac{ A''_{d':}}{\mathfrak{c}^2}) = (\frac{1}{\mathfrak{c}^2})^{d''} \det(M'' +  A''_{d':})$. Expanding the Schur complement term and using \cref{lemma:scaling} shows that it is independent of $\mathfrak{c}$. In turn, the denominator is given by
\begin{align*}
    \det( A' + \mathfrak{c}^{-2} A'') &= (\frac{1}{\mathfrak{c}^2})^{d''}\det( A'_{:d'})\,\det( A''_{d':}) \\
    &= (\frac{1}{\mathfrak{c}^2})^{d''} \det( A' +  A''),
\end{align*}
The $(\frac{1}{\mathfrak{c}^2})^{d''}$ terms in the numerator and denominator cancel, yielding the claim.
\end{proof}

\begin{proof}[Proof of \cref{proposition:unique-posterior}] 
Using \cref{lemma:scaling} and \cref{lemma:stationary-points} and the notation defined therein,
\begin{equation*}
    \|J\|^2_{(M +  A_\star)^{-1}} = \|J_\mathfrak{c}\|^2_{(M_\mathfrak{c} +  A_\mathfrak{c})^{-1}}.
\end{equation*}
Thus the errorbars induced by linearising about $\tilde v'+\tilde v''$ and $\tilde v'+\mathfrak{c} \tilde v''$ are equal for all $\mathfrak{c} > 0$.
\end{proof}

We note that \cref{proposition:unique-posterior} holds even when $ A_{\star}$ is found by evaluating the Hessian at the optima of the linear model loss instead of $w_{\star}$, instead of linearisation point $\tilde v$---the latter is our suggestion in \cref{sec:choice_of_posterior_mode}. This is because $w''_{\star}$ scales with $\tilde v''$ 
(\cref{lemma:stationary-points}). 

\end{derivation}

By induction, \cref{proposition:unique-posterior} applies to networks with multiple normalisation layers. Note that the proof of the results required for \cref{proposition:unique-posterior} depends crucially on being able to scale $ A''_\mathfrak{c}$ with $\mathfrak{c}$ while keeping $ A'_\mathfrak{c}$ fixed. This motivates our recommendation:
\begin{recommendation}\label{rec:factorised_prior}
When using the linearised Laplace method with a normalised network, use an independent regulariser for each normalised parameter group present.
\end{recommendation}
An example of a suitable regulariser for a network with normalised parameter groups $v^{(1)}, v^{(2)}, \dotsc, v^{(L)}$ and non-normalised parameters $v'$ would be
\begin{equation*}
     a'\|w'\|^2 +  a_1 \| w^{(1)}\|^2 +  a_2 \| w^{(2)}\|^2 + \dotsc +  a_L \| w^{(L)}\|^2
\end{equation*}
for independent parameters $ a',  a_1,  a_2, \dotsc,  a_L > 0$ and $w^{(1)}, w^{(2)}, \dotsc, w^{(L)}$ referring to the linear model weights corresponding to the NN weights in each normalised parameter group. Usually, this involves setting independent priors for each layer of the network. 

\subsection{The diagonal g-prior}\label{subsec:g-prior}

We now present a different class of diagonal prior which exploits the scaling of the likelihood curvature with the linearisation point (\cref{lemma:scaling}) to resolve the issue of scale indeterminacy in the predictive posterior.

\begin{proposition}\label{proposition:g-posterior}
For normalised neural networks, using a regulariser of the form $\|w\|^2_A$ with 
\begin{gather*}
    A = a \diag{M}
\end{gather*}
for $a \in \R_+$ and \corr{$M = \partial^2_w [L(h(w, \cdot))](\tilde v)$}, the predictive posterior $h(w,\cdot)$, $w \sim Q$ induced by a linearisation point $\tilde v'+\mathfrak{c} \tilde v''$ is independent of the choice of $\mathfrak{c}>0$.
\end{proposition}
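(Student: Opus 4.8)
The plan is to reduce the statement to the $\mathfrak{c}$-invariance of the predictive \emph{covariance} and then exploit the fact that the diagonal of the curvature matrix $M$ scales in precisely the way required to reproduce the block-separated prior of \cref{proposition:unique-posterior}, but using only a single scalar $a$. First I would dispose of the mean: by \cref{def:normalised_networks} the network output $g(\tilde v' + \mathfrak{c}\tilde v'', \cdot)$ does not depend on $\mathfrak{c}$, so the predictive mean in \cref{eq:linearised_predictive_c5} is automatically $\mathfrak{c}$-invariant. It then remains to show that the covariance $J_\mathfrak{c}(x')(M_\mathfrak{c} + A_\mathfrak{c})^{-1}J_\mathfrak{c}(x')^T$, where $A_\mathfrak{c} = a\,\diag M_\mathfrak{c}$ is the g-prior built at the linearisation point $\tilde v' + \mathfrak{c}\tilde v''$, equals the corresponding quantity built at $\tilde v' + \tilde v''$.

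The central device is the diagonal scaling matrix
\begin{equation*}
S_\mathfrak{c} = \begin{bmatrix} I_{d'} & 0 \\ 0 & \mathfrak{c}\,I_{d''}\end{bmatrix},
\end{equation*}
where $d', d''$ are the dimensions of $\c{V}'$ and $\c{V}''$. Reading off the scalings supplied by \cref{lemma:scaling}, namely $J'_\mathfrak{c} = J'$, $J''_\mathfrak{c} = \mathfrak{c}^{-1}J''$, $M'_\mathfrak{c} = M'$, $X_\mathfrak{c} = \mathfrak{c}^{-1}X$ and $M''_\mathfrak{c} = \mathfrak{c}^{-2}M''$, I would verify the two identities $J(x') = J_\mathfrak{c}(x')\,S_\mathfrak{c}$ and $M = S_\mathfrak{c}\,M_\mathfrak{c}\,S_\mathfrak{c}$ by direct block multiplication.

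The key step, and the place where the g-prior structure does its work, is the observation that because $S_\mathfrak{c}$ is diagonal, extracting the diagonal commutes with conjugation: $\diag(S_\mathfrak{c}M_\mathfrak{c}S_\mathfrak{c}) = S_\mathfrak{c}\,\diag(M_\mathfrak{c})\,S_\mathfrak{c}$. Hence $\diag M = S_\mathfrak{c}\,\diag(M_\mathfrak{c})\,S_\mathfrak{c}$ and, multiplying by $a$, $A = S_\mathfrak{c}\,A_\mathfrak{c}\,S_\mathfrak{c}$. This is exactly the $\mathfrak{c}^{-2}$-scaling of the normalised block that \cref{proposition:unique-posterior} had to impose by hand via independent block precisions; here it is delivered for free by $\diag M$, which is why a single scalar $a$ suffices. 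Combining the pieces gives $M + A = S_\mathfrak{c}(M_\mathfrak{c} + A_\mathfrak{c})S_\mathfrak{c}$, and inverting yields $(M+A)^{-1} = S_\mathfrak{c}^{-1}(M_\mathfrak{c}+A_\mathfrak{c})^{-1}S_\mathfrak{c}^{-1}$. Substituting into the quadratic form, the factors $S_\mathfrak{c}$ coming from the Jacobians cancel the factors $S_\mathfrak{c}^{-1}$ coming from the inverse:
\begin{equation*}
J(x')(M+A)^{-1}J(x')^T = J_\mathfrak{c}(x')(M_\mathfrak{c}+A_\mathfrak{c})^{-1}J_\mathfrak{c}(x')^T,
\end{equation*}
which is the claim.

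I would emphasise that the entire argument holds for each fixed $a$, so, unlike the layerwise proof, no appeal to a joint optimum of $(\c{L}_{h,A},\c{G}_w)$ is required; the invariance is a property of the whole one-parameter prior family. The only genuine care needed is in bookkeeping the per-block scalings and in noting that positivity of the entries of $\diag M$ is assumed so that $A$ is a valid precision. The single subtlety---rather than a real obstacle---is the commutation of $\diag(\cdot)$ with conjugation by the diagonal $S_\mathfrak{c}$, which is exactly what makes one scalar $a$ enough to cancel the scale indeterminacy.
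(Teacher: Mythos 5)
Your proof is correct and takes essentially the same route as the paper's: both rest on the block scalings of \cref{lemma:scaling}, the observation that $a\,\diag{M_{\mathfrak{c}}}$ automatically carries the $\mathfrak{c}^{-2}$ factor on the normalised block, and the resulting cancellation inside the quadratic form $J(x')(M+A)^{-1}J(x')^T$. Your explicit conjugation matrix $S_{\mathfrak{c}}$ is simply a cleaner, self-contained packaging of the cancellation that the paper imports from its proof of \cref{proposition:unique-posterior}, and your remark that no appeal to the joint optimum $(w_\star, A_\star)$ is needed matches the paper's surrounding observation that the g-prior invariance holds for every fixed $a \in \R_+$.
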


\begin{derivation} \textbf{Proof of \cref{proposition:g-posterior}} \\ 

We adopt the notation used in \cref{lemma:scaling} and \cref{lemma:stationary-points}.

\begin{proof} 
Let $A = a \diag{M}$ for a model with linearisation point $\tilde v' + \tilde v''$.  By \cref{lemma:scaling}, for a model with linearisation point $\tilde v_{\mathfrak{c}} \coloneqq \tilde v' + \mathfrak{c} \tilde v''$, with $\mathfrak{c}>0$, the corresponding regulariser is 
\begin{gather*}
    A_{\mathfrak{c}} = a \begin{bmatrix}
        \diag M' & 0 \\
        0 & \mathfrak{c}^{-2} \diag  M''
        \end{bmatrix} = 
        a \begin{bmatrix}
        \diag M_{\mathfrak{c}}' & 0 \\
        0 & \diag M_{\mathfrak{c}}''
        \end{bmatrix}.
\end{gather*}
With that, \cref{lemma:scaling} and \cref{lemma:stationary-points}, we have
\begin{equation*}
  \|J\|^2_{(M +  A)^{-1}} = \|J_{\mathfrak{c}}\|^2_{(M_{\mathfrak{c}} +  A_{\mathfrak{c}})^{-1}}.
\end{equation*}
Thus the errorbars induced by linearising about $\tilde v'+\tilde v''$ and $\tilde v'+\mathfrak{c} \tilde v''$ are equal for all $\mathfrak{c} > 0$.
\end{proof}
\end{derivation}

This is a diagonal version of what is known in the literature as the g-prior \citep{zellner_1986} or scale-invariant prior \citep{minka2000bayesian}. It has the advantage over the layer-wise prior of only having one free parameter to learn via the evidence. Additionally, unlike the layerwise prior, the posterior corresponding to the g-prior is invariant to the scale of the linearisation point for any value of the free parameter $a\in \R_+$, not just for the one that maximises the evidence $\c{G}_{w_\star}$. A practical implementation must ensure that no entries of $\diag M$ are 0 to preserve positive definiteness in cases where the log-likelihood function is not strictly convex. A further advantage of the diagonal g-prior is that it normalises the scales of the Jacobian entries corresponding to different NN weights, as illustrated in \cref{fig:jac_scales}. For this reason, the diagonal g-prior may, in general, improve the conditioning of the linearised model's loss $\c{L}_{h,A}$.
Indeed, this prior is intimately related to the Jacobi preconditioner.

\begin{figure}[htb]
    \centering
   \includegraphics[width=0.95\textwidth]{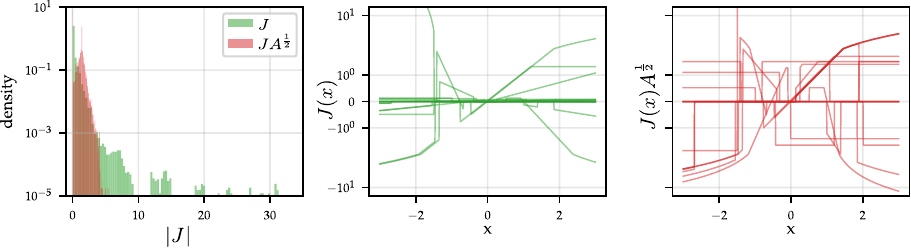}
    \caption{Left: Histogram of the absolute value of Jacobian entries, training data, across model weights and training datapoints. We use the NN depicted in \cref{fig:lin_vs_nn}, with and without g-prior scaling. Middle: 15 randomly chosen Jacobian basis functions. Right:  Same functions with g-prior scaling. }
    \label{fig:jac_scales}
\end{figure}

\begin{remark} \textbf{The history of the g-prior} \\
    The g-prior was originally introduced by \cite{zellner_1986}, It  consists of a centred Gaussian with covariance matching the inverse of the Fisher information matrix. Resultantly, the g-prior ensures inferences are independent of the units of measurement of the covariates \citep{minka2000bayesian}. Since then, it has extensively used in the context of model selection for generalised linear models \citep{Liang2008mixturesg,Sabanes2011hyperg,baragatti2012study}. In the large-scale setting, we have overcome the computational intractability of the Fisher by diagonalising the g-prior while preserving its scale-invariance property.
\end{remark}

\section{Additional observations and discussion}

The above analysis leads to a number of observations and further insights into linearised Laplace. The reader should note that the $\cdot$ notation will be doing some heavy lifting in terms of denoting Jacobian vector products taken such that their dimensions are compatible.

\subsection{Networks with a dense final layer}\label{sec:dense-final-layer}

We look at networks with a dense linear final layer, a (very general) special case. Letting  $\mathfrak{l}$ denote the number of non last layer weights such that $v_{:\mathfrak{l}}$ is the vector of all network parameters but those of the last layer, and $v_{\mathfrak{l}:}$ are the last layer weights, we deal with models of the form
\begin{equation}
    g(v, \cdot) =   \varphi(v_{:\mathfrak{l}}, \cdot) \cdot v_{\mathfrak{l}:},
\end{equation}
where $\varphi(v_{:\mathfrak{l}}, \cdot)$ is the output of the penultimate layer. The derivative of the neural network with respect to the dense final layer weights is
\begin{equation*}
    \partial_{v_{\mathfrak{l}:}} g(v, \cdot) = \varphi(v_{:\mathfrak{l}}, \cdot),
\end{equation*}
and thus the final layer activations $\varphi(v_{:\mathfrak{l}}, \cdot)$ are contained within the Jacobian matrix. 
Consequently, the neural network output $g(v, \cdot)$ is always contained in the linear span of the Jacobian basis.
This motivates \cref{rec:linear-weights}, where we argue for the use of $\tilde v$ for network linearisation, as it allows for an easy linear model error-bars interpretation for the resulting uncertainty. 

Also, the form of the linearised model $h$ simplifies in the dense final layer case when the network is fully normalised. Here $d''$, the dimension of $\c{V}''$, matches $\mathfrak{l}$, and thus we can write $g(v' + v'', \cdot) =   \varphi(v_{:\mathfrak{l}}'', \cdot) \cdot v_{\mathfrak{l}:}'$. The derivative of $\varphi$ in the direction of the linearisation point $\tilde v''_{:\mathfrak{l}}$ is zero
\begin{equation}\label{eq:directional-deriv-zero}
    \partial_{v''_{:\mathfrak{l}}} \varphi(\tilde v''_{:\mathfrak{l}}, \cdot) \cdot \tilde v''_{:\mathfrak{l}} = 0.
\end{equation}
Thus cancellation occurs in \cref{eq:h-def_c5}, as
\begin{align}\label{eq:simple_linear_model}
    h(w, \cdot) &=\varphi(v_{:\mathfrak{l}}, \cdot) \cdot v_{\mathfrak{l}:} + \partial_v g (\tilde v, \cdot) \cdot ( w - \tilde v ) \notag \\
    &=\varphi(v_{:\mathfrak{l}}, \cdot) \cdot v_{\mathfrak{l}:} + \partial_v g (\tilde v, \cdot) \cdot w - \partial_{v_{:\mathfrak{l}}} g(\tilde v, \cdot) \cdot v_{:\mathfrak{l}} - \partial_{v_{\mathfrak{l}:}} g(\tilde v, \cdot) \cdot v_{\mathfrak{l}:}  \notag \\
    &=\varphi(v_{:\mathfrak{l}}, \cdot) \cdot v_{\mathfrak{l}:} + \partial_v g (\tilde v, \cdot) \cdot w - \partial_{v_{:\mathfrak{l}}''} g(\tilde v, \cdot) \cdot v_{:\mathfrak{l}}'' - \varphi(v_{:\mathfrak{l}}, \cdot) \cdot v_{\mathfrak{l}:}  \notag \\
    &=  J(\cdot)  w, \quad w \sim Q.
\end{align}
That is, a linear model based on the features $J(\cdot) {=}\,\partial_v [g(v, \cdot)](\tilde v)$. This removes implementation complications that would stem from considering the zeroth order term in the affine linear model when it comes to finding the MAP of the linearised model $w_\star \in \argmin_{w\in \R^d} \c{L}_{h,A}(w)$.
 
\begin{derivation} \textbf{Jacobian null space of fully normalised networks} \\
    Consider a normalised network $g$, the linearisation point $\tilde v' + \tilde v''$,  and the directional derivative with respect to parameters $ v''$ in the direction of $\tilde v''$, denoted $D_{ \tilde v''} g(\tilde v' + \tilde v'', \cdot)$. On one hand, this is just the partial derivative of $g$ with respect to $v''$ evaluated at $\tilde v''$ and projected onto $\tilde v$, and thus $D_{\tilde v''} g(\tilde v' + c \tilde v'', \cdot) = \partial_{v''} g(\tilde v, \cdot) \cdot \tilde v$. On the other hand, from the limit definition of the directional derivative,
\begin{align*}
    D_{ \tilde v''} g(\tilde v' + c \tilde v'') 
    &= \lim_{\delta \downarrow 0} \frac{1}{\delta}\left[g (\tilde v' + (\delta+c) \: \tilde v'' , \cdot) - g (\tilde v' + c \tilde v'', \cdot) \right] \\
    &=0,
\end{align*}
and thus $\partial_{v''} g(\tilde v, \cdot) \cdot \tilde v = 0$. 

The quantity appearing in equation~\cref{eq:directional-deriv-zero} is $ \partial_{v''_{:\mathfrak{l}}} \varphi(\tilde v''_{:\mathfrak{l}}, \cdot) \cdot \tilde v''_{:\mathfrak{l}} $. We now observe that for a fully normalised network, each of the outputs of the penultimate layer $[\varphi(v_{:\mathfrak{l}}, \cdot)]_{i}$, with the output dimension being indexed by $i$, is a fully normalised network (\cref{def:normalised_networks}) in its own right. Hence, we can apply the same reasoning as above to see that $\partial_{v''_{:\mathfrak{l}}} \varphi(\tilde v''_{:\mathfrak{l}}, \cdot) \cdot \tilde v''_{:\mathfrak{l}} = 0$.
\end{derivation}

\subsection{Optimising linearised networks}
Our adapted linearised Laplace method requires identifying the joint stationary point $(w_{\star},  A_{\star})$. In general, this does not admit a closed-form solution. Instead, we alternate gradient-based optimisation of $\c{L}_{h,  A}$ and $\c{G}_{v}$. 
For normalised networks with dense output layers, implementing the simplified linear model \cref{eq:simple_linear_model} directly yields faster and more stable optimisation. 
Obtaining the gradients of $\c{G}_{v}$ involves computing Hessian log-determinants, which in turn requires approximations in the context of large networks. In this chapter's experiments (\cref{sec:adapting_experiments}), we will rely on the KFAC \citep{Martens15kron} approximation for this. In \cref{chap:sampled_Laplace}, we will introduce a more accurate sample-based approximation.
We go on to provide a derivation for the gradient of $\c{L}_{h,  A}$, \cref{alg:linear_MAP_algorithm} and discuss implementation trade-offs. 

\subsubsection{The linear model loss gradient}
We now discuss the optimisation of the loss for the predictor $h(w, \cdot) = J(\cdot) \cdot w$ where $w \in  \c{V}$ is the linear model's parameter vector. This corresponds to fully normalised networks with a dense final layer. We note that the procedure for the non-simplified Taylor expanded model $g(\tilde v, \cdot) + J(\cdot) \cdot (w - \tilde v)$ is analogous, but the targets are shifted to be  $Y-g(\tilde v, \cdot)+\Phi\tilde v$. Here, we denote NN Jacobians as $J(\cdot) = \partial_v g(\tilde v, \cdot) \in \R^{c\times d}$, we stack then across train points to produce the design matrix $\Phi \in \R^{nc\times d}$, and $c$ is the output dimensionality $| \c{Y}|$.

We wish to optimise $w$ according to the objective $\mathcal{L}_{h,  A}(w) = L(h(w, \cdot)) + \|w\|^{2}_{ A}$. We adopt a first order gradient-based approach. We first consider the gradient of $L(h(w, \cdot)) = \sum_{i}\ell( J(x_{i})\cdot w,  y_{i})$. Using the chain rule and evaluating at an arbitrary $\bar w \in  \c{V}$ we have
\corr{\begin{align*}
 \partial_{w} [L(h(w, \cdot))](\bar w) &= 
    \sum_i \partial_{\hat{y}} [\ell(\hat{y}_i, y_i)](J(x_i)\cdot \bar w) \cdot \partial_w (J(x_{i})\cdot \bar w) \\
    &= \sum_i \partial_{\hat{y}} [\ell(\hat{y}_i, y_i)](J(x_i)\cdot \bar w) \cdot J(x_{i}).
\end{align*}}
Evaluating the affine function $h$ consists of computing the Jacobian vector product $J(x_i) \bar w$. 
This can be done while avoiding computing the Jacobian explicitly by using forward mode automatic differentiation or finite differences.
We find both approaches to work similarly well, with finite differences being slightly faster, and forward mode automatic differentiation more numerically stable.
This chapter's experiments use finite differences, so we present this approach here.  Specifically, we employ the method of \citet{ANDREI2009Accelerated} to select the optimal step size.  \cref{chap:sampled_Laplace} will use automatic differentiation.
We then evaluate the loss gradient at the linear model output, denoting this vector in our algorithm as $ \mathfrak{g} = \partial_{\hat{y}} [\ell(\hat{y}, y)](J(x)\cdot \bar w)$. This gradient can often be evaluated in closed form. Finally, we project $\mathfrak{g}$ onto the weights by multiplying with the Jacobian. This vector Jacobian product is implemented using automatic differentiation. That is, $\mathfrak{g}^{T} J(x_{i})  =\partial_{v} [\mathfrak{g}^{T} \cdot g(v, x_{i})](\tilde v)$. We combine these steps in \cref{alg:linear_MAP_algorithm}.

Evaluating the gradient of $\|\bar w\|^{2}_{ A}$ is trivial. 

\begin{algorithm2e}[H]
	\SetAlgoLined
	\DontPrintSemicolon
	\KwData{Neural network $g$, Observation $x$,  Linearisation point $\tilde v$, Weights to optimise $w$, Likelihood function $\ell(\cdot, y)$, Machine precision $\epsilon$}
	$\delta = \sqrt{\epsilon} (1+\norm{\tilde v}_{\infty}) / \norm{w}_{\infty}$ \tcp*{Set FD stepsize \citep{ANDREI2009Accelerated}}
	$\hat{y} = J(x) \cdot w   \approx \frac{g(x, \tilde v + \delta w ) - g(x, \tilde v - \delta w ) }{2 \delta}$ \tcp*{Two sided FD approximation to Jvp}
	$\mathfrak{g} = \partial_{\hat{y}} [\ell(\hat{y}, y)](J(x)\cdot w)$ \tcp*{Evaluate gradient of loss at $J(x) \cdot w$ }
	$\mathfrak{g}^{T} \cdot J(x) = \partial_{v} [\mathfrak{g}^{T} \cdot g(v, x)](\tilde v) $ \tcp*{Project gradient with backward mode AD}
\KwResult{$\mathfrak{g}^{T} \cdot J(x)$}
\caption{Efficient evaluation of the likelihood gradient for the linearised model}
\label{alg:linear_MAP_algorithm}
\end{algorithm2e}

\subsection{Further implications of our results}

We now discuss details and implications of the presented recommendations and results.

\textbf{Magnitude of linearisation point in normalised networks }
Optimising a normalised neural network returns a solution for the normalised weights (those in $ \c{V}''$) up to some scaling factor $\mathfrak{c}>0$. How is c determined?
Recall, from \cref{eq:directional-deriv-zero}, that for any $v'' \in  \c{V}''$, the directional derivative of the NN output in the direction of $v''$ is zero. This is also illustrated in \cref{fig:contour}. With this in mind, the dynamics of optimisation can be understood by analogy to a Newtonian system in polar coordinates. The weights are a mass upon which the data fit gradient acts as a tangential force. When discretised, this gradient pushes the weights away from zero.
On the other hand, regularisation from the prior term acts like a centripetal force, pushing the weights towards the origin. 
The resulting c is thus proportional to the variance of the gradients of $v''$, and as such dependent on the learning rate and batch size hyperparameters, while being inversely proportional to the regularisation strength, e.g. weight decay. This has been studied extensively in the optimisation literature, including \citet{Laarhoven2017L2,Hoffer2018norm,Cai2019quantitative,li2020intrinsic,Lobacheva2021periodic}. 

\textbf{On network biases in the Jacobian feature expansion }
Most normalisation techniques introduce scale invariance by dividing subsets of network activations by an empirical estimate of their standard deviation. These activations depend on the values of both weights and biases.
On the other hand, practical use of linearised Laplace commonly considers uncertainty due to only network weights \citep{daxberger2021bayesian,Maddox2021fast}, excluding bias entries from Jacobian and Hessian matrices.
This departure from our assumptions can break the scale invariance necessary for \cref{lemma:scaling}. 
Whether invariance is (approximately) preserved for the weights in the bias-exclusion setting depends on the relative effect of weights and biases on each subset of normalised activations. 
Invariance is preserved if the biases have small impact.
Empirically, we find that the inclusion (or exclusion) of biases does not alter the improvements obtained from applying our recommendations (see \cref{fig:hessian_bias_choice}). 

\textbf{Implications for the (non-linearised) Laplace method }
The (non-linearised) Laplace method \citep{ritter2018scalable,Kristiadi2020being} approximates the intractable posterior by means of a quadratic expansion around an optima, but without the linearisation step given in equation \cref{eq:h-def_c5}. As discussed in \cref{sec:choice_of_posterior_mode}, when employing stochastic optimisation, early stopping, or normalisation layers, we will not find a minimiser of $\c{L}_{g, A}$. Without a well-behaved surrogate linear model loss to fall back on, the Laplace method can yield very biased estimates of the model evidence.

\section{Demonstration: hyperparameter selection with the tangent linear model} \label{sec:adapting_experiments}

We proceed to provide empirical evidence for our assumptions and recommendations. 
Specifically, in \cref{subsec:exp_assumptions}, we validate the assumptions made in throughout this chapter.
Then, in \cref{subsec:architectures}, we demonstrate that our recommendations yield improvements across a wide range of architectures.
In these first two subsections, we employ networks containing at most $46$k weights, since this is the largest model for which we can tractably compute the Hessian on an A100 GPU. This choice avoids confounding the effects described throughout the chapter with any further approximations.
In \cref{subsec:large_models}, we show that our recommendations yield performance improvements on the $25$M parameter ResNet-50 network while employing the KFAC approximation to the Hessian \citep{Martens15kron,daxberger2021laplace}. Throughout, we focus on the layerwise prior precision, described in \cref{subsec:layerwise_prior}. We leave extensive evaluation of the g-prior for \cref{chap:sampled_Laplace}.

Unless specified otherwise, we:
1) train a NN to find $\tilde v$ using standard stochastic optimisation algorithms, 2) linearise the network about $\tilde v$ as in \cref{eq:h-def_c5}, 3) optimise the linear model weights using $ \c{L}_{h,  A}$ (\cref{alg:linear_MAP_algorithm}) and layer-wise regularisation parameters with $\mathcal{G}_{w_{\star}}$ \cref{eq:model-evidence-new}, 4) compute the linearised predictive distribution with \cref{eq:linearised_predictive_2}.
We repeat this procedure with 5 random seeds and report mean results and standard error.
For each seed, the methods compared produce the same mean predictions $g(\tilde v, \cdot)$, only differing in their predictive variance. In this setting, the test Negative Log-Likelihood (NLL, lower is better) can be understood as a measure of uncertainty miscalibration. The full set of experimental details for this chapter are provided in \cref{app:adapting_experimental_setup}.

\subsection{Validation of modelling assumptions} \label{subsec:exp_assumptions}
We validate the key conjectures stated throughout the chapter.
If not specified otherwise, we employ a 46k parameter ResNet \citep{he2016deep} with batch-normalisation after every convolutional layer. The output layer is dense, satisfying \cref{eq:simple_linear_model}. 

\textbf{Choice of Hessian } In \cref{sec:choice_of_posterior_mode}, we suggest evaluating the Hessian of $ \c{L}_{h}$ at the linearisation point $\tilde v$ (instead of $w_{\star}$) for model evidence optimisation \cref{eq:model-evidence-new}. This avoids the need to recompute the Hessian throughout optimisation. \Cref{fig:hessian_bias_choice} (left) shows how the improvement from using the recommended model evidence $ \c{G}_{w_{\star}}$, as opposed to $ \c{G}_{\tilde v}$, dominates the effect of the choice of Hessian evaluation~point.

\begin{figure}[htbp]
    \centering
    \includegraphics[width=0.7\linewidth]{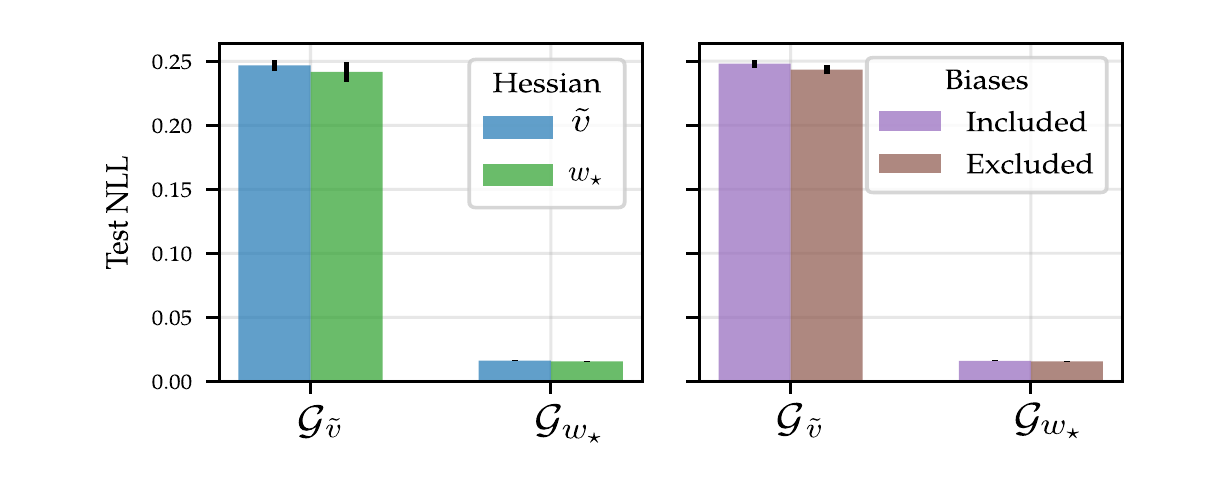}
    \caption{Comparison of the test NLL improvement obtained when switching from $ \c{G}_{\tilde v}$ to $ \c{G}_{w_{\star}}$ to optimise the prior precision $ A$ relative to the impact of (\emph{left}) evaluating the Hessian at $\tilde v$ or $w_{\star}$, and (\emph{right}) excluding network biases from the basis functions. Both plots use a $d=46k$ ResNet with batch norm trained on MNIST.}
    \label{fig:hessian_bias_choice}
\end{figure}

\textbf{Dependence on $\mathfrak{c}$ for isotropic precisions } In \cref{fig:k_scan}, we illustrate the dependence of the predictive posterior on the scale of normalised weights $\mathfrak{c}$ for an isotropic prior precision $ A {=}\, a I$, i.e. \cref{rec:factorised_prior} is ignored. We use a 2.6k parameter 2 hidden layer fully connected NN with layer norm after every layer except the last and a 1d regression task.
Changing $\mathfrak{c}$ changes the optimal $ a$ and, consequently, the predictive uncertainty changes.
With layer-wise $ a$ this effect vanishes (as predicted by \cref{proposition:unique-posterior}).

\textbf{Treatment of NN biases }
 Excluding the Jacobians of network biases from our basis function expansion breaks the scaling properties presented in \cref{lemma:scaling}. In \cref{fig:hessian_bias_choice} (right), we show that the effect of excluding biases is dominated by the choice of the model evidence between $ \c{G}_{w_{\star}}$ and $ \c{G}_{\tilde v}$.

 \begin{figure}[h]
    \centering
    \includegraphics[width=0.65\linewidth]{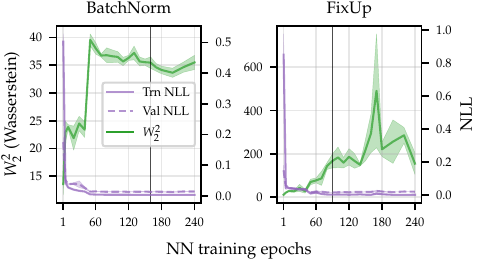}
    \caption{Wasserstein distance between predictive posteriors obtained when using $ \c{G}_{\tilde v}$ and $ \c{G}_{w_{\star}}$ throughout NN training (i.e. the linearisation point $\tilde v$ is changing). The vertical black line indicates optimal (val-based) early stopping.}
    \label{fig:early_stopping}
\end{figure}
 
\textbf{Early stopping }
We evaluate whether more thorough optimisation of the NN weights with $ \c{L}_{g}$ leads to a linearisation point $\tilde v$ closer to $w_{\star}$ in the sense of the implied optimal regularisation and induced posterior predictive distribution. We perform this analysis on normalised and unnormalised networks (for which use the non scale-invariant FixUp regularisation instead \citep{Zhang19FixUp}),
since $\tilde v$ is guaranteed to never match $w_{\star}$ for the former.
Surprisingly, the Wasserstein-2 distance between predictive distributions obtained with $ \c{G}_{\tilde v}$ and $ \c{G}_{w_{\star}}$ increases with more optimisation steps in both cases. Thus, more thorough optimisation does not~help.

\begin{table}[htbp]
\centering
\caption{Validation of recommendations across architectures. All results are reported as negative log-likelihoods (lower is better). In each column, the best performing method is bolded. For each $\mathcal{M}$,  if single or layerwise $ a$ optimisation performs better, it is underlined.}
    \label{tab:model_sweep}
\vspace{0.03in}
\resizebox{\linewidth}{!}{
\begin{tabular}{llllllll}
\toprule
                 &                     &                    \textsc{Transformer} &                            \textsc{CNN} &                         \textsc{ResNet} &                     \textsc{Pre-ResNet} &                          \textsc{FixUp} & \textsc{U-Net} \\
\midrule
\multirow{2}{*}{$ \c{G}_{w_{\star}}$} & single $ a$ &  \textbf{0.162} {\tiny \color{gray} $\pm$ 0.042} &  \textbf{0.025} {\tiny \color{gray} $\pm$ 0.000} &  \textbf{0.017} {\tiny \color{gray} $\pm$ 0.000} &  0.017 {\tiny \color{gray} $\pm$ 0.000} &  \textbf{0.055} {\tiny \color{gray} $\pm$ 0.006} & -1.793 {\tiny \color{gray} $\pm$ 0.050}  \\
                 & layerwise $ a$ &  \textbf{0.162} {\tiny \color{gray} $\pm$ 0.042} &  \textbf{0.025} {\tiny \color{gray} $\pm$ 0.000} &  \textbf{0.016} {\tiny \color{gray} $\pm$ 0.001} &  \underline{\textbf{0.016}} {\tiny \color{gray} $\pm$ 0.000} &  0.061 {\tiny \color{gray} $\pm$ 0.005} & \underline{\textbf{-2.240}} {\tiny \color{gray} $\pm$ 0.027}\\
\cmidrule{1-8}
\multirow{2}{*}{$ \c{G}_{\tilde v}$} & single $ a$ &  0.310 {\tiny \color{gray} $\pm$ 0.060} &  0.253 {\tiny \color{gray} $\pm$ 0.001} &  0.252 {\tiny \color{gray} $\pm$ 0.006} &  \underline{0.220} {\tiny \color{gray} $\pm$ 0.004} &  \underline{0.153} {\tiny \color{gray} $\pm$ 0.021} & -1.164 {\tiny \color{gray} $\pm$ 0.052}\\
                 & layerwise $ a$ &  \underline{\textbf{0.162}} {\tiny \color{gray} $\pm$ 0.042} &  \underline{0.205} {\tiny \color{gray} $\pm$ 0.002} &  \underline{0.236} {\tiny \color{gray} $\pm$ 0.005} &  0.239 {\tiny \color{gray} $\pm$ 0.004} &  0.200 {\tiny \color{gray} $\pm$ 0.018} & \underline{-1.703} {\tiny \color{gray} $\pm$ 0.023} \\
\bottomrule
\end{tabular}
}
\end{table}

\subsection{Validating recommendations across architectures}\label{subsec:architectures}

We evaluate the utility of \cref{rec:linear-weights}, and \cref{rec:factorised_prior} on a range of architectures and tasks:  1) a transformer architecture on the pointcloud-MNIST variable length sequence classification task. This model uses layer norm in alternating layers, 2) a LeNet-style CNN with batch norm placed after every convolutional layer and a dense output layer tasked with MNIST classification, 3) a ResNet with batch norm after every layer except the dense output layer (MNIST classification), 4) the same ResNet but with batch norm substituted by (non scale-invariant) FixUp regularisation (MNIST classification). 5. a pre-ResNet \citep{preresnet}. This architecture differs from ResNet in that batch norm is placed before each weight layer instead of after them; the implication is that there is only 1 normalised group of weights encompassing all weights but those of the dense output layer (MNIST classification). 6. a fully convolutional U-net autoencoder tasked with tomographic reconstruction (regression) of a KMNIST character from a noisy low-dimensional observation. We reproduce the experimental setting of \citet{barbano2021deep} for this task. Group norm is placed after every layer except the last, which is convolutional. 

As shown in \cref{tab:model_sweep}, the application of \cref{rec:linear-weights} yields notably improved performance across all settings. Applying \cref{rec:factorised_prior} yields modest improvements for classification networks with normalisation layers but large improvements for the U-net. Interestingly, layer-wise regularisation degrades performance in the (non-normalised) FixUp ResNet.

\subsection{Large scale models}\label{subsec:large_models}

\begin{table}[bhtp]
\centering
\caption{Test negative log-likelihoods for ResNet-50 on CIFAR10.}
\label{tab:resnet50_cifar}
\begin{tabular}{llll}
\toprule
 &  & \textsc{batch norm} & \textsc{FixUp} \\ \midrule
\multirow{2}{*}{$ \c{G}_{w_{\star, \text{simple}}}$} & single $ a$ & \textbf{0.112} {\tiny \color{gray} $\pm$ 0.004} & 0.128 {\tiny \color{gray} $\pm$ 0.000} \\
 & layerwise $ a$ & \textbf{0.109} {\tiny \color{gray} $\pm$ 0.003} & \underline{\textbf{0.096}} {\tiny \color{gray} $\pm$ 0.000} \\
\midrule       \multirow{2}{*}{$ \c{G}_{w_{\star}}$} & single $ a$ & 0.190 {\tiny \color{gray} $\pm$ 0.005} & 0.249 {\tiny \color{gray} $\pm$ 0.002} \\
 & layerwise $ a$ & 0.194 {\tiny \color{gray} $\pm$ 0.009} & \underline{0.193} {\tiny \color{gray} $\pm$ 0.001} \\
\midrule
\multirow{2}{*}{$ \c{G}_{\tilde v}$} & single $ a$ & 0.570 {\tiny \color{gray} $\pm$ 0.004} & 0.412 {\tiny \color{gray} $\pm$ 0.000} \\
 & layerwise $ a$ & 0.567 {\tiny \color{gray} $\pm$ 0.004} & \underline{0.360} {\tiny \color{gray} $\pm$ 0.000} \\ \bottomrule
\end{tabular}%
\end{table}

We validate our recommendations on the 25M parameter ResNet-50 network trained on the CIFAR10 dataset. This model places batch norm after every layer except the dense output layer. We also consider a normalisation-free FixUp ResNet-50. 
\Cref{tab:resnet50_cifar} shows that both of our recommendations yield better test NLL, with larger gains obtained by the batch norm network.
The normalisation-free FixUp setting does not simplify as in eq.~\cref{eq:simple_linear_model}. Nonetheless, assuming the simplified model evidence, denoted $ \c{G}_{v_{\star, \text{simple}}}$, when obtaining the linear model optima yields improved performance for all models.

\section{Discussion}

This chapter has identified and addressed two pitfalls of a na\"ive application of linearised Laplace to modern NNs in the \emph{post-hoc} setting. First, the optima of the loss function is not found in practice. This invalidates the assumption that the point at which we linearise our model is stationary. However, every linearisation point implies an associated basis function linear model. As we use this model to provide errobars, we propose to choose hyperparameters using the evidence of this model. This requires only the solving of a convex optimisation problem, one much simpler than NN optimisation. Second, normalisation layers introduce an invariance to the scale of NN weights and thus the linearisation point can only be identified up to a scaling factor. We show that to obtain a predictive posterior that is invariant to this scaling factor, the regulariser must be independently parametrised for each normalised group of weights, e.g. different layers. We also show that a classical feature normalisation method, the g-prior, solves this issue.
Our experiments confirm the effectiveness of these recommendations across a wide range of model architectures and sizes. 

With these advancements, and the scalable SGD-based sampling from \cref{chap:SGD_GPs}, we are almost ready to perform Bayesian inference and hyperparameter optimisation with large scale linearised neural networks. The only remaining impediment is computing the log-determinant term in the expression for the model evidence. \cref{chap:sampled_Laplace} will provide the final piece of the puzzle by introducing an accurate method to learn the linearised Laplace prior precision using only posterior samples.

\chapter[Sample-based linearised Laplace]{Scalable uncertainty estimation and hyperparameter learning for neural networks with sample-based linearised Laplace inference }\label{chap:sampled_Laplace}

\ifpdf
    \graphicspath{{Chapter6/Figs/Raster/}{Chapter6/Figs/PDF/}{Chapter6/Figs/}{Chapter6/figures/}}
\else
    \graphicspath{{Chapter6/Figs/Vector/}{Chapter6/Figs/}{Chapter6/figures/}}
\fi

\textit{``One thing that should be learned (...) is the great power of general purpose methods, of methods that continue to scale with increased computation even as the available computation becomes very great.''} --- Richard Sutton\\

The linearised Laplace method, originally introduced by \cite{Mackay1992Thesis}, and reviewed in \cref{sec:Laplace}, has received renewed interest in the context of uncertainty quantification for modern neural networks (NN) \citep{Khan2019approximate,Immer2021Improving,daxberger2021laplace}. The method constructs a surrogate Gaussian linear model for the NN predictions, and uses the error bars of that linear model as estimates of the NN's uncertainty. However, the resulting linear model is very large; the design matrix is sized number of parameters by number of datapoints times number of output classes. Thus, both the primal (weight space) and dual (observation space) formulations of the linear model are intractable. This restricts the method to small network or small data settings. Moreover, the method is sensitive to the choice of regularisation strength for the  linear model \citep{Immer21Selection,antoran2022adapting}. This chapter develops methods to scale inference and hyperparameter selection to very large linear models with a particular focus on linearised neural networks.

To scale inference and hyperparameter selection in Gaussian linear regression, we introduce a sample-based Expectation Maximisation (EM) algorithm. It interleaves E-steps, where we infer the model's posterior distribution over parameters, given some choice of hyperparameters, and M-steps, where the hyperparameters are improved given the current posterior. Our contributions here are two-fold:
\begin{enumerate}[topsep=0pt]
    \item We perform posterior sampling for large-scale linearised neural networks using stochastic gradient descent with the low-variance sample-then-optimise objective introduced in \cref{subsec:variance_reduction_SGD}, which we use to approximate the E-step.
    \item We introduce a method for hyperparameter selection that only requires access to posterior samples, and not the full posterior distribution. This forms our M-step.
\end{enumerate}
Combined, these allow us to perform inference and hyperparameter selection by solving a series of quadratic optimisation problems using stochastic gradient descent, and thus avoiding an explicit cubic cost in any of the problem's properties. Our method readily extends to non-conjugate settings, such as classification problems, through the use of the Laplace approximation. In the context of linearised NNs, our approach also differs from previous work in that it avoids instantiating the full NN Jacobian matrix, an operation requiring as many backward passes as output dimensions in the network.

We demonstrate the strength of our inference technique in the context of the linearised Laplace procedure for image classification on CIFAR100 (100 classes × 50k datapoints)
and Imagenet (1000 classes × 1.2M datapoints) 
using an 11M parameter ResNet-18
and a 25M parameter ResNet-50, respectively.
The methods introduced in this chapter will allow us to perform uncertainty estimation in high-resolution volumetric tomographic image reconstruction in \cref{chap:DIP}.

The rest of this chapter is organised as follows. \cref{sec:EM_for_linearised_NNs} introduces a variational EM algorithm for linearised neural networks. \cref{sec:sample_based_inference} discusses a series of methods to scale up the aforementioned algorithm to the large model and large dataset setting. \cref{sec:sampled_laplace_experiments} demonstrates these methods on large-scale image classification. Finally, \cref{sec:sample_based_conclusion} concludes the chapter.

\section{Variational EM for linearised neural networks}\label{sec:EM_for_linearised_NNs}

We consider the multioutput conjugate Gaussian linear model class, introduced in \cref{chap:linear_models}, and which we review here.
Our choice of basis functions are induced by a first order Taylor expansion of a NN $g: \c{V} \times \c{X} \to \R^c$ around its pre-trained parameters $\tilde v \in \c{V} \subseteq \R^d$.
We will work with fully normalised networks with a dense final layer\footnote{In \cref{chap:DIP}, we will work with models without a dense final layer. Fortunately, the procedures discussed in this chapter may be applied out-of-the-box to this setting by shifting the linear model targets by the constant-in-$w$ terms in the NN's Taylor expansion. 
}. In \cref{sec:dense-final-layer} we showed that, when linearised, these take the simplified form
\begin{gather*}
    h(w, \cdot) = \phi(\cdot) w,
\end{gather*}
where $\phi(x) = \partial_{v} g(\tilde v, x) \in \R^{c\times d}$ is the NN's Jacobian evaluated at $x \in \c{X}$, which acts as a feature expansion of the input.   

With that, the generative process we assume relates our inputs $x_1, \dotsc, x_n \in \c{X}$ and corresponding outputs $y_1, \dotsc, y_n \in \c{Y} \subseteq \R^c$ is
\begin{gather*}
    Y = \Phi w + \c{E} \spaced{with} w \sim \N(0, A^{-1} I) \spaced{and} \c{E} \sim \N(0, B^{-1}),
\end{gather*}
where $Y \in \R^{nc}$ is the concatenation of $y_1, \dotsc, y_n$, $B$ is a block diagonal matrix, built from $(B_i)_{i=1}^n$, a set of $c\times c$ blocks representing the noise precision for each iid observation, and $\smash{\Phi = [\phi(x_1)^T; \dotsc; \phi(x_n)^T]^T \in \R^{nc \times d}}$ is the embedded design matrix. We define \corr{$M = \Phi^T B \Phi \in \R^{d\times d}$}, which matches the curvature of the Gaussian likelihood.
Finally, $A \in \R^{d \times d}$ is a positive definite prior precision matrix, which we treat as a hyperparameter.

\subsection{Conjugate Gaussian regression and the EM algorithm}\label{subsec:conjugate_EM}

Our goal is to infer the posterior distribution for the parameters $ w$ given our observations, under the setting of $A$ most likely to have generated the observed data. We use an iterative procedure inspired by \cite{Mackay1992Thesis}, which alternates computing the posterior for $ w$, denoted $\Pi_{w|Y}$, for a given choice of $A$, and updating $A$, until the pair $(A, \Pi_{w|Y})$ converge to a locally optimal setting. This corresponds to an EM algorithm \corr{\citep{Dempster1977EM,bishop2006pattern}.}

With that, we start with some initial $A \in \R^{d \times d}$, and iterate:
\begin{itemize}[topsep=0pt]
  \item (E step) Given $A$, the posterior for $ w$, denoted $\Pi_{w|Y}$, is computed exactly as
  \begin{equation}
    \textstyle{\Pi_{w|Y} = \N( w_\star, H^{-1}) \spaced{where} H = M + A \spaced{and}  w_\star = H^{-1} \Phi^T  B Y.}
  \end{equation} 
  \item (M step) We lower bound the log-probability density of the observed data, i.e. the evidence, for the model with posterior $\Pi_{w|Y}$ and precision $A'$ as
  \begin{equation} \label{eq:model_evidence_fixed_mean_bound}
    \log p(Y; A') \geq \textstyle{-\frac{1}{2} \| w_\star\|_{A'}^2  -\frac{1}{2}\log\det(I + A'^{-1}M)} + C \eqqcolon \c{M}(w_\star, A'),
  \end{equation}
  for C independent of $A'$. We choose a new setting for $A$ that improves this lower bound.
\end{itemize}

\begin{derivation} \textbf{Derivation of \cref{eq:model_evidence_fixed_mean_bound} as a lower bound on the evidence} \\

To show this we part from the Gaussian ELBO for the Gaussian-linear model given in \cref{eq:expanded_linear_ELBO}
\begin{align*}
    \log p(Y; A) \geq  \c{M}(w_q, \Sigma_q, A) =  \frac{1}{2}\bigr( &- n\log(2 \pi)  -\logdet{B^{-1}} - \logdet{A^{-1}}    \notag \\
    &  - \|w_q\|_A^2 - \|Y - \Phi w_q \|^2_{B} + \logdet \Sigma_q \\
    & - \tr(\Phi \Sigma_q \Phi^T B)  + d - \tr(\Sigma_q A) \bigl).
\end{align*}
and choose $\Sigma_q = (M + A)^{-1} = H^{-1}$, which is the optimal setting, for any value of $w_q$ and $A$. With this we note that 
\begin{gather*}
    \tr{ (\Phi H^{-1} \Phi^T B)}  =  d - \tr (H^{-1} A) = \gamma,
\end{gather*}
are both expressions for the effective dimension (see \cref{eq:forms_effective_dim}), which cancel out. This leaves us with 
\begin{align*}
    \c{M}(w_q, A) =  \frac{1}{2}\bigr(  &  - \|w_q\|_A^2 - \logdet{A^{-1}}  + \logdet \Sigma_q  \notag \\
    &  - \|Y - \Phi w_q \|^2_{B}    - n\log(2 \pi)  -\logdet{B^{-1}} \bigl),
\end{align*}
which matches \cref{eq:model_evidence_fixed_mean_bound} when we set $w_q$ to $w_\star$ and identify the constant in $A$ terms as $C = \frac{1}{2} (- \|Y - \Phi w_q \|^2_{B}    - n\log(2 \pi)  -\logdet{B^{-1}}) = \log p(Y|w_q;B)$.

\end{derivation}

\begin{remark} \textbf{On the ELBO in \cref{eq:model_evidence_fixed_mean_bound}}\\
    $\c{M}(w_\star, A)$ has a variational parameter, the posterior mean and a hyperparameter the prior precision. For each prior precision, there is an optima posterior mean which makes the bound tight. At each M step, we update our prior precision, and the variational posterior's covariance updates automatically with the new regulariser, leaving the posterior mean as the only variational parameter to be found anew in successive E steps. Because the log-likelihood is quadratic, its curvature $M$ is fixed throughout the EM iteration.
\end{remark}

\subsection{Laplace-approximating non-conjugate likelihoods}\label{subsec:non_conj_laplace_EM}

We now consider the setting where the linearised model's loss, defined as
\begin{gather}\label{eq:linearised_loss}
    \c{L}_{h, A}(v) =  L(h(w,\cdot)) + \| w \|^2_A,
    \end{gather}
for $L \colon  \c{Y}^{ \c{V}\times\c{X}} \mapsto \R_+$ of the form  $L(h(w, \cdot)) = \sum_i^n \ell(y_i, h(w, x_i))$, and $\ell$ is a negative log-likelihood function, is non quadratic. That is, $L$ corresponds to a \emph{non-Gaussian} density.

We employ the Laplace approximation (see \cref{subsec:modern_view_laplace} for a review) for the E-step. That is, we construct a Gaussian approximate posterior as
\begin{gather*}
\N(w_\star, H^{-1}) \spaced{with}  w_\star = \argmin_{w \in \R^d} \c{L}_{h, A}(v) \\
   \text{and} \quad H = \Phi^T B \Phi + A\quad.
\end{gather*}
Here, $B\in\R^{nc\times nc}$ is a again a block diagonal matrix built from blocks $B_i = \partial^2_{\hat y_i} \ell(y_i, \hat y_i)$ which we evaluate at predictions $\hat y_i = h(\tilde v, x_i) = g(\tilde v, x_i)$ in place of $h(w_\star, x_i)$, since the latter would change each time the regulariser $A$ is updated, requiring expensive re-evaluation. This decision was recommended in \cref{sec:choice_of_posterior_mode} and ablated in \cref{subsec:exp_assumptions}. 

We plug in the above expressions into the Laplace evidence, given in \cref{eq:model_evidence_fixed_mean_bound}, for the M step. However, this may no longer represent a lower bound on the true evidence.
The EM procedure from \cref{subsec:conjugate_EM} is for the conjugate Gaussian-linear model, where it carries guarantees on non-decreasing model evidence, and thus convergence to a local optimum. These guarantees do not hold for non-conjugate likelihood functions, e.g., the softmax-categorical, where the Laplace approximation is necessary. \corr{Instead, we are guaranteed convergence to a local optima of the evidence of a surrogate model with Laplace approximated likelihood. }

\subsection{The issue of limited scalability} The above inference and hyperparameter selection procedure for $\Pi_{w|Y}$ and $A$ is futile when both $d$ and $nc$ are large. The E-step requires the inversion of a $d \times d$ matrix and the M-step evaluating its log-determinant, both cubic operations in $d$. These may be rewritten to instead yield a cubic dependence on $nc$ (as in \cref{subsec:duality}), but under our assumptions, that too is not computationally tractable. Instead, we now pursue a stochastic approximation to this EM-procedure.

\section{Sample-based inference for the tangent linear model}\label{sec:sample_based_inference}

We now present the chapter's main contribution, a stochastic approximation \citep{Nielsen2000stochastic} to the iterative algorithm presented in the previous section. Our M-step, presented in \cref{subsec:Mackay_update}, requires only access to samples from $\Pi_{w|Y}$. We then touch on a number of practical and implementation matters.  We provide an efficient implementation of the g-prior in \cref{subsec:gprior_implementation}. \Cref{subsec:warm_start_SGD} discusses an efficient implementation of the SGD posterior sampling methods introduced in \cref{chap:SGD_GPs}. These constitute our E-step. We discuss efficient sample-based predictions for linearised neural networks in \cref{subsec:sample_based_predictions}. We conclude with a full description of our inference algorithm in \cref{subsec:sample_laplace_algorithm}, with special attention to its application to image classification.

\subsection{Hyperparameter learning using posterior samples}\label{subsec:Mackay_update}

For now, assume that we have an efficient method of obtaining samples from a zero-mean version of the posterior $\zeta_1, \dotsc, \zeta_k \sim \N(0,\, H^{-1}) \coloneqq \Pi_{w|Y}^0$, and access to $w_\star$, the mean of $\Pi_{w|Y}$.
Evaluating the first order optimality condition for $\c{M}(w_\star, A)$  yields that the optimal choice of $A$ satisfies
\begin{equation}\label{eq:first-order-optimality}
  \| w_\star\|_{A}^2 = \tr\{H^{-1}M\} \eqqcolon \gamma, 
\end{equation}
where the quantity $\gamma$ is the effective dimension of the regression problem (see \cref{subsec:effective_dimension}). It can be interpreted as the number of directions in which the weights $w$ are strongly determined by the data.
Setting $A = a I\,$\footnote{We absorb additional prior structure into the basis functions in \cref{subsec:gprior_implementation}} for $a = \gamma/\|  w_\star \|^2$  yields a contraction step converging towards the optimum of $\c{M}$ \citep{Mackay1992Thesis}. We thus hereon refer to such a contraction step as a \emph{MacKay update}.

\begin{derivation} \textbf{First order optimality condition} \\ 
Consider the derivative of $\c{M}$. We have,
\begin{equation}
  \partial_A \log p(Y; A) = -\frac{1}{2} \left[ \partial_A\| w_\star\|^2_A + \partial_A \log\det(A + M) - \partial_A \log\det A\right],
\end{equation}
where we expanded $\log\det(I + A^{-1}M) = \log\det(A + M) - \log\det A$. Taking the respective derivatives and setting equal to zero at $A$, this leads to the condition 
\begin{equation}
   w_\star w_\star^T = (I - (I + A^{-1}M)^{-1})A^{-1}.
\end{equation}
Post-multiplying by $A$ and applying the push-through identity, we obtain
\begin{equation}
   w_\star w_\star^T A = M(A + M)^{-1}.
\end{equation}
For the above to hold, it is necessary that the traces of both sides are equal. Thus,
\begin{equation}
  \| \tilde w\|^2_{A} = \tr\{\tilde w\bar  w^T A\} = \tr\{M(A + M)^{-1}\} = \gamma ,
\end{equation}
which is the stated first order optimality condition, up to a cyclic permutation. 
\end{derivation}

Computing $\gamma$ directly requires the inversion of $H$, a cubic operation. We instead rewrite $\gamma$ as an expectation with respect to $\Pi_{w|Y}^0$ using \cite{Hutchinson90trace}'s trick, and approximate it using samples as 
\begin{align}\label{eq:eff_dim_estimator}
\gamma = \tr \{ H^{-1}M\} = \tr \{ H^{-\frac{1}{2}}MH^{-\frac{1}{2}}\} &= \E_{\zeta_1 \sim\Pi_{w|Y}^0} [\zeta^{T}_1 M\zeta_1 ] \\
&\approx \textstyle{\frac{1}{k} \sum_{j=1}^k \zeta_j^T \Phi^T  B \Phi \zeta_j}  \coloneqq \hat \gamma. \notag
\end{align}
We then select $a = \hat\gamma/\| w_\star\|^2$. We have thus avoided the explicit cubic cost of computing the log-determinant in the expression for $\c{M}$ (given in \cref{eq:model_evidence_fixed_mean_bound}) or inverting $H$. Due to the block diagonal structure of $B$, $\hat\gamma$ may be computed in $\c{O}(n)$ Jacobian vector products as $\hat \gamma = \textstyle{\frac{1}{k} \sum_{j=1}^k \sum_{i=1}^n \zeta_j^T \phi(x_i)^T  B_i \phi(x_i) \zeta_j} $.%

\subsubsection{Demonstration: MacKay's effective-dimension-based M-step}

\begin{figure}[t]
    \centering
    \includegraphics[width=\linewidth]{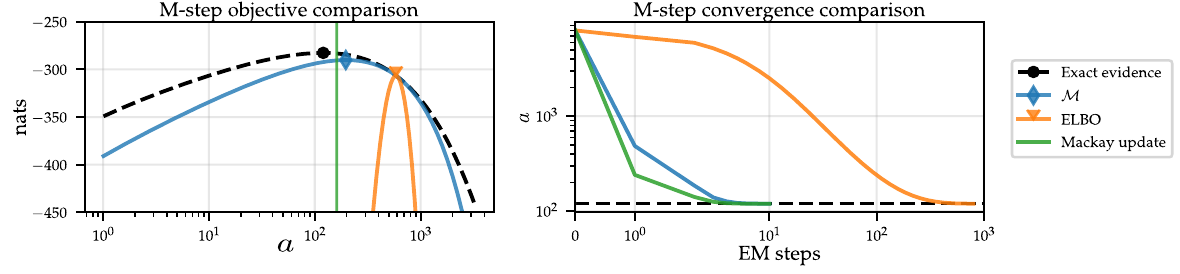}
    \caption{Left: exact model evidence for a linearised 2 hidden layer MLP with layer normalisation together with the lower bound presented in \cref{eq:model_evidence_fixed_mean_bound}, $\c{M}$, and an ELBO where the Gaussian posterior covariance is decoupled from the regulariser. All curves use an initial regulariser of $a=500$ and have a marker placed at their optima. The value proposed by the MacKay update is marked with a vertical green line. Right: values of the regularisation strength $a$ obtained at successive EM iterations while using the different update strategies under consideration for the M step. Note that when we assume access to the exact evidence function, the regulariser converges in a single step and no EM iteration is necessary.}
    \label{fig:Toy_evidence_convergence}
    \vspace{-0.1cm}
\end{figure}

\begin{figure}[t]
    \centering
    \includegraphics[width=\linewidth]{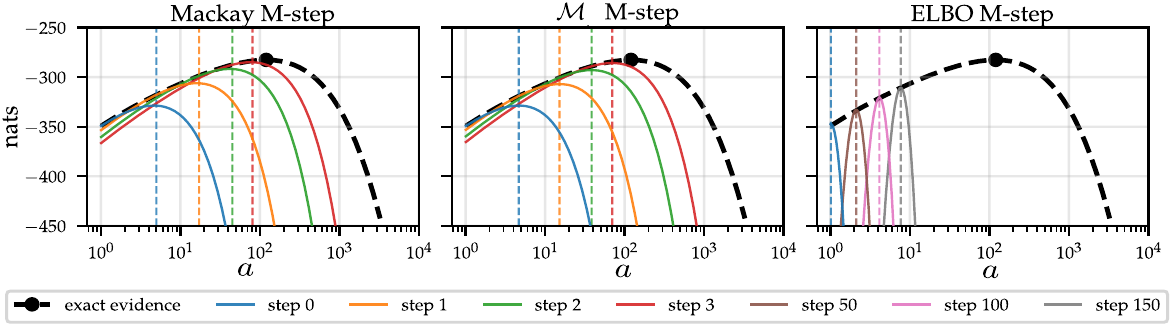}
    \caption{Exact linear model evidence for a linearised 2 hidden layer MLP with layer normalisation together with the lower bound presented in \cref{eq:model_evidence_fixed_mean_bound}, $\c{M}$ (left and middle plots), and an ELBO, where the Gaussian posterior covariance is decoupled from the regulariser (right side plot), at different EM steps. We update the regularisation strength with MacKay's fixed point iteration for the left side plot. Note that $\c{M}$ curves are shown in this plot. We maximise $\c{M}$ in the middle plot and we maximise the ELBO in the right hand side plot. 
    All curves use an initial regulariser of $a=5$ and we place a vertical dashed line at each step's update. Starting below the optimal regularisation strength makes convergence behaviour differ from that of \cref{fig:Toy_evidence_convergence}, which starts from above the optima. }
\label{fig:m_step_toy_convergence}
\end{figure}

We empirically motivate the fixed-point iteration M-step introduced by \cite{Mackay1992Thesis}, by comparing it with alternative approaches to updating hyperparameters. In particular, we compare MacKay's update with the standard Laplace M-step evidence, denoted $\mathcal{M}$ and given in \cref{eq:model_evidence_fixed_mean_bound}, and a Gaussian ELBO, of the form given in \cref{eq:expanded_linear_ELBO}, with both the mean and covariance acting as variational parameters. The latter two approaches differ in that the ELBO's posterior covariance is not clamped to the optimum value for the current regulariser $A$, and thus the bound is less tight. That is, ELBO's covariance does not change with the regulariser while performing the M-step. Both of these objectives differ from the MacKay update in that they provide an objective which requires gradient-based optimisation in the M-step. Instead, the MacKay update has a closed-form.

The plot on the left of \Cref{fig:Toy_evidence_convergence} compares the exact linearised Laplace evidence for a 2 hidden layer MLP with layernorm trained on the toy dataset of \cite{antoran2020depth} with the bound $\c{M}$ \cref{eq:model_evidence_fixed_mean_bound} and with the decoupled ELBO \cref{eq:model_evidence_fixed_mean_bound}. \corr{We evaluate all of these exactly, without resorting to Monte Carlo sampling.} The initial regulariser is set to $a=500$. The ELBO is only tight for regulariser values very close to initialisation, resulting in very small M steps. $\c{M}$ is tangent to the evidence at the same point as the ELBO but presents a much better approximation as we move away from  $a=500$. The optimum of $\c{M}$ is much closer to the optimum of the evidence. The MacKay update does not use a lower bound but instead provides an updated value for $a$ which is even closer to the optimum of the evidence. The right hand side plot shows the change in the regularisation parameter across successive M-steps using the update methods under consideration.  The MacKay M-step converges to the optima of the evidence in 2 steps. Using $\c{M}$ as an objective results in convergence after 5 steps. On the other hand, the ELBO update requires around 100 steps. \Cref{fig:m_step_toy_convergence} further illustrates hyperparameter learning in the 1d toy setting by showing the successive lower bounds obtained by each of the approaches under consideration at each M-step. Interestingly, the MacKay update produces regulariser updates that almost exactly maximise $\c{M}$.

\subsubsection{Demonstration: Comparing estimators of the effective dimension}  

\begin{figure}[t]
    \centering
    \includegraphics[width=0.5\linewidth]{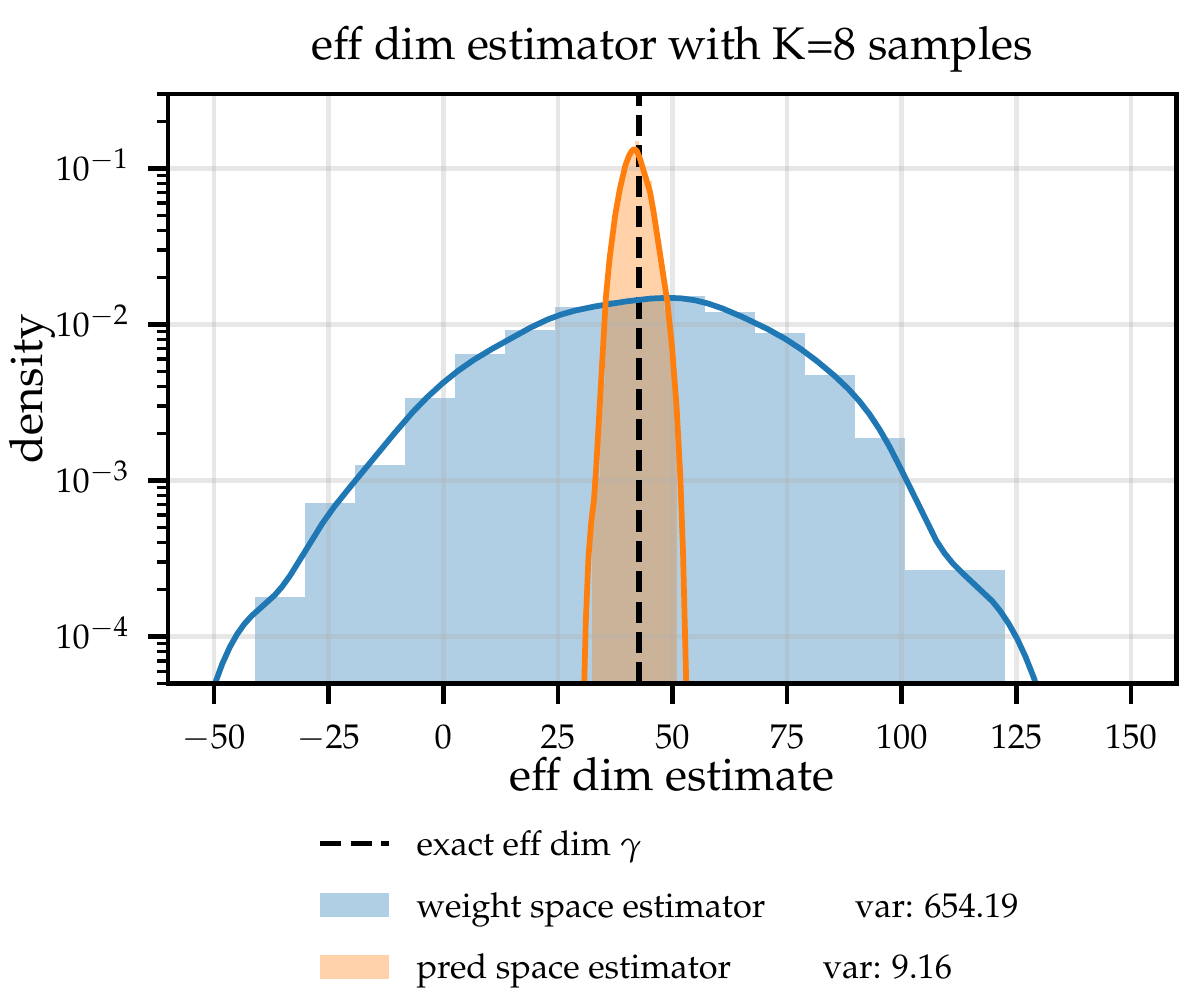}
    \vspace{-0.1cm}
    \caption{Histogram, with bin heights normalised to represent density estimates, of the effective dimension estimates produced by the primal form (weight space) estimator \cref{eq:weight_space_eff_dim_estimator} and the kernelised (prediction space) estimator \cref{eq:eff_dim_estimator}. Both distributions are roughly centred at the true effective dimension but the kernelised estimator presents much lower variance.}
    \label{fig:app_eff_dim_variance}
\end{figure}

The effective dimension estimator introduced in \cref{eq:eff_dim_estimator} is in kernelised form. 
A different unbiased estimator may be obtained in weight-space form following the derivation provided in \cref{eq:forms_effective_dim}.  That is
\begin{gather}\label{eq:weight_space_eff_dim_estimator}
    d - \tr{(AH^{-1})} \approx d - \frac{1}{k}  \sum_{j=1}^k   \zeta_j^T A \zeta_j \spaced{with} \zeta_j \sim \Pi_{w|Y}^0
\end{gather}
\Cref{fig:app_eff_dim_variance} compares both estimators when applied to the 1d toy problem used to generate \cref{fig:toy_EM} from the main text. In particular, we use a linearised 2 hidden layer MLP with 50 hidden units and layernorm after every hidden layer ($d=2700$). We use the ``Matérn'' dataset of \cite{antoran2020depth}. We use 8 samples from the exact linearised Laplace posterior to compute effective dimension estimates and repeat this procedure 1000 times to characterise the behaviour of each estimator. As a reference, we also compute the exact effective dimension using eigendecomposition. 

Both estimators present distributions centred at the true effective dimension value. However, the prediction space (kernelised) estimator presents a much lower variance of 9.16 as opposed to 654.19 for the weight space estimator. Additionally, the weight space estimator distribution places a substantial amount of probability mass on negative effective dimension values. From the form of \cref{eq:weight_space_eff_dim_estimator}, we see that this is due to our 8-sample estimator overestimating posterior variance. On the other hand, the kernelised estimator in \cref{eq:eff_dim_estimator} can only produce positive values.

\begin{remark} \textbf{Extension of the MacKay update to layerwise prior precision parameters} \\

We can leverage the primal form expression for the effective dimension given in \cref{eq:weight_space_eff_dim_estimator} to extend the MacKay update to the layer-wise regulariser setting (see \cref{subsec:layerwise_prior}).

Consider a sub-vector of our weight vector contiguous between the $i$th and $j$th weights written as $w_{\star i:j}$. Note that we only choose contiguous weights for notational convenience but it is not necessary to do so in general. 

The first order optimality condition is satisfied if for any $i, j$ with $i<j$, we have 
\begin{gather} \label{eq:layerwise_Mackay}
   \sum_{k=i}^j [A]_{kk}   w_{\star_k}^2 = j-i -  \sum_{k=i}^j [A]_{kk} [(A + M)^{-1}]_{kk} \coloneqq \gamma_{i:j}.
\end{gather}
We assume $[A]_{kk} = a$ for all $i \leq k < j$. Thus, we may update the regulariser for each separate weight sub-vector as $a = \gamma_{i:j} /\|  w_{\star i:j}\|^2$. However, we find this leads to slower convergence than when estimating a single prior precision for the whole model. Combined with the weight-space estimator of the effective dimension \cref{eq:weight_space_eff_dim_estimator} presenting higher variance, this make layerwise prior precision estimation with the MacKay update less attractive.
\end{remark}

\subsection{Constructing an efficient estimator of the g-prior} \label{subsec:gprior_implementation}

We use the diagonal g-prior, introduced in \cref{subsec:g-prior}, with with a prior precision of the form $a \diag{M}$,
where the $\diag$ operator takes as input a matrix and returns a diagonal version of the matrix. 
This leaves a single free parameter $a \in \R_+$, which will be estimated using MacKay updates, as described in \cref{subsec:Mackay_update}. However, this requires the prior precision to be isotropic. We achieve this by absorbing the scaling structure $\diag{M}$ into our feature expansion as
\begin{equation}\label{eq:g-prior-scaling}
  \smash{\textstyle{\phi'(x) =  \phi(x)\, \diag(s) \spaced{for} s \in \R^{d} \spaced{with entries} s_i = [M]^{-1/2}_{ii}}},
\end{equation}
where $i \leq d$ and $\diag{s}$ denotes a diagonal matrix with entries given by the vector $s$.
 And thus we work with the scaled Jacobian features $\phi'(\cdot)$ throughout,  while assuming a prior precision of the form $A = a I$.
Notice that the  covariance kernels implied by these expansions match $a \phi'(\cdot) \phi'(\cdot')^T = a \phi(\cdot) \diag(M) \phi(\cdot')^T$; our generative model is unchanged. 

We now turn to computing the scaling vector $s$. Na\"ively, each entry would be computed as $s_j = \left(\sum_{i=1}^n e_j^T \phi(x_i)^T  B_i \phi(x_i) e_j\right)^\frac{1}{2}$ for $e_j\: j\leq d$ the unit vectors corresponding to the canonical basis for the euclidean space $\R^d$. This would require $\c{O}(nd)$ Jacobian vector products, which is intractable for large models and datasets.

Instead we stochastically estimate $s$ using $k$ samples as
\begin{gather}\label{eq:stochastic_g_scaler}
    s = \E_{\c{E} \sim \N(0, B^{-1})} (\Phi^T B \c{E})^{\odot -0.5} \approx \left(\frac{1}{k} \sum_{j=1}^k (\Phi^T B \c{E}_j)^{\odot 2} \right)^{\odot -0.5}\spaced{with} \c{E}_j  \sim \N(0, B^{-1}),
\end{gather}
where $^{\odot}$ refers to the elementwise power. The number of Jacobian vector products needed is now \corr{$\c{O}(nk)$}. 
\subsection{Efficient SGD posterior sampling with warm starts}\label{subsec:warm_start_SGD}

All that is left is obtaining the linear model's posterior mean $w_\star$ and sampling from the 0 mean posterior $\Pi^0_{w|Y}$ for the E-step. 
For the former, we target the liner model's loss function $\c{L}_{h,A}$, given in \cref{eq:linearised_loss}, with stochastic gradient descent. 
For the latter, we use the low variance weight-space sampling objective introduced in \cref{subsec:variance_reduction_SGD} and which we re-state here for the reader's convenience
\begin{gather}\label{eq:new-loss2}
  \frac{1}{2}\|\Phi w\|_{{B}}^2 + \frac{1}{2}\|w - w'_0\|_A^2 \spaced{with} w'_0 = w_0 + A^{-1}\Phi^T {B} \c{E} \\
  \text{where} \quad  \c{E} \sim\N(0, B^{-1}) \spaced{and}
    w_0 \sim \N(0, A^{-1}) \notag.
\end{gather}
Notice how we can re-use samples used to estimate the g-prior scaling vectors $\Phi^T B \c{E}_j$ in \cref{eq:stochastic_g_scaler} to compute the regulariser target $w'_0$.

In order to limit computational cost, we sample the stochastic regularisation terms $w'_0$, only once, and keep them fixed throughout EM iteration. This results in the optima of the sampling objective being close for successive iterations with different regularisation strength values. This comes at the cost of a small bias in our estimator which we find to be negligible in practise. We separate $w'_0$ into a sum consisting of a prior sample from $w_0$ and a data dependent term, denoted $A^{-1}\Phi^T B \c{E}$. The former scales with $a^{\nicefrac{-1}{2}}$ while the latter with $a^{-1}$. This allows us to update each term in closed form each time $a$ changes in the M step. We initialise our posterior samples at $w_0$ at the first EM iteration and warm start them with the previously optimised values in successive iterations. Similarly, we warm-start the posterior mode $w_\star$ at the previous solution between iterations, initialising it to zero for the first iteration.
We optimise both our samples and posterior mean using stochastic gradient descent with Nesterov momentum. In particular, we follow the recommendations given in \cref{subsec:the_right_optimiser}. We only depart from this for non-quadratic likelihoods, like softmax cross entropy, where we substitute geometric iterate averaging with a linearly decreasing step-size schedule \citep{Bach2014sgd}.
As a preview of this procedure, we display the SGD optimisation traces for the posterior mean $w_\star$ and samples $\zeta$ throughout all steps of our EM procedure for a linearised ResNet-18 trained on the CIFAR100 dataset in \cref{fig:cifar100_em}.

\begin{figure}[htbp]
    \centering
    \includegraphics[width=0.95\linewidth]{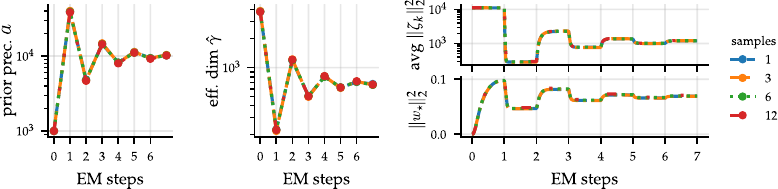}
    \vspace{-0.3cm}
    \caption{Left: prior precision optimisation traces for ResNet-18 on CIFAR100 varying n. samples. Middle: same for the eff. dim. Right: average sample norm and posterior mean norm throughout successive EM steps' SGD runs while varying n. samples. Note that traces almost perfectly overlap.  1 posterior sample is enough to obtain a very accurate estimate of the effective dimension. As a result, the optimisation traces corresponding to different numbers of samples almost perfectly overlap.}
    \label{fig:cifar100_em}
\end{figure}

\begin{remark} \textbf{The impact of SGD's bias on our hyperparameter updates}\\
    Letting $B=bI$ for simplicity, our estimate of the effective dimension amounts to estimating the $K^2$ norm, or output space norm of our samples $\E_\zeta b\|\Phi \zeta\|^2 = b\|h(\zeta, X)\|^2$ \cref{eq:eff_dim_estimator}. The other term appearing in the MacKay update is the $A$ norm of our posterior mean $\|w_\star\|^2_{A}$. 
    In \cref{chap:SGD_GPs} we saw how SGD converges quickly in the output space, but slowly in the weight space, both in an $L_2$ sense (see \cref{fig:exact_vs_low_noise}). As a result, we expect to obtain an accurate estimate of the effective dimension, but not of $\|w_\star\|^2_{A}$. Given that we initialise the MAP setting of the weights at 0 for SGD optimisation, we expect that SGD will result is us underestimating $\|w_\star\|^2_{A}$. In turn, this will lead to overestimation of our regulariser when setting it with $a = \gamma / \|w_\star\|^2_{A}$. Indeed, this issue will appear in very large scale problems in \cref{sec:sampled_laplace_experiments} and in \cref{subsec:volumetric_demo}.
\end{remark}

\subsection{Sample-based linearised Laplace predictions}\label{subsec:sample_based_predictions}

The linearised Laplace distribution over function outputs at an input $x$ is the Gaussian $\N(g(\tilde v, x), \phi(x) H^{-1} \phi(x)^T)$. Here, we are following  \cref{sec:Laplace} and \cref{chap:adapting_laplace} in using the neural network output $g(\tilde v, \cdot)$ as the predictive mean, rather than the surrogate model mean $h(w_\star, \cdot)$.
However, even given $H^{-1}$, evaluating this na\"ively requires instantiating $\phi(x)$, at a cost of $c$ vector-Jacobian products (i.e. backward passes). This is prohibitive for large $c$.
However, expectations of any function $\sigma: \R^c \to \R$ under the predictive posterior can be approximated using only samples from $\Pi_{w|Y}^{0}$ as
\begin{equation}
   \textstyle{\E_{\Pi_{w|Y}}[\sigma] \approx \frac{1}{k} \sum_{j=1}^k \sigma \left( g(\tilde v, x) + \phi(x)\zeta_j\right) \spaced{with} \zeta_1, \dotsc, \zeta_k \sim \Pi^{0}_{w|Y},}
\end{equation}
requiring only $k$ Jacobian-vector products. In practice, for classification, we find $k$ much smaller than the number of classes $c$ suffices.
 
\subsection{Putting the pieces into a single algorithm for image classification} \label{subsec:sample_laplace_algorithm}

 \begin{figure}[t]
    \begin{algorithm2e}[H]\LinesNotNumbered
    	\KwData{initial $a > 0$; $k, k' \in \N$, number of samples for stochastic EM and prediction, respectively.}
    	Compute g-prior scaling vector $s$ as in \cref{eq:g-prior-scaling} %
    	\\\vspace{1mm}
    	Sample random regularisers $w'_{0,1},\dotsc, w'_{0,k}$ per \cref{eq:new-loss2} \\
    	\While{$a$ has not converged}{
    	 Find posterior mode $\bar w$ by optimising linear model loss $\c{L}{(h(w, \cdot))}$, given in \cref{eq:linearised_loss}\\
    	 Draw posterior samples $\zeta_{1}\dotsc \zeta_{k}$ by optimising objective $L'$ with $w'_{0,1},\dotsc, w'_{0,k}$ \\
    	 Estimate effective dimension $\hat \gamma$, per \cref{eq:eff_dim_estimator}, using samples $\zeta_{1}\dotsc \zeta_{k}$\\
    	 Update prior precision $a \gets \hat \gamma/\|\bar w\|_{2}^{2}$
    	}\vspace{1mm}
    Sample $k'$ random regularisers $w'_{0,1},\dotsc, w'_{0,k'}$ using optimised $a$\\
    Draw corresponding posterior samples $\zeta_{1}',\dotsc, \zeta_{k'}'$ using \cref{eq:new-loss2} \\
    \KwResult{posterior samples $\zeta'_{1},\dotsc, \zeta'_{k'}$ }
    \caption{Sampling-based linearised hyperparameter learning and inference}
    \label{alg:algorithm_summary}
    \end{algorithm2e}
\end{figure}

We now combine the methods described so far into a single algorithm that avoids storing Hessian $H$ or covariance matrices $H^{-1}$, computing their log-determinants, or even instantiating Jacobian matrices $\phi(x)$, all of which have prevented the scalability of previous linearised Laplace implementations. We interact with NN Jacobians only through Jacobian-vector and vector-Jacobian products, which have the same asymptotic computational and memory costs as a NN forward-pass \citep{novak2022fast}. Unless otherwise specified, we use the diagonal g-prior and a scalar regularisation parameter. \Cref{alg:algorithm_summary} summarises our method and \cref{fig:toy_EM} shows an illustrative example. 

\begin{figure}[t]
        \centering
        \includegraphics[width=0.9\linewidth]{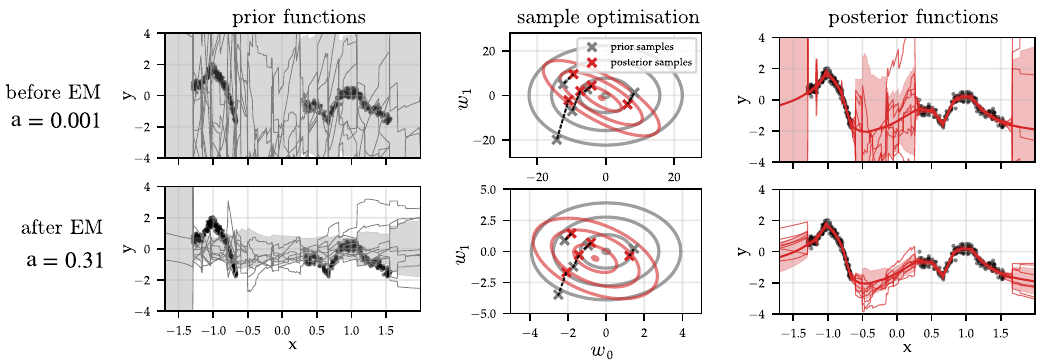}
        \caption{Illustration of our procedure for a fully connected NN on the toy dataset of \cite{antoran2020depth}. Top: prior function samples present large std-dev. (left). When these samples are optimised (middle shows a 2D slice of weight space), the resulting predictive errorbars are larger than the marginal target variance (right).
        Bottom: after EM, the std-dev. of prior functions roughly matches that of the targets (left), the overlap between prior and posterior is maximised, leading to shorter sample trajectories (center), and the predictive errorbars are qualitatively more appealing~(right).\label{fig:toy_EM} 
        }
    \end{figure}

 \subsubsection{An algorithm for image classification} 
 
 \Cref{alg:sampling_classification} provides a detailed procedure for applying our stochastic EM iteration to image classification while using g-prior feature scaling, described in \cref{eq:g-prior-scaling}. Therein, $\sigma$ denotes the softmax function. The curvature of the softmax cross entropy loss at $x_i$, denoted $B_i$, is given by $B_{i} = \text{diag}[(p_i) - p_ip_i^T]$ for $p_i = \sigma(g(\tilde v, x_i))$ denoting our neural network's predictive probabilities. The notation $\odot$ refers to the elementwise product and to the elementwise power when used in an exponent. 
 
 \begin{algorithm2e}[t]\LinesNotNumbered
	\KwData{Linearised network $h$, unscaled feature expansion $\phi$ linearisation point $w_\star$, observations $x_1,\dotsc,x_n$, negative log-likelihood function $\ell$, initial precision $a > 0$, number of samples $k$}
\SetKwProg{Func}{Define}{:}{}
    \Func{$B_i$}{
    $ p_i \gets \sigma(h(w_\star, x_i)) $ \\
    \textbf{return} $\textrm{diag}(p_i) - p_i p_i^T $
    } 
    
\For{$j=1, \dotsc, k$ }{ %
$w^0_j \sim \N(0, a^{-1} I)$  \\
$w'_j \gets a^{-1} \sum_{i=1}^n \phi(x_i)^T  \epsilon_j$ where $\epsilon_{j} \sim \N(0, B_i)$ \\
$\zeta_j \gets w^0_j$
}
$w_\star \gets 0$ \\
$s \gets a^{-1} \left[ \frac{1}{k} \sum_{j=1}^k {w'}_j^{\odot 2}\right]^{\odot -1/2}$ \\
\While{$a$ has not converged}{
\For{$j=1, \dotsc, k$}{
$\zeta_j \gets \text{SGD}_{z}\left(\|\Phi (s\odot z)\|_{\mathrm{B}}^2 + a\|z - w^{0}_j - (s \odot w'_j) \|^2_2, \,\, \text{init}{=}\zeta_j\right) $
}
$w_\star \gets \text{SGD}_{w}\left(\sum_{i=1}^n \ell(y_i, h((s \odot w), x_i)) + a\|w\|^{2}_{2}, \,\, \text{init}{=}w_\star\right)$ \\
$\hat \gamma \gets \frac{1}{k} \sum_{j=1}^k \sum_{i=1}^n \|(\zeta_j \odot s)^T \phi(x_i)^T \|^2_{B_i}$ \\
$a' \gets \hat \gamma / \|w_\star\|^{2}_{2}$ \\
\For{$j=1,\dotsc,k$}{
$w^{0}_j \gets \sqrt{\frac{a}{a'}}w^{0}_j $ \\
$w'_j \gets \frac{a}{a'}w'_j $
}
$a \gets a'$
}
\KwResult{Optimised precision $a$ and weight samples $\zeta_1, \dotsc, \zeta_k$}
\caption{Sampling-based linearised Laplace inference for image classification}
\label{alg:sampling_classification}
\end{algorithm2e}

The key hyperparameters of our algorithm are the number of samples to draw for the EM iteration, the number of EM steps to run, and SGD hyperparameters, namely learning rate, number of steps and batch-size. Empirically, we find that at most 5 EM steps are necessary for hyperparameter convergence and that as little as 1 sample can be used for the algorithm without degrading performance. Choosing SGD hyperparameters is more complicated. However, we are aided by the fact that lower loss values correspond to more precise posterior mean and sample estimates. As a result, we can tune these parameters on the train data, no validation set is required.

\section{Demonstration: Image classification}\label{sec:sampled_laplace_experiments}

We demonstrate our linear model inference and hyperparameter selection approach on the problem of estimating the uncertainty in NN predictions with the linearised Laplace method. 
First, in \cref{sec:lenet_MNIST}, we perform an ablation analysis on the different components of our algorithm using small LeNet-style CNNs trained on MNIST. In this setting, full-covariance Laplace inference (that is, exact linear model inference) is tractable, allowing us to evaluate the quality of our approximations. We then demonstrate our method at scale on CIFAR100 classification with a ResNet-18  (\cref{sec:Resnet18})
and Imagenet with ResNet-50 (\cref{sec:Res50IMAGENET}). 
We look at both marginal and joint uncertainty calibration and at computational cost.

\subsection{Comparison with existing approximations on MNIST}\label{sec:lenet_MNIST}

We first evaluate our approach on MNIST $c{=}10$ class image classification, where exact linearised Laplace inference is tractable. The training set consists of $n{=}60k$ observations and we employ 3 LeNet-style CNNs of increasing size: ``LeNetSmall'' ($d{=}14634$), ``LeNet'' ($d{=}29226$) and ``LeNetBig'' ($d{=}46024$). The latter is the largest model for which we can store the covariance matrix on an A100 GPU. We draw samples and estimate posterior modes using SGD with Nesterov momentum. We use 5 seeds for each experiment, and report the mean and std. error.

\paragraph{Fidelity of sampling-based inference} We compare our methods uncertainty using 64 SGD-based samples against approximate methods based on the NN weight point-estimate (MAP), a diagonal covariance, and against a KFAC estimate of the covariance \citep{Martens15kron,ritter2018scalable} implemented with the \href{https://github.com/AlexImmer/Laplace}{\texttt{Laplace}} library, in terms of similarity to the full-covariance lin. Laplace predictive posterior. As standard, we compute categorical predictive distributions with the probit approximation \citep{daxberger2021laplace}. All methods use the same layerwise prior precision obtained with 5 steps of full-covariance EM iteration. 
The results are on the left hand side of \cref{fig:mnist_preds_and_EM}.
For all three LeNet sizes, the sampled approximation presents the lowest categorical sym. KL and logit W2 distance to the exact lin. Laplace pred. posterior. The fidelity of competing approximations degrades with model size but that of sampling increases.

\begin{figure}[t]
    \centering
    \includegraphics[width=0.48\linewidth]{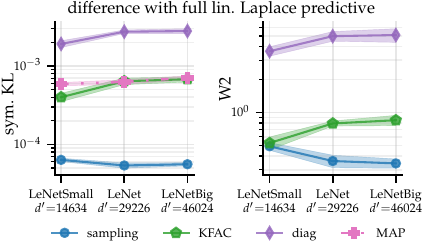}
    \hspace{0.01\linewidth}
    \includegraphics[width=0.24\linewidth]{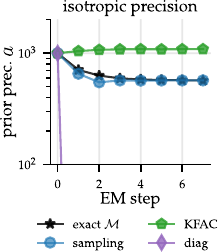}
    \hspace{0.01\linewidth}
    \includegraphics[width=0.23\linewidth]{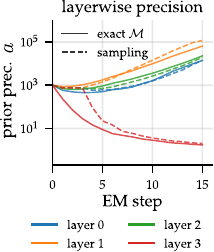}
    \vspace{-0.25cm}
    \caption{Left: similarity to exact lin. Laplace predictions on the MNIST test-set, in terms of symmetric KL and Wasserstein-2 distance, for different approximate methods applied to NNs of increasing size. Centre right: comparison of EM convergence for a single hyperparameter across approximations. Right: layerwise convergence for exact and sampling~methods.     \label{fig:mnist_preds_and_EM}
    }
\end{figure}

\paragraph{Accuracy of sampling hyperparameter selection} We first compare our SGD sampling EM iteration with 16 samples to full-covariance EM on LeNet, both without the g-prior. \Cref{fig:mnist_preds_and_EM}, middle-right, shows that for a single precision hyperparameter, both approaches converge in about 3 steps to the same value. In this setting, the diagonal covariance approximation diverges, and KFAC converges to a biased solution. We also consider learning layer-wise prior precisions by using the layerwise version of MacKay's M-step update. Here, neither the full covariance nor sampling methods converge within 15 EM steps. The precisions for all but the final layer grow in all steps. This reveals a pathology of this prior parametrisation: only the final layer's Jacobian, i.e. the final layer activations, are needed to accurately predict the targets; other features are pruned away.

We further compare the evidence approximations implied by a number of popular Laplace posterior approximations to the one from our SGD-based posterior samples. \Cref{fig:MNIST_bound_tightness_comparison} displays the evidence approximations obtained by plugging different posterior covariance approximations into the Laplace evidence given in \cref{eq:model_evidence_fixed_mean_bound}. In particular, we consider the full-covariance Laplace evidence (denoted $\c{M}$ in the plot), which we note does not match the exact model evidence due to the non-quadratic classification loss, the KFAC approximation to the covariance (labelled KFAC GGN), a single-sample KFAC Fisher estimate of the covariance (KFAC EF), the KFAC empirical Fisher matrix, and a diagonal Laplace covariance. We refer the reader to \cite{Immer21Selection,daxberger2021laplace} for a review of these approximations. We also include  the linear-Gaussian ELBO given in \cref{eq:expanded_linear_ELBO} and discussed in \cref{subsec:conjugate_EM}, where the approximate posterior is given by 16 SGD-based samples. In all cases, we initialise the regulariser at an optima found by applying the EM algorithm while using the full covariance Laplace evidence $\c{M}$ in the M-step. In this way, we may use the deviation of different objectives' optima from the optima of  $\c{M}$ as estimates of the bias in their corresponding approximations. 
The KFAC and KFAC-Fisher approximations result in a systematic overestimation of the evidence optima which grows with model size. This issue is even more pronounced for the diagonal covariance approximation. Surprisingly, we find the empirical Fisher to provide an accurate approximation. A similar finding is reported by \citep{Immer21Selection}. This is surprising, given that the empirical Fisher is known to provide a heavily biased estimate of loss curvature and thus perform poorly for optimisation tasks \citep{Kunstner2019limitations}. The sample-based ELBO shows close to no bias in its optima. This matches our experiments from \cref{sec:Resnet18}, where the sample-based EM algorithm behaves well even when using very few samples.

\begin{figure}[t]
    \centering
    \includegraphics[width=\linewidth]{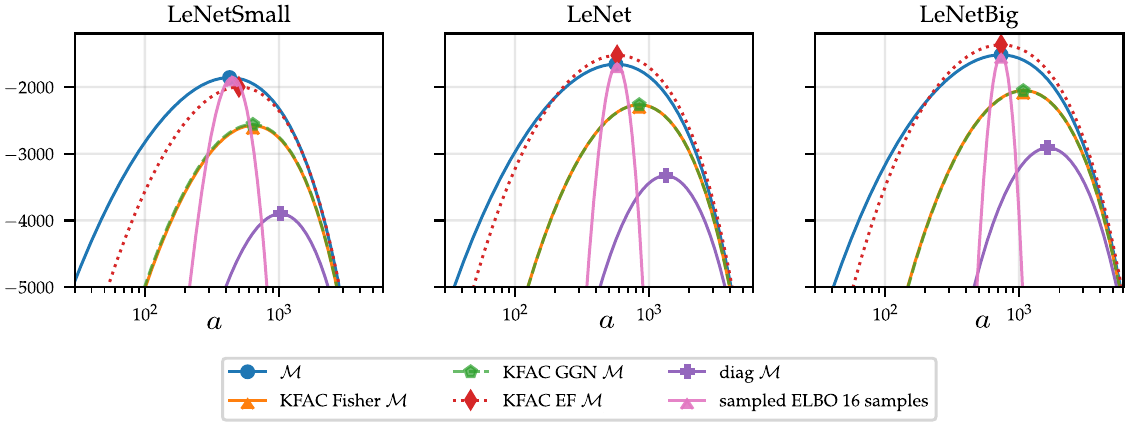}
    \caption{Full covariance linearised Laplace evidence $\c{M}$ together with approximations to this curve that rely on different covariance matrix approximations. A marker is placed at each curve's optima. We consider convolutional networks of increasing size (left to right) trained on the MNIST dataset.
    } \label{fig:MNIST_bound_tightness_comparison}
\end{figure}

\subsection{Predictive performance and robustness on CIFAR-100} \label{sec:Resnet18}

We showcase the stability and performance of our approach by applying it to CIFAR100 $c{=}100$-way image classification. The training set consists of $n{=}50k$ observations, and we employ a ResNet-18 model with $d \approx 11M$ parameters. 
To the best our knowledge, this is the first lin. Laplace approach that is capable of scaling to the CIFAR100 dataset, as the high-parameter and high-output dimensions prove intractable even on modern hardware.
Unless specified otherwise, we run 8 steps of EM with 6 samples to select $a$. We then optimise $64$ samples to be used for prediction.
We run each experiment with 5 different seeds reporting mean and std. error. %

\paragraph{Stability and cost of sampling algorithm} \ \Cref{fig:cifar100_em} shows that our sample-based EM converges in 6 steps, even when using a single sample. At convergence, $a \approx 10^4$ and $\hat{\gamma} \approx 700$, so $2a\gamma = 2\times700\times10^4=1.4\times10^7>1.1\times 10^7 = \tr{M}$. Thus, \cref{eq:effective-dim-condition} is satisfied and our low variance sample-then-optimise objective \cref{eq:new-loss} presents better properties even at convergence. We use 50 epochs of optimisation for the posterior mode and 20 for sampling. When using 2 samples, the cost of one EM step with our method is 45 minutes on an A100 GPU; for the KFAC approximation, this takes 20 minutes.

\begin{figure}[thb]
    \centering
    \includegraphics[width=0.65\textwidth]{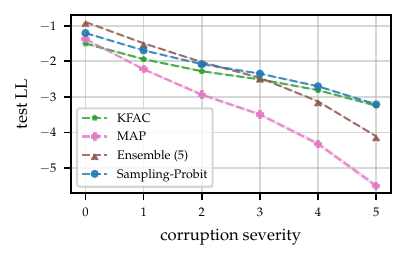}
\caption{Performance under distribution shift for ResNet-18 on CIFAR100.}%
\label{fig:cifar_pred}

\end{figure}

\paragraph{Evaluating performance in the face of distribution shift} \ We employ the standard benchmark for evaluating methods' test Log-Likelihood (LL)  \nomenclature[z-ll]{$LL$}{Log Likelihood} \nomenclature[z-nll]{$NLL$}{Negative Log Likelihood} on the increasingly corrupted data sets of \cite{hendrycks2016baseline,snoek2019can}.
We compare the predictions made with our approach to those from deep ensembles, arguably the strongest baseline for uncertainty quantification in deep learning \citep{lakshminarayanan2017,ashukha2020pitfalls}. \corr{We use a 5 element ensemble, as this is standard in the literature \citep{antoran2020depth,daxberger2020expressive} and the number of vector Jacobian products needed to train it roughly matches the amount used for 7 steps of sample-based EM optimisation with 6 posterior samples.}
We also consider a point-estimated predictions (MAP), and with a KFAC approximation of the lin. Laplace covariance  \citep{ritter2018scalable}. For the latter, constructing full Jacobian matrices for every test point is computationally intractable, so we use 64 samples for prediction, as we do for SGD sampling. The KFAC covariance structure leads to fast log-determinant computation, allowing us to learn layer-wise prior precisions \citep[following][]{Immer21Selection} for this baseline using 10 steps of non-sampled EM. For both lin. Laplace methods, we use the standard probit approximation to the categorical predictive distribution \citep{daxberger2021bayesian}.
\Cref{fig:cifar_pred} shows that for in-distribution inputs, ensembles performs best and KFAC overestimates uncertainty, degrading LL even relative to point-estimated MAP predictions. Conversely, our method improves LL. For sufficiently corrupted data, our approach outperforms ensembles, also edging out KFAC, which fares well here due to its consistent overestimation of uncertainty.

\begin{table}[h]
\begin{tabular}{@{}cccccc@{}}
\multicolumn{1}{c}{} & \multicolumn{1}{c}{$\kappa$} & MAP & Ensemble (5) & KFAC & Sampling \\
\cline{2-6}
\multicolumn{1}{c}{{marginal LL}} & $1$ & \multicolumn{1}{c}{-$1.40\pm0.00$} & \multicolumn{1}{c}{-$\mathbf{0.90\pm0.00}$} & \multicolumn{1}{c}{-$1.12\pm0.01$} & -$1.07\pm0.01$ \\ \midrule
\multicolumn{1}{c}{\multirow{4}{*}{{joint LL}}} & $2$ & \multicolumn{1}{c}{-$13.97\pm0.01$} & \multicolumn{1}{c}{-$6.86\pm0.01$} & \multicolumn{1}{c}{-$\mathbf{4.92\pm0.04}$} & -$5.14\pm0.04$ \\
\multicolumn{1}{c}{} & $3$ & \multicolumn{1}{c}{-$27.89\pm0.03$} & \multicolumn{1}{c}{-$14.17\pm0.03$} & \multicolumn{1}{c}{-$10.83\pm0.12$} & -$\mathbf{10.77\pm0.09}$ \\
\multicolumn{1}{c}{} & $ 4$ & \multicolumn{1}{c}{-$41.83\pm0.03$} & \multicolumn{1}{c}{-$22.29\pm0.04$} & \multicolumn{1}{c}{-$19.02\pm0.22$} & -$\mathbf{18.04\pm0.18}$ \\
\multicolumn{1}{c}{} & $5$ & \multicolumn{1}{c}{-$55.89\pm0.02$} & \multicolumn{1}{c}{-$31.07\pm0.09$} & \multicolumn{1}{c}{-$29.40\pm0.40$} & -$\mathbf{26.75\pm0.26}$ \\ \bottomrule
\end{tabular}

\caption{Comparison of methods' marginal and joint prediction performance for ResNet-18 on CIFAR100.}%
\label{tab:cifar_joint}
\end{table}

\paragraph{Joint predictions} Joint predictions are essential for sequential decision making, but are often ignored in the context of NN uncertainty quantification \citep{Janz19successor}. To address this, we replicate the ``\textit{dyadic sampling}'' experiment proposed by \cite{osband2022evaluating}. We group our test-set into sets of $\kappa$ data points and then uniformly re-sample the points in each set until sets contain $\tau$ points. That is, multiple coppies of the $\kappa$ original points. We then evaluate the LL of each set jointly. Since each set only contains $\kappa$ distinct points, a predictor that models self-covariances perfectly should obtain an LL value at least as large as its marginal LL for all values of $\kappa$.  We use $\tau{=}10(\kappa-1)$
and repeat the experiment for 10 test-set shuffles. 
Our setup remains the same as above but we use Monte Carlo marginalisation to push our Gaussian predictive distribution through the softmax instead of the probit approximation, since the latter discards covariance information. \Cref{tab:cifar_joint} shows that ensembles make calibrated predictions marginally but their joint predictions are poor, an observation also made by \cite{Osband2021epistemic}. Our approach is competitive for all $\kappa$, performing best in the challenging large~$\kappa$~cases.

\paragraph{Calibration of predictive uncertainty}
For the standard CIFAR100 test set, we separate our predicted probabilities into 10 equal width bins between 0 and 1. For each bin, we plot the proportion of targets that coincide with the class for which the predicted probability falls into the bin. This is shown in \cref{fig:calibration_plots}. KFAC overestimates uncertainty at all confidence levels whereas MAP underestimates it. Both sample-based linearised Laplace and ensembling show significantly improved calibration. While ensembles show a small amount of uncertainty overestimation consistently, our method underestimates uncertainty for low predicted probabilities and overestimates it for large predicted probabilities.

\begin{figure}[t]
    \centering
    \includegraphics[width=0.50\linewidth]{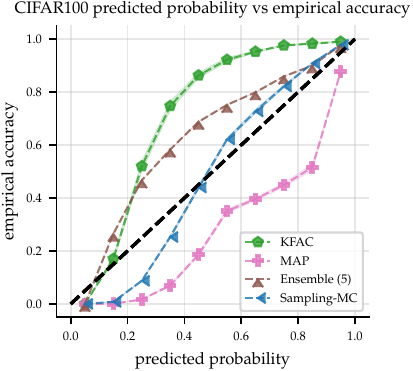}
    \vspace{0.1cm}
    \caption{Confidence vs accuracy plot (also known as a reliability diagram) for our CIFAR100 classification experiment.}
    \label{fig:calibration_plots}
    \vspace{-0.2cm}
\end{figure}

\subsection{Predictive performance on Imagenet}\label{sec:Res50IMAGENET}

We demonstrate the scalability of our approach by applying it to Imagenet $c{=}1000$-way image classification \citep{imagenet}. The training set consists of $n{\approx}1.2M$ observations, and we employ a ResNet-50 model with $d \approx 25M$ parameters. To the best our knowledge, this is the first lin. Laplace approach that is capable of scaling to the Imagenet dataset, as the high-parameter and high-output dimensions prove intractable even on modern hardware.
Our setup largely matches that described in \cref{sec:Resnet18} for CIFAR100 but we run 6 steps of EM with 6 samples to select a single regulariser parameter $a$. We then optimise $90$ samples to be used for prediction. Our computational budget only allows for a single run. Thus, we do not provide errorbars.

\begin{figure}[t]
\centering
\includegraphics[width=0.6\textwidth, trim=0 0 150 0, clip]{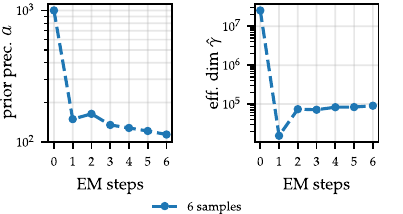}
\caption{Prior precision optimisation trajectories for ResNet-50 on Imagenet.}%
\label{fig:imagenet_pred}
\end{figure}

\paragraph{Stability and cost of sampling algorithm}

At each EM step, we run 10 epochs of optimisation to find the linear model's posterior mean and a single epoch of optimisation to draw samples. Each EM step takes roughly $26$ hours on a TPU-v3 accelerator. \Cref{fig:imagenet_pred} shows the regularisation strength reaches values ${\sim}150$ in 1 step and then drifts slowly towards lower values without fully converging within 6 steps.
This suggests the optimal $a$ value may lie bellow $100$. Unfortunately, we are unable to verify this, as the required computational cost exceeds our budget. 
Taking $a \approx 100$ and $\hat{\gamma} \approx 10^5$, we obtain $2 a \gamma \approx \mathrm{Tr} \;M = 2.5\times 10^7$. According to \cref{eq:effective-dim-condition} and thus our proposed sampling objective is expected to reduce variance throughout optimisation.

\begin{table}
\begin{tabular}{cccccccc}
 & \multirow{2}{*}{$\kappa$} & \multirow{2}{*}{MAP} & \multicolumn{1}{c}{Ensemble} & \multicolumn{2}{c}{KFAC} & \multicolumn{2}{c}{Sampling} \\
                          &   &         &          5 NNs              & init  & 5EM   & 6EM   & $\alpha{=}11.4$  \\ \hline
marginal LL               & 1 & -0.936  & -$\mathbf{0.815}$ & -1.449  & -1.493  & -0.924  & -0.917             \\ \hline
\multirow{4}{*}{joint LL} & 2 & -9.347  & -6.700                 & -6.289  & -6.286  & -7.814  & -$\mathbf{5.611}$  \\
                          & 3 & -18.733 & -13.268                & -12.112 & -12.246 & -15.065 & -$\mathbf{10.675}$ \\
                          & 4 & -28.093 & -20.029                & -19.872 & -20.493 & -22.416 & -$\mathbf{16.154}$ \\
                          & 5 & -37.416 & -26.938                & -29.839 & -31.221 & -29.787 & -$\mathbf{21.981}$ \\ \hline
\end{tabular}
  \caption{Comparison of methods' marginal and joint predictive performance for ResNet-50 on Imagenet.\label{tab:imagenet_joint}}%
\end{table}

\paragraph{Marginal and joint predictions}
Using the same setup from the CIFAR100 joint prediction experiment above, but drawing 90 samples with our method and KFAC, we make marginal and joint predictions on the Imagenet test set. The results are shown in \cref{tab:imagenet_joint}. Our methods regulariser optimisation trajectory in \cref{fig:imagenet_pred} suggests a value lower than the one obtained after 6 EM steps ($a=114$) may be preferred. Thus, we also report results with a value 10 times lower: $a=11.4$. 
For KFAC, we optimise layerwise prior precisions using 5 EM steps. This leads to small precisions which produce underconfident predictions and poor results. We attribute this to bias in the KFAC estimate of the covariance log-determinant. 
For comparison, we include KFAC results with a single regularisation parameter set to our initialisation value $a{=}10000$ (labelled ``init''). This choice maintains or improves performance across $\kappa$ values. 
Similarly to CIFAR100, ensembles obtains the strongest marginal test log-likelihood followed by our sampling approach for both regularisation strength values. KFAC overestimates uncertainty providing worse marginal performance than a single point-estimated network for both regularisation strength values. With $a{=}11.4$, our sampling approach performs best in terms of joint LL. Again, we find ensembles model joint-dependencies poorly. For $\kappa$ values between 2 and 5, their performance is comparable to that of the KFAC approximation. 

Concurrently with the present work, \cite{Deng2022Accelerated} introduce ``ELLA'', a Nyström-based approximation to the Laplace covariance. With ResNet-50 on Imagenet, the authors report a marginal ($\kappa{=}1$) test LL of -$0.948$, which is worse than our MAP model. However, differences in the MAP solution upon which the Laplace approximation is built (theirs obtains -$0.962$ LL) make  \cite{Deng2022Accelerated}'s results not directly comparable with ours. 
ELLA does not provide a model evidence objective and thus \cite{Deng2022Accelerated}'s result relies on validation-based tuning of the regularisation strength.

\section{Discussion}\label{sec:sample_based_conclusion}

This chapter introduced a sample-based approximation to inference and hyperparameter selection in Gaussian linear multi-output models. 
The approach is asymptotically unbiased, allowing us to scale the linearised Laplace method to ResNet models on CIFAR100 and Imagenet without discarding covariance information, which was computationally intractable with previous methods. 
The uncertainty estimates obtained through our method are well-calibrated not just marginally, but also jointly across predictions. Thus, our work may be of interest in the fields of active and reinforcement learning, where joint predictions are of importance, and computation of posterior samples is often~needed.

With this, we have largely delivered on the goals of the thesis; scaling calibrated uncertainty estimation to real-world-sized models and datasets. \cref{chap:DIP}, applies the methods developed so far to uncertainty estimation and experimental design for tomographic image reconstruction. Our scalable methods will allow us to perform uncertainty estimation for high-resolution volumetric reconstructions from neural networks, a problem not tackled before because of its large computational cost. 

\corr{
Since the publication of \cite{antoran2023samplingbased}, which formed the basis for this chapter, there have been further efforts, using both Bayesian and non-Bayesian methods, to obtain calibrated uncertainty estimates from large NNs trained on large datasets. Most notable is the work of \cite{Osband2021epistemic}, who use the sample-then-optimise objective \cref{eq:sample_then_optimise} to draw samples of the weights of a small ad-hoc neural network placed on top of a pre-trained model's final layer activations. This procedure does not even approximately draw samples from the true posterior of the ad-hoc network's weights, but provides calibrated uncertainty estimates in practise, both in terms of marginal and joint predictions. Also worth mentioning is the work of \cite{shen2024variational}, which represents the latest effort to adapt a standard optimiser used in deep learning to learn the mean and variance vector of a mean field variational posterior.
}

\chapter[The linearised deep image prior for computed tomography]{Uncertainty estimation and experimental design for computed tomography with the linearised deep image prior}\label{chap:DIP}  %

\ifpdf
    \graphicspath{{Chapter7/Figs/Raster/}{Chapter7/Figs/PDF/}{Chapter7/Figs/}}
\else
    \graphicspath{{Chapter7/Figs/Vector/}{Chapter7/Figs/}}
\fi

Linear inverse problems in imaging aim to recover an unknown image ${x} \in \R^{d_{x}}$ from measurements $y \in \R^{c}$, which are often modelled by the application of a forward operator $\op \in \R^{c \times d_{x}}$ to the image, and the addition of Gaussian noise $\eps \sim \c{N}(0, \, b^{-1}  I_{c})$. That is
\begin{equation}\label{eq:inverse_problem}
    y = \op {x} + \eps.
\end{equation}
This acquisition model is ubiquitous in machine vision, computed tomography (CT), and magnetic resonance imaging, among other applications. \nomenclature[z-ct]{$CT$}{Computed Tomography}
Due to the inherent ill-posedness of the task (e.g. $c \ll d_{x}$), suitable regularisation, or prior assumptions, are crucial for the stable and accurate recovery of $x$ \citep{tikhonov1977solutions,ito2014inverse}.

In this chapter, we focus on CT. Here, an emitter sends X-ray quanta through the object being scanned. The quanta are captured by $d_{p}$ detector elements placed opposite the emitter. Each row of $\op$ tells us about which regions (pixels) the X-ray quanta will pass through before reaching a detector element. This is illustrated in \cref{fig:CT_diagram}. The number of X-ray quanta measured by a detector pixel conveys information about the attenuation coefficient of the material present along the quanta's path.
This procedure is repeated at $d_{\c{B}}$ angles, yielding a measurement of dimension $c=d_{p}\cdot d_{\c{B}}$, corresponding to the $c \times d_{x}$ sized linear operator $\op$, which is given by the discrete Radon transform.

\begin{figure}[t]
\centering
\includegraphics[width=0.4\linewidth]{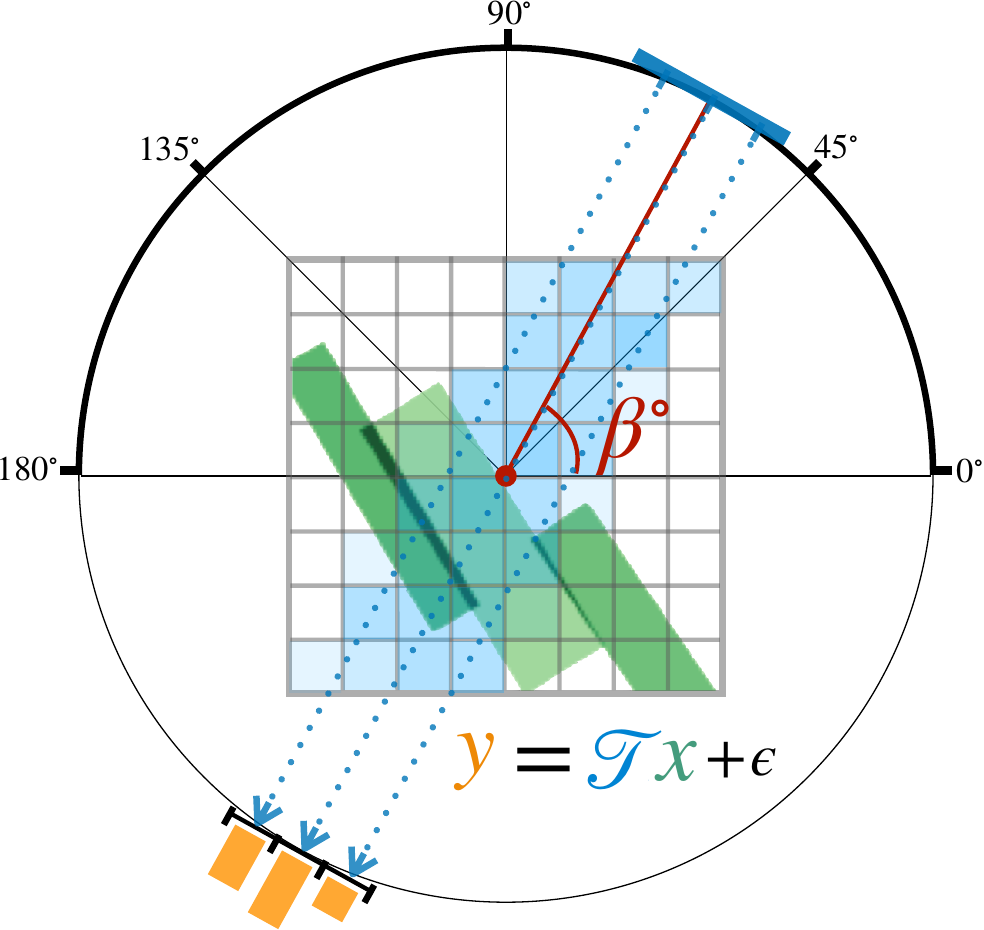}
    \caption{A schematic diagram of 2D parallel beam CT geometry, used in our image reconstruction experiments. In the diagram, the detector is set to angle $\beta$. At this angle, a $d_{p}$ dimensional observation is generated by the application of a $d_{p} \times d_x$ sized block of the $\op$ operator to the input $x$. In this plot, $d_p=3$, $d_x=64$ and the non-zero entries of the $d_{p} \times d_x$ sized block of $\op$ correspond to the pixels with blue colouring that the X-ray quanta pass through. We scan at $d_{\c{B}}$ angles, generating a full $c = d_{p} d_{\c{B}}$ dimensional observation.}
    \label{fig:CT_diagram}
\end{figure}

In recent years, deep-learning based approaches have achieved outstanding performance on a wide variety of tomographic problems \citep{arridge2019solving, ongie2020deep, Wangye:2020}. 
Most deep learning methods are supervised; they rely on large volumes of paired training data. Alas, these often fail to generalise out-of-distribution \citep{antun2020instabilities}; small deviations from the distribution of the training data can lead to severe reconstruction artefacts. Pathologies of this sort call for both unsupervised deep learning methods---free from training data and thus mitigating hallucinatory artefacts \citep{BoraJalal:2017,HeckelHand:2019,Tolle2021meanfield}---and uncertainty quantification 
\citep{kompa2021second,Vasconcelos2022UncertaINR}---informing the user about (un)reliability in reconstructions.

We focus on the deep image prior (DIP), perhaps the most widely adopted unsupervised deep learning approach \citep{Ulyanov:2018}.
DIP regularises the reconstructed image  $\tilde x$ by reparametrising it as the output of a deep convolutional neural network (CNN). It does not require paired training data, relying solely on the structural biases induced by the CNN architecture. 
The DIP has proven effective on tasks ranging from denoising and deblurring to challenging tomographic reconstructions \citep{LiuSunXuKamilov:2019,baguer2020diptv,knopp2021warmstart,DarestaniHeckel:2021,GongLi:2019,cui2021populational,BarutcuGursoyKatsaggelos:2022}.
Nonetheless, the DIP only provides point reconstructions without uncertainty~estimates.

In this chapter, we apply the methods developed through this thesis to equip DIP reconstructions with reliable uncertainty estimates.
In literature, there are two notable probabilistic reformulations of the DIP \citep{Cheng2019Bayesian,Tolle2021meanfield}, but their focus is on preventing overfitting rather than accurately estimating uncertainty.
Distinctly from these, we only estimate the uncertainty associated with a specific reconstruction. We do this by computing Gaussian-linear model type error-bars for a local linearisation of the DIP around its mode \citep{Mackay1992Thesis,Khan19approximate,Immer21Selection}, and refer to the method as \textit{linearised DIP}. Linearised approaches have recently provided state-of-the-art uncertainty estimates for supervised deep learning models \citep{daxberger2021bayesian}. We also explore the incorporation of the total variation (TV) regulariser, ubiquitous in CT reconstruction, as a Bayesian prior for the weights of the linearised model. This regulariser is unnormalised and does not lend itself to standard Laplace (i.e. local Gaussian) approximations \citep{helin2022edgepromoting}. We tackle this issue using predictive complexity prior (PredCP) framework of \cite{nalisnick2020predictive}.

\nomenclature[z-tv]{$TV$}{Total Variation}
\nomenclature[z-dip]{$DIP$}{Deep Image Prior}
\nomenclature[z-predcp]{$\c{P}redCP$}{Predictive Complexity Prior}
\nomenclature[z-cnn]{$CNN$}{Convolutional Neural Network}

We demonstrate our approach on high-resolution CT reconstructions of real-measured 2D and 3d Micro CT ($\mu$CT) projection data. An example of the former is in \cref{fig:walnut_mini}.
Empirically, the method's pixel-wise uncertainty estimates predict reconstruction errors more accurately than existing approaches to uncertainty estimation with the DIP.
This is not at the expense of accuracy in reconstruction: the reconstruction obtained using the standard regularised DIP method \citep{baguer2020diptv} is preserved as the predictive mean, ensuring compatibility with advancements in DIP research.

\begin{figure}[t]
\centering
\includegraphics[width=0.6\textwidth]{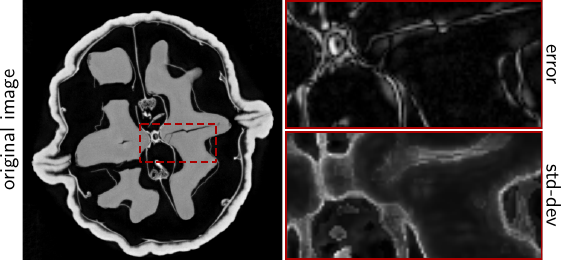}
\caption{X-ray reconstruction ($501{\times}501\, \text{px}^2$) of a walnut (left), the absolute error of its CT reconstruction (top) and pixel-wise uncertainty from the linearised DIP (bottom).}
\label{fig:walnut_mini}
\end{figure}

We then go on to leverage the aforementioned uncertainty estimates to perform adaptive experimental design for CT scan angle selection. We consider a setting where the CT scan is performed in two phases. 
First, a sparse pilot scan is performed to provide data with which to fit adaptive methods.
These are then used to adaptively select angles for a full scan using the linearised deep image prior as a data-dependent prior. 
We demonstrate this procedure with a synthetic dataset where a different ``preferential'' angle is most informative for each image.
Unlike simple linear models, the linearised DIP's designs depend on previously observed targets. This adaptivity allows linearised DIP designs to outperform the equidistant angle baseline, which is almost always used in deployment.

The contributions of this chapter can be summarised as follows.
\begin{itemize}%
    \item We propose a novel approach to bestow reconstructions from the TV-regularised DIP with uncertainty estimates, by linearising the DIP around its optimised reconstruction and providing the linear model's error-bars as a surrogate for those of the DIP. We perform sample-based EM inference in this model, scaling to high resolution real-measured 2d reconstructions and 3d volumetric reconstructions. To be best of our knowledge, this is the first instance of uncertainty estimation for NN-based 3d volumetric CT reconstruction. Our approach yields far more accurate uncertainty estimation than existing probabilistic formulations of the DIP.
    \item We leverage the linearised DIP as a data-dependent prior for CT experimental design. This allows us to perform adaptive design, where successive acquisition locations dependent on previous observed targets (as opposed to only the input locations), while preserving the tractability of a linear model. This method outperforms equidistant angle selection on a synthetic task.
\end{itemize}

The rest of this chapter is organised as follows. \Cref{sec:dip_preliminaries} covers the DIP, the TV regulariser and other preliminaries not introduced earlier in the thesis. \Cref{sec:linerised_DIP} presents the linearised DIP and a novel TV-based prior for the linear model's parameters. \cref{sec:dip_inference} discusses efficient approaches to inference.
\Cref{sec:dip_uncertainty_experiments} presents a demonstration of the linearised DIP on real-measured high-resolution $\mu$CT data. \nomenclature[z-mct]{$\mu CT$}{Micro Computed Tomography} \cref{sec:experimental_design} introduces experimental design with linear models and discusses how the linearised DIP can be used within this framework. Finally, \cref{sec:dip_design_experiments}, demonstrates our approach to experimental design on synthetic data, and \cref{sec:dip_conclusion} concludes the chapter.

\section{Preliminaries}\label{sec:dip_preliminaries}

This section reviews some CT-specific concepts that were not covered in earlier chapters of the thesis. 

\subsection{Total variation regularisation}
The imaging problem, given in \cref{eq:inverse_problem}, admits a linear subspace of solutions consistent with the observation $y$\footnote{This statement disregards the effects of the observation noise, which introduces a strictly convex constraint, but in practise it is very weak.}. Thus, regularisation is needed for stable reconstruction.
Total variation (TV) is perhaps the most well established regulariser \citep{rudin1992nonlinear,chambolle2010introduction}. The anisotropic TV semi-norm of an image vector $x \in \R^{d_{x}}$ imposes an $L_1$ constraint on image gradients:
\begin{align}\label{eq:TV_equation}
    \text{TV}(x) \!=\! \sum_{i,j} | \mathfrak{x}_{i,j} - \mathfrak{x}_{i+1,j}| + \sum_{i,j} |\mathfrak{x}_{i,j} - \mathfrak{x}_{i,j+1}|,
\end{align}
where  $\mathfrak{x} \in \R^{\mathfrak{h} \times \mathfrak{w}}$ denotes the vector $x$ reshaped into an image of height $\mathfrak{h}$ by width $\mathfrak{w}$, and $d_{x} = \mathfrak{h}\cdot \mathfrak{w}$.
This leads to the regularised reconstruction formulation
\begin{equation}\label{eq:simple_optimisation_objective}
\tilde x \in \argmin_{x\in\R^{d_x}} \mathcal{L}(x) \quad \text{with} \quad \mathcal{L}(x) \coloneqq b \|\op x - y\|^2_2 +  \mathrm{TV}(x),
\end{equation}
where the hyperparameter $b>0$ determines the strength of the regularisation relative to the fit term.

\subsection{Bayesian inference for inverse problems}

The Bayesian framework provides a consistent approach to uncertainty estimation in imaging problems \citep{KaipioSomersalo:2005,Stuart,Seeger11scale}. The image to be recovered is treated as a random variable.
Instead of finding a single best reconstruction $\tilde{x}$, we aim to find a posterior $\c{P}_{x|y}$, with density $\rho(x | y)$ that scores every candidate $x\in\R^{d_{x}}$ according to its agreement with the observation $y$ and prior density $\rho(x)$. The loss in \cref{eq:simple_optimisation_objective}
can be viewed as the negative log of an unnormalised posterior density, i.e.\ $\rho(x | y) \!\propto\! \exp(- \mathcal{L}(x))$, and $\tilde x$ as its mode, i.e.\ the \textit{maximum a posteriori} (MAP) estimate. The least squares loss corresponds to a Gaussian likelihood  $p(y | x) = \N (y; \, \op x, I)$ and the TV regulariser to a prior over images $\c{P}$ with density $\rho(x) \propto \exp(-\lambda \TV (x))$. With this, we are ready to crank the lever of Bayesian reasoning, as introduced in \cref{subsec:linear_posterior_inference}.

Our work partially departs from this framework in that it \emph{solely concerns itself with characterising plausible reconstructions around the mode $\tilde{x}$} \citep{Mackay1992Thesis}. 
This has two key advantages, 1) \emph{tractability}: the likelihood induced by NN reconstructions is strongly multi-modal, and both analytically and computationally intractable. In contrast, the posterior for the local model is Gaussian; 2) \emph{interpretablity}: even if we could obtain the full posterior, downstream stakeholders not versed in probability are likely to have little use for it.
A single reconstruction and its pixel-wise uncertainty may be more interpretable to end-users \citep{antoran2021clue,Bhatt2021transparency}.

\subsection{The Deep Image Prior (DIP)} \label{subsec:preliminaries_DIP}

The DIP \citep{Ulyanov:2018,Ulyanov2020deep} reparametrises the reconstructed image as the output of a CNN $g : \R^d \to \R^{d_x}$ with learnable parameters $v \in\R^{d}$ and a fixed input, which we have omitted from our notation for clarity.
The DIP can be seen as a reparametrisation of the reconstructed image that provides a favourable structural bias.
We introduce the optimisation problem  
\begin{equation}\label{eq:DIP_MAP-obj2}
    \tilde{v} \in\argmin_{v\in\R^{d}} b \|\op g(v) - y\|_2^2 +  \TV(g(v)),
\end{equation}
and the recovered image is given by $\tilde x =g(\tilde{v})$. Penalising the TV of the DIP's output avoids the need for early stopping and improves reconstruction fidelity \citep{LiuSunXuKamilov:2019,baguer2020diptv}. 
The standard choice of CNN architecture is the fully convolutional U-net \citep{ronneberger2015u}. We also adopt this architecture in this chapter.  Although the parameters $v$ must be optimised separately for each new measurement $y$, we follow \citep{barbano2021deep,knopp2021warmstart} to reduce the cost with task-agnostic pretraining.

Since its introduction by \cite{Ulyanov:2018,Ulyanov2020deep}, the DIP has been improved with early stopping \citep{Wang2021early}, TV regularisation \citep{LiuSunXuKamilov:2019,baguer2020diptv}, and pretraining  \citep{barbano2021deep,knopp2021warmstart,barbano2023image}.
We build upon these recent advancements by providing a scalable method to estimate the error-bars of DIP's reconstructions.
This is a relatively unexplored topic.
Building upon \cite{aga2018cnngp} and \cite{Novak19CNN}, \cite{Cheng2019Bayesian} show that in the infinite-channel limit, the DIP converges to a Gaussian process (GP). In the finite-channel regime, the authors
approximate the posterior distribution over the DIP's parameters with stochastic gradient Langevin dynamics (SGLD) \citep{Welling11Langevin}.
\cite{laves2020uncertainty} and \cite{Tolle2021meanfield} use factorised Gaussian variational inference \citep{blundell2015weight} and MC dropout \citep{hron18dropout,Vasconcelos2022UncertaINR}, respectively. 
These probabilistic treatments of DIP primarily aim to prevent overfitting, as opposed to accurately estimating uncertainty.
While they can deliver uncertainty estimates, their quality tends to be poor.
In fact, obtaining reliable uncertainty estimates from deep-learning based approaches, like the DIP, largely remains a challenging open problem \citep{antoran2019understanding,Snoek19can, ashukha2020pitfalls,Foong20Approximate, barbano2022probabilistic, antoran2020depth}.

\section{Linearised DIP uncertainty estimation for CT}\label{sec:linerised_DIP}

In this section, we build a probabilistic model to characterise the uncertainty associated with reconstructions around $\tilde{v}$, a mode of the regularised DIP objective obtained using \cref{eq:DIP_MAP-obj2}. \Cref{subsec:linearised_prior} describes the construction of a linearised surrogate for the DIP reconstruction. \Cref{subsec:posterior_predictive} describes how to compute the surrogate model's error-bars and use them to augment the DIP reconstruction. \Cref{subsec:tv_for_NN} discusses how we include the effects of TV regularisation into the surrogate model. 

\subsection{From a prior over parameters to a prior over images}\label{subsec:linearised_prior}

After training the DIP to an optimal TV-regularised setting $\tilde{x} = g(\tilde{v})$ using \cref{eq:DIP_MAP-obj2}, we linearise the network around $\tilde{v}$ by applying \cref{eq:linearised_model}, and obtain the {affine in $w\in\R^d$} function $h(w)$. The error-bars obtained from Bayesian inference with $h(w)$ will tell us about the uncertainty in $\tilde{x}$.
To this end, consider the Bayesian model,
\begin{gather}  \notag
    y|w \sim \N ( \op h(w), b^{-1} I), \quad w  \sim \N ( 0, A^{-1}) \\
    \text{with} \quad  h(w) \coloneqq g(\tilde{v}) + \phi(w - \tilde{v}),\label{eq:Model_weight_space} 
\end{gather}
where $\phi  = \partial_v g(\tilde v) \in \R^{c \times d}$ is the Jacobian of our NN\footnote{In this chapter, $\phi$ is a matrix, as opposed to a function that returns a matrix, because our NN's input is clamped to a constant.}. We will select the precision $A$ to incorporate TV constraints into the computed error-bars in \cref{subsec:tv_for_NN}. We have introduced the noise variance $b^{-1}$ as an additional hyperparameter which we will learn using the marginal likelihood.

\subsubsection{Demonstration: sampling from the linearised DIP prior}

To provide intuition about the linearised model, we push samples from $w\sim \N ({0}, A^{-1})$, through $h$. 
The resulting reconstruction samples are drawn from a Gaussian distribution with covariance $K \in \R^{d_{x} \times d_{x}}$ given by $\phi A^{-1}\phi^{\top}$. We show an example in \cref{fig:samples_from_priors}.
Here, the Jacobian $\phi$ introduces structure from the NN function around the linearisation point $\tilde v$. For this example, we train our NN on CT data simulated by using the KMNIST dataset as the original images. Thus, out prior samples contain features from the KMNIST character that the DIP was trained on.

\begin{figure}[t]
    \centering
    \includegraphics[width=0.6\textwidth]{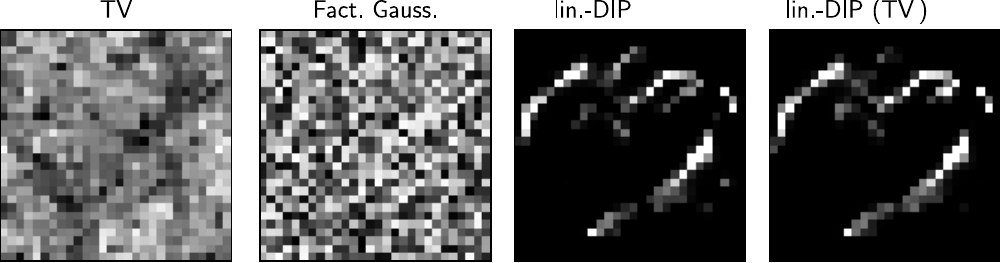}
    \caption{Samples from different priors over the reconstructed image $x$.
    From left to right, the plots show samples from the TV prior with density $\propto \exp(-\TV(x))$, drawn with HMC, from an isotropic Gaussian prior, from a linearised DIP trained on a MNIST character, and from the same model but paired with the TV-PredCP prior over the weights introduced in \cref{subsec:tv_for_NN}.
    The latter leads to smoother samples with less artefacts than the standalone linearised DIP prior.
    }
    \label{fig:samples_from_priors}
\end{figure}

\subsection{Computing the predictive uncertainty}\label{subsec:posterior_predictive} 
We augment the DIP reconstruction $\tilde{x}$ with Gaussian predictive error-bars computed with the linearised model $h$ described in \cref{eq:Model_weight_space}. This yields the predictive distribution $\N (\tilde x, K_{x|y})$.
Denoting the reconstruction space kernel matrix $K = \phi A^{-1}\phi^{\top} \in \R^{d_x \times d_x}$, the observation space prior covariance $K_{yy} = TKT^{\top} \in \R^{c\times c}$, and the cross terms $K_{xy} = KT^{\top} \in \R^{d_x \times c}$,
the posterior covariance $K_{x|y} \in \R^{d_x \times d_x}$ is given by 
\begin{align}\label{eq:posterior_predictive}
    K_{x|y} &=  \phi(b^2 \phi^\top \op^\top  \op \phi + A)^{-1} \phi^\top = K - K_{xy} (K_{y y} + b^{-1} I)^{-1} K_{xy}^{\top}.
\end{align}
Importantly, \cref{eq:posterior_predictive} depends on the inverse of the observation space covariance $K_{y y}  + b^{-1} I$, which we expect to be much lower dimensional than the covariance over reconstructions, or parameters. 
Thus, the cost of computing \cref{eq:posterior_predictive} scales as $\mathcal{O}(d_{x} c^{2})$.

\subsection{Incorporating TV-smoothness into the prior over the weights} \label{subsec:tv_for_NN}

This section aims to design a prior that constraints $h$'s error-bars, such that the model only considers low TV reconstructions as plausible.
Our architecture if fully convolutional. We follow the guiding intuition that if the CNN's filters are smooth, its output will be so as well. 
With this, we place a block-diagonal Matérn-$\nicefrac{1}{2}$ covariance Gaussian prior on the linearised model's weights, similarly to \cite{fortuin2021bayesian}.
In particular, we introduce dependencies between parameters in the same CNN filter by constructing $A$ as a block diagonal matrix. We denote the block corresponding to each filter as $A_k$, where $k$ indexes the filter. These matrix blocks are given by
\begin{equation}\label{eq:CovFunc}
     [A^{-1}_{k}]_{ij, i'j'} = 
            a^{-1}_k\exp\Big({\frac{-\sqrt{(i-i')^2+(j-j')^2}}{\psi_{k}}}\Big),
\end{equation} %
where the tuple $(i,j)$ indexes the spatial location of a specific filter pixel in terms of height and width.
The per-filter lengthscale $\psi_{k}$ regulates the filter smoothness. 
The hyperparameter $a^{-1}_{k}$ determines the marginal prior variance for each filter. 
Both parameters are shared among all filters in an architectural block in the U-net, indexed by $k \in \{1,2,  \ldots , r\}$.  We write $\psi = [\psi_{1}, \psi_{2}, \ldots , \psi_{r}]$ and $a^{-1} = [a^{-1}_{1}, a^{-1}_{2}, \ldots , a^{-1}_{r}]$. A diagram of our U-net architecture that highlights all architectural blocks is provided in \cref{fig:unet-diagram}.  The chosen U-net architecture is fully convolutional and thus \cref{eq:CovFunc} applies to all parameters, reducing to a diagonal covariance for $1\times 1$ convolutions. 

\begin{figure}[t]
    \includegraphics[width=0.6\textwidth]{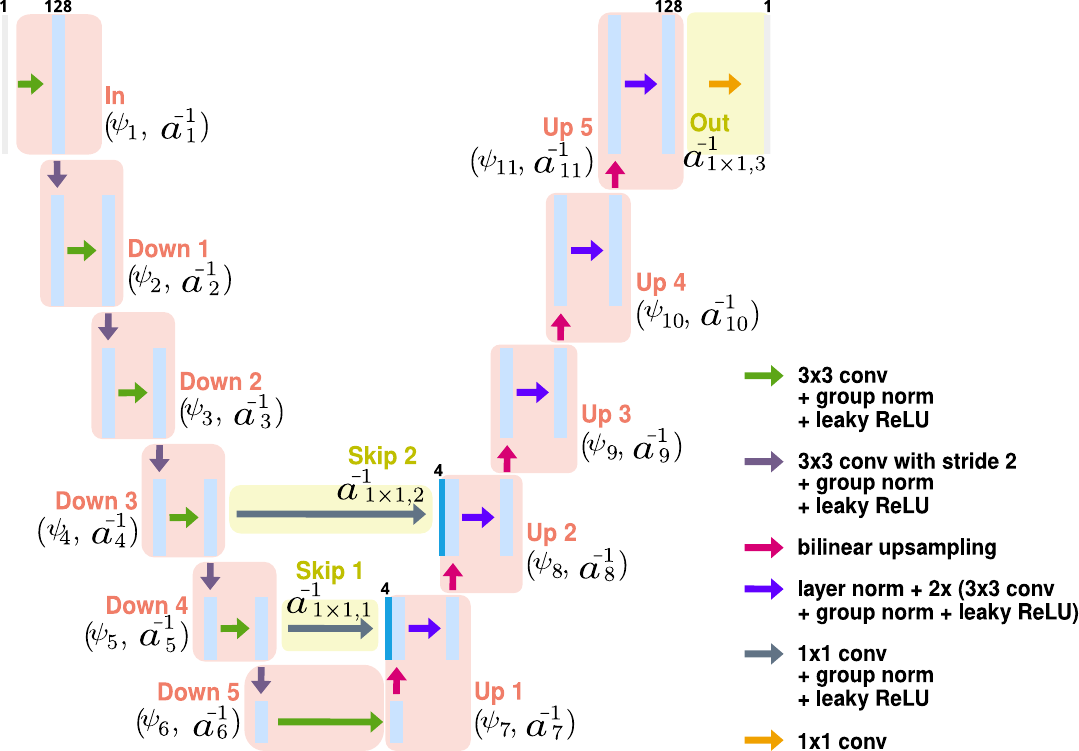}
    \caption{A schematic of the U-net architecture used in our 2d $\mu$CT experiments experiments. For KMNIST, we use a reduced, 3-scale U-net without group norm layers. Each light-blue rectangle corresponds to a multi-channel feature map. We highlight the architectural components corresponding to each block ${1,\dotsc,D}$ for which a separate prior is defined with red and yellow boxes.}
    \label{fig:unet-diagram}
\end{figure}

In \cref{fig:monotonicity_linear_regime}, we experimentally verify that an image generated from a linearised NN prior with smoother filters will present lower TV. In particular, we find a bijective relationship between the each filter's lengthscale $\psi_k$ and the expected TV $\E_{w_k \sim \N( 0,  A_k^{-1} )}[\TV(\phi w)]$ where $w_k$ is a sub-vector of $w$ and the other filter's parameters (non $w_k$) are held fixed.  This suggests we may use the predictive complexity prior (PredCP) framework of \cite{nalisnick2020predictive} to construct a prior over the parameters which acts as a surrogate for the $\TV$ prior. 
In particular, we construct a prior over the lengthscale parameters as:
\begin{gather}\label{eq:ell_prior_DIP}
    \prod_{k=1}^{r}\text{Exp}(\kappa_{k})\left|\frac{\partial \kappa_{k}}{\partial \psi_{k}}\right|, \\
    \text{with }\kappa_{k} := \E_{
    w_k \sim \N (0, A^{-1}_k)
    \prod^{r}_{i=1,i\neq k} \delta(w_{i})}\left[{\TV}(\phi w) \right]\label{eq:kappa_definition}
\end{gather}
where the subscript $_k$ indicates we select the subvector of weights corresponding to CNN block $k$.
We have related a block's contribution to the expected $\TV$, $\kappa_k$, to the block's filter lengthscale $\psi_{k}$ via the change of variables formula. The independence across blocks assumed in \cref{eq:ell_prior_DIP} ensures dimensionality preservation, formally needed in the change of variables. It follows from the triangle inequality that $\sum \kappa_{k}$ is an upper bound on the expectation under the distribution $\E_{w \sim \N (0, A^{-1})}[\TV(\phi w)]$, further motivating the factorisation.

\begin{figure}[t]
    \centering
    \includegraphics[trim={0 0.85cm 0 0},height=1.3in,clip, width=0.95\linewidth]{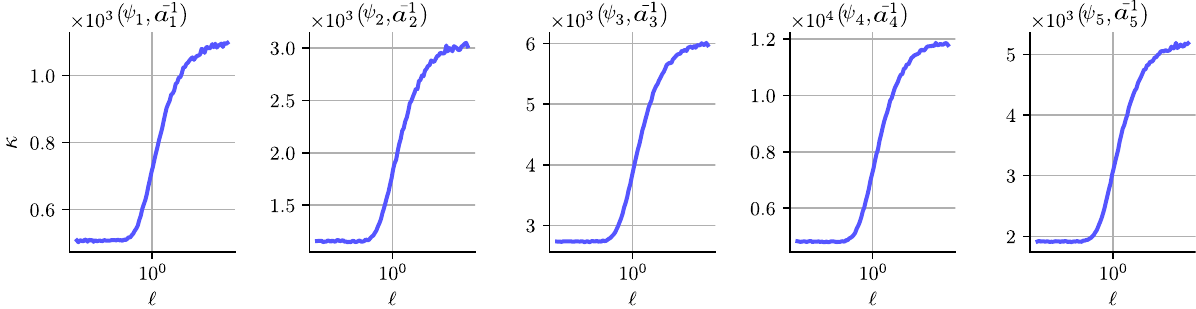}\\[0.5em]
    \includegraphics[width=0.95\linewidth,height=1.4in,trim={0 0 0 0.5cm},height=1.3in,clip]{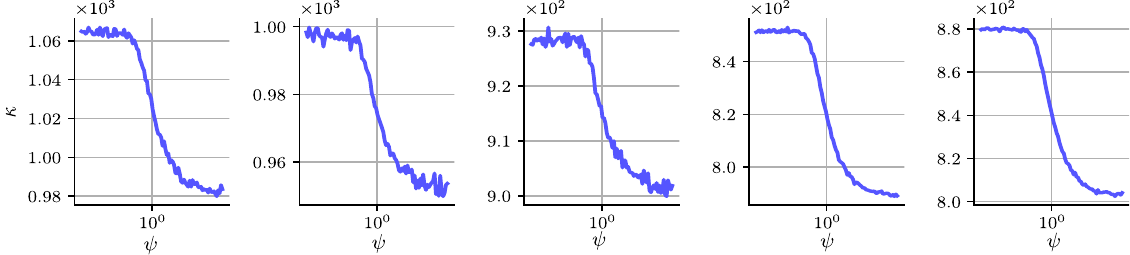}
    \caption{
    Experimental evidence of the monotonicity (and thus invertibility) of the relationship between a CNN block's lengthscale $\psi_k$ and the expected TV $\kappa = \E_{w_k \sim \N( 0,  A_k^{-1} )}[\TV(\phi w)]$,  computed across 50 linearised U-nets trained on different the KMNIST images. The horizontal axis represents lengthscale $\psi \in [0.01, 100]$. $\kappa$ is estimated with $10$k Monte Carlo samples. In the bottom row we scale the marginal variances of $JA^{-1}J^\top$ to be 1 for every value of $\psi$. This decouples $\psi$ from $a^{-1}$, allowing us to observe the smoothing effect from larger lengthscales. }
    \label{fig:monotonicity_linear_regime}
\end{figure}

\begin{derivation} \textbf{Factorising yields an upper bound on the expected TV} \\

Let $\c{S}$ be the set of indices for all adjacent pixel pairs in an image. These images are flattened into $d_x$ length vectors and thus can be indexed by a single number. We denote by $\phi_j \in \R^d$ the row of the Jacobian $\phi$ corresponding to pixel $i$.  We denote by $\phi_{jk}$ the Jacobian row subvector corresponding to  pixel $i$ and weights in NN block $k$. With this
\begin{align*}
 \E_{w\sim\N (0, A^{-1})}\left[{\TV}(\phi w) \right] &= \sum_{(i,j) \in \c{S}} \E_{w\sim \N (0, A^{-1})} | (\phi_{i} w -  \phi_{j} w) |  \\
 &= \sum_{(i,j) \in \c{S}} \E_{w_k\sim \N (0, A^{-1})} | \sum_k^r (\phi_{ik} -  \phi_{jk}) w_k) |] \\
&\leq \sum_{(i,j) \in \c{S}}   \sum_k^r \E_{w_k \sim \N ( 0, A^{-1}_k)}\left[| (\phi_{ik} -  \phi_{jk}) w_k | \right] \\
&=  \sum_k^r \E_{\N (0, A^{-1}_k) \prod^{r}_{i=1,i\neq k}\delta(w_{i})}\left[\sum_{(i,j) \in \c{S}} | (\phi_{i} -  \phi_{j}) w | \right] \\
&= \sum_k^r \kappa_k,
\end{align*}
 
Thus, the separable form of the TV prior as a regulariser ensures that the expected TV under the joint distribution of parameters is also regularised.
\end{derivation}

Note that \cref{eq:ell_prior_DIP} can be computed analytically. However, its direct computation is costly and we instead rely on numerical methods, described in \cref{subsec:CG_for_TVPredCP}. In \cref{fig:samples_from_priors}  we show samples from the linearised NN model where $\psi$ is chosen using the marginal likelihood with TV-PredCP constraints.
Incorporating the TV-PredCP leads to smoother samples with less discontinuities.

\section{Approaches to scalable inference and hyperparameter learning}\label{sec:dip_inference}
In a typical tomography setting, the dimensionality $d_x$ of the image $x$ and $c$ of the observation $y$ can be large, e.g.\ $d_{x} > {\rm1e5}$ and $c >{\rm 5e3}$. Thus holding the input space covariance matrices (e.g.\ $K$ and $K_{x|y}$) in memory is infeasible. This also complicates computing determinants, needed to evaluate Gaussian densities, and to learn hyperparameters. Following \cref{chap:sampled_Laplace}, we develop a series of approaches that avoid instantiating these matrices explicitly. We only access Jacobian and covariance matrices through matrix–vector products.

\cref{subsec:CG_for_TVPredCP} introduces a hyperparameter learning objective that combines the linearised model's evidence with the TV-PredCP prior over filter lengthscales. We approximate the objective's gradients with CG.  \cref{subsec:precond} discusses the computation of a randomised preconditioner for CG. \cref{g-prior-sample-CG} discards the TV-PredCP prior in favour of the g-prior. This allows us to employ the sample-based EM iteration from \cref{chap:sampled_Laplace}, in combination with CG, to accelerate inference. \cref{subsec:sampling_for_3d} discusses the extension of the latter algorithm to very large 3d volumetric reconstructions by substituting CG solves with SGD (as suggested in \cref{chap:SGD_GPs}). Finally, we discuss making sample-based predictions that model covariances between pixels in \cref{sec:post_sampling}

\subsection{Conjugate-gradient hyperparameter learning for the PredCP TV prior}\label{subsec:CG_for_TVPredCP}

In this subsection we consider hyperparameter learning with the TV-PredCP prior introduced in \cref{subsec:tv_for_NN}. Here, the prior precision $A$ is parametrised in terms of the vectors of block-wise marginal variances $a^{-1}\in \R^d$ and block-wise lengthscales $\psi \in \R^d$. To learn $\psi$, we combine the above objective with the TV-PredCP's log-density, which acts as a regulariser. 
The resulting expression used to learn the full set of hyperparameters $(b^{-1}, a^{-1}, \psi)$ resembles a Type-II MAP \citep{williams2006gaussian} objective
\begin{align}
    \log  \,p(y |\psi; \, b^{-1}, a^{-1} ) &+ 
    \log p(\psi; a^{-1})   \notag \\
   \approx  & - \frac{1}{2} b ||y - \op g(\tilde{v})||_{2}^{2} - \frac{1}{2} w_\star^{\top} A w_\star -\frac{1}{2} \log |K_{y y} + b^{-1} I| \notag \\
   &- \sum_{k=1}^{r} \kappa_{k} + \log \left|\frac{\partial \kappa_{k}}{\partial \psi_{k}}\right| +  C, \label{eq:type2MAP}
\end{align}
where $C$ is independent of the hyperparameters and the vector $w_\star \in \R^{d_{w}}$ is the posterior mean of the linear model's parameters (see \cref{sec:choice_of_posterior_mode}). We compute it as
\begin{gather}
    w_\star = A^{-1} \phi^\top \op^\top (K_{yy} + b^{-1} I)^{-1}\left(y + \op \left( \phi\tilde v - g(\tilde v) \right) \right) \label{eq:dip_post_mode}
\end{gather}
and solve the therein contained linear system with CG. The vector we solve against consists of the observations offset by the constant in $w$ terms in the tangent linear model \cref{eq:Model_weight_space}.

The remaining bottleneck in evaluating \cref{eq:type2MAP} is the log-determinant $\log|K_{yy} + b^{-1}I|$, which has a cost $\mathcal{O}(c^{3})$.
Alas, we cannot apply the sample-based MacKay update from \cref{subsec:Mackay_update} to learn hyperparameters other than the entries of a diagonal prior precision matrix. Thus, we resort to gradient descent with CG-based log-determinant gradient trace estimation, as described in \cref{sec:CG_hyperparam_learning}. We use a preconditioner, which we describe in \cref{subsec:precond}. Despite this, the large computational cost associated with this method only allows us to perform a single EM step. We summarise the procedure in \cref{alg:lin-dip}.  We go on to describe efficient estimation of the TV-PreCP term gradients.  

\begin{algorithm2e}
    \SetAlgoLined
    \SetNoFillComment
    \DontPrintSemicolon
    \SetSideCommentLeft
    \KwData{noisy measurements $y$, a CNN $g(\cdot)$, operator $\c{T}$, initial prior precision $A$.
    } 
    $\tilde{v} \gets$ \texttt{fit\_DIP(}$\c{T}, y, g(v)$\texttt{)} \tcp*{by minimising \cref{eq:DIP_MAP-obj2}}
    $w_\star \gets$ \texttt{fit\_linearised\_model(}$\c{T}, y, g(\tilde v)$\texttt{)}\tcp*{using \cref{eq:dip_post_mode}}
    $\c{P} \gets$ \texttt{compute\_preconditioner(} $\c{T}, g(\tilde v), A$ \texttt{)}\tcp*{following \cref{subsec:precond}.}
    ${\sigma}^{2}_{y}, \{a^{-1}_{k}, \psi_{k}\}_{k=1}^{r} \gets$ \texttt{optimise\_hyperparams(}$\c{T}, y, g(\tilde v), w_\star, \c{P}$\texttt{)}\tcp*{with \cref{eq:type2MAP}, and  \cref{eq:tv_grad_est,eq:tv_grad_est_2}. Use preconditioned CG.}
     $\hat{K}_{x|y} \gets$ \texttt{fast\_sampling(}$\c{T}, g(\tilde v), {\sigma}^{2}_{y}, \{a^{-1}_{k}, \psi_{k}\}_{k=1}^{D}, \c{P} $\texttt{)} \tcp*{following \cref{sec:post_sampling} and with preconditioned CG.}
\KwResult{mean reconstruction $g(\tilde{v})$, posterior covariance estimate $\hat{K}_{x|y}$}
\caption{Linearised deep image prior PredCP-TV inference}
\label{alg:lin-dip}
\end{algorithm2e}

\subsubsection{MC sampling for TV-PredCP optimisation}\label{sec:predcptv_computation}

For large images, exact evaluation of the expected TV with \cref{eq:kappa_definition}  is computationally intractable. Instead, we estimate the gradient of $\kappa_k$ with respect to $\theta = (\sigma^2, \psi)$ using a Monte-Carlo approximation of the expectation
\begin{equation}
    \frac{\partial \kappa_{k}}{\partial\theta} = \E_{w_{k}\sim \N (0, A^{-1}_k)}\left[ \frac{\partial \TV(x)}{\partial x} \phi_{k}\frac{\partial w_{k} }{\partial \theta}\right],
    \label{eq:tv_grad_est}
\end{equation}
where $\phi_k=\frac{\partial g(v)}{\partial v_k}|_{v=\tilde v}$. $\frac{\partial \TV(x)}{\partial x}$ is evaluated at the sample $x{=}\phi_{k}w_{k}$ and $\frac{\partial w_{k}}{\partial \theta}$ is the reparametrisation gradient for $w_{k}$, a prior sample of the weights of CNN block $k$.
The gradient for the Jacobian log-determinant term is
\(\frac{\partial}{\partial_{\theta}} \log\left|\frac{\partial \kappa_{k}}{\partial \theta}\right| = \frac{\partial^{2}\kappa_{k}}{\partial \theta^{2}}/\frac{\partial \kappa_{k}}{\partial \theta}\), and since the second derivative of the TV semi-norm is almost everywhere zero, we have
\begin{gather}\label{eq:tv_grad_est_2}
    \frac{\partial^2 \kappa_{k}}{\partial\theta^2} = \E_{w_{k}\sim \N (0, A^{-1}_k)}\left[ \frac{\partial \TV(x)}{\partial \theta} \phi_{k}
    \frac{\partial^2 w_{k}}{\partial \theta^2}
    \right].
\end{gather}

\subsubsection{Demonstration: blockwise lengthscale optimisation}

In \cref{fig:walnut_mrglik_opt_hyper-main}, demonstrate the use of \cref{eq:type2MAP} to learn the full set of prior hyperparameters depicted in \cref{fig:unet-diagram}. We also ablate the TV-PredCP regulariser to better understand its effects. We refer to the ablated setting as ``MLL'' and the setting where the TV-PredCP is kept as ``Type-II MAP''. We use the high-resolution real-measured dataset of \cite{der_sarkissian2019walnuts_data} and provide full details on the experimental setup in \cref{subsec:2d_dip_demo}.

\begin{figure}[t]
    \centering
    \includegraphics[width=\textwidth]{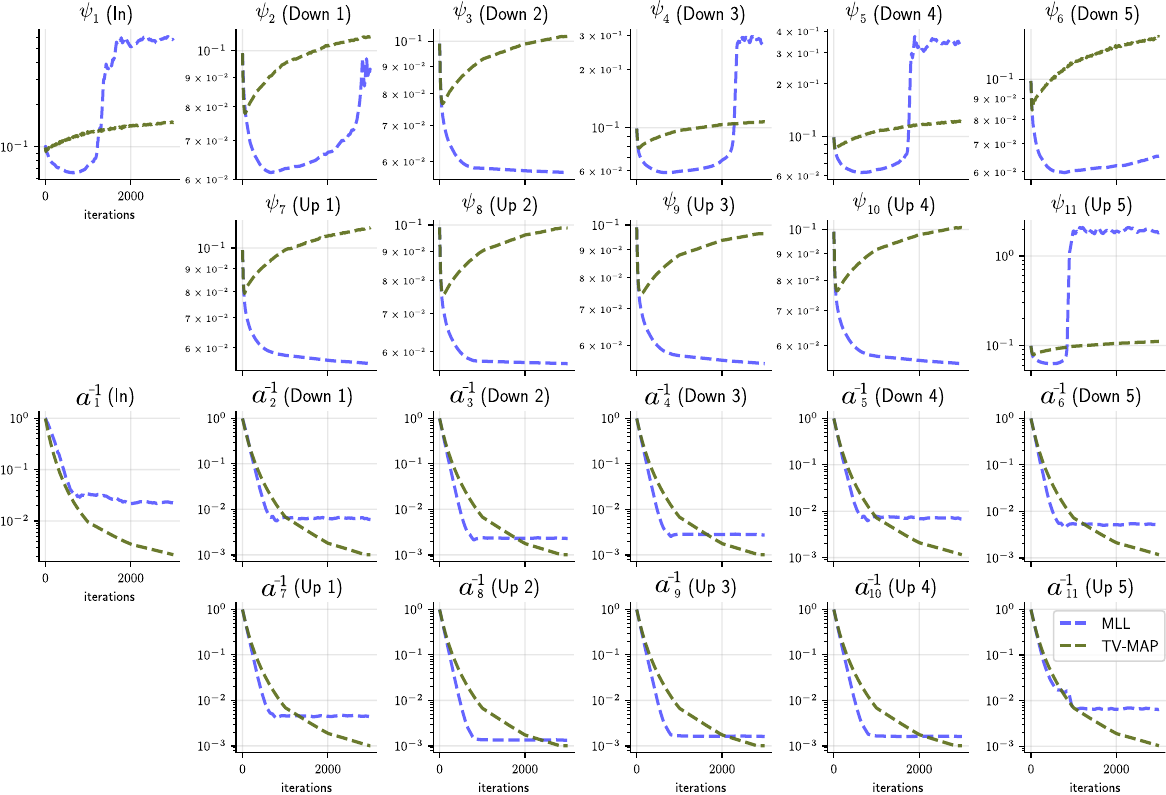}
        \caption{
        Optimisation traces for the lengthscales and marginal variances corresponding to our U-net's $3 \times 3$ convolution layers. We consider both MLL and Type-II MAP and we use the Walnut data described in \cref{sec:dip_uncertainty_experiments}. The TV-PredCP leads to larger prior lengthscales $\psi$ and lower variances $a^{-1}$.
        }
    \label{fig:walnut_mrglik_opt_hyper-main}
\end{figure}

During MLL and Type-II MAP optimisation, many layers' prior variance goes to $a^{-1} \approx 0$.  This phenomenon is known as ``automatic relevance determination'' \citep{Mackay1996BAyESIANNM,Tipping:2001}, and simplifies our linearised network, preventing uncertainty overestimation. 
Type-II MAP hyperparameters optimisation drives  $\psi$ to larger values, compared to MLL. 
This restricts the linearised DIP prior, and thus the induced posterior, to functions that are smooth in a TV sense, leading to smaller error-bars.

\subsection{Randomised SVD preconditioning for CG}\label{subsec:precond}

CG's convergence can greatly be accelerated by using a preconditioner $\c{P}^{-1}$ which approximates $(K_{yy} + b^{-1})^{-1}$. We use randomised SVD methods \citep{Halko2011structure,martinsson2020randomized}, as we find them to be more numerically stable and provide better performance than  pivoted Cholesky methods, despite the latter being more common in the literature \citep{WangPGT2019exactgp}.
Our preconditioner is based on a randomised approximation of
$K_{yy} = \op \phi A^{-1}\phi^{\top}\op^{\top}$
as $ \tilde U \tilde \Lambda \tilde U^{\top}$, where $\tilde U \in \R^{c \times r}$ and $r\ll c$. The steps to construct this approximation are
\begin{itemize}
    \item Constructing a standard normal test matrix $ R \in \R^{c \times r}$ with entries sampled from $\N(0, I)$. 
\item Computing the (thin) QR decomposition $\tilde Q \tilde R = K_{yy}  R$ where $\tilde Q\in \R^{c\times r}$ is an orthonormal matrix.
\item Constructing symmetric matrix $B\in\R^{r\times r}$ (much smaller than $K_{yy}$) as $B = Q^T  K_{yy} Q$.
\item Computing the eigendecomposition  $B = V\Lambda V^\top$, and recovering $\tilde U = QV$.
\end{itemize}
This method requires $\mathcal{O}(r)$ matrix vector products with $K_{yy}$ to construct not only an approximate basis but also its complete factorisation. 
Our approximation is $\c{P} = \tilde U \tilde \Lambda \tilde U^{\top} + b^{-1} I$
The final step is to invert $\c{P}$ in $\c{O}(r^3)$ time using the Woodbury identity 
\begin{gather*}
    \c{P}^{-1} = bI - b^2 \tilde U   (b\tilde U^{\top} \tilde U + \tilde \Lambda^{-1})^{-1} \tilde U^{\top}.
\end{gather*}
We choose a value of $r=400$.

\begin{algorithm2e}[p]\LinesNotNumbered
    \KwData{Linearised network $h$, linearisation point $\tilde v$, measurements $y$, discrete Radon transform $\op$, U-Net Jacobian $\phi$, initial precision $ a > 0$, number of samples $m$, noise precision $B = bI$}\vspace{2mm}
    \DontPrintSemicolon 
    \SetKwFunction{FMain}{Kvp}
    \SetKwProg{Fn}{Function}{:}{}
    \Fn{\FMain{$v$, $ a$, $\op\phi$, s, $\mathrm{B}^{-1}$}}{
        \KwRet $ \op\phi (  a^{-1} \text{diag}(s^{\odot 2}) ) \phi^T \op^T v + \mathrm{B}^{-1}v$ \;
      }\vspace{2mm}
    $s \gets ( \sum_{i<c} (\op_{i}\phi)^{ \odot 2} )^{\nicefrac{-1}{2}} $ \tcp*{$i$ indexes output dimensions and $\odot$ refers to an operation applied to vector entries elementwise.}
	\While{ $ a$ has not converged}{
	$\c{P} \gets \text{Compute-preconditioner}(\texttt{Kvp})$\\
	\For{$j=1, \dotsc, m$}{
        $\zeta_j^0 \gets \op\phi(s \odot  w_{0,i}) + \c{E}_j \,\,\,
    \text{where}\,\,\, \c{E}_j \sim \N(0, \mathrm{B}^{-1}) \,\,\, \text{and} \,\,\,  w_{0,i} \sim \N(0, A^{-1})$\\
	$\texttt{c}_j \gets \text{CG}\left( \texttt{Kvp}, \zeta_j^0,\,\,   \text{precond.}{=}\c{P}\right)$ \\
        $\zeta_{j} \gets w_{0,j} - \op\phi (  a^{-1} \text{diag}(s^{\odot 2}) ) \phi^T \op^T \texttt{c}_{j}$
	}
        $\delta \gets \op(\phi \tilde v - h(\tilde v))$\\
        $\texttt{c} \gets \text{CG}\left(\texttt{Kvp}, \,
        y{+}\delta,\,\,\text{precond.}{=}\c{P}\right)$\\
	$w_\star \gets s \odot  a^{-1}\phi^T\op^T \texttt{c}$\\
	$\hat \gamma \gets \frac{1}{m} \sum_{j=1}^m   
        \|\op\phi( s \odot\zeta_j)\|_{2}^2$ \\
	$ a' \gets \hat \gamma / \|w_\star\|^{2}_2$ \\
	$ a \gets  a'$\\
 }
\KwResult{Optimised precision $a$}
\caption{Kernelised sampling-based linearised inference for CT }
\label{alg:sampling_kernelised}
\end{algorithm2e}

\subsection{Scalable sample-based hyperparameter learning with the g-prior}\label{g-prior-sample-CG}

As shown in \cref{fig:walnut_mrglik_opt_hyper-main}, stochastic-gradient optimisation of hyperparameters requires thousands of steps.  When the number of observations $c$ is large, solving many linear systems to estimate the log determinant gradient at each gradient step becomes too expensive to be practical. 

We address this issue by adapting the sample-based EM iteration of \cref{chap:sampled_Laplace} to the 2d CT setting. We use the diagonal g-prior $A = a \tr(\phi^\top \op^\top \op \phi)$, implemented by scaling the feature vectors, as in \cref{eq:g-prior-scaling}. However, unlike in   \cref{chap:sampled_Laplace}, $n=1$ and thus computing the scaling vectors in closed form is tractable.
For the E-step, we draw samples using the weight-space form of pathwise conditioning, given in \cref{eq:pathwise_weights} and re-stated here for the CT setting
\begin{gather*}
        \zeta_i = w_{0,i} - A^{-1} \phi^\top \op^\top (K_{yy} + b^{-1} I)^{-1}\left(\c{E}_i + \op  \phi w_{0,i} \right)  \\
    \text{with} \quad 
  \c{E}_i \sim\N(0, b^{-1} I) \spaced{and}
  w_{0,i} \sim \N(0, A^{-1}) \notag,
\end{gather*}
and we use preconditioned CG for to solve the linear system. We do not warm-start our CG iteration, drawing new prior and noise samples $(\c{E}_i, w_{0,i})$ at succesive E steps. 
We find the linear model's posterior mode $w_\star$ by using preconditioned CG to solve \cref{eq:dip_post_mode}. The preconditioner is described in \cref{subsec:precond}.

There exists unidentifiability between our isotropic noise precision parameter $b$ and the prior precision $a$. We resolve this by fixing $b=1$ and setting
\begin{gather*}
    a  = \hat \gamma / \|w_\star\|^{2} \spaced{with} \hat \gamma = \frac{1}{m} \sum_{i=1}^m   
        \|\op\phi \zeta_i\|_{2}^2 
\end{gather*}
in the M step. We run this EM algorithm, described in detail in \cref{alg:sampling_kernelised}, to convergence.

\subsubsection{Demonstration: CG sample-based EM iteration}

\begin{figure}[t]
    \centering
    \includegraphics[width=\linewidth]{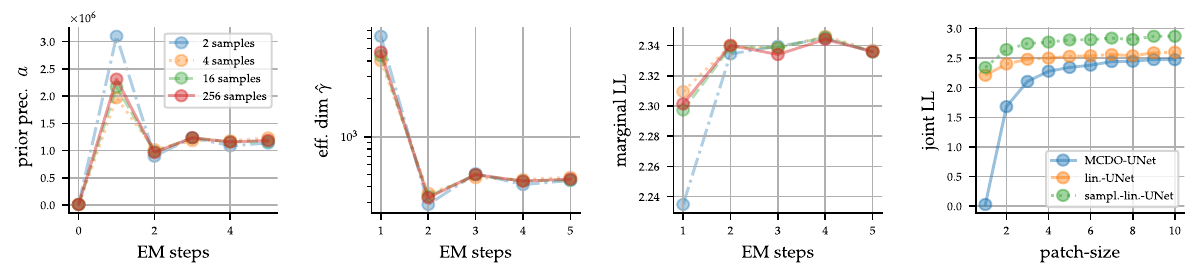}
    \caption{Left 3 plots: traces of prior precision, eff. dim., and marginal test LL vs EM steps for our tomographic reconstruction task with $c=7680$ described in \cref{sec:dip_uncertainty_experiments}. Right: joint test LL for varying image patch sizes for sample-based EM inference with the g-prior, inference in the TV-PredCP DIP model (\cref{subsec:tv_for_NN}, labelled ``lin-Unet'') and MC Dropout (labelled ``MCDO''). }
    \label{fig:DIP_EM_convergence}
\end{figure}

Similarly to image classification (recall \cref{sec:sampled_laplace_experiments}), the key hyperparameter is the number of samples to draw for the EM iteration. Again, as shown in \cref{fig:DIP_EM_convergence}, the number of samples can be kept low (e.g. 2), and we find around 5 steps to suffice for convergence of the prior precision. Our large preconditioner results in CG always hitting the desired low error tolerance within 10 steps. When the problem is small enough for CG to be tractable, preconditioning makes our kernelised EM algorithm notably faster than its primal form SGD-based counterpart from \cref{sec:sampled_laplace_experiments}.

\subsection{SGD sampling EM iteration for very large reconstructions} \label{subsec:sampling_for_3d}

In settings where the dimensionality of the observation vector $y$ is very large, i.e. $c \geq 50k$, CG may fail to converge quickly. 3d volumetric reconstruction is an example of such a setting. Here, the the dimensionality of the observation can be larger than the number of parameters of the 3d CNN used for reconstruction, i.e. $c \gtrsim d$. We deal with this, by substituting CG-based sampling for SGD-based sampling in our sample-based EM algorithm, described in \cref{g-prior-sample-CG}. That is, we apply \cref{alg:sampling_kernelised}, but with every instance of \texttt{CG} substituted with \texttt{SGD}. We use the Nesterov plus geometric averaging SGD variant for quadratic problems described in \cref{subsec:the_right_optimiser}.

\subsection{Posterior covariance matrix estimation by sampling}\label{sec:post_sampling}

The covariance matrix $K_{x | y}$ is too large to fit into memory for high-resolution tomographic reconstructions. Instead, we draw samples from $\N (x; 0, K_{x|y})$ via pathwise conditioning as
\begin{gather*}
        x_i =  \phi w_{0,i} - \phi A^{-1} \phi^\top \op^\top (K_{yy} + b^{-1} I)^{-1}\left(\c{E}_i + \op  \phi w_{0,i} \right)  \\
    \text{with} \quad 
  \c{E}_i \sim\N(0, b^{-1} I) \spaced{and}
  w_{0,i} \sim \N(0, A^{-1}). \notag
\end{gather*}
We compute the solution to the linear systems via Preconditioned CG for 2d reconstruction problems and via SGD for 3d problems.

Since only nearby pixels are expected to be correlated, we estimate cross covariances for patches of only up to $10\times 10$ adjacent pixels. Using larger patches yields no improvements. We use the biased, but lower variance, estimator  $\hat K_{x|y} = \frac{1}{2m} [\sum_{j=1}^m \textrm{diag}(x_{j})^{\odot 2} + x_{j} x_{j}^{\top}]$ for $({x}_{i})_{i=1}^m$ samples from the 0-mean posterior predictive distribution over a given patch.

\section{Demonstration: uncertainty estimation in CT with the linearised DIP}\label{sec:dip_uncertainty_experiments}

We now demonstrate the approach on real-measured cone-beam $\mu$CT data of a walnut \citep{der_sarkissian2019walnuts_data}. We first consider reconstructing the middle (2d) slice of the target volume from sparse measurements in \cref{subsec:2d_dip_demo}. We then demonstrate the scalability of our methods by estimating uncertainty for the full 3d volumetric reconstruction in \cref{subsec:sampling_for_3d}. In all cases, we have access to a ``ground truth'' reconstruction obtained from an exhaustive dense scan.  We use the pre-trained U-net models from \cite{barbano2021deep}.In all cases, we have access to a “ground truth” reconstruction obtained from an

\subsection{Uncertainty estimation for image reconstruction}\label{subsec:2d_dip_demo}

We begin by reconstructing a 2d image.
We target the $501\times 501\,\text{px}^2$ ($d_x=251\,001$) central slice of the volumetric data of \cite{der_sarkissian2019walnuts_data}. We consider two levels of sparsity. The first uses a subset of measurements taken from $d_{\c{B}}=60$ angles and $d_p = 128$ detector rows ($c=7680$). The second setting uses $d_p = 256$ detector rows and thus $c{=}15360$.  Here, $K$ is too large to store in memory and $K_{y y}$ too expensive to assemble repeatedly.
Our U-net has $d{=}2.97M$ parameters.

\begin{figure}[thb]
    \centering
    \includegraphics[width=\linewidth]{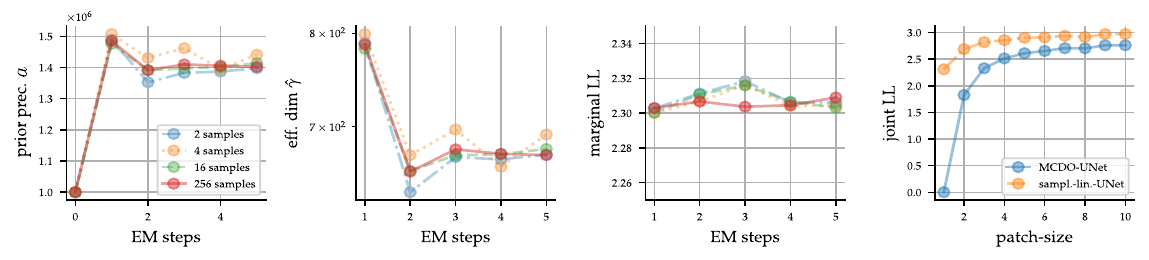}
    \caption{Left 3 plots: traces of prior precision, eff. dim., and marginal test LL vs EM steps for our tomographic reconstruction task with $c=15360$ described in \cref{sec:dip_uncertainty_experiments}. Right: joint test LL for varying image patch sizes for sample-based EM inference with the g-prior, inference in the TV-PredCP DIP model (\cref{subsec:tv_for_NN}, labelled ``lin-Unet'') and MC Dropout (labelled ``MCDO''). In this case, our initialisation for $a$ is close to the optima; its value only changes by around $50\%$ throughout EM iteration, and mostly in the first step.}
    \label{fig:DIP_EM_convergence_120}
\end{figure}

\begin{table}[htb]
\vspace{-0.2cm}
\caption{Tomographic reconstruction: test LL and wall-clock times (A100 GPU) for both 2s reconstruction data sizes.}
\vspace{0.0cm}
\resizebox{\textwidth}{!}{
\begin{tabular}{cccccccccc}
\multicolumn{1}{c}{} & \multicolumn{4}{c}{$c=7680$}                                &  & \multicolumn{4}{c}{$c=15360$}                               \\ \cline{2-5} \cline{7-10} 
\multicolumn{1}{c}{} & \multicolumn{2}{c}{LL} & \multicolumn{2}{c}{wall-clock time (min.)} &  & \multicolumn{2}{c}{LL} & \multicolumn{2}{c}{wall-clock time (min.)} \\
\multicolumn{1}{c}{Method} &
  \multicolumn{1}{c}{\small{marginal}} &
  \multicolumn{1}{c}{\small{$(10\times 10)$}} &
  \multicolumn{1}{c}{params optim.} &
  \multicolumn{1}{c}{prediction} &
   &
  \multicolumn{1}{c}{\small{marginal}} &
  \multicolumn{1}{c}{\small{$(10\times 10)$}} &
  \multicolumn{1}{c}{params. optim.} &
  \multicolumn{1}{c}{prediction} \\ \cline{1-5} \cline{7-10} 
MCDO-Ug(v) & 0.028 & 2.474 & $0$ & $3^\prime$ &  & 0.002 & 2.762 & $0$ & $3^\prime$\\
lin.-Ug(v) & 2.214 & 2.601 & $1260^\prime$ & $14^\prime$ & & $-$ & $-$ & $-$ & $-$\\
sampl.-lin.-Ug(v) & \textbf{2.341} &  \textbf{2.869}& $12^\prime$ &  $14^\prime$ &  & \textbf{2.310} & \textbf{2.972} &    $15^\prime$ &  $14^\prime$               \\ \cline{1-5} \cline{7-10} 
\end{tabular}
}
\label{Tab:walnuttab}
\vspace{-0.25cm}
\end{table}

\paragraph{Comparing hyperparameter learning schemes}

We consider two different families for the prior precision over the weights $A$, each is matched with a different inference scheme. The first is the CNN-block-wise Matérn-$\nicefrac{1}{2}$ TV-PredCP prior introduced in \cref{subsec:tv_for_NN}. We pair it with CG-based marginal likelihood estimation for hyperparameter learning, as described in \cref{subsec:CG_for_TVPredCP}. The large cost of this approach only allows us to perform a single EM step and we are restricted to the smaller $c=7680$ setting. The second model is the g-prior, which we combine with CG-sampling-based EM iteration, described in \cref{g-prior-sample-CG}. We label this method ``sampled'' in our plots. Unless otherwise specified, we use 16 samples for stochastic EM, and 1024 for~prediction.
While, the TV-PredCP model's layerwise prior variance and lengthscales take $21$ hours to converge (the corresponding optimisation traces are in \cref{fig:walnut_mrglik_opt_hyper-main}, the g-prior model takes only 12 minutes---both on an A100 GPU. These times are provided in \cref{Tab:walnuttab}. 
\cref{fig:DIP_EM_convergence} and \cref{fig:DIP_EM_convergence_120} show that sample-based EM iteration converges within 4 steps and using as few as 2 samples for both the $c{=}7680$ setting and the $c{=}15360$ setting (although the reported times use 5 steps and 16 samples). Avoiding explicit estimation of the covariance log-determinant gradient provides us with a two order of magnitude speedup.

\begin{figure}[thb]
    \centering
    \includegraphics[width=0.85\textwidth]{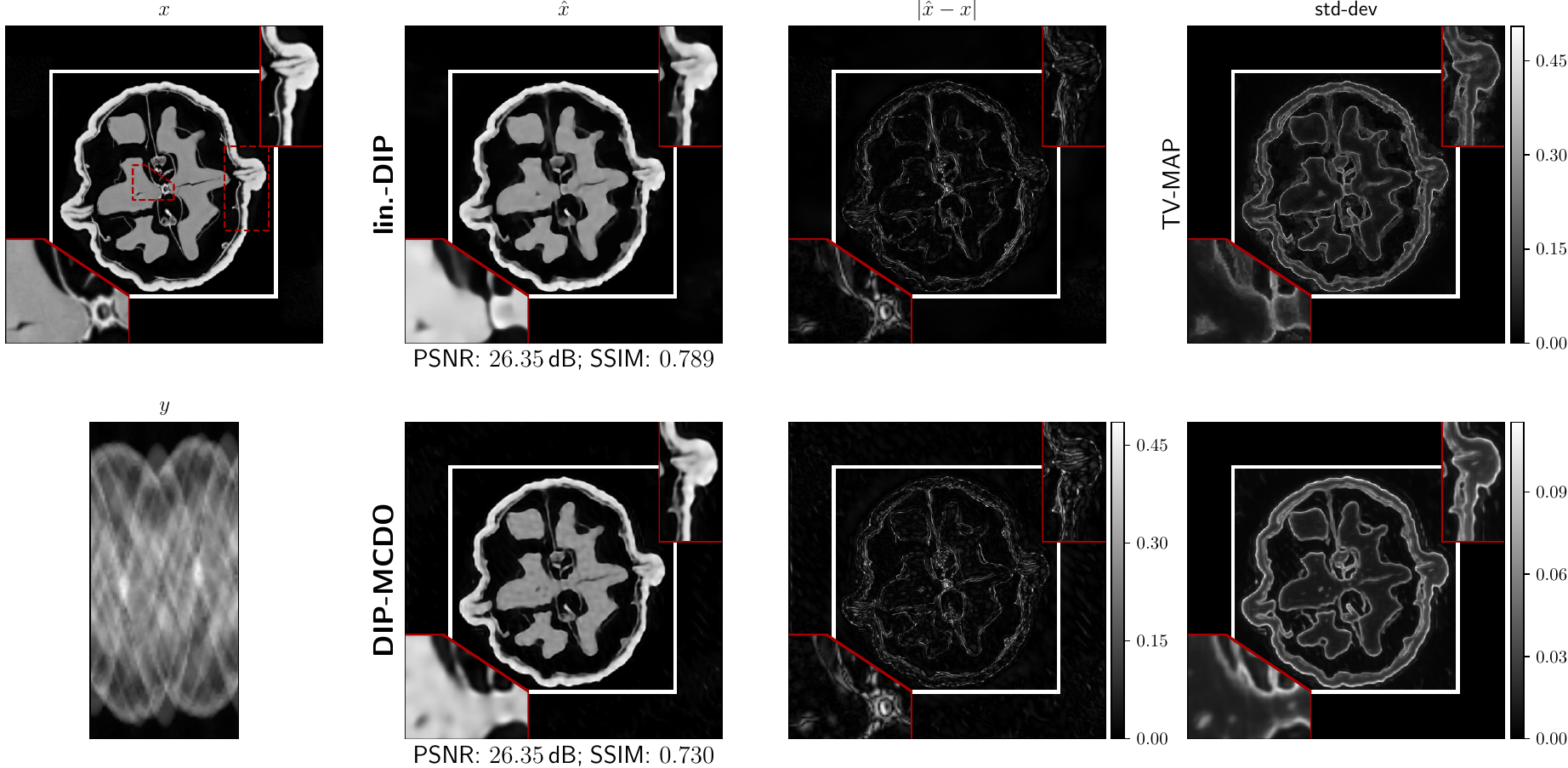}
    \caption{
    Reconstruction of a $501\times 501\,\text{px}^2$ slice of a scanned Walnut from $c=7680$ dimensional measurements using lin.-DIP (using the TV-PredCP prior) and DIP-MCDO along with their respective uncertainty estimates. The zoomed regions (outlined in red) are given in top-left.
    }
    \label{fig:main_walnut_qualitative}
\end{figure}

\begin{figure}[thb]
    \centering
    \includegraphics[width=\linewidth]{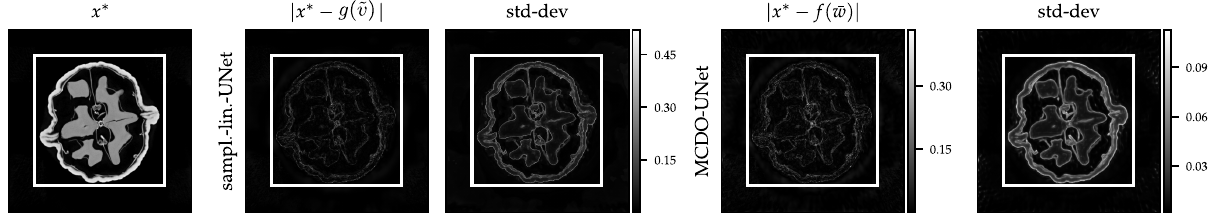}
    \caption{Original 501$\times$501 pixel  walnut image and reconstruction error for a $c{=}15360$ dimensional observation, along with pixel-wise std-dev obtained with sampling lin. Laplace and MCDO.}
    \label{fig:120_qualitative_DIP}
\end{figure}

\paragraph{Evaluating predictive performance}
We compare both of our linearised DIP models with MC dropout (MCDO), the most common baseline for NN uncertainty estimation in tomographic reconstruction \citep{laves2020uncertainty, Tolle2021meanfield}.
\Cref{fig:main_walnut_qualitative} shows, qualitatively, that the marginal standard deviation assigned to each pixel by the linearised DIP (TV-PredCP) aligns with the pixelwise error in the U-net reconstruction in a fine-grained manner in the $c=7680$ setting.
By contrast, MCDO, spreads uncertainty more uniformly across large sections of the image. \cref{fig:120_qualitative_DIP} shows a similar result but for the $c{=}15360$ setting and the g-prior DIP model. 
\Cref{Tab:walnuttab}, \cref{fig:DIP_EM_convergence} and  
\cref{fig:DIP_EM_convergence_120} show that the Log-Likelihood obtained with the g-prior sampling EM DIP  exceeds that obtained with the TV-PredCP model, potentially due to the former optimising the prior precision to convergence, while we can only afford a single EM step for the latter. Both methods outperform MCDO, in terms of both marginal and joint LL. Interestingly, MCDO's predictions are very poor marginally but improve significantly when considering covariances.

\begin{figure}[t]
    \centering
    \includegraphics[width=0.45\linewidth]{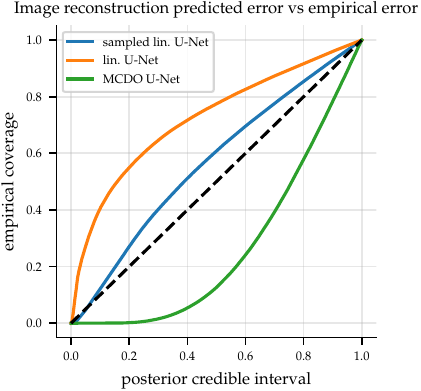}
    \caption{Empirical coverage of test targets for posterior credible intervals of increasing width for our U-net 2d tomographic reconstruction experiment with $c=7680$. Both linearised DIP variances are under-confident, although the g-prior sampling EM variant is much better calibrated. MCDO is overconfident.}
    \label{fig:Unet-calibration_plots}
\end{figure}

\paragraph{Uncertainty calibration}
For the $c=7680$ setting, we compute normalised residuals by subtracting our predictions from the targets and dividing by the predictive standard deviation. Our predictive distribution for these normalised residuals is the centred unit variance Gaussian. We consider posterior credible intervals centred at 0 and of increasing width and plot the proportion of test points that fall within them in the left side plot of \cref{fig:Unet-calibration_plots}. We find dropout inference to underestimate the magnitude of the residuals across all credible interval widths. Linearised inference with TV-PredCP consistently overestimates uncertainty, potentially due to non-converged EM underfitting. The g-prior combined with 5 steps of EM barely overestimates uncertainty, presenting the best overall calibration. 

\begin{figure}[thb]
    \centering
    \includegraphics[width=0.75\linewidth]{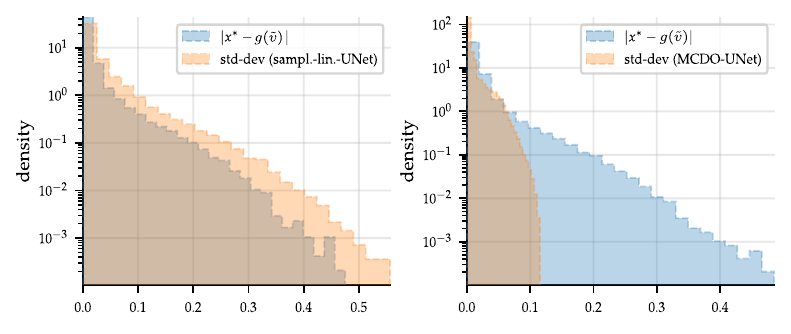}
    \caption{Histogram of the absolute pixelwise error computed between the reconstructed walnut image, given $c=7680$ observations, and the ground-truth for both lin.-Unet with g-prior (left) and MCDO-Unet (right). We overlay histograms of both methods' predictive standard deviations across pixels.}
    \label{fig:hist_DIP}
\end{figure}

We further asses calibration by comparing the histogram of the reconstruction errors made by our U-Net to the histogram of marginal, i.e. pixelwise, predictive standard deviation in \cref{fig:hist_DIP} for the $c=7680$ setting and in \cref{fig:120_hist_DIP} for the $c=15360$ setting.
In both plots, sample-based linearised Laplace inference slightly overestimates uncertainty and MCDO systematically underestimates uncertainty in the pixels where the reconstruction error is largest. Interestingly, our method shows to be slightly worsely calibrated in the more data-rich setting; the reconstruction error decreases faster than the predictive standard deviation with the addition of new data.

\begin{figure}[thb]
\vspace{-0.2cm}
    \centering
    \includegraphics[width=0.75\linewidth]{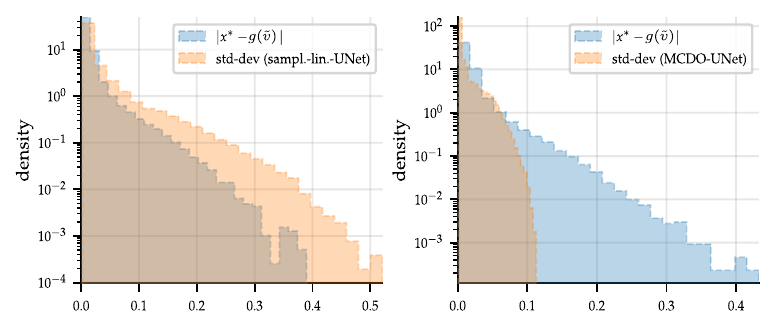}
    \caption{Histogram of the absolute pixelwise error computed between the reconstructed walnut image, given $c=15360$ observations, and the ground-truth for both lin.-Unet with g-prior (left) and MCDO-Unet (right). We overlay histograms of both methods' predictive standard deviations across pixels.}
    \label{fig:120_hist_DIP}
    \vspace{-0.4cm}
\end{figure}

\begin{figure}[thb]
    \centering
    \includegraphics[width = 0.8\textwidth]{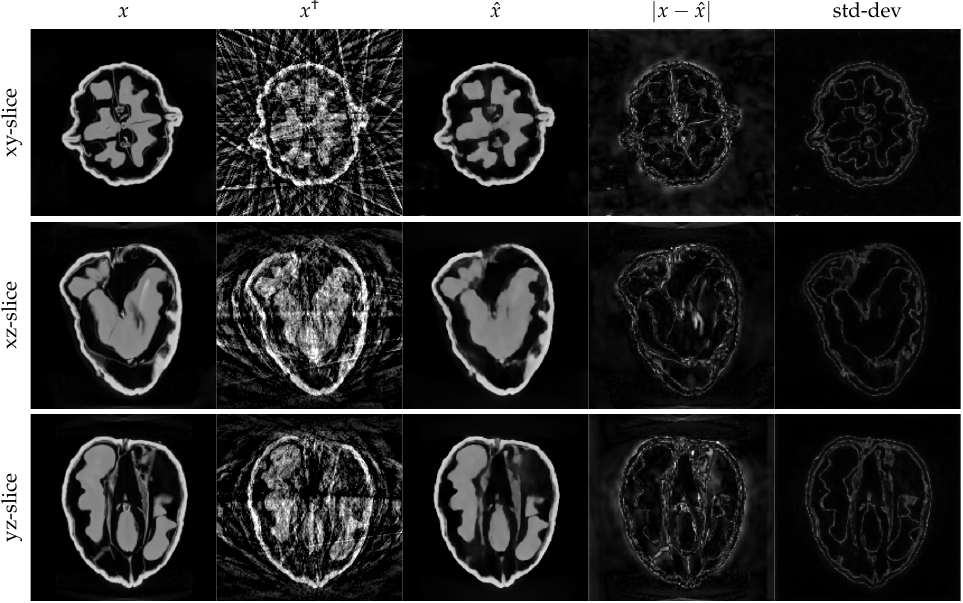}
    
    \caption{From left to right: 1) Ground truth reconstructions of three $167\times 167\,\text{px}^2$ slices from the $167\times 167\times 167\,\text{px}^3$  Walnut data from \cite{der_sarkissian2019walnuts_data}. 2) Filtered backprojections (i.e. reconstructions obtained by pseudoinverting the operator $\c{T}$) from $c=1.6M$ observations. 3) Unet reconstructions. 4) Absolute error in Unet reconstructions. 5) Pixelwise standard deviations obtained with the linerised Unet and the g-prior. }
    \label{fig:3d_qualitative}
\end{figure}

\subsection{Volumetric uncertainty estimation} \label{subsec:volumetric_demo}

\begin{figure}[thb]
    \centering
    \includegraphics[width=0.45\linewidth]{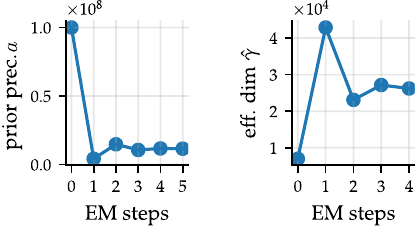}
    \caption{Traces of prior precision $\alpha$ and eff. dim. $\hat{\gamma}$ vs EM steps for the $c=1.6M$ 3d volumetric reconstruction task.}
    \label{fig:3d_EM_convergence}
\end{figure}

\begin{figure}[thb]
    \centering
    \includegraphics[width = 0.7\textwidth]{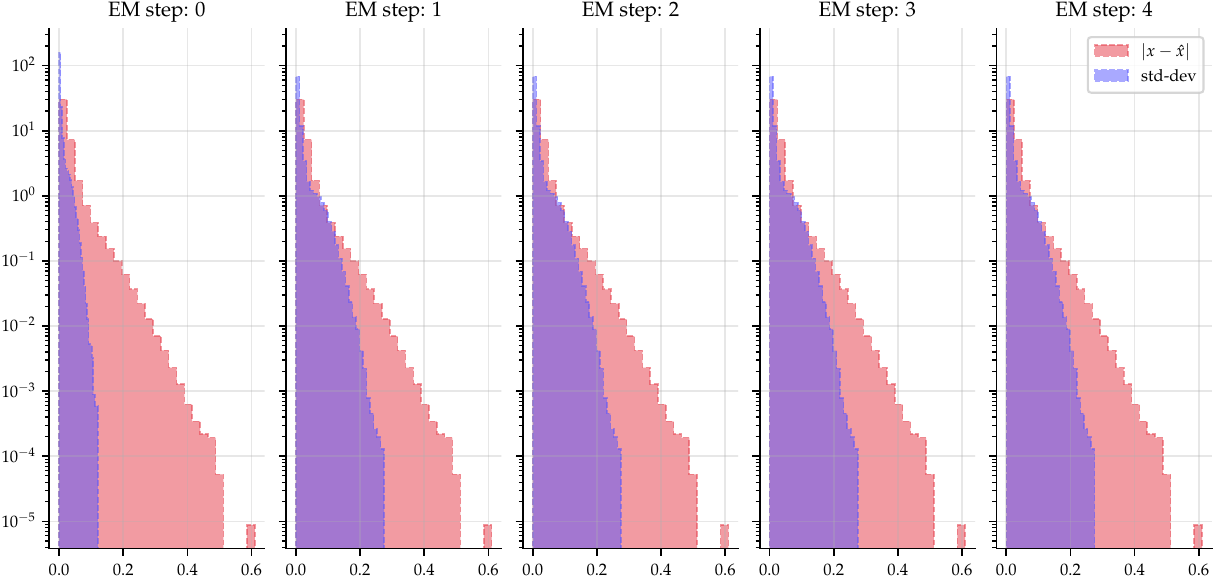}
    \caption{Histograms (y-axes are normalised to represent empirical densities) of the voxel-wise error computed between the reconstructed 3d volumetric walnut and the ground-truth, along with the histograms of pixelwise predictive standard deviations across voxels.}
    \label{fig:3d_histograms}
\end{figure}

 We consider 3d reconstruction of the Walnut data with a downscaled resolution of $d_x = (167px)^3 \approx 4,65M$ voxels, from $d_{\c{B}}=20$ equally distributed angles, and we sub-sample projection rows and columns by a factor of 3. This corresponds to a $c=1.6M$ dimensional observation space. We use a $d \approx 5M$ parameter 3d CNN. We perform sample-based EM inference, drawing samples with SGD. This procedure is illustrated in \cref{fig:3d_EM_convergence}. It is not clear from the plot that EM has converged, but we can not afford the computation for more than 4 steps. To the best of my knowledge, this is the first instance of uncertainty estimation for deep-learning based volumetric image reconstruction. Three slices of the reconstructed volume, along with their respective error and uncertainty maps are provided in \cref{fig:3d_qualitative}.
We provide error and pixelwise uncertainty histograms in \cref{fig:3d_histograms}. Our method underestimates uncertainty in the tails, but this is somewhat alleviated with successive EM steps.

\FloatBarrier

\section{Linearised DIP Bayesian experimental design for CT}\label{sec:experimental_design}

In CT, Bayesian experimental design leverages an a-priori model to select the scanning angles which are expected to yield the highest fidelity reconstruction. 
Adaptive design further incorporates information gained at previous angles to inform subsequent angle selections \citep{Chaloner1995review}. 
These methods are of great practical interest since they promise to reduce radiation dosages and scanning times.
Alas, existing CT design methods often struggle to improve over equidistant angle choice \citep{Shen2022learningtoscan}. 
Furthermore, the requisite of additional computations before subsequent scans makes adaptive methods impractical for many~applications.

Critically important to experimental design is the choice of prior \citep{chi2015modelerror,foster2021Design}.
Linear models allow for tractable computation of quantities of interest for experimental design, but their predictive uncertainty is independent of previously measured values, disallowing adaptive design \citep{Burger2021lineardesign}. 
More complex model choices make inference difficult, necessitating approximations which can degrade performance \citep{Helin2021GaussianTV,Shen2022learningtoscan}.

This section aims to make adaptive design practical by considering a setting where the CT scan is performed in two phases. 
First, a sparse pilot scan is performed to provide data with which to fit a adaptive methods.
These are then used to select angles for a full scan. 
We demonstrate this procedure with a synthetic dataset where a different ``preferential'' angle is most informative for each image.
Preferential directions appear commonly in industrial CT for material science and in medical CT for medical implant assessment.
We use the linearised Deep Image Prior (DIP) \citep{barbano2022probabilistic} as a data-dependent prior for adaptive design which preserves the tractability of conjugate Gaussian-linear models.
Unlike simple linear models, the linearised DIP outperforms the equidistant angle baseline.
Finally, we show that designs obtained with the linearised DIP perform well under traditional (non DIP-based) regularised-reconstruction.

\cref{sec:dip_design_methods} covers sequential inference in the conjugate Gaussian-linear setting.
\cref{subsec:linear_design} introduces experimental design with linear models and linearised neural networks. Finally, \cref{sec:dip_design_experiments} demonstrates our approach on a synthetic CT scanning angle selection task.

\begin{figure}[t]
        \centering
        \includegraphics[width=0.7\linewidth]{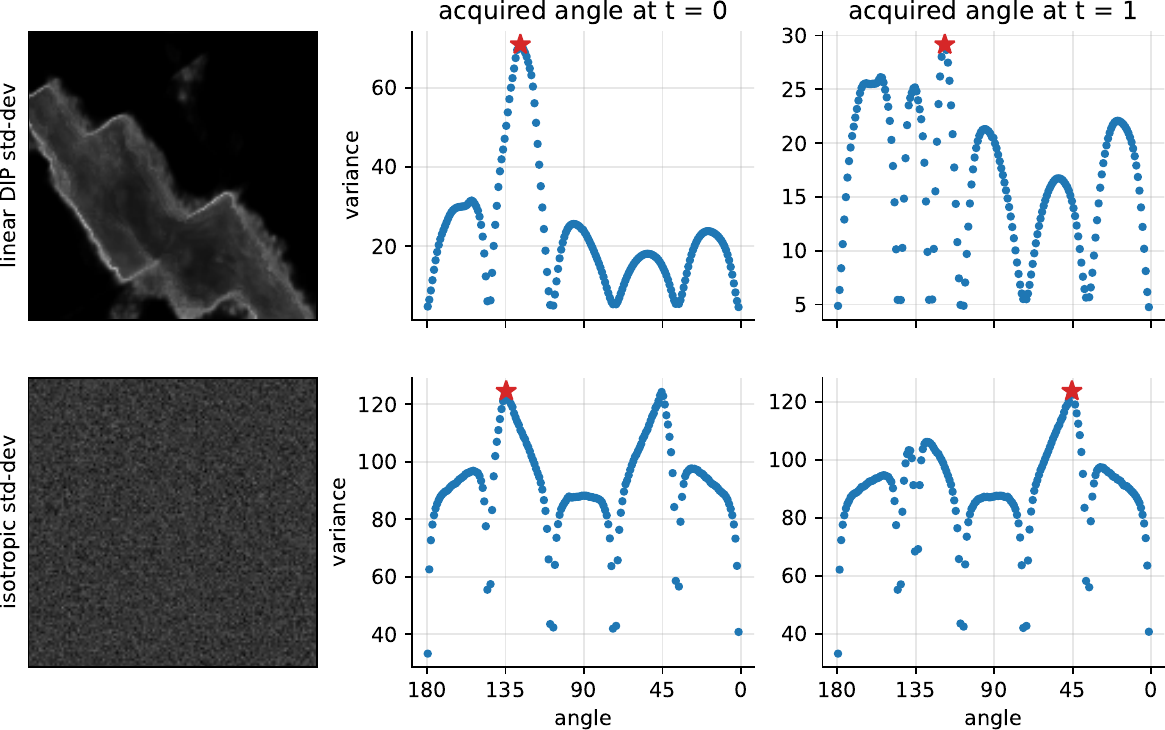}
    \caption{Top row: the linearised DIP assigns prior variance to pixels where edges are present, guiding angle selection so that X-ray quanta cover these pixels. Bottom row: the isotropic linear model's variance does not depend on the measurements. Angles $45$ and $135$ are chosen since they are oblique and maximise quanta path-length in the image.}
    \label{fig:variance_angles}
\end{figure}

\subsection{Sequential inference with linear(ised) models}\label{sec:dip_design_methods}

Let $\c{B}_{a}$ be the set of \emph{all} possible angles at which we can scan. The task is to choose the subset of angles $\c{B} \subset \c{B}_{a}$ which produces the highest-fidelity reconstruction. We shall add angles sequentially over $T$ steps. The set $\c{B}^{(t)}$ denotes the chosen angles up to step $t < T$, and $\bar{\c{B}}^{(t)}=\c{B}_{a} \setminus \c{B}^{(t)}$ the angles left to choose from. $\c{B}^{(0)}$ denotes the set of angles used in the initial pilot scan, and $\c{B} = \c{B}^{(T)}$ the full design.
We incorporate a decision to scan at angle $\beta \in \bar{\c{B}}^{(t)}$ by concatenating the matrix $\op^{\beta} \in \R^{d_{p} \times d_{x}}$, which contains a row for each detector pixel at angle $\beta$, to the operator.
After step $t$, the operator $\op^{(t)} \in \R^{d_{p} \cdot d_{\c{B}^{(t)}} \times d_{x}}$ stacks $d_{\c{B}^{(t)}}$ of these matrices, with $d_{\c{B}^{(t)}}=|\c{B}^{(t)}|$.
$\bar{\op}^{(t)} \in \R^{d_{p}\cdot d_{\bar{\c{B}}^{(t)}} \times d_{x}}$ denotes the forward operator for the angles left to choose~from.

For design, we place a multivariate Gaussian prior on $x$ with zero mean and covariance matrix $K \in \R^{d_{x} \times d_{x}}$. Together with the Gaussian noise model in \cref{eq:inverse_problem}, this gives a conjugate Gaussian-linear model. The vector $y^{(t)}\in \R^{d_{p}\cdot d_{\c{B}^{(t)} }}$ of all measurements at step $t$ is distributed~as 
\begin{gather*}
    y^{(t)} | x \sim \c{N}(\op^{(t)} x,\, b^{-1} I_{c}) \quad \text{with} \quad  x \sim \c{N}(0, K).
\end{gather*}
Thus, $K^{(t)}_{yy} {+}\,b^{-1}I$, with $K^{(t)}_{yy} = \op^{(t)} K (\op^{(t)})^{\top}$, is the measurement covariance and the posterior over $x$ is %
\begin{gather}
    x | y^{(t)} \sim \c{N}(\mu_{x|y^{(t)}}, K_{x|y^{(t)}}), \notag \\ 
    \text{with} \quad  \mu_{x|y^{(t)}}=K(\op^{(t)})^{\top}(K_{yy}^{(t)} {+}\,b^{-1}I)^{-1}y^{(t)}, \notag \\
    \text{and} \quad K_{x|y^{(t)}} = K - K(\op^{(t)})^{\top} (K_{yy}^{(t)}{+}\,b^{-1}I)^{-1}\op^{(t)}K.\label{eq:linear_posterior}
\end{gather}
The predictive covariance $K_{x|y^{(t)}}$ completely characterises the uncertainty of the reconstruction at step $t$ and is the building block for the angle selection criteria in \cref{subsec:linear_design}.

With this, a concern may be that natural images often exhibit heavy-tailed non-Gaussian statistics \citep{Seeger11scale}. 
Furthermore, by \cref{eq:linear_posterior}, $K_{x|y^{(t)}}$ depends on the choice of angles through $\op^{(t)}$, but not on the measurements made at said angles $y^{(t)}$, precluding adaptive design. 
In \cref{subsec:prior_cov}, we will address both of these concerns by constructing a very flexible data dependent covariance kernel from the Jacobian of a NN, recovering adaptive design capabilities.

\subsection{Experimental design with linear(ised) models} \label{subsec:linear_design}

\textbf{Acquisition objectives.} Since the linear design task is submodular \citep{Seeger2009submodular}, we greedily add one single angle per acquisition step \footnote{Submodularity guarantees this procedure obtains a score within a $(1-\nicefrac{1}{e})$ factor of the optimal strategy.}. We consider two popular acquisition objectives.

The first objective, \emph{expected information gain} (EIG) \citep{Mackay1992InformationBasedOF}, is the expected reduction in the posterior entropy $\mathbb{H}(\c{P}_{x|y})$ from scanning at angle $\beta$. At step $t$, it is given by
\begin{gather}\label{eq:eig}
    \text{EIG} :=  \mathbb{H}(\c{P}_{x|y^{(t)}} )  - \E_{{y^{\beta} | y^{(t)}}} [\mathbb{H}(\c{P}_{ x | y^{(t)}, y^{\beta}})] = \logdet(b^{-1} I_{d_{\c{B}^{(t)}}} + \op^{\beta}K_{x | y^{(t)}}(\op^{\beta})^{\top}) + C 
\end{gather}
where the constant $C = -\logdet(b^{-1} I)$ is independent of the angle choice. 
{Intuitively, the determinant of the matrix $\op^{\beta}K_{x | y^{(t)}}(\op^{\beta})^{\top}\in\R^{d_p\times d_p}$ penalises angles for which different detector elements make correlated measurements and the log term encourages the measurements from all detector pixels to be similarly informative.} EIG is known as a (D)eterminant-optimal objective.

\begin{derivation} \textbf{Expected information gain} \\

The entropy of a multivariate Gaussian  is $\mathbb{H}(\N (\mu, K)) = \frac{1}{2} \logdet(K) + \frac{d}{2}(\log(2 \pi) + 1)$.

We compute the posterior covariance log-determinant at time $t$ from the covariance at time $t-1$ using the matrix determinant lemma
\begin{align*}
\logdet(K_{x | y^{(t)}}) = &-  \logdet(K_{x | y^{(t-1)}}^{-1}) 
- \logdet(b I) \\
&- \logdet(b^{-1} I 
+ \op^{(t)} K_{x | y^{(t-1)}}\op^{\top,(t)}).
\end{align*}
Note that both sides of the equality are independent of the targets $y$. Thus we drop the expectation in \cref{eq:eig}.
With that, we have
\begin{align*}
    \text{EIG} =& \logdet(K_{x | y^{(t-1)}}) -  \logdet(K_{x | y^{(t)}})  \\ =&\logdet(K_{x | y^{(t-1)}}) - [-  \logdet(K_{x | y^{(t-1)}}^{-1}) - \logdet(b I) \\ &- \logdet(b^{-1} I + \op^{\beta}K_{x | y^{(t-1)}}(\op^{\beta})^{\top})] \\
    =&-\logdet(b^{-1} I) + \logdet(b^{-1} I + \op^{\beta}K_{x | y^{(t-1)}}(\op^{\beta})^{\top})\\
    =& \logdet(b^{-1} I + \op^{\beta}K_{x | y^{(t-1)}}(\op^{\beta})^{\top}) + C
\end{align*}
where the constant $C = -\logdet(b^{-1} I)$ is independent of angle choice, yielding the angle selection objective.
\end{derivation}

\begin{remark} \textbf{What information are we gaining?}

EIG quantifies the information (in nats) we expect to gain by observing the detector elements' measurements for an angle or set of angles \citep{Mackay1992InformationBasedOF}.
    EIG is also equal to the mutual information between the reconstruction $x$ and the new measurement $y^\beta$ conditional on the previous measurements $y^{(t-1)}$, i.e. $MI({x}, {y}^{\beta} | y^{(t-1)})$, giving an interpretation as aiming to select the angle $\beta$ most informative towards the reconstruction. 
For fixed model hyperparameters, EIG is always greater or equal than $0$ since making additional measurements cannot increase the uncertainty in the reconstruction. 
\end{remark}

The second objective, which we find to perform better empirically, is to choose the angles for which our prediction has the largest \emph{expected squared error} (ESE) in measurement space
\begin{gather}\label{eq:ESE}
    \text{ESE} := \E_{y^{\beta}, \,x|y^{(t)}}[(y^{\beta} - \op^{\beta} x)^{\top}(y^{\beta} - \op^{\beta} x)] = \text{Tr}(\op^{\beta}K_{x | y^{(t)}}(\op^{\beta})^{\top}) + C.
\end{gather}
This objective is equivalent to EIG in the setting where our detector has a single pixel. 

\begin{remark} \textbf{Motivating ESE}\\
The ESE objective in \cref{eq:ESE} aims to minimise the squared prediction error in measurement space. Objectives of this kind are commonly known as (A)verage-optimal. However, ESE is A-optimal over measurement space $y$, not over image space $x$. ESE is crucially different from minimising the arguably more relevant expected squared reconstruction error, a more computationally expensive criterion. ESE can be understood as a na\"ive simplification of EIG, by discarding correlations between detector pixels, making $\logdet(\op^{\beta}K_{x | y^{(t-1)}}(\op^{\beta})^{\top})$ match $\sum_{i\leq d_{p}} \log [\op^{\beta}K_{x | y^{(t-1)}}(\op^{\beta})^{\top}]_{ii}$. Then, the order of log and sum are switched, something that will only preserve the output (up to a constant independent of $\beta$) if every element under the sum is the same.
Having reached this point, since the log function is monotonic, it does not affect angle selection and the criterion matches the trace of $\op^{\beta}K_{x | y^{(t-1)}}(\op^{\beta})^{\top}$.
\end{remark}

\textbf{Efficient acquisition.} 
Constructing the matrix $\op^{\beta}K_{x | y^{(t)}}(\op^{\beta})^{\top}$ repeatedly for each candidate angle $\beta \in \bar{\c{B}}^{(t)}$ requires $\c{O}(d_{p}\cdot d_{\bar{\c{B}}^{(t)}})$ matrix vector products, which is very costly even for moderate size scanners.
Instead, we estimate the matrix for every angle simultaneously by drawing $m$ samples from $\c{N}(0, \bar{\op}^{(t)}K_{x | y^{(t)}} (\bar{\op}^{(t)})^{\top} )$. That is, we sample $\R^{d_{p}\cdot d_{\bar{\c{B}}^{(t)}}}$ sized vectors containing the concatenated ``pseudo measurements'' for each unused angle $\beta \in \bar{\c{B}}^{(t)}$. We again use pathwise conditioning 
\begin{gather}
    \bigoplus_{\beta \in \bar{\c{B}}^{(t)}} \!y^{\beta}_{i} = \bar{\op}^{(t)} \!\left(x_{i} - K(\op^{(t)})^{\top} (K_{yy}^{(t)}{+}\,b^{-1} I)^{-1}(\c{E}{i} + \op^{(t)}x_{i})\right) \quad \text{with} \notag \\
     x_{i}\sim \c{N}(0, K  ) \quad \text{and} \quad \c{E}_{i}\sim \c{N}(0, b^{-1} I )\label{eq:matheron_design}.
\end{gather}
Here, $i \in \{1,...,m\}$ indexes different samples and $\bigoplus$ denotes the concatenation of vectors generated for each $\beta \in \bar{\c{B}}^{(t)}$. Now, for each angle $\beta \in \bar{\c{B}}^{(t)}$, we compute
\begin{gather*}
    \op^{\beta}K_{x | y^{(t)}}(\op^{\beta})^{\top} \approx \frac{1}{m} \textstyle{\sum_{i=1}^{m}} y^{\beta}_{i} (y^{\beta}_{i})^{\top},
\end{gather*}
which is then used to estimate the acquisition objective \cref{eq:eig} or \cref{eq:ESE}. The log term makes EIG estimates only asymptotically unbiased (i.e.\ as $m\to \infty$) but we find the bias to be insignificant. Once the angle $\beta$ that maximises \cref{eq:eig} or \cref{eq:ESE} is chosen, we update $K_{yy}^{(t+1)}$ as
\begin{equation}
    K_{yy}^{(t+1)} = \begin{bmatrix}
        K_{yy}^{(t)} & \op^{(t)}K(\op^{(t+1)})^{\top} \\
        \op^{(t+1)}K(\op^{(t)})^{\top} & \op^{(t + 1)}K(\op^{(t+1)})^{\top}
    \end{bmatrix},\label{eq:covariance_update}
\end{equation}
and repeat the procedure, i.e.\ return to \cref{eq:matheron_design}.

\begin{figure}[thb]
\includegraphics[width=\textwidth]{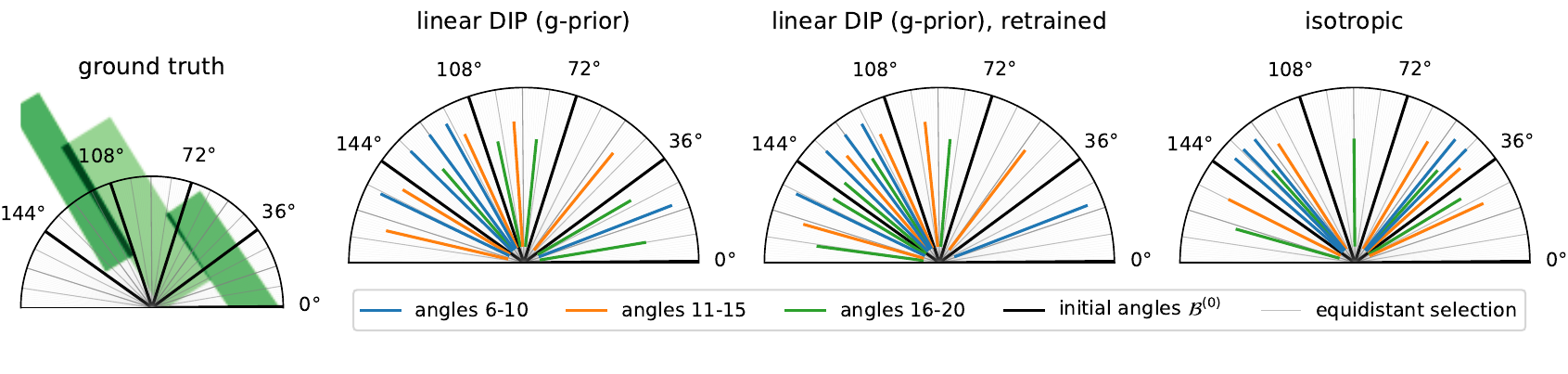}
\caption{First 20 angles selected by each method under consideration for an example image.} \label{fig:angle_selection_image1}
\end{figure}

\subsection{Construction of the prior covariance $K$} \label{subsec:prior_cov}

Now we describe the construction of the Gaussian prior covariance $K\in \R^{d_{x} \times d_{x}}$ over reconstructions. We consider a range of models, building from very simple models to flexible data-driven ones that allows for adaptive design. 

\textbf{Isotropic model.} The simple choice $K =  I_{d_{x}}$ assumes uncorrelated pixels, and it implies a ridge regulariser for the reconstruction, which is known to perform poorly in imaging.

\textbf{Matérn-$\nicefrac{1}{2}$ Process.} \cite{Antoran2022Tomography}, and also \cref{subsec:tv_for_NN}, employ the Matérn-$\nicefrac{1}{2}$  covariance $[K]_{ij,i'j'} = \exp (-\psi^{-1}\sqrt{(i-i')^{2} + (j-j')^{2}})$, where $(i, j)$ index the pixel locations in the image $x$ in terms of height and width respectively, as a surrogate for the TV regulariser. 

\textbf{Linearised deep image prior} This data-driven prior is constructed by first fitting a DIP model on the measurements taken during the pilot scan with \cref{eq:DIP_MAP-obj2}. We then adopt a linear model on the basis expansion given by the Jacobian of the trained U-net, denoted $\phi \in \R^{d_{x} \times d}$. %
The resulting  covariance matrix $K = \phi A^{-1} \phi^{\top}$  incorporates information about the pilot measurements on which the NN was trained through its Jacobians $\phi$. It assigns higher prior variance being near the edges in the reconstruction (this is shown in \cref{fig:variance_angles}), which are most sensitive to a change in U-net parameters. The covariance $A^{-1} \in \R^{d \times d}$ weights different Jacobian entries. We consider two different structures for $A^{-1}$.
\begin{itemize}[leftmargin=*]
\vspace{-0.1cm}
  \setlength\itemsep{0pt}
     \item The filter-wise block-diagonal matrix of \cref{subsec:tv_for_NN}. This choice uses a large number of hyperparameters and thus risks overfitting to the pilot scan measurements.
    \item The neural g-prior, introduced in \cref{subsec:g-prior}. We implement it through feature scaling, as described in \cref{subsec:gprior_implementation}. We update the feature scaling vectors every $5$ acquired angles.
\end{itemize}

The Matérn model has its lengthscale as a free hyperparameter. Learning this hyperparameter from the data makes the model adaptive. The filter-wise DIP prior has filter-wise marginal variances and lengthscales. We set these such that the model evidence is maximised given the pilot scan measurements using gradient-based optimisation. Since the number of pilot observations is small, the exact evidence \cref{eq:GP_evidence} is tractable.
We omit the global prior variance scaling hyperparameter from all models since the choice of this value only alters the width of the posterior errorbars, but not their shape. As a result, experimental design is invariant to the choice of global prior variance scaling\footnote{I was made aware of this by David Janz via personal communication and then I verified it experimentally.}. The same is true for the isotropic observation noise precision $b$.

\begin{figure}[thb]
\centering
\includegraphics[width=0.22\linewidth]{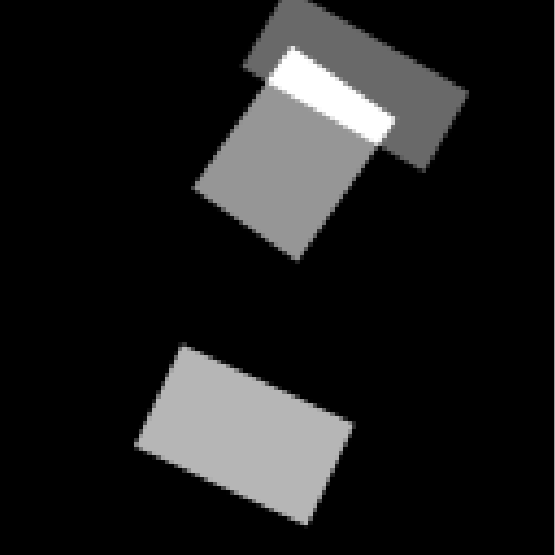}
\includegraphics[width=0.22\linewidth]{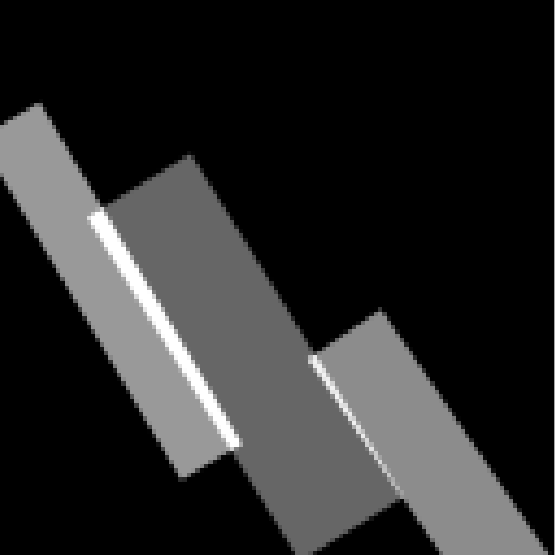}
\includegraphics[width=0.22\linewidth]{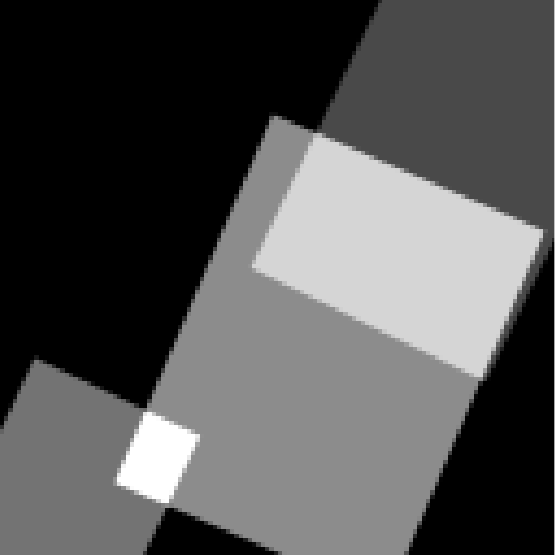}
\includegraphics[width=0.22\linewidth]{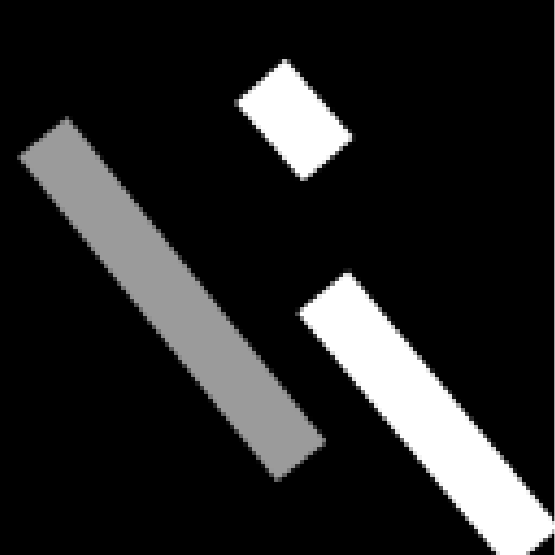}
\caption{Examples of synthetic images used for our experiments.}
    \label{fig:example_images}
\end{figure}

\section{Demonstration: designing CT angle selection strategies}\label{sec:dip_design_experiments}

We now demonstrate the experimental design objectives from \cref{subsec:linear_design} coupled with the models from \cref{subsec:prior_cov}. In almost all real-world CT deployments, the scanning angles are chosen to be equidistant. This strategy is known to be very hard to beat, and we will use it as our strong baseline. We will also test weather the DIP-based designs work well exclusively for DIP-based reconstructions or if they generalise to non-NN-based reconstruction methods. 

\paragraph{Experimental setup}
We simulate CT measurements $y$ from $128\!\times\!128$ ($d_x=16384$) pixel images of 3 superimposed rectangles. Their orientation is sampled from a single normal distribution with zero mean and standard deviation $2.86^\circ$. Thus, images in this class contain edges in roughly two perpendicular ``preferential'' directions (see \cref{fig:angle_selection_image1} and \cref{fig:example_images}). 
We simulate CT measurements by applying the discrete Radon transform operator $\op \in \R^{c \times d_{x}}$ and adding Gaussian noise with standard deviation matching 5\,\% of the average absolute value of the noiseless measurements $\op x$ to generate $y$. 
We divide the scanning range $[0^\circ, 180^{\circ})$ into 200 selectable angles (i.e.\ $|\c{B}_{a}| = 200)$. The pilot scan measures at 5 equidistant angles, on which  we fit all models' hyperparameters and the linearised DIP's U-net.
Then, we apply the methods in \cref{subsec:linear_design} to produce designs consisting of 35 additional angles.
For every 5 acquired angles, we evaluate reconstruction quality using both the DIP \cref{eq:DIP_MAP-obj2}, and the traditional NN-free TV regularised approach \cref{eq:simple_optimisation_objective}. We include equidistant and random angle selection as strong and weak baselines, respectively. On an A100 GPU, a full linearised DIP acquisition step with $K=3000$ samples takes 9 seconds and the full design takes 5 minutes.

For the \textbf{linearised DIP}, we consider both training our U-net and prior hyperparameters only on the pilot scan, and also retraining every 5 angles. \Cref{fig:angle_selection_image1} shows both approaches can identify and prioritise the preferential direction, leading to reconstructions that \emph{outperform the equidistant angle baseline by over 1.5 dB} in the range of $[10, 15]$ angles. This is shown in \cref{fig:main-figure-psnr-comparison}. 
During this initial stage, the linearised DIP requires roughly \emph{30\% less scanned angles} to match the equidistant baseline's performance.
The performance gap decreases as we select more angles, although linearised DIP remains more efficient even after 40 angles. 
Retraining the U-net provides most benefits in the large angle regime. It increases focus on preferential directions and consistently provides gains ${>}0.5$dB after 20 angles. All gains over the equidistant baseline are obtained with both DIP reconstruction \cref{eq:DIP_MAP-obj2} and traditional TV regularised reconstruction \cref{eq:simple_optimisation_objective}. 

\begin{figure}[t]
\begin{minipage}{0.329\textwidth}
  \includegraphics[width=\linewidth]{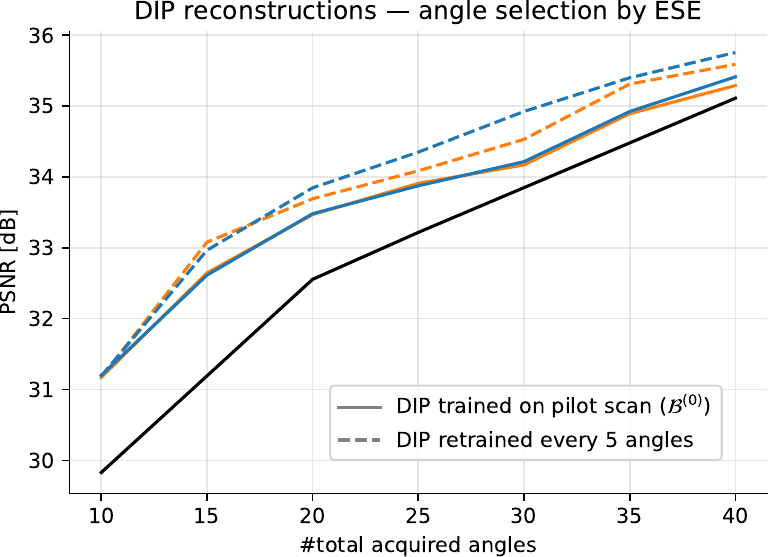}
\end{minipage}\hfill
\begin{minipage}{0.329\textwidth}
  \includegraphics[width=\linewidth]{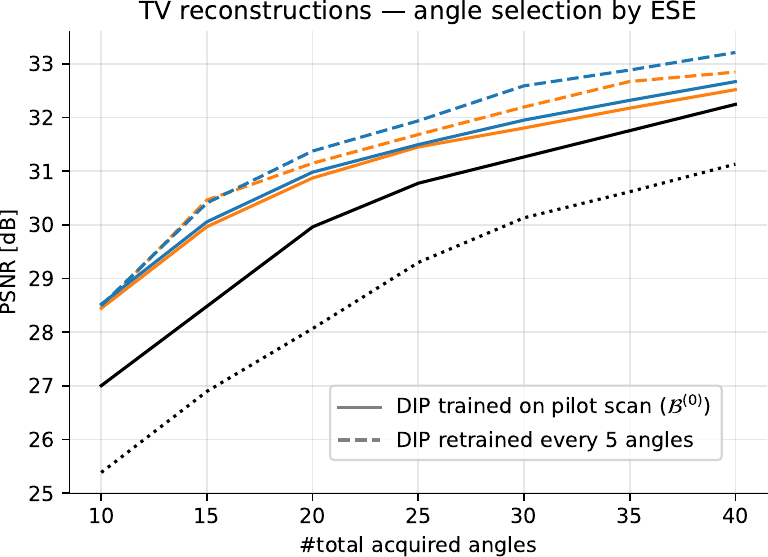}
\end{minipage}\hfill
\begin{minipage}{0.329\textwidth}%
  \includegraphics[width=\linewidth]{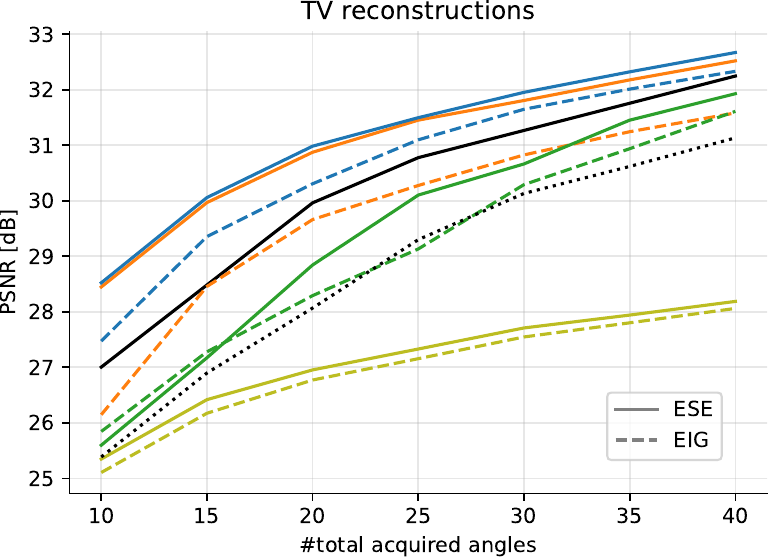}
\end{minipage}
\\[0.5em]
\centering
\includegraphics[width=0.85\linewidth]{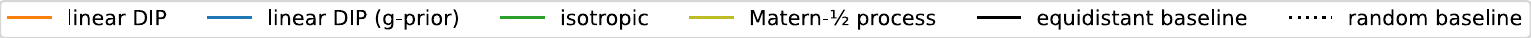}
\caption{Reconstruction PSNR vs n. angles scanned, averaged across 30 images (5\% noise).}\label{fig:main-figure-psnr-comparison}
\end{figure}

The \textbf{isotropic and Matérn-$\nicefrac{1}{2}$} models' uncertainty estimates are independent of the pilot measurements. These models prioritise clustered sets of oblique angles which maximise the length of quanta trajectories in the image. They perform similar to or worse than random. This negative result is due to the lengthscale hyperparameter overfitting to the small amount of data from the pilot scan and taking very large values. This makes the predictive variance insensitive to previous acquisitions, as shown in \cref{fig:gp_first_8}. For contrast, we display the g-prior DIP's acquisitions in \cref{fig:g-prior_first_8}.

\begin{figure}[htb]
\centering
{\sffamily\scriptsize Matérn-$\nicefrac{1}{2}$ model, first 8 acquisitions}\\
\includegraphics[width=0.9\linewidth]{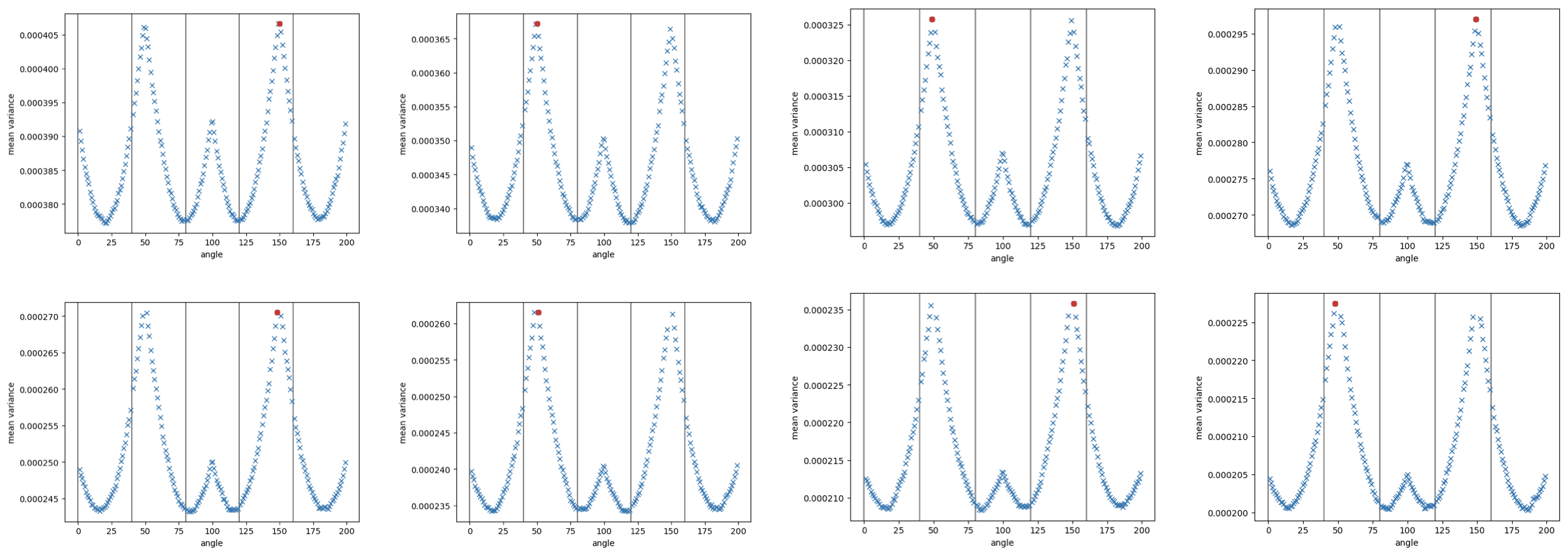}
\caption{Variance assigned to each candidate angle during the first 8 design steps by our Matérn-$\nicefrac{1}{2}$  model. }
\label{fig:gp_first_8}
\end{figure}

\begin{figure}[htb]
\centering
{\sffamily\scriptsize Linearised DIP with g-prior, first 8 acquisitions}\\
\includegraphics[width=\linewidth]{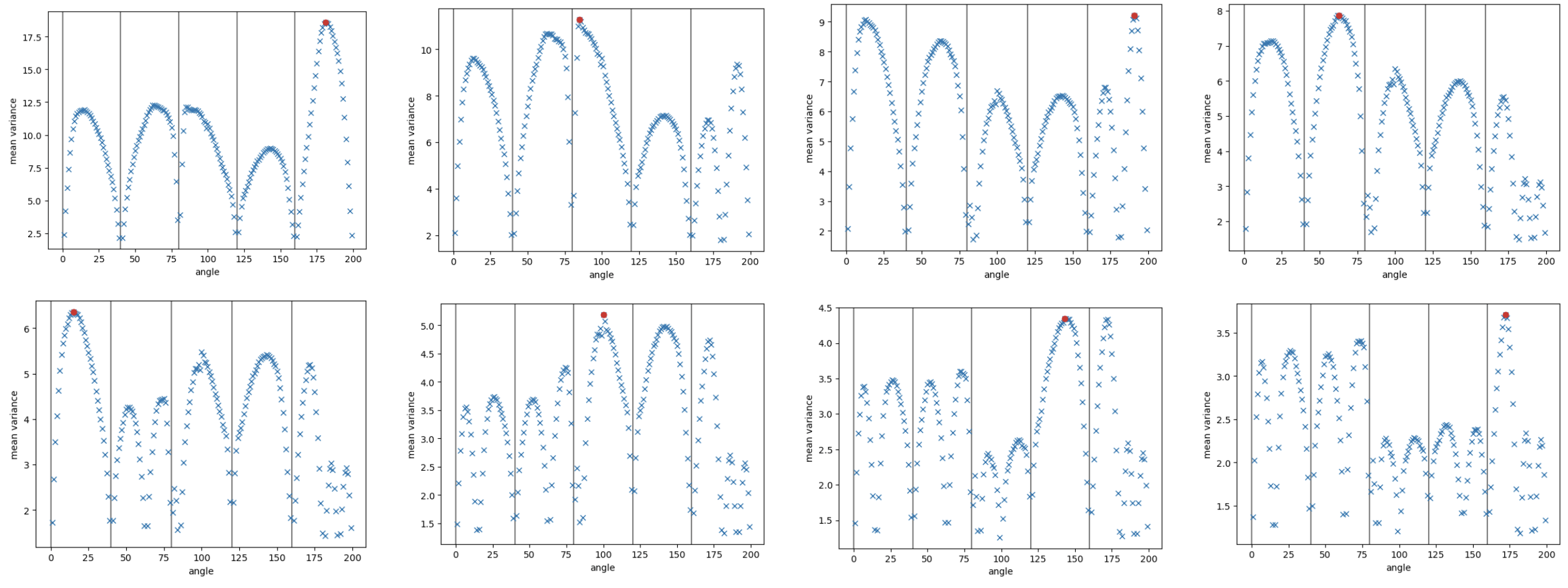}
\caption{Variance assigned to each candidate angle during the first 8 design steps by our linearised DIP model with the g-prior.}
\label{fig:g-prior_first_8}
\end{figure}

\textbf{ESE outperforms EIG} across models. For the linearised DIP, this gap is smaller when using the g-prior. This is surprising, given that EIG takes covariances into account, but ESE doesn't. 
We hypothesise that model misspecification and hyperparameter overfitting may result in poor measurement covariance estimates, in turn degrading the EIG estimates.

\begin{remark} \textbf{On the dangers of combining model evidence maximisation with data acquisition} \\
    \Cref{fig:gp_first_8} shows the Matérn-$\nicefrac{1}{2}$  model concentrates its selection on oblique angles, and this does not change as more data is acquired. This results in a very non-diverse angle set which achieves very poor performance. To understand why this happens we first remark that the Matérn-$\nicefrac{1}{2}$  model generalises the isotropic model and the two are equal when the lengthscale is set to $\psi=0$. We investigate the hyperparameters chosen by the model evidence for the Matérn-$\nicefrac{1}{2}$  model and find that for all images the lengthscale is in the range [40-70]. This value is very large relative to the size of the image ($128 \times 128$) and represents an assumption that the reconstructed image has only 2 or 3 regions with different pixel intensity values. Under this assumption, only taking measurements at 3 different angles is justified.  
    Each new angle introduced into the operator reduces the predictive variance of every unseen angle almost equally. 
    As a result, the relative assignment of predictive variance in angle space remains roughly constant throughout design steps.

Although it is well known that experimental design is very sensitive to the choice of prior \citep{chi2015modelerror,foster2021Design}, the ease with which the relatively simple Matérn-$\nicefrac{1}{2}$  model can overfit was unexpected to us.
\end{remark}

\section{Discussion}\label{sec:dip_conclusion}

Having laid the groundwork for scalable uncertainty estimation and hyperparameter selection for linearised neural networks in \cref{chap:SGD_GPs}, \cref{chap:adapting_laplace} and \cref{chap:sampled_Laplace}, this chapter has applied these advances to tomographic reconstruction.  
In particular, we have introduced a probabilistic formulation of the deep image prior (DIP) that utilises a linearisation of the network around the parameters that output the candidate reconstruction. The approach yields far better uncertainty estimates on 2d image reconstructions from real-measured $\mu$CT data than MC dropout-based approaches standard in the field of CT. Furthermore, our method is the first to have been applied to  3d volume reconstructions.

Motivated by standard practise in the field of CT, we developed a bespoke TV-based prior for our linearised NN. However, we found it to be dominated, in terms of both computational efficiency and calibration of uncertainty estimates, by the more general diagonal g-prior, introduced in \cref{subsec:g-prior}. 

Finally, we applied linearised DIP inference to adaptively select scanning angles for CT. 
Our results suggest that dependence on the measurement data, i.e.\ adaptivity, is key to outperforming equidistant angle selection, a notoriously strong baseline in CT reconstruction \citep{Shen2022learningtoscan,Helin2021GaussianTV}. Distinctly from previous work, our methods only necessitate a pilot scan instead of being fully online, increasing applicability. We observe the largest gains in the 10 to 20 angle regime, where our designs reduce the angle requirement by roughly 30\% without loss of reconstruction quality. This is true for both traditional TV-regularised and DIP-based reconstructions.

With this, we conclude the technical content of this thesis. The following chapter reviews the main contributions of the work and discusses exciting avenues for future work.

\chapter{Conclusions and future work} \label{chap:conclusion} %

\ifpdf
    \graphicspath{{Chapter8/Figs/Raster/}{Chapter8/Figs/PDF/}{Chapter8/Figs/}}
\else
    \graphicspath{{Chapter8/Figs/Vector/}{Chapter8/Figs/}}
\fi

This thesis has studied the problem of large scale Bayesian inference in linear models and neural networks. We made contributions of both fundamental nature, furthering our understanding of linearised neural networks, and also of practical nature by introducing a number of learning algorithms that scale well in both the number of observations and model parameters. 
We have strived for these methods to be fully compatible with existing (and hopefully future) progress in the field of deep learning. 
I hope that this work will contribute towards the development and real-world deployment of uncertainty-aware data-driven decision making systems. 

We go on to provide a recap of our contributions in \cref{sec:recap}, while giving a critical overview of their strengths and weaknesses.
In \cref{sec:future_work} we discuss avenues for future work.

\section{Recap of contributions}\label{sec:recap}

\textbf{\cref{chap:SGD_GPs}} proposed using SGD, the workhorse algorithm of deep learning, to perform posterior inference in Gaussian processes. Traditionally, lack of scalability has been a major impediment to the use of these models; the cost of exact inference is cubic in the number of observations, i.e. $\c{O}(n^3)$. 
SGD has not been considered for this task in the past, in part because it provides worse formal convergence guarantees than alternatives like CG \citep{Boyd2014convex}.
At a high level, our key insight is that full convergence of SGD is not necessary to obtain good performance. 
SGD converges very fast in the directions of parameter space that matter for prediction, and very slowly in others. 
Additionally, this convergence is monotonic in the number of steps, making SGD an anytime method amenable to early stopping.
These two features, combined with SGD's linear cost, i.e. $\c{O}(n)$, per step allows it to handily outperform other inference schemes when dealing with datasets of more than $n\approx 100$k observations. 
The weakness of SGD is its lack of convergence in most low-eigenvalue directions of parameter space. These make little, but some, difference for prediction, resulting in SGD always providing an approximate solution.
Thus, below $n\approx 100$k, conjugate gradients is likely to converge faster while providing an effectively exact solution.

\textbf{\cref{chap:adapting_laplace}} studied the applicability of the linearised Laplace model evidence to modern neural networks. Motivated by the heavy dependence on this parameter of the calibration of the linearised Laplace errorbars, we focused on learning the prior precision with the evidence. We first interrogated the validity of the Laplace approximation's assumption that we Taylor expand about a mode of the posterior. In fact, satisfying this assumption is not practical in modern deep learning. Nevertheless, we showed that every expansion point implies an associated basis function linear model. As we use
this model to provide errobars, we propose to choose hyper-
parameters using the evidence of this model. This requires
only the solving of a convex optimisation problem, one
much simpler than NN optimisation.
 We then showed that, for neural networks with normalisation layers---that is, practically all modern architectures---the predictive posterior covariance can only be identified up to a scalar constant, or a constant per normalised group of weights. We introduce two prior classes which produce a predictive posterior invariant to this scaling constant. The first is a diagonal Gaussian prior with layerwise precision parameters fit to maximise the model evidence. The second is the diagonal g-prior, which only has an isotropic scaling parameter that can be set to any value.

\textbf{\cref{chap:sampled_Laplace}} combines the efficient SGD posterior sampling from \cref{chap:SGD_GPs} with the developments in linearised model hyperparameter learning of \cref{chap:adapting_laplace} into a scalable sample-based EM algorithm.
The key component is our M step, which builds upon \cite{mackay1992practical}'s update for the prior precision. Our update makes much more progress per step than traditional gradient based optimisation with the model evidence and can be estimated with only posterior samples. 
We combine these methods with a number of matrix-free linear algebra techniques and SGD warm starting to scale linearised inference to ResNet-50 ($25$M parameters) and Imagenet ($1.2$M observations and $1000$ output dimensions). To the best of our knowledge, this is the first time Bayesian inference has been performed in this setting without assuming some degree of independence across weights in the model. 
Linearised inference performs particularly well in terms of joint predictions, which are key to sequential decision making. 
However, despite our methods being more accurate and scalable than other Bayesian approximations, they still introduce very significant overhead compared to training a single neural network. Furthermore, we used the diagonal g-prior in our experiments. Since this prior only has a single free parameter, it may be cheaper to set its value with cross validation than to use our EM iteration. 

\textbf{\cref{chap:DIP}} applies the methods developed in \cref{chap:SGD_GPs,chap:adapting_laplace,chap:sampled_Laplace} to uncertainty estimation in CT reconstructions from the deep image prior.
On 2d images, we obtain more calibrated uncertainty estimates than previous probabilistic approaches to DIP reconstruction. 
The scalability of our inference methods allows us to apply them to uncertainty estimation in 3d volumetric reconstructions from the deep image prior. To the best of our knowledge, this is the first time neural network uncertainty has been estimated on this large scale task.
We concluded by leveraging the errorbars from the linearised deep image prior to guide scanning angle selection in CT. This allows us to reduce the number of scans needed to obtain a constant reconstruction quality.  
We also constructed a bespoke total-variation based prior for the linearised DIP, but we found its performance dominated by the more-scalable diagonal g-prior. This was true for both uncertainty estimation and experimental design. Perhaps we should have payed more attention to the Richard Sutton quote that preceded \cref{chap:sampled_Laplace}.

\section{Future Work} \label{sec:future_work}

\paragraph{Scalable hyperparameter learning for GPs and linearised neural networks} A clear avenue for future work is leveraging SGD posterior sampling to learn GP hyperparameters. One way to do this is to use these posterior samples in the existing Hutchinson estimator of the evidence log-determinant gradient. However, I am not optimistic about this direction because gradient-based optimisation of the evidence requires many steps. This makes updating our samples for each new step is very expensive. It would be more interesting to generalise the MacKay update, used in \cref{chap:sampled_Laplace}, beyond marginal prior precisions.

\paragraph{Online Laplace for normalised networks} The developments of \cref{chap:adapting_laplace} are focused on the post-hoc setting, where we have access to a pre-trained neural network. An exciting line of research is online Laplace, where the hyperparameters are learnt simultaneously with the network weights. However, these methods are incompatible with normalisation layers, ostensibly for the same reasons described in \cref{chap:adapting_laplace}. In \cite{lin2023online}, a paper not included in this thesis, we did some work relating online Laplace methods to the tangent linear model. It would be good to further leverage this connection, and the results of \cref{chap:adapting_laplace}, to make online Laplace amenable to normalisation layers.

\paragraph{Sequential decision making with neural networks} It seems plausible that given large enough datasets, modern large-scale neural models will rarely encounter out of distribution scenarios. Thus, the utility of model uncertainty as a tool for rejecting spurious model behaviour may decrease. However, I do not think that the more general problem of sequential decision making can be solved in the same way. Thus, I am particularly optimistic about this application of of Bayesian inference with neural networks. In particular, I am excited about the use of the linearised DIP to design CT scanning strategies for the real-world. 
Furthermore, the experimental design methods of \cref{chap:DIP} may be applied to magnetic resonance imaging, where the forward operator is a Fourier transform, almost out of the box.

\begin{spacing}{0.9}

\bibliographystyle{apalike}
\cleardoublepage
\bibliography{References/references} %

\end{spacing}

\begin{appendices} %

\chapter{Experimental setup details for \cref{chap:adapting_laplace}}\label{app:adapting_experimental_setup}

Here, we provide the details of our experimental setup which were omitted from the main text.

\section{Experiments with full Hessian computation}\label{app:experimental_setup_small}

This subsection concerns the experiments which use small architectures for which exact Hessian computation is tractable. These experiments are described in \cref{subsec:exp_assumptions} and \cref{subsec:architectures} of the main text. We first describe the setup components shared among architectures and then provide architecture-specific details. We exclude details for the U-net used in \cref{subsec:architectures}. Instead we provide these together with a brief description of the tomographic reconstruction task it performs in \cref{appendix:unet-setup}.

Unless specified otherwise, NN weights $\tilde v$ are learnt using SGD, with an initial learning rate of 0.1, momentum of 0.9, and weight decay of $1\times 10^{-4}$. We trained for 90 epochs, using a multi-step LR scheduler with a decay rate of 0.1 applied at epochs 40 and 70. This is a standard  choice for CNNs and is default in the examples provided by \href{https://pytorch.org/vision/stable/models.html}{Pytorch}.

The linear weights $w_\star$ are optimised using Adam and with their gradients calculated using \cref{alg:linear_MAP_algorithm}. We use a learning rate of $1\times 10^{-4}$ and train for 100 epochs. We set the initial regularisation parameter to be isotropic $A = a I$ with $a = 1\times 10^{-4}$.

\subsection{CNN}

Our CNN is based on the LeNet architecture with a few variations found in more modern neural networks. The architecture contains 3 convolutional blocks, followed by global average pooling in the spatial dimensions, a flatten operation, and finally a fully-connected layer. The convolutional blocks consist of \textsc{Conv} $\rightarrow$ \textsc{ReLU} $\rightarrow$ \textsc{BatchNorm}. Instead of using max pooling layers, as in the original LeNet variants, we use convolutions with a stride of 2. The first convolution is $5 \times 5$, while the next two are $3 \times 3$. As described in the main text, we consider architectures of 3 different sizes. \cref{tab:lenet_models} shows the number of the filters and number of parameters for each size of this model. The \emph{Big} model's values where chosen to create a model as large as possible while keeping full-covariance Laplace inference tractable on one A100 GPU.

\begin{table}[htb]
    \caption{Architecture parameters for the CNNs used in experiments.}
    \label{tab:lenet_models}
    \centering
    \begin{tabular}{cccccc}
    \toprule
         &  \textsc{Conv1 Filters} &  \textsc{Conv2 Filters} &  \textsc{Conv3 Filters} & \textsc{Params.} & \textsc{Hessian Size} \\ \midrule
     Big & 42 & 48 & 60 & 46 024 & 15.68 GB \\
     Med & 32 & 32 & 64 & 29 226 & 6.36 GB \\
     Small & 16 & 32 & 32 & 14 634 & 1.60 GB \\
     \bottomrule
    \end{tabular}
\end{table}

\subsection{ResNet, Pre-ResNet, and Biased-ResNet}\label{appendix:ResNet_arq}

Our ResNet is based on our CNN architecture. We replace the second and third convolutional blocks with residual blocks. The main branch of the residual blocks consist of \textsc{Conv} $\rightarrow$  \textsc{BatchNorm} $\rightarrow$ \textsc{ReLU} $\rightarrow$ \textsc{Conv} $\rightarrow$  \textsc{BatchNorm}. We apply a final \textsc{ReLU} layer after the residual is added. All of the convolutions in the residual blocks use the same number of filters. In order to downsample our features between blocks we use $1 \times 1$ convolutions with a stride of 2. \cref{tab:resnet_models} shows the number of the filters and number of parameters used for each size of this model.

Our Pre-ResNet architecture is identical to the ResNet except the main branch cosists of \textsc{BatchNorm} $\rightarrow$ \textsc{ReLU} $\rightarrow$ \textsc{Conv} $\rightarrow$  \textsc{BatchNorm} $\rightarrow$ \textsc{ReLU} $\rightarrow$ \textsc{Conv}, and we do not apply a \textsc{ReLU} after adding the residual.

\begin{table}[htb]
    \caption{Architecture parameters for the ResNets used in experiments.}
    \label{tab:resnet_models}
    \centering
      \resizebox{\textwidth}{!}{%
    \begin{tabular}{cccccc}
    \toprule
         &  \textsc{Conv1 Filters} &  \textsc{ResBlock 1 Filters} &  \textsc{ResBlock 2 Filters} & \textsc{Params.} & \textsc{Hessian Size} \\ \midrule
     Big & 22 & 42 & 64 & 45 576 & 15.48 GB \\
     Med & 16 & 32 & 64 & 26 874 & 5.38 GB \\
     Small & 12 & 24 & 32 & 14 814 & 1.64 GB \\
     \bottomrule
    \end{tabular}
    }
\end{table}

Note that the standard ResNet architecture does not apply biases in the convolution layers. The only biases in the entire network are placed in the dense output layer. For our experiment where biases are included in the Jacobian feature expansion in \cref{subsec:exp_assumptions}, we modify the ResNet architecture to include biases in all convolutional layers in addition to the already present final dense layer bias. These biases account for a small increase in parameters, to 14 898, 26 986, and 45 726, in the small, medium, and big cases, respectively.

\subsection{FixUp ResNet}

Our FixUp-ResNet architecture follows the standard ResNet structure described above, with the additional FixUP offsets and multipliers described in \citep{Zhang19FixUp}. We also follow \citet{Zhang19FixUp} in zero initialising the dense layer, and scaling the convolution weight initialisation as a function of the depth of the network.

When training FixUp-ResNets, we use the Adam optimiser with a fixed learning rate of 0.01.

\subsection{Transformer}

Our Transformer architecture contains two encoder layers with two attention heads each, and no dropout. Its input is a sequence of tokens, to which we apply a linear embedding. We add a learnable \texttt{class} embedding for each input. This \texttt{class} token is used to classify the input. We do not use positional encoding, preserving permutation invariance in the input. The sizes of the embeddings and the MLP hidden dimensions are provided in \cref{tab:transformer_models}.

When training Transformers, we use the Adam optimiser with a learning rate of $3 \times 10^{-3}$. We use an exponential learning rate decay with a gamma of 0.99 applied after every epoch of training.

\begin{table}[htb]
    \caption{Architecture parameters for the Transformers used in experiments.}
    \label{tab:transformer_models}
    \centering
    \begin{tabular}{cccccc}
    \toprule
         &  \textsc{MLP Dim} &  \textsc{Embedding Dim} &  \textsc{Params.} & \textsc{Hessian Size} \\ \midrule
     Big & 120 & 50  & 45 900 & 15.70 GB \\
     Med & 80 & 40  & 27 090 & 5.47 GB \\
     Small & 60 & 30  & 15 520 & 1.79 GB \\
     \bottomrule
    \end{tabular}
\end{table}

\section{U-Net tomographic reconstruction of KMNIST digits}\label{appendix:unet-setup}

In this section, we provide experimental details for the tomographic reconstruction results in \cref{subsec:architectures}.

 Our setup almost exactly replicates that of \citet{barbano2022probabilistic} and \citet{Antoran2022Tomography}, which form the basis of \cref{chap:DIP}. We refer to this chapter for an introduction to  tomographic reconstruction with the deep image prior.

We 
use 10 test images from the KMNIST dataset, which consists of $28\times 28$ grey-scale images of Hiragana characters \citep{clanuwat2018deep}, we simulate $y$ with $20$ angles taken uniformly from the range $0^\circ$ to $180^\circ$, and add $5\%$ white noise to the projected inputs $\c{T}x$.
We reconstruct $x$ using the Deep Image Prior (DIP) \citep{Ulyanov18prior}, which parametrises the reconstruction $x$ as the output of a U-net $g(v)$ \citep{ronneberger2015u}.

We use the U-net like architecture deployed by \citet{barbano2021deep}. Group norm is placed before every $3\times 3$ convolution operation. The U-net architecture is a encoder-decoder, fully convolutional deep model constructing multi-level feature maps.
We identify 3 distinct blocks for both the encoder branch and and 2 blocks for the the decoder branch: \texttt{In}, \texttt{Down}$_0$, \texttt{Down}$_{1}$, and \texttt{Up}$_{0}$ and \texttt{Up}$_{1}$, respectively. 
The \texttt{In} block consists of a $3\times 3$ convolution. 
\texttt{Down} blocks consist of a $3\times 3$ convolution with stride of 2 followed by a $3\times 3$ convolution operation and a bi-linear up-sampling. 
The \texttt{Up} blocks instead consist of two successive $3\times 3$ convolutional operations. Given the use of the leaky ReLU non-linearity, the normalised parameter groups of this network coincide with the described blocks.
The number of channels is set to 32 at every scale.
Multi-channel feature maps from the \texttt{In} block and from \texttt{Down}$_{0}$ are first transformed via a $1\times 1$ convolutional operation to 4 channel feature maps and then fed to \texttt{Up}$_{1}$, \texttt{Up}$_{0}$.
The reconstructed image is obtained as the output of \texttt{Up}$_{1}$ further processed via a $1\times 1$ convolutional layer.
The total number of parameters is 78k.
This is too many for full Hessian construction on GPU but we get around this issue by performing inference in the lower dimensional space of observations, as described in 
\cref{subsec:posterior_predictive} and \cref{subsec:CG_for_TVPredCP}.
We refer to \citet{Antoran2022Tomography} for a full list of hyperparameters involved in training the U-net.

The prior covariance $A^{-1}$ is a filter-wise block-diagonal matrix which applies separate regularisation to the parameters of each block in the U-net. This matches the prior described in \cref{subsec:tv_for_NN}, but without the Matérn covariance structure.
For the single regulariser experiment, we keep the same prior structure but tie the marginal prior variance of all parameters. That is, we ensure all entries of the diagonal of $A^{-1}$ are the same. The parameters of these regularisers are learnt via model evidence optimisation, as described in the main text. 

\section{Large scale experiments}\label{app:experimental_setup_big}

For scaling linearised Laplace to ResNet-50 with 25M parameters, we employ a Kronecker-factorisation of the Hessian/GGN. This is a common way to scale the Laplace approximation to large models \citep{daxberger2021laplace} and was originally proposed in \citet{ritter2018scalable}. We use the recently-released \texttt{laplace} library\footnote{\url{https://github.com/AlexImmer/Laplace}} \citep{daxberger2021laplace} for fitting the KFAC Laplace models. For ResNets with batch norm, we use the reference implementation from the \texttt{torchvision} package\footnote{\url{https://pytorch.org/vision/stable/models.html}}. For ResNets with fixUp, we use a popular open-source implementation\footnote{\url{https://github.com/hongyi-zhang/FixUp}}. To train the ResNet parameters, which will be used as the linearisation points $\tilde v$, we use the same hyperparameters as described at the top of \cref{app:experimental_setup_small} for both batch norm and FixUp ResNets.

\end{appendices}

\printthesisindex %

\end{document}